\newcommand{\highlightrevision}[2]{#2}
\newenvironment{newpartinrevision}[1][]{}
\def\arxivonly#1{#1}
\let\oldauthor=\author
\let\oldmaketitle=\maketitle
\def\myauthor{}
\def\appendtoauthors#1{\protected@edef\myauthor{\myauthor #1}}
\renewcommand\author[2][]{\appendtoauthors{\text{#2} \\}}
\def\ead#1{} 
\def\corref#1{}
\newcommand\cortext[2][]{}
\newcommand\address[2][]{\appendtoauthors{\textit{\small #2} \\[5mm]}}
\def\journal#1{\oldauthor{\myauthor}}
\def\sep{, }
\newenvironment{frontmatter}{}{\maketitle\par\tableofcontents}
\newenvironment{keyword}{ \newsavebox\savedkeywords\lrbox\savedkeywords \vbox\bgroup \vspace*{0.5cm} \noindent  \textbf{Keywords}: \noindent}{\egroup\endlrbox}
\renewenvironment{abstract}{\newsavebox\savedabstract\lrbox\savedabstract \vbox\bgroup \noindent \textbf{Abstract:}\par \noindent }{\egroup\endlrbox}
\def\maketitle{\oldmaketitle\par\noindent\usebox{\savedabstract}\par\noindent\usebox{\savedkeywords}} 
\date{}
\newtheorem{theorem}{Theorem}
\newtheorem{proposition}{Proposition}
\newtheorem{lemma}{Lemma}
\newtheorem{corollary}{Corollary}
\newtheorem{definition}{Definition}
\newtheorem{claim}{Claim}
\newcommand{\cC}{{\cal C}}
\newcommand{\cD}{{\cal D}}
\newcommand{\cG}{{\cal G}}
\newcommand{\cM}{{\cal M}}
\newcommand{\cO}{{\cal O}}
\newcommand{\bA}{{\bf A}}
\newcommand{\bB}{{\bf B}}
\newcommand{\bC}{{\bf C}}
\newcommand{\bE}{{\bf E}}
\newcommand{\bI}{{\bf I}}
\newcommand{\bL}{{\bf L}}
\newcommand{\bM}{{\bf M}}
\newcommand{\bP}{{\bf P}}
\newcommand{\bQ}{{\bf Q}}
\newcommand{\bR}{{\bf R}}
\newcommand{\bS}{{\bf S}}
\newcommand{\bU}{{\bf U}}
\newcommand{\bV}{{\bf V}}
\newcommand{\bW}{{\bf W}}
\newcommand{\bY}{{\bf Y}}
\newcommand{\bX}{{\bf X}}
\newcommand{\bZ}{{\bf Z}}
\newcommand{\bs}{{\bf s}}
\newcommand{\bv}{{\bf v}}
\newcommand{\by}{{\bf y}}
\newcommand{\bx}{{\bf x}}
\newcommand{\bz}{{\bf z}}
\DeclareMathOperator{\EX}{\mathbb{E}}
\newcommand{\gbarx}{\ensuremath{\cG_{\underline{\bX}}}}
\newcommand{\cbdg}{\cG^{pbd}_{\bX\bY}}
\newcommand{\pbdg}{\cG^{pbd}_{\bX\bY}}
\newcommand{\pbdm}{\cM^{pbd}_{\bX\bY}}
\def\pbdgmod{\cG^{pbd,\bC}_{\bX\bY}}
\newcommand{\causpaths}{\textit{Dpcp}(\bX,\bY)}
\newcommand{\pcauspaths}{\textit{PCP}(\bX,\bY)}
\newcommand{\adjustment}{\textit{Adjustment}}
\newcommand\independent{\protect\mathpalette{\protect\independenT}{\perp}}
\def\independenT#1#2{\mathrel{\rlap{$#1#2$}\mkern2mu{#1#2}}}
\newcommand{\EAninMoralGraph}{m_{a}}
\renewcommand{\frac}[2]{#1/#2}
\newenvironment{algo}[4]
{\begin{center} 
\def\@currentlabelname{#1}#3

\begin{minipage}[t]{#4}
\begin{algorithmic}\Function{#1}{#2}}
{\EndFunction 
\end{algorithmic}
\end{minipage}

\end{center}}
\newenvironment{analal}{\begin{proof}[Analysis of the Algorithm]}{\end{proof}}
\newcommand{\call}[1]{\textsc{\nameref{#1}}}
\long\def\onlyinblackandwhite#1{}
\long\def\onlyincolor#1{#1}
\journal{
Artificial Intelligence
}
\let\citet=\cite
\tikzset{adjusted/.style={fill=black!30}}
\tikzset{inner sep=2pt,minimum size=15pt}
\begin{document}

\begin{frontmatter}

\title{Separators and Adjustment Sets in Causal Graphs: \\
Complete Criteria and an Algorithmic Framework
\footnote{This is a revised and extended version of 
preliminary work presented at the 27th  \citep{TextorLiskiewicz2011} 
and 30th \citep{zander2014constructing} conferences on 
Uncertainty in Artificial Intelligence (UAI). Declarations of interest: none
}\arxivonly{\footnote{Published in Artificial Intelligence 270 (2019) 1-40. \url{https://doi.org/10.1016/j.artint.2018.12.006}}}
}

\cortext[cor1]{Corresponding author.}

\author[uzl]{Benito van der Zander\corref{cor1}}
\ead{benito@tcs.uni-luebeck.de}
\arxivonly{\address[uzl]{Institute for Theoretical Computer Science, Universit\"at zu L\"ubeck, Germany}}

\author[uzl]{Maciej Li\'{s}kiewicz}
\ead{liskiewi@tcs.uni-luebeck.de}
\address[uzl]{Institute for Theoretical Computer Science, Universit\"at zu L\"ubeck, Germany}

\author[radboud]{Johannes Textor}
\ead{Johannes.Textor@radboudumc.nl}
\address[radboud]{Institute for Computing and Information Sciences, Radboud University Nijmegen, Nijmegen, The Netherlands}

\begin{abstract}
Principled reasoning about the identifiability of causal effects
from non-experimental data is an important application of graphical
	causal models. \highlightrevision{r2c3a}{This paper focuses on effects that are identifiable by
	covariate adjustment, a commonly used estimation approach.}
We present an algorithmic framework for
efficiently testing, constructing, 
and enumerating $m$-separators in ancestral graphs (AGs),
a class of graphical causal models that can represent
uncertainty about the presence of latent confounders.
Furthermore, we prove a reduction from causal effect 
identification by covariate adjustment
to $m$-separation in a subgraph for directed acyclic
graphs (DAGs) and maximal ancestral graphs (MAGs). Jointly,
these results yield constructive criteria that characterize
all adjustment sets as well as
all minimal and minimum adjustment sets for identification
of a desired causal effect with multiple 
exposures and outcomes in the presence of latent confounding.
Our results extend several existing solutions for special
cases of these problems. 
Our efficient algorithms allowed us to empirically quantify
the identifiability gap between covariate
adjustment and the do-calculus in random DAGs and MAGs, 
covering a wide range of scenarios.
Implementations of our algorithms 
are provided in the R package {\sc dagitty}.
\end{abstract}

\begin{keyword}
Causal inference\sep
Covariate adjustment\sep
Ancestral graphs\sep
d-separation\sep
m-separation\sep
Complexity\sep 
Bayesian network\sep
Knowledge representation
\end{keyword}
\end{frontmatter}

\let\inputiffinal=\input



\section{Introduction}\label{sec:intro}

Graphical causal models are popular tools for reasoning about assumed causal relationships between random variables and their implications \citep{Pearl2009,Elwert2013}. Such models represent the mechanisms that are assumed to generate an observed joint probability distribution. By analyzing the model structure, one can deduce which variables introduce potential bias (confounding) when the causal effect of exposure variables ($ \bX $) on outcome variables ($ \bY $) is to be determined.
\highlightrevision{r3c2}{
In population research within Epidemiology \citep{RothmanGL2008}, the Social Sciences \citep{Elwert2013}, or Econometrics, causal inference often needs to be based on observational rather than experimental data for practical or ethical reasons. Confounding is a major threat to causal inference in such applications.  
}
A traditional approach to address confounding is to simply adjust for (e.g., by stratification) as many covariates as possible that temporally precede the exposure variable (pre-exposure variables) \cite{Shrier2008,Rubin2008}. This practice has roots in the analysis of data from randomized trials, in which confounding can only occur by chance. For observational data, however, it can be easily shown using graphical causal models that conditioning on pre-exposure variables can induce dependencies between other pre-treatment variables that affect exposure and outcome (Figure~\ref{fig:mbias}). If those variables have not been or cannot be measured (e.g., if we could not observe variable MR in Figure~\ref{fig:mbias}), then we could increase rather then reduce bias by conditioning on a pre-exposure variable. Such examples show that it is impossible to decide which covariates one should  adjust for to address confounding without at least some knowledge of the causal relationships between the involved covariates. To our knowledge, graphical causal models are currently the only causal inference framework in which it is possible to state a set of causal assumptions and derive from those assumptions a conclusive answer to the question whether, and how, a causal effect of interest is identifiable via covariate adjustment. 

Covariate adjustment is not complete for identification; other methods like the front-door criterion \cite{Fulcher2017} or do-calculus \cite{Pearl2009} can permit identification even if covariate adjustment is impossible. \highlightrevision{r1c2}{When multiple options are available, however, adjustment may be an attractive option for effect estimation because its statistical properties are well understood, giving access to useful methodology like robust estimators and confidence intervals.}
A famous result that characterizes valid sets of variables for covariate adjustment in a given graphical model is the back-door criterion by Pearl \citet{Pearl2009}. We call such sets of variables \emph{adjustment sets} or simply \emph{adjustments}. Although the back-door criterion is \emph{sound}, meaning that sets fulfilling the criterion are indeed valid adjustments, it is not \emph{complete} -- not all valid adjustments fulfill the criterion.  
It is even possible that none of the valid adjustments in a causal
model fulfill the back-door criterion, which would lead a user to incorrectly
conclude that a different method is required. 

Shpitser and colleagues \cite{ShpitserVR2010} gave the 
first sound and complete criterion to characterize adjustments 
in DAGs.
This criterion achieves completeness by weakening the simple but overly restrictive
condition in the back-door criterion that adjustment sets must not contain 
any descendants of the exposure variable; such variables can be allowed 
provided they do not block causal paths or create new biasing paths.
However, a characterization of adjustment sets in terms of blocked paths does not 
yield a practical algorithm for adjustment set construction. 
Although enumerating all subsets of covariates
and all paths would work, such a brute-force approach
is only feasible in very small graphs. Thus, to help researchers find the most suitable strategy to identify a given causal effect, algorithmically efficient methods are needed that exhaustively characterize all available options. 

\begin{figure}
\begin{center}
\begin{tabular}{cccc}
{\bf (a)} & {\bf (b)} & {\bf (c)} & {\bf (d)}\\ 
\begin{tikzpicture} 
\node [align=left] (z1) at (0,1.5) {FI}; 
\node [align=left] (z2) at (3,1.5) {MR}; 
\node (w) at (1.5,.75) {MD}; 
\node (x) at (0,0) {LE}; 
\node (y) at (3,0) {D}; 

\draw [very thick,->] (z1) -- (w);
\draw [very thick,->] (z1) -- (x);
\draw [->] (z2) -- (w);
\draw [->] (z2) -- (y);
\draw [very thick,->] (w) -- (y);
\draw [->] (x) -- (y);
\end{tikzpicture} &
\begin{tikzpicture} 
\node [align=left] (z1) at (0,1.5) {FI}; 
\node [align=left] (z2) at (3,1.5) {MR}; 
\node [adjusted] (w) at (1.5,.75) {MD}; 
\node (x) at (0,0) {LE}; 
\node (y) at (3,0) {D}; 

\draw [very thick,->] (z1) -- (w);
\draw [very thick,->] (z1) -- (x);
\draw [very thick,->] (z2) -- (w);
\draw [very thick,->] (z2) -- (y);
\draw [->] (w) -- (y);
\draw [->] (x) -- (y);
\end{tikzpicture} &

\begin{tikzpicture} 
\node [align=left] (z1) at (0,1.5) {FI}; 
\node [adjusted,align=left] (z2) at (3,1.5) {MR}; 
\node [adjusted] (w) at (1.5,.75) {MD}; 
\node (x) at (0,0) {LE}; 
\node (y) at (3,0) {D}; 

\draw [->] (z1) -- (w);
\draw [->] (z1) -- (x);
\draw [->] (z2) -- (w);
\draw [->] (z2) -- (y);
\draw [->] (w) -- (y);
\draw [->] (x) -- (y);
\end{tikzpicture} &
\begin{tikzpicture} 
\node [adjusted,align=left] (z1) at (0,1.5) {FI}; 
\node [align=left] (z2) at (3,1.5) {MR}; 
\node [align=left] (w) at (1.5,.75) {MD}; 
\node (x) at (0,0) {LE}; 
\node (y) at (3,0) {D}; 

\draw [->] (z1) -- (w);
\draw [->] (z1) -- (x);
\draw [->] (z2) -- (w);
\draw [->] (z2) -- (y);
\draw [->] (w) -- (y);
\draw [->] (x) -- (y);
\end{tikzpicture}
\end{tabular}
\end{center}

\caption{Causal diagram \cite[Chapter 12]{RothmanGL2008} 
describing the effect of low education  
(LE) on diabetes risk (D) with the covariates
family income (FI), mother's genetic 
risk to develop diabetes (MR), and mother's diabetes
(MD). The unadjusted estimate (a) 
is biased due to the common ancestor FI
-- bias ``flows'' via the biasing path LE $\gets$ FI $\to$ MD $\to$ D
(bold edges). Adjustment for MD (b) blocks this biasing path,
but opens a new one by creating an association between FI and MR. The
minimal adjustments 
$\{\text{MD},\text{MR}\}$
(c) and $\{\text{FI}\}$ (d) close all biasing paths. Note that,
if both FI and MR were unmeasured, it would be impossible to 
know whether adjustment for only MD would increase or reduce bias.
This shows that, unlike for experimental data, conditioning on pre-exposure
variables can be both beneficial and detrimental in observational data.} 
\label{fig:mbias}
\end{figure}
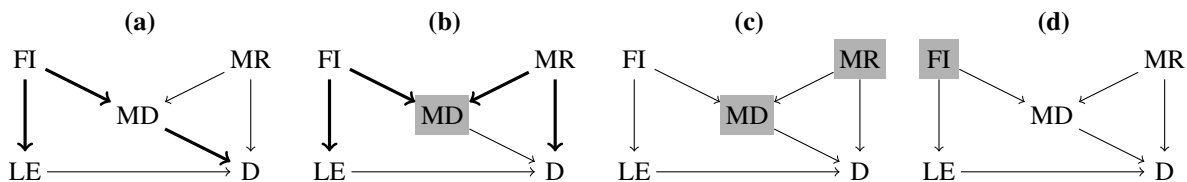

This paper provides efficient algorithms that provide exhaustive answers to the question: Given a causal graph $\cG$, which covariates
$\bZ$ can we adjust for to 
estimate the causal effect of the exposures $\bX$ on
the outcomes $\bY$? 
We address many variants of this question, such as requiring $ \bZ $ 
to be minimal or minimum as well as  imposing 
the constraint $\bI\subseteq\bZ\subseteq\bR$ for given sets $ \bI,\bR $.
All our algorithms handle sets of multiple, possibly
interrelated exposures $\bX$ and outcomes $ \bY $, 
\highlightrevision{r2c47}{
which is important in applications such as 
case-control studies that screen several putative causes
of a disease \citep{Greenland1994}}. The basis for our algorithms are theorems that
reduce adjustment to $d$-separation or $m$-separation in a subgraph, combined
with a generic, efficient and flexible algorithmic framework to find, verify, and
enumerate separating sets.
Our algorithms are guaranteed to find all valid adjustments 
for a given model with polynomial delay.
Besides adjustment, the framework is potentially useful for other
applications, as we illustrate by applying it to 
model consistency testing. 
We also perform an extensive empirical evaluation of the power of covariate
adjustment compared to other identification techniques in random causal graphs.

When there are multiple adjustments available, practical considerations can help to decide between these. We consider two different scenarios to help making such decisions. First, not all covariates might be equally  well suited for adjustment. Measuring some variables may be difficult or expensive, or could be affected by substantial measurement error.
For instance, in Figure~\ref{fig:mbias}, measuring the variable MR 
(the genetic risk inherited from the mother)
would require knowledge of the genetic variants
that lead to a higher diabetes risk and availability of the genomes
of the mothers of all study participants, but under these circumstances, it could be 
measured rather reliably.  
The family income, on the other hand, could be determined rather easily
using a questionnaire, which however could suffer from substantial 
bias (e.g., people with high incomes could be less willing to report them). In such cases adjustments in which some cost (such as measurement error) is minimized are potentially interesting. We provide algorithms to list only those sets that minimize a user-supplied cost function.

	Second, precision is an important issue when estimating effects. Even if adjustment for a certain variable is not required to minimize bias, it may still be desirable to adjust for that variable to increase precision, even if it means the resulting adjustment set is no longer minimal. In Figure~\ref{fig:mbias}, for example, variable MD (mother's diabetes) will likely be substantially correlated with variable D (child's diabetes), so its inclusion into the statistical model that estimates LE's effect on D might well increase precision even if this means that either FI or MR also need to be included. To allow such considerations, we consider the setting where we search for adjustment sets containing a pre-defined subset of variables.

A model represented as a DAG assumes causal sufficiency, i.e., that every relevant variable is included in the model. If this is not the case and there are unknown latent variables, one can use a maximal ancestral graph (MAG) model that just specifies the ancestral relations between the variables \cite{Richardson2002}. An adjustment in a MAG remains valid when the MAG is extended by any number of additional latent variables as long as the ancestral relations and $m$-separating sets are preserved. This is an important feature because the assumption that all relevant confounders have been measured and included in a model is often unrealistic. 
We show that our entire algorithmic framework for computing adjustment sets can be generalized to MAGs, allowing practitioners to substantially relax the assumption of causal sufficiency that DAG-based analyses rely on.

\newcommand{\etall}{ et al. }
\begin{table}
\centering\begin{tabular}{ccc}
&\multicolumn{2}{c}{\bf }\\
\bf Graph class & \bf First sound and complete criterion & \bf First sound and complete constructive criterion \\
DAGs
     & Shpitser\etall \cite{ShpitserVR2010} & this paper (conference version \cite{zander2014constructing}) \\
MAGs
     & this paper (conference version \cite{zander2014constructing}) & this paper (conference version \cite{zander2014constructing})\\
CPDAGs
     &Perkovi\'{c}\etall \cite{PerkovicEtAl2018} &van der Zander and Li\'{s}kiewicz \cite{vanderZander2016separators}\\
PAGs 
     &Perkovi\'{c}\etall \cite{PerkovicEtAl2018} &Perkovi\'{c}\etall \cite{PerkovicEtAl2018}\\
CGs
   &van der Zander and Li\'{s}kiewicz \cite{vanderZander2016separators}&van der Zander and Li\'{s}kiewicz \cite{vanderZander2016separators}\\
maximal PDAGs 
  & Perkovi\'{c}\etall \cite{PerkovicEtAl2017}
  & Perkovi\'{c}\etall \cite{PerkovicEtAl2017} \\
\end{tabular}
\caption{
Using covariate adjustment to estimate causal effects:
an overview of our results and related work 
for directed acyclic graphs (DAGs), 
maximal ancestral graphs (MAGs),
completed partially directed acyclic graphs (CPDAGs), 
partial ancestral graphs (PAG),  
chain graphs (CGs), and maximally oriented partially directed acyclic graphs (PDAGs).
	A criterion is sound if it is only satisfied by adjustment sets. It is complete if every adjustment set satisfies it.  It is constructive if it leads to an efficient algorithm for constructing an adjustment set.  Paper \cite{zander2014constructing} is the preliminary conference version of this work, and all subsequent results are based on the techniques developed in this paper.}\label{table:related}
\end{table}

Beyond DAGs and MAGs, our proof techniques have been successfully applied to obtain complete and constructive adjustment criteria for several other classes of graphical causal models since the publication of our preliminary version~\cite{zander2014constructing}.
Perkovi\'{c}\etall \cite{PerkovicEtAl2018} provide a generalized adjustment criterion (GAC) 
which is sound and complete for DAGs, CPDAGs, MAGs and PAGs.
CPDAGs (PAGs) are a class of causal models where a single graph represents a Markov equivalence class of DAGs (MAGs), in which all DAGs (MAGs) have the same conditional independences. 
Such models occur naturally when learning models from observed data, where in general, several possible models are consistent with the data. Finally, in \cite{vanderZander2016separators} we were able to extend our techniques to chain graphs (CGs)
and Perkovi\'{c}\etall \cite{PerkovicEtAl2017} applied these to maximally oriented partially directed acyclic graphs (PDAGs)
--  graphs providing an elegant framework for modeling and analyzing a broad range
of Markov equivalence classes. 
We summarize these results in Table~\ref{table:related}.
In this paper, we focus on complete criteria for DAGs and MAGs because this substantially simplifies the presentation of the results, and the application of these techniques to other graph classes is relatively straightforward \cite[Section~4.3]{PerkovicEtAl2018}.

This paper is structured as follows. %
Section~\ref{sec:preliminaries} introduces notation and basic concepts. 

In Section~\ref{sec:algo}, we present algorithms for verifying, constructing, and listing $m$-separating sets $ \bZ $ in ancestral graphs. All algorithms handle unconstrained separators, minimal and minimum separators, and the constraint $ \bI \subseteq \bZ \subseteq \bR $ for given sets $ \bI,\bR $. They subsume a number of earlier solutions for special cases of these problems, e.g., the Bayes-Ball algorithm for verification of $d$-separating sets \citep{Shachter1998} or the use of network flow calculations to find minimal $d$-separating sets in DAGs~\citep{TianPP1998,AcidC03}. 
Section \ref{sec:basissets} presents DAG consistency testing  as a possible application of these algorithms beyond adjustment sets.
In Section~\ref{sec:dagadjust}, we give a constructive back-door criterion (CBC) that
reduces the construction of adjustment sets to finding separating sets in DAGs,
explore variations of the CBC and compare it to Pearl's back-door criterion \cite{Pearl2009}. 
Section~\ref{sec:algorithms:adjustment} explains how to apply the algorithms of Section~\ref{sec:algo} to the graphs resulting from the criterion of Section~\ref{sec:dagadjust}, which yields polynomial time algorithms for verifying, constructing, and listing adjustment sets in DAGs. This combination leads to the first efficient implementation of the sound and complete adjustment criterion by Shpitser and colleagues \citet{ShpitserVR2010}.
Section~\ref{sec:cbcext} extends our criterion by addressing some cases where, even though covariate adjustment is not possible, it is still not necessary to invoke the do-calculus since  the causal effect can be identified in another simple manner. 
In Section~\ref{sec:experiments} we compare the results and performance of our adjustment criterion (with and without the extensions derived in Section~\ref{sec:cbcext}) with Pearl's back-door criterion as well as the IDC algorithm, which finds all identifiable causal effects, on randomly generated DAGs. 
Finally, in Section~\ref{sec:magadjust}, we extend our constructive back-door criterion of Section~\ref{sec:dagadjust} to MAGs, which generalizes the sound but incomplete adjustment criterion for MAGs without constraints by Maathuis and Colombo \cite{Maathuis2013}.

\section{Preliminaries}\label{sec:preliminaries}

We denote sets by bold upper case letters ($\bS$), and sometimes
abbreviate singleton sets $\{S\}$ as $S$. Graphs are written calligraphically $(\cG)$, and variables in upper-case $(X)$.

\subsection{General background}
\paragraph{Mixed graphs, walks and paths}
We consider mixed graphs $\cG=(\bV,\bE)$ with
nodes (vertices, variables)
$\bV$ and directed
($A \to B$), undirected ($A-B$), and bidirected
($A \leftrightarrow B$) edges $\bE$. Nodes linked
by an edge are \emph{adjacent} and \emph{neighbors} of each other.
A  \emph{walk} of length $n$ is a node
sequence $V_1,\ldots,V_{n+1}$  such that
there exists an edge sequence
$E_1,E_2,\ldots,E_{n}$ for which every edge $E_i$ 
connects $V_{i},V_{i+1}$. Then $V_1$ is called 
the \emph{start node} and $V_{n+1}$ the \emph{end node}
of the walk.
A \emph{path} is a walk in which no node occurs
more than once. Given a node set $\bX$ and a node set
$\bY$, a walk from $X \in \bX$ to $Y\in\bY$  is called
\emph{$ \bX $-proper} if only its start node is in $\bX$. 
If $ \bX $ is clear from the context, it is omitted and we just say the path is \emph{proper}.
Given a graph $\cG=(\bV,\bE)$ and a node set $\bV'$,
the \emph{induced subgraph} $\cG_{\bV'}=(\bV',\bE')$ contains
the edges $\bE' = \bE\cap (\bV'\times\bV')$ from $\cG$ that
are adjacent only to nodes in $\bV'$. The \emph{skeleton} of $\cG$ is a graph with the same nodes in which every edge is replaced by an undirected edge.
A \emph{connected component} is a subgraph in which every pair of nodes is connected by a path. A subgraph containing only a single node is also a connected component.
A connected component is a \emph{bidirectionally connected component} if for every pair of nodes there exists a connecting path that contains only bidirected edges.  
Throughout, $n$ stands for the number of nodes and $m$ for the
number of edges of a graph. 

\paragraph{Ancestry} A walk of the form $V_1 \to \ldots \to V_n$ 
is \emph{directed}, or \emph{causal}. A non-causal walk is \emph{biasing}.
A graph is \emph{acyclic} if no 
directed walk from a node to itself 
is longer than $0$.
All directed walks in an acyclic
graph are paths. 
If there is a directed path from $U$ to $V$, 
then $U$ is called an
\emph{ancestor} of $V$ and $V$ a \emph{descendant} of $U$.
A walk is \emph{anterior} if it were
directed after replacing all edges 
$U - V$ by $U \to V$.
If there is an anterior path from $U$ to $V$, then
$U$ is called an \emph{anterior} of $V$. 
All ancestors of $V$ are anteriors of $V$.
Every node is its own ancestor,
descendant, and anterior.
For a node set $\bX$, the set of all of its ancestors 
is written as $\textit{An}(\bX)$. The descendant and
anterior sets $\textit{De}(\bX)$, $\textit{Ant}(\bX)$ are
analogously defined.
Also, we denote by $\textit{Pa}(\bX)$, 
($\textit{Ch}(\bX)$, $\textit{Ne}(\bX)$), the set of parents (children, neighbors)
of $\bX$.

\paragraph{$m$-Separation} This concept is 
a generalization of the well-known $d$-separation 
criterion in DAGs  \citep{Pearl2009} for ancestral graphs.
A node $V$ on a walk $w$
is called a \emph{collider} if two arrowheads of $w$
meet at $V$, e.g., one possible collider is
$U \leftrightarrow V \gets Q$. There can be no collider
if $w$ is shorter than 2. Two nodes $U,V$ are called
\emph{collider connected} if there is a path between 
them on which all nodes except $U$ and $V$ are 
colliders. Adjacent vertices are collider connected.
Two nodes $U,V$ are 
called \emph{$m$-connected} by a set $\bZ$ if there is a path $\pi$
between them on which every node that is a collider 
is in $\textit{An}(\bZ)$ and every node that is not a 
collider is not in $\bZ$. Then $\pi$
is called an $m$-connecting path. The same definition
can be stated simpler using walks:
$U,V$ are called $m$-connected by $\bZ$ if 
there is a walk $w$ between them on which all
colliders and only colliders are in $\bZ$.
Similarly, such $w$ is called an $m$-connecting walk.
If $U,V$ are
$m$-connected by the empty set, we simply say they
are $m$-connected.
If $U,V$ are 
not $m$-connected by $\bZ$, we say that $\bZ$ \emph{$m$-separates}
them or \emph{blocks} all paths between them. Two node sets
$\bX,\bY$ are $m$-separated by $\bZ$ if all
their nodes are pairwise $m$-separated by $\bZ$.

\paragraph{DAGs and ancestral graphs} A DAG is an acyclic graph with only directed  edges.
A mixed graph
$\cG=(\bV,\bE)$ is called an \emph{ancestral graph} (AG)  \citep{Richardson2002} if the following two
conditions hold: (1) For each edge $A\gets B$ 
or $A \leftrightarrow B$, $A$ is not an anterior of $B$.
(2) For each edge $A-B$, there are no edges
$A \gets C$, $A \leftrightarrow C$, $B \gets C$ 
or $B \leftrightarrow C$. From these conditions it follows that there can be at most one edge between two nodes in an AG.
\highlightrevision{r2c12}{
If one restricts AGs to  
graphs consisting of only directed and bidirected edges, as we do in Section~\ref{sec:magadjust}, then a simple and intuitive equivalent definition characterizes an AG as a graph without directed and almost-directed cycles, i.e.,  without subgraphs of the form
$V_1 \to \ldots \to V_n \to V_1$ or 
$V_1 \leftrightarrow V_2 \to \ldots \to V_n  \to V_1$.}
Syntactically, all DAGs are AGs and all
AGs containing only directed edges are DAGs.
An AG $\cG=(\bV,\bE)$ is a \emph{maximal ancestral graph}
(MAG) if every non-adjacent pair of nodes $U,V$ 
can be $m$-separated by some $\bZ \subseteq \bV \setminus\{U,V\}$.
It is worth noting that syntactically a DAG is also a MAG \cite{Zhang2008}. 
Moreover every AG $\cG$ can be turned into a MAG $\cM$  by adding
bidirected edges between node pairs that cannot
be $m$-separated, which preserves all $m$-separation
relationships in the graph~\citep[Theorem 5.1]{Richardson2002}.

\paragraph{Graph transformations} 
A DAG $\cG = (\bV,\bE)$ is represented by a MAG $\cM = \cG[^\bS_\bL$  with nodes $ \bV \setminus (\bS \cup \bL) $ for sets $\bS,\bL\subseteq\bV$, whereby $ \cM $ has an edge between a pair of nodes $U,V$  if $U,V$ cannot be $d$-separated in $\cG$ by any $\bZ$ with $\bS \subseteq \bZ \subseteq \bV\setminus\bL$. That edge has an arrowhead at node $ V $, if $V \notin \textit{An}(U \cup \bS)$.
Notice that for empty $\bS$ and $\bL$ we have $\cG[_{\emptyset}^{\emptyset}=\cG$.
The \emph{canonical DAG} $\cC(\cM)$ of a MAG $\cM$ is the DAG obtained from $\cM$ by replacing every $\leftrightarrow$ edge with $\gets L \to$ and every $-$ edge with $\to S \gets$ with new nodes $L$ or $ S $ which form sets $\bL$, $\bS$. Clearly $\cC(\cM)[^\bS_\bL = \cM$ (for more details see \citep{Richardson2002} or Section~\ref{sec:magadjust} in our paper).

The \emph{augmented graph} $(\cG)^a$ of a certain AG $\cG$ is an undirected graph with the same nodes as $\cG$ whose edges are all pairs of nodes that are collider connected in $\cG$. Two node sets $\bX$ and $\bY$ are $m$-separated by a node set $\bZ$ in $\cG$  if and only if $\bZ$ is an $\bX$-$\bY$ node cut in $(\cG_{\textit{Ant}(\bX\cup\bY\cup\bZ)})^a$  \citep{Richardson2002}. For DAGs the augmented graph is also called the moralized graph, so the construction of the augmented graph is termed \emph{moralization}. 

For any subset $\bA$ and $\bB$ of $\bV$, by $\bA \to \bB$ we denote the set of 
all edges $A \to B$ in $\bE$, such that $A \in \bA$ and $B\in \bB$; the sets $\bA \gets  \bB$, 
$\bA \leftrightarrow  \bB$, and $\bA -  \bB$ are defined analogously.
\highlightrevision{r2c13}{Pearl's do-calculus \cite{Pearl2009} defines the following two transformations.} 
First, the graph obtained from a graph $ \cG = (\bV,\bE) $ by removing all edges entering $\bX$ is written as $ \cG_{\overline{\bX}} = (\bV, \bE \setminus (( \bV \to \bX ) \cup ( \bV \leftrightarrow \bX ) )) $. Second, the removal of all edges leaving $ \bX $ is written as 
$ \cG_{\underline{\bX}} = (\bV, \bE \setminus (( \bX \to \bV ) \cup( \bX - \bV ) )) $. The application of both these operations $ (\cG_{\overline{\bX}})_{\underline{\bX'}} $  is abbreviated as $ \cG_{\overline{\bX}\underline{\bX'}} $ matching the notation of do-calculus \cite{Pearl2009}. Descendants and ancestors in these graphs are written as set subscript at the  corresponding function without specifying the graph, e.g., $ \textit{De}_{\overline{\bX}} $ or $ \textit{An}_{\underline{\bX}} $. This notation is used for DAGs and MAGs.
Note that for DAGs in each case only the first kind of edges needs to be removed as there are no $\leftrightarrow$ or $-$ edges in DAGs.

\subsection{Identification via covariate adjustment} 

\paragraph{Do-operator and identifiability}%
A DAG $\cG$ encodes the factorization of a
joint distribution $P$ for the set of variables $\bV=\{X_1,\ldots,X_n\}$
as $P(\bv)=\prod_{j=1}^n P(x_j \mid \textit{pa}_j)$, 
where $\textit{pa}_j$ denotes a particular realization of 
the parent variables %
of $X_j$ in $\cG$.
When interpreted causally, an edge $X_i \to X_j$ is 
taken to represent a direct causal effect of $X_i$ on
$X_j$. Formally, this can be defined in terms of the do-operator \cite{Pearl2009}:
Given DAG $\cG$ and a joint probability density $P$ for $\bV$ 
the post-intervention distribution can be expressed
in a truncated factorization formula
\begin{equation}\label{do:truncated:formula}
  P(\bv \mid  \textit{do}(\bx)) = 
  \left\{
    \begin{array}{ll}
    \displaystyle
         \prod_{ X_j \in\bV \setminus \bX} P(x_j \mid  \textit{pa}_j)&
	    \text{for $\bv$ consistent with $\bx$,} \\[1mm]
      0& \text{otherwise,}
    \end{array}
  \right.
\end{equation}
	where ``$\bv$ consistent with $\bx$'' means that $\bv$ and $\bx$ assign the same values to the variables in $\bX \cap \bV$.
For disjoint 
$\bX,\bY\subseteq \bV$, 
the \emph{(total) causal effect} of $\bX$ on $\bY$ is
$P(\by \mid  \textit{do}(\bx))$ where $\textit{do}(\bx)$ represents
an intervention that sets $\bX=\bx$. 
In a DAG, this intervention corresponds to removing all edges
into $\bX$, disconnecting $\bX$ from its parents, i.e., constructing the graph $\cG_{\overline{\bX}}$.

When all variables in $\bV$ are observed (we model this by setting $\bR=\bV$), 
the causal effect $P(\by \mid  \textit{do}(\bx))$ 
of $\bX$ on $\bY$  in a given graph $\cG$ can be determined uniquely from 
the pre-intervention distribution $P$ using the factorization above.   
However, when some variables are unobserved, the question whether $P(\by \mid  \textit{do}(\bx))$  
is  \emph{identifiable}, i.e., if it can be computed from the pre-intervention distribution independent 
of the unknown quantities for the unobserved variables $\bV\setminus \bR$, 
becomes much more complex. 
For a formal definition of identifiability see~\cite[Section 3.2]{Pearl2009}.
Figure~\ref{fig:notident:ident:viaadj} shows an example DAG $\cG_1$ for which 
the causal effect of $X$ on $Y$ is not identifiable, and two DAGs $\cG_2$ and $\cG_3$
for which the effect is identifiable.

\paragraph{Adjustment sets}
Identification via covariate adjustment, studied in this paper, is defined for DAGs below
and can be extended to MAGs in the usual way \citep{Maathuis2013}
(we will give a formal definition Section~\ref{sec:magadjust}).

\begin{definition}[Covariate adjustment \cite{Pearl2009}]
\label{def:adjustment:set}
Given a DAG $\cG=(\bV,\bE)$ and 
pairwise disjoint
$\bX,\bY,\bZ \subseteq \bV$, 
$\bZ$ is called \emph{adjustment set for estimating the causal effect of $\bX$ on $\bY$}, 
or simply \emph{adjustment} (set), if for every distribution %
$P$ consistent with $\cG$ we have 
\begin{equation}\label{eq:adjustment:def}
P(\by\mid \textit{do}(\bx)) = \left\{
  \begin{array}{ll}
   P(\by\mid \bx)                       & \text{if $\bZ = \emptyset$,}\\[2mm]
   \sum_{\bz} P(\by\mid \bx,\bz) P(\bz) & \text{otherwise.}
  \end{array}
\right.
\end{equation}
\end{definition}

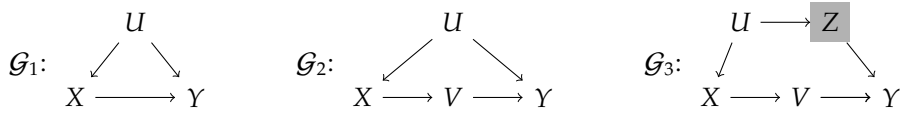
\begin{figure}
\centering
$\cG_1$:
\begin{tikzpicture}[xscale=0.8,baseline=1em]
\node (X) at (0,0) {$X$};
\node (U) at (1,1) {$U$};
\node (Y) at (2,0) {$Y$};
\graph { (X) -> (Y) ; (X) <- (U) -> (Y) ; };
\end{tikzpicture}
\hspace*{10mm}$\cG_2$:
\begin{tikzpicture}[xscale=0.8,baseline=1em]
\node (X) at (-0.5,0) {$X$};
\node (U) at (1,1) {$U$};
\node (Z) at (1,0) {$V$};
\node (Y) at (2.5,0) {$Y$};
\graph { (X) -> (Z) -> (Y) ; (X) <- (U) -> (Y) ; };
\end{tikzpicture}
\hspace*{10mm}$\cG_3$:
\begin{tikzpicture}[xscale=0.8,baseline=1em]
\node (X) at (-0.5,0) {$X$};
\node (U) at (0.0,1) {$U$};
\node[adjusted] (Z) at (1.5,1) {$Z$};
\node (V) at (1,0) {$V$};
\node (Y) at (2.5,0) {$Y$};
\graph { (X) -> (V) -> (Y) ; (X) <- (U) -> (Z) -> (Y) ; (V) -> (Y)};
\end{tikzpicture}\caption{Identification of $P(y \mid  \textit{do}(x))$ in graphs with the unobserved nodes
$\bV\setminus\bR=\{U\}$: In $\cG_1$ the effect is not identifiable. In $\cG_2$, it is identifiable 
(using the front-door method \cite{Pearl2009}) but 
for this DAG no adjustment set $\bZ$ exists, with $\bZ\subseteq\bV\setminus\{U\}$. 
In $\cG_3$ the effect can be identified via adjustment $\bZ=\{Z\}$.
}
\label{fig:notident:ident:viaadj}
\end{figure}

Identification via covariate adjustment is not complete in the sense
that there exist graphs for which $P(\by\mid \textit{do}(\bx))$ is identifiable 
but for which no adjustment set $\bZ$, with $\bZ \subseteq \bR$, exists.
For an example of such a DAG see Figure~\ref{fig:notident:ident:viaadj}.
In \cite{pearl1995causal} (see also \cite[Section~3.4]{Pearl2009}) Pearl introduced 
the \emph{do-calculus} of intervention that was proven to be complete for
identifying causal effects \cite{shpitser2006identification,huang2006pearl}.
Based on the rules of the do-calculus, the IDC algorithm proposed by 
Shpitser and Pearl \cite{ShpitserIDCAlgorithm} computes a formula for 
$P(\by \mid  \textit{do}(\bx))$ if (and only if) the effect is identifiable.
\highlightrevision{r1c1}{
Researchers might thus use IDC to decide if an effect of interest is identifiable 
at all, and then use one of the algorithms provided in this paper to verify if 
identification is also possible via adjustment, which has some statistically 
favorable properties.} However, for complex graphs, it may not be practical to 
run the IDC algorithm as we show later, whereas adjustment sets can still be found
quickly if they exist. 


\paragraph{Minimality}
In this paper we present methods for computing an adjustment set 
that satisfies the condition of Definition~\ref{def:adjustment:set} for a given instance. An important feature 
of our approach is that it allows also to find adjustment sets satisfying some additional desirable 
constraints, e.g., minimality. Below we define this notion formally both for adjustment
and separation sets.

For pairwise disjoint $\bX,\bY,\bZ \subseteq \bV$,
and  a subset $\bM$ of $\bV$,
an $m$-separator (resp. adjustment set) $\bZ$ relative to $(\bX,\bY)$ 
is \textit{$\bM$-minimal}, if $\bM\subseteq \bZ$ and no set $\bZ' \subsetneq \bZ$ 
with $\bM \subseteq \bZ'$ is an $ m $-separator (adjustment set) relative to $(\bX,\bY)$.
An $ m $-separator (adjustment) $\bZ$ is \textit{$ \bM $-minimum} 
according to a cost function $w: \bV\to \mathbb{R}^+$ if $\bM\subseteq \bZ$ 
and no $ m $-separator (adjustment) $\bZ'$ with $\bM \subseteq \bZ'$ 
and $ \sum_{Z\in\bZ'} w(Z) < \sum_{Z\in\bZ} w(Z) $ exists. 
In particular, $\emptyset$-minimality ($\emptyset$-minimum)
coincides with the standard notion of minimality (minimum), which we will also 
call \emph{strong-minimality} (\emph{strong-minimum}). A more detailed explanation of these concepts along with some examples is provided in Section~\ref{sec:sepalgo:dense}.

While $\bM$-minimality is defined with respect to any set $\bM \subseteq \bV$, in this 
paper we will only consider the two cases $\bM=\emptyset$ and $\bM=\bI$, i.e., we
consider $m$-separators and adjustments
$\bZ$ that are supersets of the constraining set $\bI$.
For a given constraining set $\bI$ we will abbreviate the 
$\bI$-minimal/minimum $\bZ\supseteq \bI$ as \emph{weakly-minimal/minimum} or just 
as \emph{minimal/minimum}.
Note a subtle, but important difference between
weak and strong minimality.
For example, the existence of a weakly-minimal $m$-separator 
does not necessarily imply that a strongly-minimal separator exists.
E.g., in the DAG $X \to I \gets V \to Y$, set $\bZ=\{I,V\}$ is 
an $I$-minimal $d$-separator relative to $(X,Y)$ but in the graph
there exists no strongly-minimal $d$-separator  $\bZ'$,
with $I \subseteq \bZ'$. On the other hand, it is easy to see that 
every strongly-minimal $m$-separator $\bZ$, with $\bI\subseteq \bZ$,  
is also an $\bI$-minimal one and the same holds for the minimum sets.

\section{Algorithms for $m$-separation in ancestral graphs}

\label{sec:algo}

In this section, we compile an algorithmic framework for solving a host of 
problems related to verification, construction, and enumeration of 
$m$-separating sets in an ancestral graph $\cG = (\bV,\bE)$ with 
$n$ nodes in $\bV$ and $m$ edges in $\bE$. 
The problems are defined in 
Table~\ref{fig:problems}, which also shows the asymptotic runtimes of our algorithms. %
The  goal is to test or to output %
either arbitrary $m$-separating sets $\bZ$ without further constraints 
or $m$-separating sets that have a minimal size, so that no node 
can be removed from the set without turning it into a non-separating set.  
A further constraint is that $\bZ$ should be bounded by arbitrary given sets $\bI \subseteq \bR$ as $\bI \subseteq \bZ \subseteq \bR$. The variables in $\bI$ will always be included in the $ m $-separating set, even if the set remains a separator without these variables. The set $\bV\setminus\bR$ corresponds to a set of \emph{unobserved} variables, which are known to exist, but have not been (or cannot be) measured. This constraint can also be used to model correlated errors between variables, i.e., rather than connecting such variables with a bidirected edge like in a semi-Markovian model, the variables are connected to another variable not in $\bR$.  Both constraints together are also an important technical tool for the design of efficient enumeration algorithms and the adjustment algorithms in the later sections.

The rest of this section describes our algorithms solving these problems, which are mostly 
 generalizations of existing algorithms from
\citep{Acid1996,Shachter1998,TianPP1998,TextorLiskiewicz2011}. For each problem, we present the algorithm as a function of the same name as the problem, so that the association between problem and algorithm is easy to follow and the algorithms can be built upon each other.

In contrast to the preliminary conference version of this work \cite{zander2014constructing}, we state the algorithms separately for sparse and dense graphs. We introduce the notion $\bM$-minimality to analyze the minimal sets found by the algorithms and show that they find weakly-minimal separators, since finding strong-minimal separators is intractable. 

\begin{table*}
\centering
\begin{tabular}{lllll}
&&&Runtime&Reference \\
\multicolumn{3}{l}{\textbf{Verification:} \text{For given $\bX, \bY,\bZ$ and constraint $\bI$
 decide if $\ldots$ }}
&&\\
\hspace*{2mm} 
   & {\sc TestSep} & $\bZ$ $m$-separates $\bX,\bY$ & $\cO(n+m)$ 
      &Proposition~\ref{prop:TestSep}\\
   & {\sc TestMinSep} & $\bZ\supseteq \bI$ $m$-separates $\bX,\bY$ and $\bZ$ is $\ldots$ &&\\
   & & \hspace*{3mm} $ \bI $-minimal & $\cO(n^2)$ 
      &Proposition~\ref{prop:TestMinSep}\\
   & &  \hspace*{3mm} strongly-minimal 
      & $\cO(n^2)$ 
      &Proposition~\ref{prop:TestMinSep} \\[2mm]
\multicolumn{3}{l}{\textbf{Construction:} \text{For given $\bX, \bY$ and constraints $\bI, \bR$, output an $\ldots$}} & & \\
\hspace*{2mm} 
  & {\sc FindSep} &    $m$-separator $\bZ$ with $\bI\subseteq\bZ\subseteq\bR$  & $\cO(n+m)$ 
     &Proposition~\ref{prop:FindSep}\\
  & {\sc FindMinSep} &   $m$-separator $\bZ$  with $\bI\subseteq\bZ\subseteq\bR$ which is $\ldots$ &&\\
     &  & \hspace*{3mm}  $ \bI $-minimal & $\cO(n^2)$ 
     &Proposition~\ref{prop:FindMinSep}\\
   &  & \hspace*{3mm}   strongly-minimal 
     & NP-hard
     &Proposition~\ref{prop:np-complete}\\
  & {\sc FindMinCostSep} & $m$-separator $\bZ$  with $\bI\subseteq\bZ\subseteq\bR$ which is $\ldots$ &&\\
  &  & \hspace*{3mm}    $ \bI $-minimum & $\cO(n^3)$ 
     &Proposition~\ref{prop:FindMinCostSepI}\\
  &  & \hspace*{3mm} strongly-minimum & $\cO(n^3)$ 
     &Proposition~\ref{prop:FindMinCostSep:proper}\\[2mm]
\multicolumn{3}{l}{\textbf{Enumeration:} \text{For given $\bX, \bY, \bI, \bR$ enumerate all $\ldots$ } }&Delay&\\
 & {\sc ListSep} 
     &$m$-separators $\bZ$  with $\bI\subseteq \bZ \subseteq  \bR$   &$\cO(n(n+m))$ 
     &Proposition~\ref{prop:ListSep}\\
 & {\sc ListMinSep} & $ \bI $-minimal $m$-separators $\bZ$  with $\bI \subseteq \bZ \subseteq  \bR$        
     &$\cO(n^3)$ 
     &Proposition~\ref{prop:ListMinSep}\\
\end{tabular}
\caption{Definitions of algorithmic tasks related to $m$-separation
in an ancestral graph $\cG$ of $n$ nodes and $m$ edges and the 
time complexities of algorithms given in this section that solve the associated problems. Throughout, $\bX,\bY,\bR$ are pairwise disjoint node sets, the set $\bZ$ is disjoint with the non-empty sets $\bX,\bY$, and each of the sets $\bI,\bR,\bZ$ can be empty. 
A minimum $m$-separator minimizes the sum $\sum_{Z\in\bZ} w(Z)$ for a cost function $w$ respecting the given constraints, %
i.e., $w(V) = \infty$ for $V \notin \bR$.
The construction
algorithms output $\bot$ if no set fulfilling the listed condition exists.
Delay complexity for {\sc ListSep} and {\sc ListMinSep} refers to the time needed per solution when there can be exponentially many solutions
(see~\cite{Takata2010}).}
\label{fig:problems}
\end{table*}

\subsection{Linear-time algorithms for testing and finding $m$-separators}

The problems {\sc TestSep} and {\sc FindSep} can be solved immediately in the ancestral graph.
{\sc TestSep} just requires us to verify the $m$-separation definition for given sets $\bX, \bY, \bZ$. In DAGs $d$-separation is usually tested with the Bayes-Ball algorithm \citep{Shachter1998}, which can be visualized as sending balls through the graph along the edges, tracking which nodes have been visited from either their parents or children, until all reachable nodes have been visited. In other words, Bayes-Ball is basically a breadth-first-search with some rules that determine if an edge pair lets the ball pass, bounces it back on the same edge or blocks it completely. These rules can be adapted to $ m $-separation as shown in 
 Figure~\ref{fig:bayes:ball}, which leads to the following algorithm for testing if 
 a given $\bZ$ $m$-separates $\bX$ and $\bY$ in an AG $\cG$:

\def\bayesball#1#2#3#4#5#6#7#8{\begin{scope}[xshift=#1]
\node[] (C) at (0,1) {T}; 
\node[#2] (M) {$M$}; 
\draw[#3] (C) -- (M);
\node[] (O) at (-0.9,-1) {$O$} ; \draw[#4,->] (M) -- (O);
\node[] (I) at (-0.3,-1) {$I$} ; \draw[#5,<-] (M) -- (I);
\node[] (B) at (0.3,-1) {$B$}  ; \draw[#6,<->] (M) -- (B);
\node[] (U) at (0.9,-1) {$U$}  ; \draw[#7] (M) -- (U);
\node[text width=2em,anchor=north,draw] at (-0.9,1.2) {#8};
\end{scope}}
\begin{figure}
\centering
\begin{tikzpicture}[b/.style={color=red,postaction={decorate,decoration={markings, mark=at position 0.5 with {\draw[-] (0,-3pt) -- (0,3pt);}}}},Z/.style={adjusted}]
\node[] at (-2,1.5) {$M\notin \bZ $:};
\bayesball{0cm}{}{->}{}{b}{b}{}{$\to \to$\\$\to -$}
\bayesball{3cm}{}{<-}{}{}{}{}{$\gets \to$\\$\gets\gets$\\$\gets\leftrightarrow$\\$\gets-$}
\bayesball{6cm}{}{<->}{}{b}{b}{}{$\leftrightarrow \to$\\$\leftrightarrow -$}
\bayesball{9cm}{}{  }{}{}{}{}{$- \to$\\$-\gets$\\$-\leftrightarrow$\\$-\ -$}

\begin{scope}[yshift=-3cm]
\node[] at (-2,1.5) {$M\in \bZ $:};
\bayesball{0cm}{Z}{->}{b}{}{}{b}{$\to\gets$\\$\to\leftrightarrow$}
\bayesball{3cm}{Z}{<-}{b}{b}{b}{b}{}
\bayesball{6cm}{Z}{<->}{b}{}{}{b}{$\leftrightarrow\gets$\\$\leftrightarrow\leftrightarrow$}
\bayesball{9cm}{Z}{  }{b}{b}{b}{b}{}
\end{scope}
\end{tikzpicture}
\caption{Expanded rules for Bayes-Ball in AGs, listing (in boxes) 
all combinations of edge pairs through which the ball is allowed to pass. 
The Bayes ball starts at the top node $ T $ and passes through the middle 
node $ M $ to one of the bottom nodes $\{O,I,B,U\}$. Forbidden passes are \onlyincolor{marked in red and }crossed out.
Here, by a pair of edges we mean an edge between $T$ (Top node) and $M$ (Middle node) 
and $M\to O$ (Out-node), resp. $M\gets I$ (In-node), $M\leftrightarrow  B$ (Bidirected edge), and  $M-U$ (Undirected edge).
The figure above shows all possible types of edges between $T$ and $M$.
We consider two cases: $M\not\in \bZ$ and $M\in \bZ$ (gray). 
The leaving edge can correspond to the entering edge, i.e., $T$
can belong to $\{O,I,B,U\}$, in which case the ball might return to $T$, which is called a bouncing ball in the literature.}\label{fig:bayes:ball}
\end{figure}
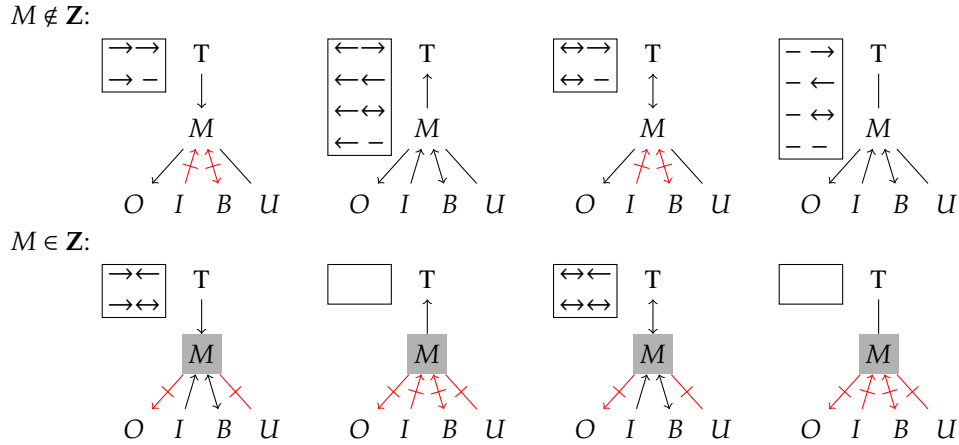

\begin{algo}{TestSep}{$\cG, \bX, \bY, \bZ$}{\label{algo:issep}}{15cm}
\State{$\bP \gets \{ (\gets,X) \mid  \text{$X \in \bX$ }\}$  }\Comment{list of pairs (type of edge, node) whose visit is pending}
\State{$\bQ \gets \bP$  }\Comment{all pending and ever visited nodes}
\While{ $ \bP $ not empty }
\State{Let $(e,T)$ be a (type of edge, node) pair of $\bP$}
\State{Remove $(e,T)$ from $\bP$}
\For{all neighbors $N$ of $T$} 
\State{Let $T$ and $N$ be connected by edge $f$}
\If{$e, T, f$ is an $m$-connecting path segment %
{\bf and } $(f,N)\notin \bQ$}
\State{Add $(f,N)$ to $\bP$ and $ \bQ $}
\EndIf
\EndFor
\If{$M \in \bY$}{ {\bf return} false}
\EndIf
\EndWhile
\State{\Return true}
\end{algo}
\begin{analal}
From the rules in Figure~\ref{fig:bayes:ball} it is obvious that the following statement about algorithm \call{algo:issep} holds:
The ball only passes through the walk segment of two consecutive edges 
if the segment is an $m$-connected walk. The correctness follows from the fact that 
the algorithm searches over all walks starting in $\bX$.
The runtime is linear as it is a breadth-first-search and each node is visited at most four times. The rules for entering a node through edges $\to$ and $\leftrightarrow$ as well as through edges $\gets$ and $-$ are the same, so an implementation could merge these cases in $ \bQ $, so each node is visited at most twice. %
\end{analal}
\begin{proposition}\label{prop:TestSep}
Using the algorithm above, the task {\sc TestSep} can be solved in time $\cO(n+m)$. 
\end{proposition}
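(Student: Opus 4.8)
The plan is to establish two things: that the algorithm decides $m$-separation correctly, and that it runs in time $\cO(n+m)$. For correctness I would work with the walk characterization of $m$-connection stated in the preliminaries, namely that $\bZ$ fails to $m$-separate $\bX$ and $\bY$ exactly when there is a walk between some $X \in \bX$ and some $Y \in \bY$ on which the colliders are precisely the nodes in $\bZ$. The algorithm is a breadth-first search whose states are pairs $(e,T)$, recording a node $T$ together with the orientation $e$ at $T$ of the edge used to reach it, so the goal is to show that the search enumerates exactly the endpoints of such walks.

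The first step is a local lemma: for a two-edge segment through a middle node $T$, whether $T$'s collider status is consistent with the walk condition (collider $\iff T \in \bZ$) is determined solely by the two edge orientations at $T$ and by whether $T \in \bZ$. This is a finite case check, and the admissible combinations are exactly those tabulated in Figure~\ref{fig:bayes:ball}; thus the test ``$e,T,f$ is an $m$-connecting path segment'' in the inner loop is correct. I would also justify the initialization: the start states $(\gets, X)$ model $X$ as the first node of a walk, which can never be a collider. Since $\bX,\bY,\bZ$ are pairwise disjoint we have $X \notin \bZ$, and the figure confirms that from such a state every incident edge is admissible, so no valid first move is lost and none is wrongly permitted.

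The second step is a search invariant, proved by induction on the order in which states are dequeued: a state $(f,N)$ is placed in $\bQ$ if and only if there is an $m$-connecting walk from $\bX$ to $N$ whose last edge has orientation $f$ at $N$. The forward direction is immediate, since each admissible pass appends one edge to a walk already certified $m$-connecting. For the converse the crucial observation — and the main obstacle — is that admissibility of a pass depends only on the current state $(e,T)$ and the outgoing edge, never on how $T$ was reached; consequently, if an $m$-connecting walk revisits a node with the same incident orientation, the intervening closed sub-walk may be excised to yield a shorter $m$-connecting walk with the same endpoints. Hence every reachable terminal is already reached by a walk that enters no state twice, which is exactly what the memoized search (each state added to $\bQ$ at most once) explores. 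Combining this invariant with the local lemma, the algorithm returns \emph{false} precisely when a node of $\bY$ is reached, i.e.\ precisely when $\bX$ and $\bY$ are $m$-connected given $\bZ$, and \emph{true} otherwise.

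For the runtime, the state space has size $\cO(n)$: at most one state per (orientation, node) pair, and the four orientations collapse into the two relevant cases, bounding the number of states by $2n$. Each state is dequeued once, and processing it scans the edges incident to its node; summing over all states, every edge is examined a constant number of times, for total work $\cO(m)$, plus $\cO(n)$ for initialization and for the membership tests against $\bX,\bY,\bZ$ (using constant-time lookup after $\cO(n)$ preprocessing). This yields the claimed $\cO(n+m)$ bound.
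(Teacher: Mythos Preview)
Your proof is correct and follows essentially the same approach as the paper: a BFS over (edge-orientation, node) states, correctness via the walk characterization and the local passing rules of Figure~\ref{fig:bayes:ball}, and the linear bound from each node being visited a constant number of times (four, collapsible to two). Your treatment is considerably more rigorous than the paper's very terse analysis, which simply asserts that ``the algorithm searches over all walks starting in $\bX$'' and that ``each node is visited at most four times,'' without spelling out the search invariant or the loop-excision argument you give.
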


The next problem {\sc FindSep} asks for a set $\bZ$ $m$-separating given $\bX$ and $\bY$ with $\bI \subseteq \bZ\subseteq\bR$. A canonical solution for this problem is given in the following lemma, \highlightrevision{r2c26}{which is an extended version of a lemma by Spirtes, Glymour and Scheines \cite[p. 136]{Spirtes2000} allowing the restriction $\bI$.}

\begin{lemma}\label{lemma:auxiliary:m:sep}
Let $\bX, \bY, \bI, \bR$  be sets of nodes with 
$\bI \subseteq \bR$, $\bR \cap (\bX \cup \bY) = \emptyset$. 
If there exists an $m$-separator $\bZ_0$ relative to $(\bX,\bY)$ with 
$\bI \subseteq \bZ_0 \subseteq \bR$, then $\bZ = \textit{Ant}(\bX \cup \bY \cup \bI) \cap \bR  $ 
is an $m$-separator.
\end{lemma}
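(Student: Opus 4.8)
The plan is to reduce the claim to a pure reachability statement in an undirected graph via the augmented-graph characterisation of $m$-separation stated above (due to Richardson), so that the delicate bookkeeping of collider activation never has to be carried out by hand.

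First I would record the easy containments and set up the reduction. Since every node is its own anterior and $\bI\subseteq\bR$, we get $\bI\subseteq\bZ\subseteq\bR$ and $\bZ\cap(\bX\cup\bY)=\emptyset$ (the latter from $\bR\cap(\bX\cup\bY)=\emptyset$). Write $\bA:=\textit{Ant}(\bX\cup\bY\cup\bI)$. The decisive preliminary computation is $\textit{Ant}(\bX\cup\bY\cup\bZ)=\bA$: since $\bX\cup\bY\cup\bZ\subseteq\bA$ (using $\bZ\subseteq\bA$) and $\bA$ is anterior-closed because the composition of two anterior paths is again anterior, we get ``$\subseteq$''; and $\bI\subseteq\bZ$ gives ``$\supseteq$''. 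By the augmented-graph characterisation, $\bZ$ $m$-separates $\bX$ and $\bY$ if and only if $\bZ$ is an $\bX$-$\bY$ node cut in $(\cG_{\textit{Ant}(\bX\cup\bY\cup\bZ)})^a=(\cG_{\bA})^a$. So it suffices to show that $\bZ$ is such a cut.

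Next I would argue by contradiction. If $\bZ$ were not a cut, there would be a path $q$ in $(\cG_{\bA})^a$ from some $X\in\bX$ to some $Y\in\bY$ all of whose nodes avoid $\bZ$. Every node of $q$ lies in $\bA$ (it is a vertex of $\cG_{\bA}$); since it avoids $\bZ=\bA\cap\bR$, it must lie outside $\bR$, and hence outside $\bZ_0$ (because $\bZ_0\subseteq\bR$). So $q$ avoids $\bZ_0$ entirely. Finally I would transport $q$ into the graph witnessing separation by $\bZ_0$. Set $\bA_0:=\textit{Ant}(\bX\cup\bY\cup\bZ_0)$; from $\bI\subseteq\bZ_0$ and monotonicity of $\textit{Ant}$ we obtain $\bA\subseteq\bA_0$. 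Moralisation is monotone under enlarging the vertex set: if $U,V\in\bA$ are collider connected in $\cG_{\bA}$ by a path $p$, then $p$ still lies in $\cG_{\bA_0}$ and its intermediate nodes are still colliders there, so $U,V$ remain adjacent in $(\cG_{\bA_0})^a$. Hence every edge of $(\cG_{\bA})^a$ is an edge of $(\cG_{\bA_0})^a$, and $q$ is an $\bX$-$\bY$ path in $(\cG_{\bA_0})^a$ avoiding $\bZ_0$. This contradicts the characterisation applied to $\bZ_0$, which states precisely that the $m$-separator $\bZ_0$ is an $\bX$-$\bY$ node cut in $(\cG_{\bA_0})^a$. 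Therefore $\bZ$ is a cut in $(\cG_{\bA})^a$ and thus an $m$-separator.

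The main obstacle, and the reason I route everything through the augmented graph, is the collider condition. In a direct walk-pushing argument one would take an $m$-connecting path under $\bZ$ and try to show it is still $m$-connecting under $\bZ_0$; non-colliders transfer painlessly (they lie in $\bA\setminus\bR$, hence outside both sets), but there is no evident reason that a collider in $\textit{An}(\bZ)$ lies in $\textit{An}(\bZ_0)$. Recasting $m$-separation as an undirected node cut erases the collider/non-collider distinction and replaces that problematic step by the simple monotonicity of moralisation under $\bA\subseteq\bA_0$, which is where the remaining (now routine) verification sits.
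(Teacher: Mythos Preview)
Your proof is correct and takes a genuinely different route from the paper's. The paper argues directly at the level of $m$-connecting walks: given a proper walk $w$ from $\bX$ to $\bY$, it shows $\bZ$ blocks $w$ by case analysis on the colliders of $w$. In the nontrivial case where all colliders lie in $\bZ$, the paper replaces each collider $V_j$ by a detour $V_j\to\cdots\to W'_j\gets\cdots\gets V_j$ along a shortest anterior path into $\bX\cup\bY\cup\bI$ that is not blocked by $\bZ_0$, thereby manufacturing a walk $w'$ that $\bZ_0$ fails to block. This is exactly the step you identify as the obstacle: one cannot simply assert that a collider activated by $\bZ$ is activated by $\bZ_0$, and the paper circumvents this by rerouting through ancestors rather than by comparing $\textit{An}(\bZ)$ with $\textit{An}(\bZ_0)$.

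Your reduction to a node-cut statement in the augmented graph avoids this construction entirely. The two verifications you need---that $\textit{Ant}(\bX\cup\bY\cup\bZ)=\bA$ and that moralisation is monotone under passing from $\cG_\bA$ to $\cG_{\bA_0}$---are routine, and the rest is a one-line undirected reachability argument. The price is that you import the Richardson--Spirtes characterisation of $m$-separation via $(\cG_{\textit{Ant}(\cdot)})^a$, which the paper states in its preliminaries but does not prove; the paper's own argument is self-contained in that respect. Conversely, your approach is shorter, more transparent, and makes explicit why the collider bookkeeping disappears once one passes to the augmented graph---a perspective the paper itself exploits heavily in Section~\ref{sec:sepalgo:dense} but not in this lemma.
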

\begin{proof}
Let us consider a proper walk 
$w = X , V_1, \ldots, V_{n} , Y$ with 
$X \in \bX$ and $Y \in \bY$. If $w$ 
does not contain a collider, all nodes $V_i$ 
are in $\textit{Ant}(\bX \cup \bY)$ and the walk is blocked
by $\bZ$, unless $\{V_1,\ldots, V_n\} \cap \bR = \emptyset$
in which case the walk is not  blocked by $\bZ_0$ either.
If the walk contains colliders $\bC$, it is
blocked, unless $\bC \subseteq \bZ \subseteq \bR$.
Then all nodes $V_i$ are in $\textit{Ant}(\bX \cup \bY \cup \bI)$ 
and the walk is blocked, unless $\{V_1,\ldots, V_n\}\cap \bR = \bC $.
Since $\bC \subseteq \bZ$ is a set of anteriors,
there exists a shortest (possible of length zero) %
path $\pi_j = V_j \to \ldots \to W_j$ for 
each $V_j \in \bC$ with $W_j \in \bX \cup \bY \cup \bI$ 
(it cannot contain an undirected  edge, since there is an arrow
pointing to $V_j$). Let $\pi_j' = V_j \to \ldots \to W'_j$
be the shortest subpath of $\pi_j$ that is not blocked
by $\bZ_0$. Let $w'$ be  the walk $w$ after replacing
each $V_j$ by the walk 
$V_j \to \ldots \to W'_j \gets \ldots \gets V_j$. 
If any of the $W_j$ is in $\bX \cup \bY$ we truncate 
the walk, such that we get the shortest walk between 
nodes of $\bX$ and $\bY$. Since $\pi'_j$ is not blocked, 
$w'$ contains no colliders except $w'_j$ and all
other nodes of $w'$ are not in $\bR$, $w'$ is not blocked
and $\bZ_0$ is not a separator.   
\end{proof}

This yields the following algorithm to find an $m$-separator relative to $\bX,\bY$:

\begin{algo}{FindSep}{$\cG, \bX, \bY, \bI, \bR$}{\label{algo:findsep}}{8cm}
\State{$\bR' \gets \bR \setminus (\bX\cup\bY)$}
\State{$\bZ\gets \textit{Ant}(\bX \cup \bY \cup \bI) \cap \bR'$}
\If{\call{algo:issep}$(\cG, \bX, \bY, \bZ)$}
{\Return{$\bZ$}}
\Else
{ \Return{$\bot$}}
\EndIf
\end{algo}
\begin{analal}
The algorithm finds an $ m $-separator due to Lemma~\ref{lemma:auxiliary:m:sep} and runs in linear time, since the set $\textit{Ant}(\bX \cup \bY \cup \bI) \cap \bR$  can be calculated in linear time
and the algorithm {\sc TestSep} runs in linear time as well.
\end{analal} 
\begin{proposition}%
\label{prop:FindSep}
Using the algorithm above, the task {\sc FindSep} can be solved in 
time $\cO(n+m)$. 
\end{proposition}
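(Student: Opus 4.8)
The plan is to establish two things about the algorithm \textsc{FindSep}, from which the proposition follows directly: that it is correct (it outputs an $m$-separator $\bZ$ with $\bI \subseteq \bZ \subseteq \bR$ exactly when such a set exists, and $\bot$ otherwise), and that it runs in time $\cO(n+m)$.

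For correctness, the crux is Lemma~\ref{lemma:auxiliary:m:sep}, which I would apply with the role of $\bR$ played by $\bR' = \bR \setminus (\bX \cup \bY)$ as computed in line~1. I would argue both directions. First, suppose some $m$-separator $\bZ_0$ relative to $(\bX,\bY)$ with $\bI \subseteq \bZ_0 \subseteq \bR$ exists. Since any separating set is disjoint from $\bX \cup \bY$, we in fact have $\bZ_0 \subseteq \bR'$, so the hypotheses of the lemma ($\bI \subseteq \bR'$, $\bR' \cap (\bX \cup \bY) = \emptyset$, and existence of a constrained separator) all hold for $\bR'$. The lemma then guarantees that $\bZ = \textit{Ant}(\bX \cup \bY \cup \bI) \cap \bR'$ — precisely the set computed in line~2 — is itself an $m$-separator. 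I would next verify the inclusion constraints: $\bZ \subseteq \bR' \subseteq \bR$ is immediate, and since every node is its own anterior we have $\bI \subseteq \textit{Ant}(\bX \cup \bY \cup \bI)$, which together with $\bI \subseteq \bR'$ gives $\bI \subseteq \bZ$. Hence the call to \textsc{TestSep} returns true and the algorithm outputs a valid $\bZ$. Conversely, if the algorithm returns $\bot$, then the computed $\bZ$ (which does satisfy the inclusion constraints) failed \textsc{TestSep} and is therefore not a separator; by the lemma this forces that no constrained separator exists at all. The two directions together give soundness and completeness.

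For the running time I would account for the three costs: computing $\textit{Ant}(\bX \cup \bY \cup \bI)$ is a single reverse traversal along anterior edges at cost $\cO(n+m)$; forming $\bR'$ and the intersection that yields $\bZ$ costs $\cO(n)$ using boolean membership arrays; and the single call to \textsc{TestSep} costs $\cO(n+m)$ by Proposition~\ref{prop:TestSep}. Summing gives the claimed $\cO(n+m)$ bound. The main obstacle is conceptual rather than computational, and it is already discharged by Lemma~\ref{lemma:auxiliary:m:sep}: the nontrivial point is the completeness guarantee that the single canonical candidate $\textit{Ant}(\bX \cup \bY \cup \bI) \cap \bR'$ separates whenever \emph{any} constrained separator does. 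This is exactly what allows one linear-time verification to replace an otherwise exponential search over candidate subsets; once the lemma is granted, the remaining inclusion bookkeeping and traversal-cost accounting are routine.
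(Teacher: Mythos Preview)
Your proposal is correct and follows essentially the same approach as the paper: invoke Lemma~\ref{lemma:auxiliary:m:sep} for correctness and note that both the computation of $\textit{Ant}(\bX\cup\bY\cup\bI)\cap\bR'$ and the call to \textsc{TestSep} run in linear time. Your version is in fact more thorough than the paper's one-sentence analysis, since you explicitly verify the inclusion constraints $\bI\subseteq\bZ\subseteq\bR$ and spell out the contrapositive direction (algorithm returns $\bot$ implies no constrained separator exists), both of which the paper leaves implicit.
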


Set
$ \bR $ is required to be disjoint with $\bX$ and $\bY$, because $ m $-separation of a set $ \bZ $ relative to $ \bX, \bY $ is not defined when $\bZ$ contains a node of $ \bX$ or $ \bY $, so $\bZ$ is always a subset of $ \bR\subseteq \bV\setminus (\bX \cup \bY)$. However, our algorithms still remove $\bX\cup\bY$ from $\bR$, as it might prevent bugs in practical implementations and it is reasonable to find a separator that does not contain the variables to be separated.

\subsection{Polynomial time algorithms for minimal and minimum $m$-separators } 
\label{sec:sepalgo:dense}

In this section we present algorithms for testing minimal $m$-separators 
for both variants of the minimality. We give also algorithms for constructing 
weakly- and strongly-minimum $m$-separators $\bZ$ satisfying constraints
$\bI \subseteq \bZ \subseteq \bR$ and for finding weakly-minimal separators.  
\highlightrevision{r2c20}{
To illustrate the differences between these different versions of minimality, we again consider the possible adjustment sets in Figure~\ref{fig:mbias}. The set $\{\text{FI}\}$ is a strongly-minimal adjustment set. When we use the simple cost function $c(v)=1$ that assigns to each separator a cost that is equal to the number of variables it contains, it is also a strongly-minimum adjustment set. Now, suppose we set $\bI=\{\text{MD}\}$ because $\{\text{MD}\}$ is expected to explain a lot of the variance in the outcome D. If we adjust for MD, then we also need to adjust for at least one other variable, so all adjustment sets $\bZ$ respecting $\bZ \supseteq \bI$ must have a cardinality of at least 2. Therefore, because the strongly-minimum adjustment set $\{\text{FI}\}$ has a cardinality of 1, it is clear that no strongly minimum adjustment set satisfying $\bZ \supseteq \bI$ exists. The set $\bZ=\{\text{MD},\text{MR}\}$ is a minimal adjustment set that satisfies $\bZ \supseteq \bI$, so it is both strongly minimal and $\bI$-minimal; it is also $\bI$-minimum. By contrast, the set $\bZ=\{\text{MD},\text{FI}\}$ is also $\bI$-minimum and $\bI$-minimal, but it is not strongly minimal.
}

\highlightrevision{r2c27}{
	The only one of these four variations that appears to be turned into a hard problem by the restriction $\bZ\supseteq \bI$ is computing strongly-minimal separators \textendash\ we discuss its complexity in detail in the next section.} This is a very surprising result since finding objects of minimum costs or sizes is typically, harder than constructing minimal objects. Our algorithms for the other three cases are easily implementable and have runtimes $\cO(n^2)$, resp. $\cO(n^3)$. These time complexities are reasonable particularly in case of dense ancestral graphs, i.e., graphs with a large number of edges $n \ll m \le n^2$ and $ \cO(n+m) = \cO(n^2) $. In Section~\ref{sec:separators:sparse:AGs} we propose algorithms that are faster than those presented below when the instance graphs are sparse.

The algorithms for minimal $m$-separators consist of two phases. First we convert the ancestral graph to an augmented graph (analogous to moralization for DAGs \cite{Spirtes2000}; see Preliminaries). Then, we find a minimal separator as a vertex cut in the augmented graph. This approach generalizes the $d$-separation algorithms of \cite{TianPP1998} and \cite{TextorLiskiewicz2011}, particularly the handling of the undirected graph after the moralization step is the same as in \cite{TianPP1998}. \highlightrevision{r2c29}{It is important to avoid iterating over all pairs of nodes and searching for collider connected paths between them in the construction of the augmented graph, as this would lead to a suboptimal algorithm.} Instead, the following algorithm achieves an asymptotically optimal (linear time in output size) runtime for AGs:

\begin{lemma}[Efficient AG moralization]
Given an AG $\cG$, the augmented graph $(\cG)^a$ can be computed in
time $\cO(n^2)$. 
\end{lemma}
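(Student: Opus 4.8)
The plan is to reduce the computation of $(\cG)^a$ to a structural characterization of collider-connectedness that can be read off directly from the bidirectionally connected components of $\cG$, thereby avoiding a separate path search for each of the $\Theta(n^2)$ candidate pairs. The key claim I would establish first is: two distinct nodes $U,V$ are collider connected in $\cG$ if and only if they are adjacent, or there is a bidirectionally connected component $B$ such that both $U$ and $V$ have an edge with an arrowhead pointing into $B$, i.e.\ edges $U \to b$ or $U \leftrightarrow b$ and likewise $V \to b'$ or $V \leftrightarrow b'$ with $b,b' \in B$.

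To prove this characterization, consider a collider-connected path $\pi$ from $U$ to $V$ whose interior nodes are all colliders. Each interior collider has an arrowhead on both of its incident edges on $\pi$; hence any edge joining two \emph{consecutive} interior colliders carries arrowheads at both endpoints and must therefore be bidirected. Consequently the interior of $\pi$ is a bidirected path lying inside a single bidirectionally connected component $B$, while the two endpoint edges (from $U$ to the first interior node and from $V$ to the last) carry an arrowhead into $B$. Note that by AG-condition~(2) a node incident to an undirected edge has no incoming arrowhead and can never be a collider, so undirected edges contribute only through direct adjacency; this keeps the case analysis clean. The converse is immediate: given such a $B$, a shortest bidirected path inside $B$ between the two attachment points, capped by the two arrowhead edges, is a collider-connected path from $U$ to $V$.

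Given the characterization, the algorithm is: (i) compute the bidirectionally connected components of $\cG$ by a depth-first search on the subgraph of bidirected edges, in time $\cO(n+m)$; (ii) scan each node's outgoing-arrowhead edges once to record, for every component $B$, the set $\textit{In}(B)$ of nodes with an arrowhead into $B$; and (iii) fill an $n \times n$ adjacency matrix for $(\cG)^a$ by marking every adjacent pair and, for each component $B$, every pair inside $\textit{In}(B)$.

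The step I expect to be the main obstacle is the $\cO(n^2)$ runtime bound for step~(iii). A naive realization connects, for each component, all pairs of its in-neighborhood, at cost $\sum_B |\textit{In}(B)|^2$; since a single node may send arrowheads into many components, this sum can grow to $\Theta(nm)$, because the same augmented edge is reproduced once per witnessing component. The heart of the argument is therefore to make the construction output-sensitive --- to spend only $\cO(1)$ amortized work per entry of the augmented adjacency matrix --- by exploiting that the bidirected components partition $\bV$ and that each node's arrowhead neighborhood need only be traversed a bounded number of times, so that the total work is governed by the $\cO(n^2)$ size of the matrix rather than by $\sum_B |\textit{In}(B)|^2$. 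Setting up this charging carefully, and contrasting it with the prohibitively expensive all-pairs search (which would cost $\cO(n^2(n+m))$), is where the real work of the lemma lies.
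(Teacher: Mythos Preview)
Your structural characterization and the resulting algorithm coincide with the paper's: the paper partitions into maximal bidirected components, links the members of each component pairwise, then links all parents of each component, and simply asserts that ``all four steps can be performed in time $\cO(n^2)$'' without further argument. So at the level of the construction, you and the paper are doing the same thing.

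Where you go further is in flagging the bottleneck the paper passes over in silence: writing out all pairs in $\textit{In}(B)$ for every $B$ costs $\sum_B |\textit{In}(B)|^2$, and this really is $\Theta(n^3)$ in the worst case---take the complete bipartite DAG with $P_i \to C_j$ for all $1\le i,j\le n/2$, where every singleton component $\{C_j\}$ has the same $n/2$ parents. Your instinct that an output-sensitive charging is needed is correct, but you do not actually supply one, and there is a genuine obstruction to doing so. DAG moralization subsumes Boolean matrix multiplication: given Boolean matrices $A,B$, build the tripartite DAG on $I\cup K\cup J$ with $I_i \to K_k$ whenever $A_{ik}=1$ and $J_j \to K_k$ whenever $B_{kj}=1$; then $I_i$ and $J_j$ are adjacent in the augmented graph if and only if $(AB)_{ij}=1$. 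An $\cO(n^2)$ augmented-graph construction would therefore yield $\cO(n^2)$ Boolean matrix product, which is a well-known open problem. So the step you single out as ``where the real work of the lemma lies'' is precisely the gap, and neither your sketch nor the paper's bare assertion closes it.
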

\begin{proof}
The algorithm proceeds in four steps.
\begin{enumerate}
\item  Start by setting $(\cG)^a$ to the skeleton of $\cG$. 
\item  Partition $\cG$ in all its maximal bidirectionally connected components. 
\item  For each pair $U,V$ of nodes from 
the same component, add the edge $U-V$ to $(\cG)^a$ if it did not exist already.
\item  For each component, identify all its parents %
and link them all by undirected edges
in $(\cG)^a$. 
\end{enumerate}
Now two nodes are adjacent in $(\cG)^a$ if and only if they are 
collider connected in $\cG$. All four steps can be performed in time $\cO(n^2)$.
\end{proof}

\highlightrevision{r2c32}{
	Lemma~\ref{lemma:auxiliary:m:sep} gave us a closed form solution to find \emph{one} $m$-separator. By extending this result only slightly, we obtain a constraint on \emph{all} minimal $m$-separators:}

\begin{corollary}[Ancestry of $ \bM $-minimal separators]
Given an AG $\cG$ and three sets $\bX,\bY,\bM$, every $\bM$-minimal $m$-separator $\bZ$   %
is a subset of $\textit{Ant}(\bX\cup\bY\cup\bM)$. 
\label{lemma:moralmingeneralm}
\end{corollary}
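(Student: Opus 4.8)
The plan is to derive this as an immediate consequence of Lemma~\ref{lemma:auxiliary:m:sep}. The key observation is that the closed-form separator $\textit{Ant}(\bX \cup \bY \cup \bI) \cap \bR$ produced by that lemma is always contained in $\textit{Ant}(\bX \cup \bY \cup \bM)$ as soon as we take $\bI = \bM$. So if we can arrange for this closed-form set to be both a \emph{subset} of a given $\bM$-minimal separator $\bZ$ and still a separator containing $\bM$, then minimality will force it to equal $\bZ$, pinning $\bZ$ inside the anterior set.

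Concretely, I would instantiate the lemma with $\bI = \bM$ and $\bR = \bZ$. First I check the hypotheses: $\bM \subseteq \bZ$ holds because $\bZ$ is $\bM$-minimal (by definition an $\bM$-minimal separator satisfies $\bM \subseteq \bZ$), and $\bZ \cap (\bX \cup \bY) = \emptyset$ holds because an $m$-separator relative to $(\bX,\bY)$ is disjoint from $\bX$ and $\bY$. Since $\bZ$ is itself an $m$-separator with $\bM \subseteq \bZ \subseteq \bZ$, Lemma~\ref{lemma:auxiliary:m:sep} applies and yields that $\bZ' := \textit{Ant}(\bX \cup \bY \cup \bM) \cap \bZ$ is again an $m$-separator relative to $(\bX,\bY)$.

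Next I would verify that $\bZ'$ still contains $\bM$: every node is its own anterior, so $\bM \subseteq \textit{Ant}(\bX \cup \bY \cup \bM)$, and combining this with $\bM \subseteq \bZ$ gives $\bM \subseteq \bZ'$. Thus $\bZ'$ is an $m$-separator relative to $(\bX,\bY)$ with $\bM \subseteq \bZ' \subseteq \bZ$. The $\bM$-minimality of $\bZ$ then rules out the possibility $\bZ' \subsetneq \bZ$, forcing $\bZ' = \bZ$. Since $\bZ' = \textit{Ant}(\bX \cup \bY \cup \bM) \cap \bZ \subseteq \textit{Ant}(\bX \cup \bY \cup \bM)$, we conclude $\bZ \subseteq \textit{Ant}(\bX \cup \bY \cup \bM)$, which is exactly the claim.

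I do not anticipate a genuine obstacle here, as the statement is a short corollary of the preceding lemma. The only points requiring care are confirming that the disjointness and containment hypotheses of Lemma~\ref{lemma:auxiliary:m:sep} are satisfied for the choice $\bR = \bZ$, and recalling that a node is counted among its own anteriors so that intersecting with $\textit{Ant}(\bX \cup \bY \cup \bM)$ does not inadvertently remove elements of $\bM$ from the candidate separator.
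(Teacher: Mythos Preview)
Your proof is correct and follows essentially the same approach as the paper: both instantiate Lemma~\ref{lemma:auxiliary:m:sep} with $\bI=\bM$ and $\bR=\bZ$ to obtain the separator $\bZ'=\textit{Ant}(\bX\cup\bY\cup\bM)\cap\bZ$ and then invoke $\bM$-minimality. The paper phrases this contrapositively (assuming $\bZ\not\subseteq\textit{Ant}(\bX\cup\bY\cup\bM)$ and deriving $\bZ'\subsetneq\bZ$), whereas you argue directly that $\bZ'=\bZ$; your version is slightly more explicit in verifying the hypotheses of the lemma and that $\bM\subseteq\bZ'$.
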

\begin{proof}
Assume there is an $ \bM $-minimal separator $\bZ$ with 
$ \bZ \not\subseteq \textit{Ant}(\bX\cup\bY\cup\bM)$.
Setting $\bI= \bM$ and $\bR=\bZ$, Lemma~\ref{lemma:auxiliary:m:sep} shows that $\bZ' = \textit{Ant}(\bX\cup\bY\cup\bM) \cap \bZ$ is an $m$-separator with 
$\bM \subseteq \bZ'$. But $\bZ' \subseteq \textit{Ant}(\bX\cup\bY\cup\bM)$ and 
$\bZ' \subseteq \bZ$, so $\bZ \not= \bZ'$ and $\bZ$ is not an $ \bM $-minimal separator.
\end{proof}

\begin{corollary}[Ancestry of minimal separators]
Given an AG $\cG$ and three sets $\bX,\bY,\bI$, every 
$ \bI $-minimal  or $ \emptyset$-minimal   
$m$-separator %
is a subset of $\textit{Ant}(\bX\cup\bY\cup\bI)$. 
\label{lemma:moralmin}
\end{corollary}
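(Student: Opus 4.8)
The plan is to obtain this statement as two immediate specializations of Corollary~\ref{lemma:moralmingeneralm} (Ancestry of $\bM$-minimal separators), which has already been established for an arbitrary constraining set $\bM$. The present statement simply names the two cases of $\bM$ that we actually use in the paper, so no new argument is really needed; the work has all been done one corollary earlier.

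First, for the $\bI$-minimal case, I would instantiate Corollary~\ref{lemma:moralmingeneralm} with $\bM = \bI$. It then states verbatim that every $\bI$-minimal $m$-separator is a subset of $\textit{Ant}(\bX\cup\bY\cup\bI)$, which is exactly the claim. Second, for the $\emptyset$-minimal case, I would instantiate the same corollary with $\bM = \emptyset$, obtaining that every $\emptyset$-minimal $m$-separator is contained in $\textit{Ant}(\bX\cup\bY\cup\emptyset) = \textit{Ant}(\bX\cup\bY)$.

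The only remaining gap in the $\emptyset$-minimal case is to pass from $\textit{Ant}(\bX\cup\bY)$ to the slightly larger target $\textit{Ant}(\bX\cup\bY\cup\bI)$. For this I would invoke the monotonicity of the anterior operator with respect to set inclusion: since $\bX\cup\bY \subseteq \bX\cup\bY\cup\bI$, any anterior path witnessing membership in $\textit{Ant}(\bX\cup\bY)$ is also a witness for $\textit{Ant}(\bX\cup\bY\cup\bI)$, so $\textit{Ant}(\bX\cup\bY) \subseteq \textit{Ant}(\bX\cup\bY\cup\bI)$. Chaining this inclusion with the containment from Corollary~\ref{lemma:moralmingeneralm} completes the $\emptyset$-minimal case.

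I expect no real obstacle here, as the statement is a direct corollary: the substantive content lives entirely in Corollary~\ref{lemma:moralmingeneralm}, and the only fact to check is the monotonicity of $\textit{Ant}$, which is immediate from the definition of an anterior. If anything, the point worth stating explicitly is \emph{why} the weaker $\emptyset$-minimal separators still satisfy the bound phrased in terms of $\bI$, and this is precisely the one-line monotonicity observation above.
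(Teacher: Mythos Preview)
Your proposal is correct and matches the paper's own proof essentially line for line: the paper also instantiates Corollary~\ref{lemma:moralmingeneralm} with $\bM=\bI$ and $\bM=\emptyset$ and then notes that in both cases $\textit{Ant}(\bX\cup\bY\cup\bM)\subseteq\textit{Ant}(\bX\cup\bY\cup\bI)$. Your write-up is simply a more expanded version of the same one-line argument.
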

\begin{proof}
This follows from Corollary~\ref{lemma:moralmingeneralm} with  $\bM= \bI$ or $\bM= \emptyset$.
In both cases we have  $\textit{Ant}(\bX\cup\bY\cup\bM)\subseteq\textit{Ant}(\bX\cup\bY\cup\bI)$. 
\end{proof}

Corollary~\ref{lemma:moralmin} applies to minimum separators as well because
every minimum separator must be minimal.
For all strongly-minimal (i.e., recall,  $ \emptyset $-minimal) 
or $ \bI $-minimal $ m $-separator $\bZ$ satisfying $\bI \subseteq \bZ$ the corollary implies $\textit{Ant}(\bX\cup\bY\cup\bZ) = \textit{Ant}(\bX\cup\bY\cup\bI)$ and thus $(\cG_{\textit{Ant}(\bX\cup\bY\cup\bZ)})^a = (\cG_{\textit{Ant}(\bX\cup\bY\cup\bI)})^a$, so the augmented graph $(\cG_{\textit{Ant}(\bX\cup\bY\cup\bZ)})^a$ can be constructed without knowing the actual $\bZ$. We will use $\EAninMoralGraph$ to denote the number of edges in $(\cG_{\textit{Ant}(\bX\cup\bY\cup\bI)})^a$. Since nodes in $\bI$ as well as nodes outside  $\bR$ are known to either be in $\bZ$ or to be excluded from $\bZ$, they do not need to be considered by the algorithms and can be removed from the augmented graph as shown in Figure~\ref{fig:removeIR}, similarly to the approach in \cite{Acid1996}.%

\def\dicefive#1{ 
  \node[] (M) at (0,0.5) {$V$};   
  \node[#1,color=red, cross out] at (M) {};
  \node[] (A) at (-1,-1) {$N$};   
  \node[] (B) at (1,-1) {$M$};   
  \node[] (C) at (-1,1) {$O$};   
  \node[] (D) at (1,1) {$P$};   
}
\begin{figure}\centering
\begin{tikzpicture}[]
\begin{scope}
\node at (-1,1.75) {$V$ and its neighbors   $\textit{Ne}(V)$ in $\cG^a$: };
\dicefive{}; 
\draw (M) -- (A);
\draw (M) -- (B);
\draw (M) -- (C);
\draw (M) -- (D);
\end{scope}
\begin{scope}[xshift=5cm]
\node at (-1,1.75) {\bf Case: $V \in \bI$  };
\dicefive{draw}; 
\end{scope}
\begin{scope}[xshift=10cm]
\node at (-1,1.75) {\bf Case: $V \notin \bR$  };
\dicefive{draw};
\draw (A) -- (B);\draw (A) -- (C);\draw (A) -- (D);
\draw (B) -- (C);\draw (B) -- (D);
\draw (C) -- (D);
\end{scope}
\end{tikzpicture}
\caption{This figure explains the removal of nodes in $\bI$ and outside of $\bR$ from the augmented graph $(\cG_{\textit{Ant}(\bX\cup\bY\cup\bI)})^a$. Shown is an exemplary node $V$ with all its neighbors in the  augmented graph. In the case $V \in \bI$ the node $V$ blocks all paths through $V$, so the second graph obtained by removing  $V$ has no remaining edges.
In the case $V \notin \bR$ no path is blocked by $V$, so after removing the node all its neighbors need to be linked to preserve the connectivities as shown in the third graph.
The time  needed to insert the new edges is $\cO(|\bV\setminus\bR| \cdot |\textit{Ne}(\bV\setminus\bR)|^2) = \cO(n^3)$, so this removal of nodes outside $\bR$ is only used for algorithm {\sc ListMinSep} in Section \ref{sec:algo:enum} and the other algorithms handle such nodes with different approaches.
}\label{fig:removeIR}
\end{figure}

{\sc TestMinSep} and {\sc FindMinSep}  can now be solved by a modified breadth-first-search in the graph $(\cG_{\textit{Ant}(\bX\cup\bY\cup\bM)})^a$. We start with providing the function {\sc TestMinSep}
which tests if $\bZ\subseteq \bR$ is an $\bM$-minimal $m$-separator relative to $(\bX,\bY)$ in an AG $\cG$:

\begin{algo}{TestMinSep}{$\cG, \bX, \bY, \bZ, \bM, \bR$}{\label{algo:isminimalsepmoral}}{14cm}

\If{$\bZ \setminus \textit{Ant}(\bX \cup \bY  \cup \bM) \neq \emptyset $ {\bf or }$\bZ \not\subseteq \bR $} \Return{false}\EndIf
\If{\textbf{not} \call{algo:issep}$(\cG, \bX, \bY, \bZ )$} 	{\Return{false}}\EndIf
\State{$\cG'^a \gets (\cG_{\textit{Ant}(\bX \cup \bY \cup \bM)})^a$}
\State{Remove from $\cG'^a$ all nodes of $\bM$.}
\State{$R_x \gets \{Z\in \bZ\mid \exists \text{ path from } X \text{ to } Z \text{ in } \cG'^a \text{ not intersecting } \bZ \setminus \{Z\} \}$}
\If{$ \bZ \neq R_x$} \Return{false}\EndIf
\State{$R_y \gets \{Z\in \bZ \mid \exists \text{ path from } Y \text{ to } Z \text{ in } \cG'^a \text{ not intersecting } \bZ \setminus \{Z\} \}$}
\If{$ \bZ \neq R_y$} \Return{false}\EndIf
\State{\Return{true}}
\end{algo}
\begin{analal}\call{algo:isminimalsepmoral}, 
runs in  $\cO(\EAninMoralGraph)$ 
because $R_x$ and $R_y$ can be computed 
with an ordinary search that 
aborts when a node in $\bZ$ is reached.
If each node in $\bZ$ is reachable from both $\bX$ and $\bY$, the set is $ \bM$-minimal as no node can be removed without opening a connecting path as shown in the example of Figure~\ref{fig:minseptest}.

By setting the parameter $ \bM = \bI$, the algorithm tests $\bI$-minimality of $\bZ$. 
By setting $ \bM = \emptyset $, the algorithm tests  strong-minimality.
\end{analal}
\begin{proposition}\label{prop:TestMinSep}
Using the algorithm above, the task 
{\sc TestMinSep}, both for testing $\bI$-minimality  and strong-minimality,  can be solved in time $\cO(\EAninMoralGraph) = \cO(n^2)$.
\end{proposition}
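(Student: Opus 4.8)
The plan is to reduce the whole test to a vertex-cut computation in a single, fixed undirected graph, and then to certify minimality of that cut by single-vertex deletions. Recall from the Preliminaries that $\bZ$ $m$-separates $\bX$ and $\bY$ in $\cG$ iff $\bZ$ is an $\bX$-$\bY$ vertex cut in the augmented graph $(\cG_{\textit{Ant}(\bX\cup\bY\cup\bZ)})^a$. The central observation is that for the separators of interest this augmented graph does not actually depend on $\bZ$: by Corollary~\ref{lemma:moralmingeneralm} every $\bM$-minimal separator $\bZ$ (with $\bM\subseteq\bZ$) lies inside $\textit{Ant}(\bX\cup\bY\cup\bM)$, so $\textit{Ant}(\bX\cup\bY\cup\bZ)=\textit{Ant}(\bX\cup\bY\cup\bM)$, and the same equality holds for every deletion $\bZ\setminus\{Z\}$ with $Z\in\bZ\setminus\bM$ (since then $\bM\subseteq\bZ\setminus\{Z\}\subseteq\textit{Ant}(\bX\cup\bY\cup\bM)$). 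Hence $\cG'^a=(\cG_{\textit{Ant}(\bX\cup\bY\cup\bM)})^a$ is the correct augmented graph for $\bZ$ \emph{and} for all of its single-node deletions simultaneously.

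First I would justify the two opening guards. If $\bZ\not\subseteq\textit{Ant}(\bX\cup\bY\cup\bM)$ or $\bZ\not\subseteq\bR$, then by Corollary~\ref{lemma:moralmingeneralm} $\bZ$ cannot be an $\bM$-minimal separator respecting the range, so returning false is correct; and the call to \call{algo:issep} correctly rejects any $\bZ$ that is not even a separator, which is a prerequisite for minimality. From here on we may assume $\bZ$ is a separator with $\bM\subseteq\bZ\subseteq\textit{Ant}(\bX\cup\bY\cup\bM)\cap\bR$, so that $\bZ$ is an $\bX$-$\bY$ vertex cut in $\cG'^a$.

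Next comes the combinatorial heart. Because vertex cuts are upward closed (any superset of a cut is a cut), $\bZ$ is $\bM$-minimal iff no single deletion $\bZ\setminus\{Z\}$ with $Z\in\bZ\setminus\bM$ remains a cut: any proper separating subset $\bZ'\supseteq\bM$ is contained in some $\bZ\setminus\{Z\}$ with $Z\in\bZ\setminus\bM$, which would then also be a cut, and conversely. Deleting the nodes of $\bM$ from $\cG'^a$ encodes that these are permanently present blockers, so it remains to verify that each remaining separator node $Z\in\bZ\setminus\bM$ is \emph{essential}, i.e.\ that $\bZ\setminus\{Z\}$ fails to separate. Working in $\cG'^a$ with $\bM$ deleted, I would prove the characterization: $Z$ is essential iff $Z$ is reachable from $\bX$ by a path avoiding $\bZ\setminus\{Z\}$ and reachable from $\bY$ by a path avoiding $\bZ\setminus\{Z\}$. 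The forward direction uses that, since $\bZ$ is a cut, any $\bX$-$\bY$ path avoiding $\bZ\setminus\{Z\}$ must pass through $Z$, and splitting it at $Z$ yields the two required subpaths; the converse concatenates the two subpaths into an $\bX$-$\bY$ walk avoiding $\bZ\setminus\{Z\}$, so that $\bX,\bY$ lie in one component of $\cG'^a\setminus(\bZ\setminus\{Z\})$ and $\bZ\setminus\{Z\}$ is not a cut. These reachability sets are precisely the $R_x$ and $R_y$ computed by the algorithm, so the equality checks $\bZ=R_x$, $\bZ=R_y$ (read in the graph with $\bM$ deleted) succeed iff every $Z\in\bZ\setminus\bM$ is essential, i.e.\ iff $\bZ$ is $\bM$-minimal. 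Taking $\bM=\bI$ and $\bM=\emptyset$ gives the $\bI$-minimal and strongly-minimal cases respectively.

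Finally, for the runtime I would note that $\cG'^a$ is built once in $\cO(n^2)$ time by the efficient moralization lemma, the single call to \call{algo:issep} costs $\cO(n+m)$, and $R_x,R_y$ are obtained by two truncated graph searches that abort upon reaching a node of $\bZ$, each running in $\cO(\EAninMoralGraph)$ time; since $\EAninMoralGraph=\cO(n^2)$, the total is $\cO(n^2)$. The step I expect to be the main obstacle is precisely the reduction justifying that one \emph{fixed} augmented graph suffices for $\bZ$ and all of its single-node deletions at once: this rests on carefully combining Corollary~\ref{lemma:moralmingeneralm} (to freeze $\textit{Ant}(\bX\cup\bY\cup\bZ)$) with the upward-closure of vertex cuts (to reduce arbitrary proper separating subsets to single deletions). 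Once this is in place, the reachability characterization of minimal cuts and the runtime accounting are routine.
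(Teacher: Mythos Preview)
Your proposal is correct and follows essentially the same route as the paper: reduce $m$-separation to a vertex cut in the fixed augmented graph $(\cG_{\textit{Ant}(\bX\cup\bY\cup\bM)})^a$, then certify $\bM$-minimality by the two-sided reachability condition, computed via truncated searches from $\bX$ and $\bY$. The paper's own analysis is a one-line sketch (``if each node in $\bZ$ is reachable from both $\bX$ and $\bY$, no node can be removed without opening a connecting path''), whereas you supply the details it omits---in particular, the use of Corollary~\ref{lemma:moralmingeneralm} to freeze the anterior set for $\bZ$ and every single-node deletion $\bZ\setminus\{Z\}$, and the upward-closure of vertex cuts to reduce $\bM$-minimality to checking single deletions; both are genuinely needed and correctly argued.
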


\begin{figure}
\centering
$\cG$:
\begin{tikzpicture}[xscale=0.8,baseline=1em]
\node (X) at (-2,0) {$X$};
\node[adjusted] (Z1) at (-1,1) {$Z_1$};
\node[adjusted] (Z2) at (0,1) {$Z_2$};
\node (Y) at (1,0) {$Y$};
\node (V1) at (-3,1) {$V_1$};
\node (V2) at (-0.5,-0.5) {$V_2$};
\graph { (V1) -> (X) <- (Z1) <- (Z2) -> (Y) -> (V2) <- (X); };
\end{tikzpicture}
$\cG_{\textit{Ant}(\bX \cup \bY \cup \bM)}$:
\begin{tikzpicture}[xscale=0.8,baseline=1em]
\node (X) at (-2,0) {$X$};
\node[adjusted] (Z1) at (-1,1) {$Z_1$};
\node[adjusted] (Z2) at (0,1) {$Z_2$};
\node (Y) at (1,0) {$Y$};
\node (V1) at (-3,1) {$V_1$};
\node (V2) at (-0.5,-0.5) {};
\graph { (V1) -> (X) <- (Z1) <- (Z2) -> (Y) ; };
\end{tikzpicture}
$\cG^a_{\textit{Ant}(\bX \cup \bY \cup \bM)}$:
\begin{tikzpicture}[xscale=0.8,baseline=1em]	
\node (X) at (-2,0) {$X$};
\node[adjusted] (Z1) at (-1,1) {$Z_1$};
\node[adjusted] (Z2) at (0,1) {$Z_2$};
\node (Y) at (1,0) {$Y$};
\node (V1) at (-3,1) {$V_1$};
\node (V2) at (-0.5,-0.5) {}; %
\graph { (Z1) -- (V1) -- (X) -- (Z1) -- (Z2) -- (Y) ; };
\end{tikzpicture}
\caption{The transformation of a graph $\cG$ to $\cG_{\textit{Ant}(\bX \cup \bY \cup \bM)}$ to $(\cG_{\textit{Ant}(\bX \cup \bY \cup \bM)})^a$ with $\bM = \emptyset$. The $m$-separating set $\{Z_1,Z_2\}$ is not minimal as no node is reachable from both $\bX$ and $\bY$, but each node alone $\{Z_1\}$ or $\{Z_2\}$ is a minimal $m$-separator.}\label{fig:minseptest}
\end{figure}

The difference between $ \bI $-minimal and strongly-minimal separation sets $\bZ\supseteq \bI$ 
is illustrated in Figure~\ref{fig:minseptest}. 
There are exactly two strongly-minimal separating sets: $ \{Z_1\} $ and $ \{Z_2\} $. 
No other  $m$-separator will be strongly-minimal regardless of which 
nodes are added to a constraining set $ \bI $. 
Therefore, for $ \bI = \emptyset $, both $m$-separators satisfy the constraint, 
for $ \bI= \{Z_1\} $ or $ \bI=\{Z_2\} $, only one of them does and, 
for a $ \bI = \{Z_1, Z_2\}$ or $ V_1 \in \bI $, no strongly-minimal 
$m$-separator satisfies it. 
The constraint $ \bI $ just chooses some $m$-separators of 
a fixed set of strongly-minimal $m$-separators.

On the other hand, when computing an $ \bI $-minimal $m$-separator, we treat the nodes of $ \bI $ as fixed and search for a minimal $m$-separator among all supersets of $ \bI $. In Figure~\ref{fig:minseptest}, if $ \bI$ is either $ \{Z_1\}$, $\{Z_2\} $ or $\{Z_1,Z_2\} $, then $ \bI $ itself is an $ \bI $-minimal $m$-separator and no other $ \bI $-minimal $m$-separator exists. If $ \bI = \{V_1\} $, then $ \{Z_1, V_1\} $ and $ \{Z_2, V_1\} $ are $ \bI $-minimal. This is an easier and in some sense more natural concept, since it can be modeled by removing the nodes of $ \bI  $ from the graph and searching a minimal $m$-separator for the remaining nodes.

From a covariate adjustment perspective, the definition of $ \bM $-minimality  
is most meaningful in the case $\bM=\emptyset$ or $\bM=\bI$.
However, our algorithms technically also allow $\bM$ to lie ``between'' $\emptyset$ and $\bI$
or even partly outside $\bM$, even though this is less relevant for our application.
For example if $ \emptyset \not= \bM \subset \bI $, the nodes of $ \bM $ can be ignored for the minimality, 
while the nodes of $ \bI\setminus\bM $ must be unremovable like all nodes in 
the case of $ \emptyset $-minimality. In an example in Figure~\ref{fig:minseptest},
for $ \bM=\{V_1\}, \bI= \{Z_1, V_1\} $ would only accept $ \{Z_1, V_1\} $ as $m$-separator. $ \bM=\{Z_1\}, \bI= \{Z_1, Z_2\} $ would not allow any.  
Every $m$-separator $ \bZ$ 
is $\bZ$-minimal. 

Next we give an algorithm to \emph{find} an $ \bI $-minimal $ m $-separator:

\begin{algo}{FindMinSep}{$\cG, \bX, \bY, \bI, \bR$}{\label{algo:findminimalsepmoral}}{14cm}
\State{$\cG' \gets \cG_{\textit{Ant}(\bX \cup \bY \cup \bI)}$}
\State{$\cG'^a \gets (\cG_{\textit{Ant}(\bX \cup \bY \cup \bI)})^a$}
\State{Remove from $\cG'^a$ all nodes of $\bI$.}

\State{$\bZ' \gets \bR \cap \textit{Ant}(\bX \cup \bY ) \setminus (\bX\cup\bY)$}
\State{$\bZ'' \gets \{ Z \in \bZ' \mid \exists \text{ a path from } \bX \text{ to } Z \text{ in }  \cG'^a \text{ not intersecting } \bZ' \setminus \{Z\}  \}$  }
\State{$\bZ \gets \{ Z \in \bZ'' \mid \exists \text{ a path from } \bY \text{ to } Z \text{ in }  \cG'^a \text{ not intersecting } \bZ'' \setminus \{Z\}  \}$  }
\If{\textbf{not} \call{algo:issep}$(\cG', \bX, \bY, \bZ)$}
{\Return{$\bot$}}
\EndIf
\State{\Return{$\bZ\cup \bI$}}
\end{algo}

\begin{analal}

Algorithm  \call{algo:findminimalsepmoral} begins 
with the separating set $\bR \cap \textit{Ant}(\bX \cup \bY)\setminus (\bX\cup\bY)$ 
and finds a subset satisfying the conditions tested 
by algorithm \call{algo:isminimalsepmoral}. As \call{algo:isminimalsepmoral} it can be implemented in $\cO(\EAninMoralGraph)$ using a breadth-first-search starting from $\bX$ ($\bY$) that aborts when a node in $\bZ'$ ($\bZ''$) is reached.
\end{analal}

Algorithm {\sc FindMinSep} finds an $ \bI $-minimal $ m $-separator.
\highlightrevision{r3c4}{
Note that setting the argument  $ \bI$ of the algorithm to $\emptyset $ does not lead to a strongly-minimal separator $\bZ$ that satisfies the constraint $ \bI \subseteq \bZ \subseteq \bR $ for a given non-empty set $\bI$.} 

\begin{proposition}
\label{prop:FindMinSep}
The algorithm above finds an $\bI$-minimal $m$-separator $\bZ$, with  $\bI\subseteq\bZ\subseteq\bR$, 
in time $\cO(\EAninMoralGraph) = \cO(n^2)$.
\end{proposition}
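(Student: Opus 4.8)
The plan is to run everything inside the undirected graph $\cG'^a=(\cG_{\textit{Ant}(\bX\cup\bY\cup\bI)})^a$ after deleting the nodes of $\bI$, and to show that the two reachability passes of \call{algo:findminimalsepmoral} extract a minimal vertex cut there. First I would justify the reduction. By Corollary~\ref{lemma:moralmin} any candidate $\bI$-minimal separator is a subset of $\textit{Ant}(\bX\cup\bY\cup\bI)$, so for every such $\bZ$ we have $\textit{Ant}(\bX\cup\bY\cup\bZ\cup\bI)=\textit{Ant}(\bX\cup\bY\cup\bI)$; by the augmented-graph characterisation of $m$-separation this makes $\bZ\cup\bI$ an $m$-separator of $\bX,\bY$ in $\cG$ exactly when $\bZ$ is an $\bX$-$\bY$ vertex cut of $\cG'^a$ with the nodes of $\bI$ removed. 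Hence verifying the returned set reduces to a connectivity question in a single fixed undirected graph.

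For the existence direction I would show that the initial set $\bZ'$ is such a cut whenever the instance is solvable. If any $m$-separator $\bZ_0$ with $\bI\subseteq\bZ_0\subseteq\bR$ exists, then Lemma~\ref{lemma:auxiliary:m:sep} guarantees that $\textit{Ant}(\bX\cup\bY\cup\bI)\cap\bR$ is an $m$-separator, so the starting set $\bZ'$ is an $\bX$-$\bY$ cut of $\cG'^a\setminus\bI$. The crucial step is that the two pruning passes preserve this cut property. The first pass retains exactly the nodes of $\bZ'$ reachable from $\bX$ while avoiding the other nodes of $\bZ'$; on any $\bX$-$\bY$ path the $\bZ'$-node closest to $\bX$ has an $\bX$-prefix meeting no other node of $\bZ'$ and therefore survives into $\bZ''$, so $\bZ''$ is still a cut. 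A symmetric last-hit argument applied to the $\bZ''$-node closest to $\bY$ then shows that the final $\bZ$ is a cut as well.

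To certify $\bI$-minimality I would appeal to the analysis of \call{algo:isminimalsepmoral}: it is enough that every node of $\bZ$ be reachable from both $\bX$ and $\bY$ in $\cG'^a\setminus\bI$ while avoiding the remaining nodes of $\bZ$. This follows by monotonicity along the chain $\bZ\subseteq\bZ''\subseteq\bZ'$: each $Z\in\bZ$ inherits, from the construction of $\bZ''$, an $\bX$-path avoiding $\bZ'\setminus\{Z\}$, and, from the construction of $\bZ$, a $\bY$-path avoiding $\bZ''\setminus\{Z\}$, and both avoided sets contain $\bZ\setminus\{Z\}$. Together with the closing separation test this is precisely the certificate checked in \call{algo:isminimalsepmoral} for $\bM=\bI$, so Proposition~\ref{prop:TestMinSep} yields that $\bZ\cup\bI$ is an $\bI$-minimal $m$-separator; the constraints $\bZ\subseteq\bR$ and $\bI\subseteq\bR$ give $\bI\subseteq\bZ\cup\bI\subseteq\bR$. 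When no constrained separator exists the final \call{algo:issep} call fails and $\bot$ is returned, which is correct.

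For the running time, the moralization lemma builds $\cG'^a$ in $\cO(n^2)$, deleting $\bI$ and forming $\textit{Ant}(\bX\cup\bY\cup\bI)\cap\bR$ is linear, each pruning pass is one breadth-first search in $\cG'^a\setminus\bI$ that aborts at marked nodes and runs in $\cO(\EAninMoralGraph)$, and the concluding test adds only $\cO(n+m)$; the total is $\cO(\EAninMoralGraph)=\cO(n^2)$. I expect the main obstacle to be exactly the interplay of the two passes: proving that pruning first from $\bX$ and then from $\bY$ leaves a set that is simultaneously a cut and reachable from both sides \emph{relative to itself} rather than only to the larger intermediate sets $\bZ'$ and $\bZ''$. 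The first-/last-hit cut-preservation arguments and the subset-monotonicity of reachability are the two ingredients that close this gap and align the output with the minimality certificate of \call{algo:isminimalsepmoral}.
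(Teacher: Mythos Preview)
Your approach matches the paper's (whose own analysis is essentially a one-line pointer to \call{algo:isminimalsepmoral}), and your first-hit/last-hit cut-preservation argument together with the reachability-from-both-sides minimality certificate are the right way to flesh it out. There is, however, a genuine gap in your existence step. You write that Lemma~\ref{lemma:auxiliary:m:sep} makes $\textit{Ant}(\bX\cup\bY\cup\bI)\cap\bR$ a separator, ``so the starting set $\bZ'$ is an $\bX$-$\bY$ cut of $\cG'^a\setminus\bI$''. But the algorithm defines $\bZ'=\bR\cap\textit{Ant}(\bX\cup\bY)\setminus(\bX\cup\bY)$, \emph{without} $\bI$ inside the anterior; a node that is an anterior of $\bI$ but not of $\bX\cup\bY$ lies in the set Lemma~\ref{lemma:auxiliary:m:sep} produces yet not in $\bZ'\cup\bI$, so your ``so'' does not follow.

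This is not cosmetic. In the DAG with edges $X\to C$, $A\to C$, $C\to I$, $Y\to D$, $A\to D$, $D\to I$, with $\bI=\{I\}$ and $\bR=\{A,C,D,I\}$, we have $\textit{Ant}(\{X,Y\})=\{X,Y\}$, hence $\bZ'=\emptyset$; the two pruning passes leave $\bZ=\emptyset$, the closing \call{algo:issep} test on $\emptyset$ succeeds, and the algorithm returns $\{I\}$. But $\{I\}$ opens $X\to C\gets A\to D\gets Y$ and is not a separator at all, while $\{A,C,I\}$ is an $\bI$-minimal one. The sparse variant \call{algo:findminimalsepnaive} initializes with $\textit{Ant}(\bX\cup\bY\cup\bI)$, which strongly suggests the omission of $\bI$ here is a typo in the paper's pseudocode; with $\bZ'=\bR\cap\textit{Ant}(\bX\cup\bY\cup\bI)\setminus(\bX\cup\bY\cup\bI)$ your argument goes through verbatim. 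The paper's own analysis glosses over the same point, so this is as much a defect in the stated algorithm as in your proof, but the inference you draw from Lemma~\ref{lemma:auxiliary:m:sep} to $\bZ'$ is not justified as written.
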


In the problem {\sc FindMinCostSep}, each node $V$ is associated with a cost $w(V)$ given by a cost function $w: \bV \to \mathbb{R}^+$ and the task is to find a set $\bZ$ $m$-separating $\bX$ and $\bY$ which minimizes the total cost $\sum_{Z\in\bZ} w(Z)$ under the constraint $\bI \subseteq \bZ\subseteq \bR$. 
In order to find an $m$-separator of minimum size, we can use a function $ w(V) = 1\ \forall V \in \bR$ that assigns unit cost to each node. Alternatively we might want to find an $m$-separator  that minimizes the cost of  measuring the variables in the separator or that minimizes the number of combinations that the values of these variables can take. 
When each node $V$ corresponds to a random variable that can take $k_V$ different values, there are $ \prod_{V\in\bV} k_V $ combinations, which can be minimized by a logarithmic cost function $ w(V) = \log k_V\ \forall V \in \bR$. %

We again construct the augmented graph and can afterwards solve the problem with any weighted min-cut algorithm.

\begin{algo}{FindMinCostSep}{$\cG, \bX, \bY, \bI, \bR, w$}{\label{algo:findminimumsep}}{16cm} 
\State{$\cG' \gets \cG_{\textit{Ant}(\bX \cup \bY\cup\bI)}$}
\State{$\cG'^a \gets (\cG_{\textit{Ant}(\bX \cup \bY\cup\bI)})^a$}
\State{Add a node $X^m$ connected to all nodes in $\bX$, and a node $Y^m$ connected to all nodes in~$\bY$.}
\State{Assign infinite cost to all nodes in $\bX \cup \bY \cup (\bV \setminus \bR)$  and  cost  $w(Z)$ to every other node $Z$.}
\State{Remove all nodes of $\bI$ from $\cG'^a$.}
\State{\Return {a  minimum vertex cut $\bZ$ separating $ X^m $ and $ Y^m $ in the undirected graph.}}
\end{algo}
\begin{analal}
The correctness %
follows from the fact that a minimum set is a minimal set and the minimum cut found in the ancestor moralized graph is therefore the minimum $ m $-separating set. The minimum cut can be found using a maximum flow algorithm in $ \cO(n^3) $ due to the well-known min-cut-max-flow theorem \cite[Chapter~6]{Even1979graphalgorithm}.
\end{analal}
\begin{proposition}
\label{prop:FindMinCostSepI}
The algorithm above solves in time $\cO(n^3)$ the task {\sc FindMinCostSep} in case of $\bI$-minimality.
\end{proposition}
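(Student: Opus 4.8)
The plan is to show that \textsc{FindMinCostSep} reduces the search for an $\bI$-minimum $m$-separator to a single minimum-weight vertex-cut computation in one fixed undirected graph, and then to bound the running time. The conceptual backbone is the augmented-graph characterization of $m$-separation stated in the Preliminaries together with Corollary~\ref{lemma:moralmin}. First I would pin down the search space. Since $w$ is strictly positive, any $\bI$-minimum separator $\bZ^\ast$ is automatically $\bI$-minimal: were some $\bZ' \subsetneq \bZ^\ast$ with $\bI \subseteq \bZ'$ still a separator, it would have strictly smaller total cost, contradicting minimality of cost. By Corollary~\ref{lemma:moralmin} this forces $\bZ^\ast \subseteq \textit{Ant}(\bX\cup\bY\cup\bI)$, and therefore $\textit{Ant}(\bX\cup\bY\cup\bZ^\ast) = \textit{Ant}(\bX\cup\bY\cup\bI)$. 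This is precisely what makes the algorithm well defined: the augmented graph $\cG'^a = (\cG_{\textit{Ant}(\bX\cup\bY\cup\bI)})^a$ built by the algorithm is exactly the graph the augmented-graph criterion would demand for the (a~priori unknown) optimum $\bZ^\ast$.

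Then I would establish the exact correspondence between separators and cuts. For every $\bZ$ with $\bI \subseteq \bZ \subseteq \bR$ and $\bZ \subseteq \textit{Ant}(\bX\cup\bY\cup\bI)$, the augmented-graph property yields that $\bZ$ $m$-separates $\bX$ and $\bY$ in $\cG$ if and only if $\bZ$ is an $\bX$--$\bY$ vertex cut in $\cG'^a$ (here $\textit{Ant}(\bX\cup\bY\cup\bZ)=\textit{Ant}(\bX\cup\bY\cup\bI)$, so the relevant graph is indeed $\cG'^a$). Assigning infinite cost to the nodes of $\bX\cup\bY$ and of $\bV\setminus\bR$ confines finite-cost cuts to admissible separators, i.e.\ those avoiding $\bX,\bY$ and staying inside $\bR$, while the auxiliary terminals $X^m,Y^m$ merely turn the multi-source/multi-sink cut into a standard two-terminal one. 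The constraint $\bI\subseteq\bZ$ is handled by deleting the nodes of $\bI$ from $\cG'^a$ (Figure~\ref{fig:removeIR}): deleting $\bZ$ from $\cG'^a$ equals first deleting $\bI$ and then deleting $\bZ\setminus\bI$, so the $\bX$--$\bY$ cuts of $\cG'^a$ containing $\bI$ correspond, up to the additive constant $\sum_{V\in\bI}w(V)$, to the $\bX$--$\bY$ cuts of $\cG'^a$ with $\bI$ removed; minimising over the latter thus minimises over the former.

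With these pieces in place, correctness follows from a sandwich argument, and the complexity from the moralization lemma. Let $\bZ_{\mathrm{cut}}$ be the cut returned by the algorithm (after re-including $\bI$). As a finite-cost cut of $\cG'^a$ it lies inside $\textit{Ant}(\bX\cup\bY\cup\bI)$, satisfies $\bI\subseteq\bZ_{\mathrm{cut}}\subseteq\bR$, and hence is an $m$-separator, so $w(\bZ^\ast)\le w(\bZ_{\mathrm{cut}})$ by optimality of $\bZ^\ast$. Conversely, $\bZ^\ast$ is itself an $\bX$--$\bY$ cut of $\cG'^a$ containing $\bI$, so $w(\bZ_{\mathrm{cut}})\le w(\bZ^\ast)$ by minimality of the cut; equality of costs shows $\bZ_{\mathrm{cut}}$ is $\bI$-minimum. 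If no finite-cost cut exists, no admissible $\bI$-separator exists and the algorithm reports $\bot$. For the runtime, $\textit{Ant}(\bX\cup\bY\cup\bI)$ and $\cG'^a$ are built in $\cO(n^2)$ by the moralization lemma; $\cG'^a$ has $\cO(n)$ nodes and $\cO(n^2)$ edges, and the minimum-weight vertex cut is computed via the standard node-splitting reduction to maximum flow and the min-cut--max-flow theorem in $\cO(n^3)$, which dominates.

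I expect the main obstacle to be justifying that the fixed graph $\cG'^a$ is legitimate although it is, in effect, ``built from'' the unknown optimum. The crux is the chain minimum-cost $\Rightarrow$ $\bI$-minimal $\Rightarrow$ contained in $\textit{Ant}(\bX\cup\bY\cup\bI)$, which is exactly where strict positivity of $w$ and Corollary~\ref{lemma:moralmin} are indispensable. This is also what makes the argument specific to $\bI$-minimality: for strong-minimality one may not simply delete $\bI$, since a forced node of $\bI$ need not belong to any $\emptyset$-minimal separator, which is why that case is treated separately.
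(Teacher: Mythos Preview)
Your proposal is correct and follows the same approach as the paper: reduce to minimum vertex cut in the augmented graph $(\cG_{\textit{Ant}(\bX\cup\bY\cup\bI)})^a$, justified by the chain ``$\bI$-minimum $\Rightarrow$ $\bI$-minimal $\Rightarrow$ contained in $\textit{Ant}(\bX\cup\bY\cup\bI)$'' (Corollary~\ref{lemma:moralmin}), then invoke min-cut/max-flow in $\cO(n^3)$. The paper's own analysis is a two-sentence sketch of exactly this; you have simply spelled out the sandwich argument and the role of the infinite weights and the $\bI$-deletion more carefully than the paper does.
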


The runtime primarily depends on the used min-cut/max-flow algorithm. Using a state-of-the-art max-flow algorithm recently presented by Orlin improves the runtime to $ \cO(n \EAninMoralGraph) $ \cite{maxflowOrlin2013}, although this is not necessarily an improvement in our setting, because the augmented graph can have $ m_a = \cO(n^2) $ edges and then $ \cO(n \EAninMoralGraph)$ and $\cO(n^3) $ are the same. Faster max-flow algorithms are also known for specific graph classes or specific cost functions. In the special case of a unit cost function $w(V) = 1$ on undirected graphs an $ \cO(\EAninMoralGraph \sqrt{n}) $ algorithm is known \cite{Even1979graphalgorithm}, which is not directly applicable, since algorithm {\sc FindMinCostSep} changes the nodes $ \bX \cup \bY \cup (\bV \setminus \bR) $ to have infinite costs.  However, we can apply the max-flow algorithm to a new graph containing only nodes in $ \bR' = (\bR\setminus(\bX \cup \bY)) \cap \textit{Ant}(\bX\cup\bY\cup\bI)$ by removing the nodes of $ \bV\setminus\bR $ iteratively in $\cO((n - |\bR|) n^2)$ as shown in Figure~\ref{fig:removeIR} or by creating a new graph only containing those nodes in $ \cO(|\bR'| \EAninMoralGraph) $ as described in \cite{TianPP1998}, resulting in a total runtime of $ \cO(\min((n - |\bR|) n^2, |\bR'|\EAninMoralGraph) + \EAninMoralGraph \sqrt{|\bR'|}) = \cO(|\bR'|\EAninMoralGraph) $ for a unit cost function.

The set $ \bZ $ returned by algorithm  {\sc FindMinCostSep} is also strongly-minimum, 
unless no strongly-minimum set $\bZ$ exists under the given constraints $ \bI\subseteq \bZ\subseteq \bR $. 
To see this assume $ \bZ $ is not strongly-minimum and there exists a 
strongly-minimum set $ \bZ' $ satisfying the constraint. Then $ \sum_{Z\in\bZ'} w(Z) < \sum_{Z\in\bZ} w(Z) $. But $ \bZ' $ satisfies $ \bI \subseteq \bZ' $, so  $ \bZ $ was not $ \bI $-minimum, a contradiction.

All strongly-minimum sets for a graph have the same minimum sum $ \sum_{Z\in\bZ} w(Z) $, 
regardless of the constraint $ \bI $, 
so we can test if the set returned by {\sc FindMinCostSep} is 
strongly-minimum by finding one strongly-minimum set $ \bZ' $ and 
testing if $ \sum_{Z\in\bZ} w(Z) = \sum_{Z\in\bZ'} w(Z) $.  
Such a strongly-minimum set $ \bZ' $ can be obtained by 
calling {\sc FindMinCostSep} again with parameter $ \bI = \emptyset $. 
Although $ \bZ' $ might not fulfill the constraint $ \bI \subseteq\bZ' $, it has the same required sum.
Thus, we get the following:

\begin{proposition}\label{prop:FindMinCostSep:proper}
Finding an $m$-separator $\bZ$, with $\bI\subseteq\bZ\subseteq\bR$, which is strongly-minimum
can be done in time $\cO(n^3)$.
\end{proposition}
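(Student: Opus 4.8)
The plan is to reduce the strongly-minimum problem to two invocations of the already-established routine {\sc FindMinCostSep}, followed by a single cost comparison. First I would call {\sc FindMinCostSep}$(\cG,\bX,\bY,\bI,\bR,w)$ to obtain a separator $\bZ$ that is $\bI$-minimum; this is correct and runs in $\cO(n^3)$ by Proposition~\ref{prop:FindMinCostSepI}. If this call returns $\bot$, then no $m$-separator with $\bI\subseteq\bZ\subseteq\bR$ exists at all, so a fortiori no strongly-minimum one does, and we return $\bot$. Otherwise, writing $c_{\bI}=\sum_{Z\in\bZ} w(Z)$ for its cost, the key structural observation is that $c_{\bI}$ is exactly the minimum cost over all $m$-separators that contain $\bI$ (and, since $w$ is infinite outside $\bR$, lie in $\bR$).

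Second, I would run {\sc FindMinCostSep}$(\cG,\bX,\bY,\emptyset,\bR,w)$ with the constraining set emptied, yielding a genuinely strongly-minimum ($\emptyset$-minimum) separator $\bZ'$ whose cost $c_{\emptyset}=\sum_{Z\in\bZ'} w(Z)$ equals the global minimum over all $m$-separators in $\bR$. Since every separator containing $\bI$ is in particular an $m$-separator, we always have $c_{\emptyset}\le c_{\bI}$. The crux of the argument is the equivalence: a strongly-minimum separator satisfying $\bI\subseteq\bZ\subseteq\bR$ exists if and only if $c_{\emptyset}=c_{\bI}$. Indeed, if $c_{\emptyset}=c_{\bI}$, then $\bZ$ already attains the global minimum cost while containing $\bI$, so $\bZ$ is itself such a separator; conversely, if $c_{\emptyset}<c_{\bI}$, then every separator containing $\bI$ costs at least $c_{\bI}>c_{\emptyset}$ and hence cannot be strongly-minimum.

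The algorithm therefore compares the two returned costs: if $c_{\bI}=c_{\emptyset}$ it returns $\bZ$, and otherwise it reports $\bot$. Correctness is immediate from the equivalence above, together with the fact that all strongly-minimum separators share the same cost by definition, so the single number $c_{\emptyset}$ certifies the global minimum without enumerating separators. For the runtime, the two calls to {\sc FindMinCostSep} each cost $\cO(n^3)$ and the cost summations and comparison add only $\cO(n)$, giving the claimed $\cO(n^3)$ bound.

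I do not expect a serious technical obstacle here, since both subroutines and their guarantees are already in hand; the only delicate point is justifying the cost-comparison test, i.e., verifying carefully that the $\bI$-minimum cost coincides with the strong-minimum cost exactly when a strongly-minimum separator respects the constraint $\bI\subseteq\bZ\subseteq\bR$. This monotonicity-of-minima argument, and in particular the contrapositive direction showing that a strict cost gap rules out any constraint-respecting strongly-minimum set, is where I would concentrate the writing.
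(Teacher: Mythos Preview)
Your proposal is correct and follows essentially the same approach as the paper: call {\sc FindMinCostSep} once with $\bI$ and once with $\emptyset$, then compare the two returned costs to decide whether the $\bI$-minimum separator is also strongly-minimum. The paper's justification is phrased as a contradiction (if the $\bI$-minimum $\bZ$ were not strongly-minimum while some strongly-minimum $\bZ'\supseteq\bI$ existed, then $\bZ'$ would beat $\bZ$ in cost, contradicting $\bI$-minimality of $\bZ$), which is exactly the contrapositive you identify as the delicate point.
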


The same arguments as above show that $ \bZ $ is $ \bM $-minimum for any $ \bM \subset \bI $, 
if an $ \bM $-minimum $ m $-separator satisfying the constraints exist.

\subsection{The hardness of strong-minimality}

For most problems it is easier to find a minimal solution than 
a minimum one, but for $m$-separators the opposite is true. 
If a strongly-minimum $m$-separator exists, it is also strongly-minimal. 
However there is a gap, where no strongly-minimum $m$-separator exists,
and the $\bI$-minimum or $ \bI $-minimal $m$-separators are not strongly-minimal.

In this section we show that it is hard to find a strongly-minimal $ m $-separator even for singleton $X, Y$ in DAGs, in which  $ m $-separation and $ d $-separation are equivalent. 
\highlightrevision{r2c34}{
Due to the equivalence between $d$-separation and vertex separators in the moralized graph, 
this implies that it is also NP-hard to find strongly-minimal vertex separators in undirected graphs. Together with the characterizations of adjustment sets in the coming sections, it will follow that it is also NP-hard to find strongly-minimal adjustments in DAGs or ancestral graphs.
}

\begin{proposition}\label{prop:np-complete}
It is an NP-complete problem to decide if in a given DAG there exists 
a strongly-minimal $d$-separating set 
$\bZ$ containing  $\bI$.
\end{proposition}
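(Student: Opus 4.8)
The problem is clearly in NP: given a candidate set $\bZ$, we check $\bI\subseteq\bZ$ and disjointness from $\{X,Y\}$ in linear time, verify that $\bZ$ $d$-separates $X$ and $Y$ with \textsc{TestSep} (Proposition~\ref{prop:TestSep}), and verify that $\bZ$ is strongly-minimal by running \textsc{TestMinSep} with the parameter $\bM=\emptyset$ (Proposition~\ref{prop:TestMinSep}); each of these runs in polynomial time. Hence a strongly-minimal $d$-separator containing $\bI$ is a polynomial-time-checkable certificate, and the plan for the rest is to establish NP-hardness by a polynomial reduction from \textsc{3-Sat}.

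The engine of the reduction is the tension between the inclusion constraint $\bI\subseteq\bZ$ and strong minimality that is already visible in the small DAG $X\to I\gets V\to Y$ discussed in Section~\ref{sec:preliminaries}: forcing the collider $I$ into $\bZ$ marries its parents in the moral graph (creating the edge $X-V$), which opens the biasing path $X\to I\gets V\to Y$; this path can only be blocked by also adding $V$, but then $I$ is adjacent, in $(\cG_{\textit{Ant}(X\cup Y)})^a$, only to the $X$-side, so $I$ can no longer be reached from $Y$ and $\bZ$ fails to be minimal. I would exploit exactly this phenomenon: each forced node of $\bI$ is planted as a collider whose admission into $\bZ$ opens new paths that must be re-blocked, and whether the re-blocking can be carried out \emph{without} destroying the reachability (hence the minimality-justification tested by \textsc{TestMinSep}, namely that every node of $\bZ$ be reachable from both $X$ and $Y$ while avoiding the rest of $\bZ$) of any node of $\bZ$ will encode the satisfiability of the formula.

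Concretely, from a 3-CNF formula $\phi$ with variables $v_1,\dots,v_n$ and clauses $C_1,\dots,C_m$ I would build a DAG with a single source $X$ and sink $Y$. For each variable $v_i$ a gadget contributes two literal nodes $t_i,f_i$ together with a forced collider $A_i\in\bI$ with parents $t_i,f_i$; the remaining edges are arranged so that, once $A_i\in\bZ$ marries $t_i$ and $f_i$, the opened path from $X$ through the edge $t_i-f_i$ toward $Y$ can be blocked minimally only by placing \emph{exactly one} of $t_i,f_i$ into $\bZ$, which I read as the truth value of $v_i$ (an auxiliary edge such as $A_i\to Y$ keeps $A_i$ adjacent to the $Y$-side so that $A_i$ itself stays justified). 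For each clause $C_j$ a forced node $Q_j\in\bI$ is attached to the literal nodes occurring in $C_j$ and to $Y$; the connections are arranged so that $Q_j$ acquires a neighbor on the $X$-side of $(\cG_{\textit{Ant}(X\cup Y)})^a$ — and is therefore justified, rather than forcing the addition of a further blocking node that would in turn lose its own justification — if and only if at least one literal of $C_j$ is satisfied by the variable choices. A routing argument then shows that a set $\bZ\supseteq\bI$ is a strongly-minimal $d$-separator precisely when the per-variable choices are consistent (one literal per variable) and every clause is satisfied, i.e.\ exactly when $\phi$ is satisfiable; both directions would be argued through the reachability characterization underlying \textsc{TestMinSep}.

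I expect the main obstacle to be the gadget design itself, i.e.\ making this biconditional exact. Two requirements pull against each other and must hold simultaneously: $\bZ$ must genuinely separate $X$ from $Y$ (enough blocking nodes present), while \emph{every} node of $\bZ$ — the forced nodes of $\bI$ and the chosen literal nodes alike — must remain reachable from both terminals in the augmented ancestor graph (no node redundant). Guaranteeing that the only minimal completions of $\bI$ to a separator encode legal, satisfying assignments, and that every illegal choice (selecting both or neither of $t_i,f_i$, or leaving a clause unsatisfied) either fails to separate or un-justifies some node, is the delicate part, together with the sign-bookkeeping that matches ``satisfied literal'' to ``$Q_j$ gains an $X$-side neighbor.'' A secondary technical point is to realize the intended moral-graph adjacencies within an acyclic orientation while keeping $\textit{Ant}(X\cup Y\cup\bI)$ under control, since marriages in $(\cG)^a$ occur only for colliders that are ancestors of $X\cup Y\cup\bI$; auxiliary edges into $Y$ can be used to make the relevant colliders ancestral without opening unwanted paths. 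The NP-membership argument and this reduction together yield NP-completeness.
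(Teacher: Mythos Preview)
Your plan is essentially the paper's: NP membership via the polynomial checks \textsc{TestSep}/\textsc{TestMinSep}, and NP-hardness by a reduction from \textsc{3-Sat} that exploits the tension between the forced inclusion $\bI\subseteq\bZ$ and the two-sided reachability characterization of strong minimality, with per-variable gadgets encoding a binary choice and per-clause gadgets whose forced node is justified only when the clause is satisfied.

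Where the paper differs from your sketch is in the concrete gadget, which is worth seeing since you flag that as the hard part. The paper makes the forced variable-node $I_i$ a \emph{source} rather than a collider: for variable $v_i$ it lays down two disjoint non-causal paths $X\gets V_{i,l}\to V_i\to Y$ and $X\gets V_{i,r}\to \overline{V}_i\to Y$, and adds $I_i\to V_{i,l}$, $I_i\to V_{i,r}$, $I_i\to Y$. Blocking both paths needs one node from $\{V_{i,l},V_i\}$ and one from $\{V_{i,r},\overline{V}_i\}$; if both $V_{i,l}$ and $V_{i,r}$ are chosen, then $I_i$ loses every route to $X$ in the augmented graph and minimality fails, which forces at least one of $V_i,\overline{V}_i$ into $\bZ$. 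The encoding is \emph{dual} to yours: a literal is set \emph{true} exactly when its node is \emph{not} in $\bZ$. Each clause gets a forced node $I'_j$ with incoming edges from its literal nodes and a single edge $I'_j\to X$; then $I'_j$ is reachable from $Y$ (hence justified) iff some literal node of the clause is not in $\bZ$, i.e.\ the clause is satisfied. This source-based design sidesteps the acyclicity and ancestor-control worries you anticipate: $I_i\to Y$ and $I'_j\to X$ immediately place the forced nodes in $\textit{An}(X\cup Y)$ and pin down their moral-graph neighborhoods without any delicate collider bookkeeping.

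One small terminological point in your write-up: ``putting $A_i\in\bZ$ marries $t_i$ and $f_i$'' is not quite right, since moralization (and hence which marriages appear in $(\cG_{\textit{Ant}(\cdot)})^a$) does not depend on $\bZ$; what changes when a collider enters $\bZ$ is which \emph{$d$-connecting walks} become open. Your underlying intuition is correct, but the argument should be phrased in terms of opened paths rather than new edges.
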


\begin{proof}
The problem  is in NP, since given $ \bZ $ verifying $ \bI\subseteq \bZ $ is trivial and the 
strong-minimality can be efficiently tested by algorithm \textsc{TestMinSep} with  parameter $ \bI= \emptyset $.

\begin{figure}	

\centering\begin{tikzpicture}

\node (X) at (-3,0) {$X$};

\node (Y1) at (0,-1) {$Y$} ;
\node[adjusted] (I1) at (0,0) {$I_1$} ;
\node (V1l) at (-1,1) {$V_{1,l}$} ;
\node (V1r) at (1,1) {$V_{1,r}$} ;
\node (V1) at (-1,0) {$V_1$};
\node (V1neg) at (1,0) {$\overline{V}_1$};
\draw[->] (I1) -- (V1l); \draw[->] (I1) -- (V1r); 
\draw[->] (V1l) -- (V1);  \draw[->] (V1r) -- (V1neg); 
\draw[->] (I1) -- (Y1); \draw[->] (V1) -- (Y1); \draw[->] (V1neg) -- (Y1); 

\begin{scope}[xshift=3cm]
\node (Y2) at (0,-1) {$Y$} ;
\node[adjusted] (I2) at (0,0) {$I_2$} ;
\node (V2l) at (-1,1) {$V_{2,l}$} ;
\node (V2r) at (1,1) {$V_{2,r}$} ;
\node (V2) at (-1,0) {$V_2$};
\node (V2neg) at (1,0) {$\overline{V}_2$};
\draw[->] (I2) -- (V2l); \draw[->] (I2) -- (V2r); 
\draw[->] (V2l) -- (V2);  \draw[->] (V2r) -- (V2neg); 
\draw[->] (I2) -- (Y2); \draw[->] (V2) -- (Y2); \draw[->] (V2neg) -- (Y2); 
\end{scope}

\begin{scope}[xshift=6cm]
\node (Y3) at (0,-1) {$Y$} ;
\node[adjusted] (I3) at (0,0) {$I_3$} ;
\node (V3l) at (-1,1) {$V_{3,l}$} ;
\node (V3r) at (1,1) {$V_{3,r}$} ;
\node (V3) at (-1,0) {$V_3$};
\node (V3neg) at (1,0) {$\overline{V}_3$};
\draw[->] (I3) -- (V3l); \draw[->] (I3) -- (V3r); 
\draw[->] (V3l) -- (V3);  \draw[->] (V3r) -- (V3neg); 
\draw[->] (I3) -- (Y3); \draw[->] (V3) -- (Y3); \draw (V3neg) edge [->] (Y3); 
\end{scope}

\node at (8cm, 0.5) {$\ldots$};

\node[adjusted]  (IC1) at (1.5,-2.25) {$ I'_1 $};

\node[adjusted]  (IC2) at (4.5,-2.25) {$ I'_2 $};

\draw[dashed,->] (V1neg) to (IC1)  (V2) edge[->] (IC1) (V3) edge[->] (IC1);
\draw[dashed,->] (V1neg) to (IC2)  (V2neg) edge[->] (IC2) (V3neg) edge[->,bend left=20] (IC2);

\node at (6cm, -2.5) {$\ldots$};

\draw (IC1) edge [->,bend left=15] (X);
\draw (IC2) edge [->,bend left=35] (X);

\draw (V1l.west) edge [->,bend right=20] (X);
\draw (V1r.north west) edge [->,bend right=30] (X);

\draw (V2l.north west) edge [->,bend right=35] (X);
\draw (V2r.north west) edge [->,bend right=40] (X);

\draw (V3l.north west) edge [->,bend right=45] (X);
\draw (V3r.north west) edge [->,bend right=50] (X);

\end{tikzpicture}

\caption{The graph used in the proof of Proposition~\ref{prop:np-complete}, which represents the first three variables $V_1, V_2, V_3$, and two clauses $(\overline{V}_1 \vee V_2 \vee V_3)$ and $(\overline{V}_1 \vee \overline{V}_2 \vee \overline{V}_3)$.  All shown $Y$ nodes can be considered to be a single node $Y$, but we display them separately to reduce the number of overlapping edges in the figure. %
}\label{fig:minimality:proof:graph}
\end{figure}
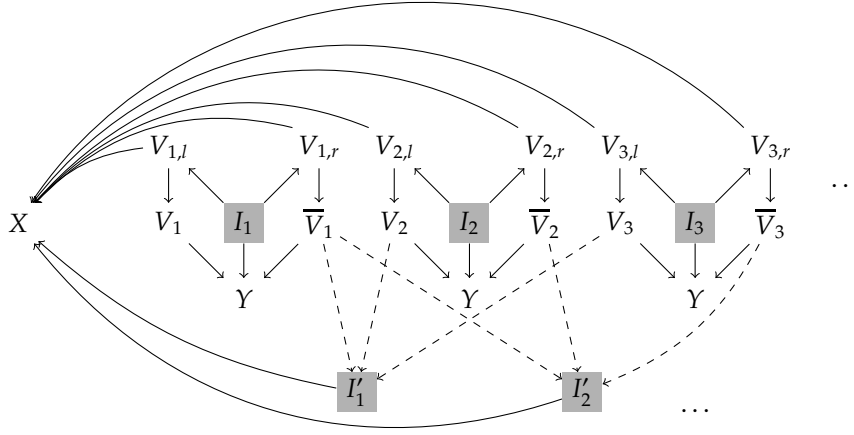

To show the NP-hardness we take an instance of 3-SAT, a canonical 
NP-complete problem \cite{michael1979computersNP}, and construct a DAG $ \cG  $ 
and a set $ \bI $, such that  a 
strongly-minimal $d$-separating set containing $\bI$ exists in $ \cG $, if and only 
if the 3-SAT formula is satisfiable. %

The 3-SAT instance consists of $k$ variables $ V_1, \ldots, V_k $, and $ \ell $ clauses $C_i = ( W_{i,1} \vee W_{i,2} \vee W_{i,3} ) $ of literals $ W_{i,j} \in \{V_1, \ldots, V_k, \overline{V}_1, \ldots, \overline{V}_k \} $. It is NP-hard to decide if there exists a Boolean assignment $ \phi: \{V_1,\ldots V_k\} \to \{\text{true},\text{false}\} $ that satisfies the formula $ C_1 \wedge \ldots \wedge C_\ell $.

\highlightrevision{r2c35}{
The basic idea of the construction is to ensure that any $m$-separator must contain some nodes to block a path between $X$ and $Y$, while no $m$-separator can contain all those nodes, because all nodes together would block all paths from $X$ to nodes in $\bI$ making the separator not $\bI$-minimal. 
The choice of a node will correspond to choosing an assignment to a variable of the satisfiable problem.
}

Let $ \cG = (\bV,\bE)  $ be defined as:

\newcommand{\forallvariables}{\mid i \in \{1,\ldots,k \}}
\newcommand{\forallclauses}{\mid i \in \{1,\ldots,\ell \}}

\[\begin{array}{rccl}
\bV &=&& \{ X, Y\} \cup \{ V_{i,l}, V_{i,r}, V_i, \overline{V}_i, I_i \forallvariables  \} \cup \{I'_i \forallclauses \}\\

\bE &=&& \{ X \gets V_{i,l} \to V_i \to Y   \forallvariables \} \\
   &&  \cup& \{ X \gets V_{i,r} \to \overline{V}_i \to Y \forallvariables \} \\
   &&  \cup& \{ I_i \to V_{i,l},\ I_i \to V_{i,r},\ I_i \to Y \forallvariables \}\\
   &&  \cup& \{ I'_i \to X \forallclauses \}\\
   &&  \cup& \{ W_{i,j} \to I'_i \forallclauses, W_{i,j} \in C_i \}\\

\bI &=&& \{I_i \forallvariables \} \cup \{I'_i \forallclauses\}
\end{array}
\]

The resulting graph is shown in Figure~\ref{fig:minimality:proof:graph}. We identify the literal $V_i$ ($ \overline{V}_i $) in the formula with the node $ V_i $ ($ \overline{V}_i $) in the graph. $V_{i,l}$ ($V_{i,r}$) is a left (right) node, $,l$ in the index should not be confused with the number of clauses $\ell$.

“$\Leftarrow$”: Let $ \phi $ be a Boolean assignment that satisfies the formula. Let 
\[
\bZ = \bI \cup \{ V_i, V_{i,r} \forallvariables,\ \phi(V_i) = \text{false} \}\cup \{ \overline{V}_i, V_{i,l} \forallvariables,\ \phi(V_i) = \text{true} \}. 
\]

To show that $ \bZ  $ $ d$-separates $ X $ and $ Y $, we begin a breadth first search at $ X $ and enumerate all reachable nodes. Immediately reachable are nodes $ I'_i $, but they are in $\bI\subseteq \bZ $, so the path stops there. The parents $ V_{i,l} $ are reachable, but they are either in $ \bZ $ or only $ V_i, I_i \in \bZ $ are reachable and the path stops there. Likewise no path through $ V_{i,r} $ can reach $ Y $. Hence $ \bZ $ is a $ d $-separator.

$ \bZ $ is also strongly-minimal: If $ V_i \in \bZ$, $ V_{i,l} \notin \bZ  $ and $ \bZ\setminus\{V_i\} $ would not block the path  $X \gets V_{i,l} \to V_i \to Y$. A similar path would be opened by removing $ \overline{V}_i,\ V_{i,l}$ or $ V_{i,r} $. $ \bZ \setminus \{ I_i\} $ is not a $ d $-separator as it would not block either the path $ X \gets V_{i,l} \to I_i \to Y $ or $ X \gets V_{i,r} \to I_i \to Y $. If a clause $ C_i $ is satisfied by a literal $ V_j$, $ \bZ \setminus \{ I'_i\} $ is not a $ d $-separator, as it would not block the path $ X \gets I'_i \gets V_j \to Y $ . Likewise $ X \gets I'_i \gets \overline{V}_j \to Y $ would be opened by removing $ I'_i $ if the clause $ C_i $ is satisfied by a literal $ \overline{V}_j $.

Therefore $ \bZ $ is a strongly-minimal $ d $-separator.

“$\Rightarrow$”: Now we show that a strongly-minimal $ d $-separator $ \bZ $ yields a satisfying assignment $ \phi $. For every $i$ the two paths $X \gets V_{i,l} \to V_i \to Y $ and $X \gets V_{i,r} \to \overline{V}_i \to Y $ need to be blocked by a node of $ \{V_{i,l}, V_i\} $ and a node of $ \{V_{i,r}, \overline{V}_i\} $. If neither $V_i$ nor $ \overline{V}_i $ are in $ \bZ $, 
both $V_{i,l}$ and $V_{i,r}$ must be in $\bZ$, so $I_i$ is not reachable from $X$, $\bZ \setminus \{I_i\}$ is a $d$-separator and $\bZ$ is not strongly-minimal. Therefore $V_i$ or $\overline{V}_i$ is in $\bZ$ and the following Boolean assignment $\phi$ to the variables is well-defined:

\[\phi(V_i) = \begin{cases}
\text{true}&  V_i \notin \bZ, \\
\text{false}& \overline{V}_i \notin \bZ,\\
\text{false}& otherwise.
\end{cases}\]

Since $I'_i \in \bI$ for all $ i $, $I'_i$ has to be reachable from $Y$, so there is an open path $I'_i \gets V_j \to Y$ (or $I'_i \gets \overline{V}_j \to Y$) and $ V_j $ (or $ \overline{V}_j $) is not in $ \bZ $ for some $ j $. This $ V_j $ (or $ \overline{V}_j $) satisfies clause $ C_i $ according to the definition of $ \phi $. Hence every clause and the formula is satisfiable.

\end{proof}

\subsection{Algorithms for minimal $m$-separators in sparse graphs}
\label{sec:separators:sparse:AGs}

Subsection~\ref{sec:sepalgo:dense} gave $\cO(n^2)$ algorithms for the problems {\sc TestMinSep} 
and {\sc FindMinSep} in case of $\bI$-minimality, which is optimal for dense graphs. 
The runtime of these algorithms is limited by the time required 
to construct the augmented graph, which might contain nodes and edges that have no influence on the minimal $ m $-separator and are thus included unnecessarily. This leads to the question, can we test or find the minimal $ m $-separator without constructing the augmented graph? The obvious way is to test the definition of minimality directly by removing each of the nodes of $ \bZ $ and testing if the smaller set is still an $ m $-separator, which yields $\cO(|\bZ|(n+m)) = \cO(| \textit{Ant}(\bX \cup \bY \cup \bI ) | (n + m)) = \cO(n(n+m))$ algorithms. This is generally slower than the previous runtimes, however in sparse graphs that have a low number of edges $ m \approxeq n $ and a low number of ancestors $ | \textit{Ant}(\bX \cup \bY \cup \bI ) | \ll n $, it might be faster.  
This alternative algorithm for {\sc TestMinSep} in sparse graphs, in case of 
$\bI$-minimality, is as follows

\begin{algo}{TestMinSepSparse}{$\cG, \bX, \bY, \bZ, \bI, \bR$}{\label{algo:isminimalsepnaive}}{10cm}

\If{$\bZ \setminus \textit{Ant}(\bX \cup \bY\cup\bI) \neq \emptyset $ {\bf or } $ \bZ \not\subseteq \bR $} \Return{false}\EndIf
\If{\textbf{not} \call{algo:issep}$(\cG, \bX, \bY, \bZ )$} \Return{false}\EndIf
\State{$\cG' \gets \cG_{\textit{Ant}(\bX \cup \bY\cup\bI)}$}
\For {all $U \gets \bZ \setminus \bI$}
\If{\call{algo:issep}$(\cG', \bX, \bY, \bZ \setminus \{U\})$} \Return{false}\EndIf
\EndFor
\Return{true}
\end{algo}
An $\bI$-minimal $ m $-separator relative to $ (\bX, \bY)  $ can be computed 
using the algorithm 

\begin{algo}{FindMinSepSparse}{$\cG, \bX, \bY, \bI, \bR$}{\label{algo:findminimalsepnaive}}{8cm}
\State{$\cG' \gets \cG_{\textit{Ant}(\bX \cup \bY \cup \bI)}$}
\State{$\bZ \gets \bR \cap \textit{Ant}(\bX \cup \bY \cup \bI) \setminus (\bX \cup \bY)$}
\If{\textbf{not} \call{algo:issep}$(\cG', \bX, \bY, \bZ)$}
{\Return{$\bot$}}
\EndIf
\For {all $U$ in $\bZ\setminus \bI$}
\If{\call{algo:issep}$(\cG', \bX, \bY, \bZ \setminus \{U\})$} \State{$\bZ \gets \bZ \setminus \{U\}$}\EndIf
\EndFor
\State{\Return{$\bZ$}}
\end{algo}

\begin{proposition}
The tasks {\sc TestMinSep} and  {\sc FindMinSep}
for $\bI$-minimal separators can be solved in time $\cO( (n+m) \cdot | \textit{Ant}(\bX \cup \bY \cup \bI) |  )$.
\end{proposition}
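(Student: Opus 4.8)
The plan is to establish correctness and the runtime bound for the two sparse-graph routines \textsc{TestMinSepSparse} and \textsc{FindMinSepSparse} separately, with both arguments resting on a single structural observation: once we restrict attention to candidate sets $\bW$ with $\bI \subseteq \bW \subseteq \textit{Ant}(\bX\cup\bY\cup\bI)$, the notion of $m$-separation collapses to that of a vertex cut in one \emph{fixed} undirected graph, and vertex cuts are monotone under taking supersets. First I would note that by Corollary~\ref{lemma:moralmin} every $\bI$-minimal $m$-separator lies inside $\textit{Ant}(\bX\cup\bY\cup\bI)$, which justifies the initial rejection test in \textsc{TestMinSepSparse} and the choice of the starting set $\bR \cap \textit{Ant}(\bX\cup\bY\cup\bI)\setminus(\bX\cup\bY)$ in \textsc{FindMinSepSparse}; since $\bI\subseteq\bR$ and $\bR$ is disjoint from $\bX\cup\bY$, this start set contains $\bI$, so every set inspected in the loops stays within the family above.

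The key lemma I would isolate is the following. For any $\bW$ with $\bI \subseteq \bW \subseteq \textit{Ant}(\bX\cup\bY\cup\bI)$ we have $\textit{Ant}(\bX\cup\bY\cup\bW) = \textit{Ant}(\bX\cup\bY\cup\bI)$, so by the augmented-graph characterization of $m$-separation (Preliminaries, \citep{Richardson2002}) the set $\bW$ $m$-separates $\bX,\bY$ in $\cG$ exactly when $\bW$ is an $\bX$-$\bY$ vertex cut in the single graph $(\cG_{\textit{Ant}(\bX\cup\bY\cup\bI)})^a=(\cG')^a$. Two consequences follow at once: (i) separation of such a $\bW$ may be tested in the induced subgraph $\cG'$ rather than in $\cG$, which is why the inner \textsc{TestSep} calls run on $\cG'$; and (ii) on this family separation is upward monotone (a superset of a cut is a cut) and downward anti-monotone (a subset of a non-cut is a non-cut).

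Using (ii) I would prove the ``remove-one-node'' characterization both algorithms rely on: a separator $\bZ$ with $\bI\subseteq\bZ\subseteq\textit{Ant}(\bX\cup\bY\cup\bI)$ is $\bI$-minimal iff $\bZ\setminus\{U\}$ fails to separate for every $U\in\bZ\setminus\bI$. The forward direction is immediate; for the converse, if some proper subset $\bZ'\supseteq\bI$ separated, then choosing $U\in\bZ\setminus\bZ'$ and invoking upward monotonicity would make $\bZ\setminus\{U\}\supseteq\bZ'$ a separator, a contradiction. This yields correctness of \textsc{TestMinSepSparse} directly. For \textsc{FindMinSepSparse} I would argue that the greedy single pass returns a minimal set: the start set is a separator precisely when one exists (Lemma~\ref{lemma:auxiliary:m:sep}), and once a node is kept as essential it stays essential, because later removals only shrink the current set and, by downward anti-monotonicity, dropping an essential node still leaves a non-separator; hence one pass suffices and the output is $\bI$-minimal.

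For the runtime, computing $\textit{Ant}(\bX\cup\bY\cup\bI)$ and the induced subgraph $\cG'$ takes $\cO(n+m)$, and each \textsc{TestSep} call costs $\cO(n+m)$ by Proposition~\ref{prop:TestSep}. Both algorithms perform at most $|\bZ\setminus\bI|\le|\textit{Ant}(\bX\cup\bY\cup\bI)|$ such calls inside the loop, plus a constant number outside it, giving the stated bound $\cO((n+m)\cdot|\textit{Ant}(\bX\cup\bY\cup\bI)|)$. The hard part, and the step deserving the most care, is exactly the non-monotonicity of $m$-separation in general graphs — adding a node to a separating set can open a collider and destroy separation — so the entire argument hinges on verifying that restricting to candidate sets that contain $\bI$ and lie inside $\textit{Ant}(\bX\cup\bY\cup\bI)$ freezes the augmented graph and thereby restores the monotonicity on which the single-node tests depend.
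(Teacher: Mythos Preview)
Your proposal is correct and follows essentially the same route the paper outlines: both you and the paper observe that once every candidate set $\bW$ satisfies $\bI\subseteq\bW\subseteq\textit{Ant}(\bX\cup\bY\cup\bI)$, the augmented graph $(\cG_{\textit{Ant}(\bX\cup\bY\cup\bI)})^a$ is fixed, $m$-separation reduces to being a vertex cut there, and the needed monotonicity follows. The paper phrases this via the Tian--Paz--Pearl monotonicity results (Lemma~\ref{lem:monotone:m:separators} and Corollary~\ref{cor:monotonicity:sep}) and then remarks that the augmented-graph correspondence resolves the node-ordering issue for \textsc{FindMinSepSparse}; you instead derive the remove-one-node characterization and the single-pass correctness directly from the vertex-cut monotonicity, which is a slightly more self-contained presentation of the same idea.
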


The correctness of these algorithms depends on the non-obvious fact that $m$-separators are monotone:  if $\bZ \setminus V$ is not an $m$-separator, no $\bZ \setminus \bV$ with $V\in \bV$ is one either. This monotonicity was proven for $d$-separation by \cite{TianPP1998} and we will state their results in terms of  $m$-separation: %

\begin{lemma}\label{lem:monotone:m:separators}
Let $\bX$, $\bY$ be two sets and let $\bZ$ be an $m$-separator relative to $(\bX,\bY)$. 
If the set $\bZ \cup \{Z_1, \ldots, Z_n\}$ is also an $ m $-separator, where $Z_1, \ldots, Z_n$ are single nodes which are not in $\bZ$ then either $\bZ \cup \{Z_1\}$, or $\bZ \cup \{Z_2\}$, $\ldots$ or $\bZ\cup \{Z_n\}$ must be another $ m $-separator between $ \bX $ and $ \bY $.
\end{lemma}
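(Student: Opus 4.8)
The plan is to argue by contraposition using the walk characterization of $m$-connection from the Preliminaries. Write $\bS=\{Z_1,\ldots,Z_n\}$ and suppose, towards a contradiction, that $\bZ$ is an $m$-separator but \emph{none} of the sets $\bZ\cup\{Z_i\}$ is; I will then produce a walk between $\bX$ and $\bY$ that is $m$-connecting given $\bZ\cup\bS$, contradicting the hypothesis that $\bZ\cup\bS$ separates. For each $i$, since $\bZ\cup\{Z_i\}$ fails to separate, fix a walk $w_i$ from $\bX$ to $\bY$ on which all colliders and only colliders lie in $\bZ\cup\{Z_i\}$. Because $\bZ$ itself separates, $w_i$ cannot already be $m$-connecting given $\bZ$, and a short inspection of the two conditioning sets forces $Z_i$ to occur on $w_i$ as a collider (it is the single node whose status changes). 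Consequently every collider of $w_i$ other than $Z_i$ lies in $\bZ$, every non-collider of $w_i$ lies outside $\bZ\cup\{Z_i\}$, and the only way $w_i$ can fail to be $m$-connecting given the full set $\bZ\cup\bS$ is that some $Z_j$ with $j\neq i$ occurs on it as a non-collider (such a $Z_j$ cannot be a collider of $w_i$, since it would then have to lie in $\bZ$).

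Next I would isolate the quantity to drive down. Call a walk from $\bX$ to $\bY$ \emph{admissible} if each of its colliders lies in $\bZ\cup\bS$ and each of its non-colliders lies outside $\bZ$, and define its \emph{defect} to be the number of occurrences of nodes of $\bS$ as non-colliders. Each $w_i$ is admissible, and an admissible walk of defect $0$ is exactly an $m$-connecting walk given $\bZ\cup\bS$, which is what we need. So it suffices to show that a positive-defect admissible walk can always be replaced by an admissible walk of strictly smaller defect. Given an admissible walk $w$ with a non-collider occurrence of some $Z_j\in\bS$, I would splice in the walk $w_j$: since $Z_j$ is a collider of $w_j$, both halves of $w_j$ meet $Z_j$ with an arrowhead, so replacing an appropriate portion of $w$ around $Z_j$ by a half of $w_j$ can turn the offending occurrence of $Z_j$ into a genuine collider (hence harmless, as $Z_j\in\bZ\cup\bS$), while truncating at first reoccurrences keeps the result a walk and the local surgery keeps all other colliders inside $\bZ\cup\bS$ and new non-colliders outside $\bZ$.

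The main obstacle is making this splice work in every orientation configuration and guaranteeing that it genuinely decreases a potential. At the splice point one must check, by a case analysis on the edge orientations of $w$ at $Z_j$ (collider versus the several non-collider patterns, including undirected edges), that incoming arrowheads can always be arranged on both sides of $Z_j$ — in the awkward cases one may need to use the $\bX$-side rather than the $\bY$-side half of $w_j$, or to truncate $w$ on the opposite side — and one must ensure that grafting $w_j$ does not import new $\bS$-non-colliders (the danger being that $w_j$ carries its own obstructing nodes, as already happens for $n=2$). This is the delicate combinatorial core, and handling it cleanly may be easiest through an induction that reduces the general case to the addition of two nodes.

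It is worth recording \emph{why} the statement is subtle despite looking like plain monotonicity: via the augmented-graph criterion, $m$-separation of $\bX,\bY$ by $\bZ$ amounts to $\bZ$ being a vertex cut in $(\cG_{\textit{Ant}(\bX\cup\bY\cup\bZ)})^a$, and in a \emph{fixed} undirected graph every superset of a cut is again a cut. The entire difficulty therefore comes from the fact that enlarging the conditioning set from $\bZ$ to $\bZ\cup\{Z_i\}$ can enlarge $\textit{Ant}(\bX\cup\bY\cup\bZ)$ and hence add moral edges, which is exactly the effect that the walk surgery above is designed to control. In this sense the argument adapts the $d$-separation proof of \cite{TianPP1998} to walks in ancestral graphs.
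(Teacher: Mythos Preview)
The paper does not give its own proof of this lemma; it asserts that the argument of Tian, Paz and Pearl for $d$-separation carries over after replacing $d$-separation by $m$-separation, DAGs by AGs, and ancestors by anteriors, and declines to repeat it. So there is no detailed paper proof to compare against, only the claim that an existing one generalizes.

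Your walk-splicing outline heads in a reasonable direction, and your final paragraph correctly pinpoints why the statement is not a triviality. But the defect potential is not shown to decrease under the splice, and you say so yourself: grafting a half of $w_j$ into $w$ at a non-collider occurrence of $Z_j$ can import fresh non-collider occurrences of other $Z_k\in\bS$ carried by $w_j$, so the total count of $\bS$-non-colliders need not drop. Your orientation case analysis addresses whether $Z_j$ can be made a collider at the splice point, not whether imported defects are controlled. Since this is precisely the combinatorial heart of the lemma, what you have is an informed sketch with an explicitly named gap rather than a proof. Note also that the easy subcase where every $Z_i$ already lies in $\textit{Ant}(\bX\cup\bY\cup\bZ)$ is immediate from the augmented-graph correspondence (the paper remarks on this separately); the entire content of the lemma sits in the non-anterior case, which is exactly where your splicing must be made to terminate.
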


\begin{corollary}
If $\bZ$ and $\bZ \cup \bZ_n$ are m-separators, where $\bZ_n = \{Z_1, \ldots, Z_n\}$, then there exist a series of $n-1$ m-separators: $\bZ \cup \bZ_i$, $i=1,\ldots,n-1$, with
$\bZ_1 \subset \ldots \subset \bZ_{n-1} \subset \bZ_n $
such that each $\bZ_i$ contains exactly $i$ nodes.
\end{corollary}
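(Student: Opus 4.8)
The plan is to prove this by upward induction on the size of the added set: I would construct the chain from the bottom and repeatedly invoke Lemma~\ref{lem:monotone:m:separators}, each time using a progressively larger $m$-separator as the ``base'' to which a single node is adjoined.

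First I would establish the base case $\bZ_1$. Since $\bZ$ and $\bZ \cup \bZ_n$ are both $m$-separators and the nodes $Z_1,\ldots,Z_n$ lie outside $\bZ$, Lemma~\ref{lem:monotone:m:separators} (with base $\bZ$ and added set $\bZ_n$) yields a single node $Z_i$ for which $\bZ \cup \{Z_i\}$ is an $m$-separator; I set $\bZ_1 = \{Z_i\}$. For the inductive step, suppose I have built $\bZ_k \subseteq \bZ_n$ with $|\bZ_k| = k$, $\bZ_1 \subset \cdots \subset \bZ_k$, and $\bZ \cup \bZ_k$ an $m$-separator, for some $k < n$. I would then apply Lemma~\ref{lem:monotone:m:separators} with the \emph{enlarged} base separator $\bZ \cup \bZ_k$ and the added set $\bZ_n \setminus \bZ_k$, which is nonempty (it has $n-k \geq 1$ nodes) and disjoint from $\bZ \cup \bZ_k$. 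Since $(\bZ \cup \bZ_k) \cup (\bZ_n \setminus \bZ_k) = \bZ \cup \bZ_n$ is an $m$-separator, the lemma produces a node $W \in \bZ_n \setminus \bZ_k$ such that $\bZ \cup (\bZ_k \cup \{W\})$ is an $m$-separator. Setting $\bZ_{k+1} = \bZ_k \cup \{W\}$ gives $|\bZ_{k+1}| = k+1$ and $\bZ_k \subset \bZ_{k+1} \subseteq \bZ_n$.

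Iterating from $k=1$ up to $k = n-1$ then produces the desired chain $\bZ_1 \subset \cdots \subset \bZ_{n-1} \subset \bZ_n$ with $\bZ \cup \bZ_i$ an $m$-separator and $|\bZ_i| = i$ throughout.

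The main thing to get right is the direction of the induction: it is tempting to try to delete nodes from $\bZ_n$ downward, but Lemma~\ref{lem:monotone:m:separators} is phrased as an \emph{addition} statement, so the natural move is to grow the chain upward while feeding the current set $\bZ \cup \bZ_k$ back in as the new base. The only hypothesis of the lemma that must be re-verified at each step is that the adjoined nodes avoid the (growing) base; this holds because $\bZ_n \setminus \bZ_k$ is disjoint from $\bZ_k$ by construction and from $\bZ$ since the $Z_j$ are outside $\bZ$. The degenerate cases ($n-k=1$, and the trivial $n=1$) require no separate argument.
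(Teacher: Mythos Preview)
Your proof is correct and follows the natural inductive argument. The paper itself does not spell out a proof of this corollary; it simply notes that the results of Tian, Paz and Pearl~\cite{TianPP1998} for $d$-separation carry over to $m$-separation verbatim after replacing DAGs by AGs and ancestors by anteriors. Your upward induction, feeding the current $\bZ \cup \bZ_k$ back into Lemma~\ref{lem:monotone:m:separators} as the new base, is exactly the standard argument one would expect from that source, so there is nothing to add.
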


\begin{corollary}\label{cor:monotonicity:sep}
If no single node can be removed from a m-separator $\bZ$ without destroying m-separability, then $\bZ$ is minimal.
\end{corollary}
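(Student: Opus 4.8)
The plan is to prove the contrapositive: assuming $\bZ$ is \emph{not} minimal, I will produce a single node $V \in \bZ$ such that $\bZ \setminus \{V\}$ is still an $m$-separator, contradicting the hypothesis. Since $\bZ$ is not minimal, by definition there is some proper subset $\bZ^\ast \subsetneq \bZ$ that is an $m$-separator relative to $(\bX,\bY)$. Write the missing nodes as $\bZ \setminus \bZ^\ast = \{Z_1,\ldots,Z_n\}$ with $n \geq 1$; then both $\bZ^\ast$ and $\bZ^\ast \cup \{Z_1,\ldots,Z_n\} = \bZ$ are $m$-separators.

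The key step is to interpolate a separator whose size is exactly one less than that of $\bZ$. This is precisely what the chain provided by the preceding corollary delivers: starting from the separator $\bZ^\ast$ and the separator $\bZ = \bZ^\ast \cup \{Z_1,\ldots,Z_n\}$, it yields a nested family of $m$-separators $\bZ^\ast \cup \bZ_i$ with $|\bZ_i| = i$ and $\bZ_1 \subset \cdots \subset \bZ_{n-1} \subset \bZ_n = \{Z_1,\ldots,Z_n\}$. In particular $\bZ^\ast \cup \bZ_{n-1}$ is an $m$-separator with $|\bZ^\ast| + (n-1) = |\bZ| - 1$ nodes. Since $\bZ_{n-1} \subset \bZ_n$ differ by a single element, writing $V$ for the unique node in $\bZ_n \setminus \bZ_{n-1}$ and using $V \notin \bZ^\ast$, we get $\bZ^\ast \cup \bZ_{n-1} = (\bZ^\ast \cup \bZ_n) \setminus \{V\} = \bZ \setminus \{V\}$. (For the degenerate case $n=1$ this is immediate, since $\bZ^\ast$ itself is $\bZ$ with one node removed.)

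Thus $\bZ \setminus \{V\}$ is an $m$-separator, i.e., a single node \emph{can} be removed from $\bZ$ without destroying $m$-separability, contradicting the assumption. Hence no such proper subset $\bZ^\ast$ exists and $\bZ$ is minimal. I expect no genuine obstacle here: all the real work is carried by the monotonicity established in Lemma~\ref{lem:monotone:m:separators} and its chain corollary. The only subtlety worth flagging is that ``minimal'' rules out \emph{all} proper subsets, not merely single-node deletions, which is exactly why one must invoke the full chain (rather than a single application of the lemma) to bridge from an arbitrary separating proper subset down to a separating one-node deletion.
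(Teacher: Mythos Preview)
Your proof is correct and follows the natural route: the paper itself does not spell out a proof of this corollary but defers to \cite{TianPP1998}, noting that the $d$-separation arguments there carry over verbatim to $m$-separation. Your contrapositive argument via the chain corollary is exactly the intended derivation, and you handle the degenerate case $n=1$ cleanly.
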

 
To generalize the proofs of \cite{TianPP1998} to $m$-separation, 
it is sufficient to replace $d$-separation with $m$-separation, DAGs with AGs and ancestors with anteriors throughout their arguments. Therefore, we do not repeat the proofs here. %
 
In principle one needs to take care of the order in which the nodes are enumerated when searching a minimal subset of a $m$-separator by removing nodes one-by-one like we do in algorithm \call{algo:findminimalsepnaive}. For example to obtain a minimal subset of $\{Z_1, Z_2\}$ in a DAG $X \to Z_2 \gets Z_1 \to Y$ the node $Z_2$ must be removed first, or $Z_1$ could not be removed. However, this is not an issue when all nodes of $\bZ$ are anteriors in $\textit{Ant}(\bX\cup\bY\cup\bI)$ due to the correspondence of $m$-separators to separators in the augmented graph, since removing a node cannot block a path in the augmented graph. From this correspondence one can also conclude Lemma~\ref{lem:monotone:m:separators} restricted to anteriors.

\subsection{Algorithms for enumerating all $m$-separators}\label{sec:algo:enum}

Lastly, we consider the problem of listing 
\emph{all} $m$-separators and \emph{all}  minimal $m$-separators between $\bX$ and $\bY$ in $\cG$. 
Since there might be an exponential number of (minimal) $m$-separators, it is not possible to list them all in polynomial time. For example in a path $X \gets V \gets V' \gets Y$ either $V$ or $V'$ must be in a minimal $m$-separator between $X$ and $Y$, so a graph containing $k$ such paths will have at least $2^k$ different $m$-separators. Therefore we will describe algorithms running with  $\cO(n^3)$ delay, which means that at most $\cO(n^3)$ time will pass between the start or the output of an $m$-separator and the output of the next $m$-separator or the end of the algorithm. They are based on the enumeration algorithm for minimal vertex separators of \citep{Takata2010}. 

\begin{algo}{ListSep}{$\cG, \bX, \bY, \bI, \bR$}{\label{algo:enumsep}}{8cm}
    \If{ {\sc FindSep}$(\cG, \bX, \bY, \bI, \bR) \neq \bot$}
    \If{$\bI = \bR$} 
    {Output $\bI$}
    \Else
      \State  {$V \gets $ an arbitrary node of $\bR \setminus \bI$}
      \State{{\sc  ListSep}($\cG, \bX, \bY, \bI \cup \{V\}, \bR$)}
      \State{{\sc  ListSep}($\cG, \bX, \bY, \bI, \bR\setminus\{V\}$)}
    \EndIf
    \EndIf
\end{algo}

\begin{analal}
Algorithm {\sc ListSep} performs backtracking to enumerate all $\bZ$, with $\bI \subseteq \bZ \subseteq \bR$, aborting branches that will not find a valid separator. Since every leaf will output a separator, the tree height is at most $n$ and the existence check needs $\cO(n+m)$, the delay time is $\cO(n(n+m))$.  
The algorithm generates every separator exactly once: if initially $\bI \subsetneq \bR$, with $V\in \bR\setminus \bI$, then the first recursive call returns all separators $\bZ$ with $V\in \bZ$ and the second call returns all $\bZ'$ with $V\not\in \bZ'$. Thus the generated separators are pairwise disjoint.  
\end{analal}

\begin{proposition}\label{prop:ListSep}
Using the algorithm above, the task 
{\sc ListSep} can be solved with polynomial delay $\cO(n(n+m))$.
\end{proposition}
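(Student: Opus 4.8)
The plan is to establish two things about algorithm {\sc ListSep}: that it is correct (it outputs precisely the $m$-separators $\bZ$ with $\bI \subseteq \bZ \subseteq \bR$, each exactly once), and that consecutive outputs are separated by at most $\cO(n(n+m))$ time. I would treat the recursion as a binary backtracking tree in which every node performs a single call to {\sc FindSep}, whose cost is $\cO(n+m)$ by Proposition~\ref{prop:FindSep}; the whole argument then reduces to counting how many such calls lie between two successive outputs.

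The correctness hinges on one observation about {\sc FindSep}: by Lemma~\ref{lemma:auxiliary:m:sep}, the candidate set it tests is a separator whenever \emph{any} separator with $\bI \subseteq \bZ \subseteq \bR$ exists, so {\sc FindSep} returns $\bot$ if and only if no separator lies in the interval of sets $\bZ$ with $\bI\subseteq\bZ\subseteq\bR$. I would first note \emph{soundness}: output occurs only at nodes with $\bI = \bR$ and a successful {\sc FindSep} call, which forces the unique candidate $\bZ = \bI$ to be a genuine separator. For \emph{completeness and uniqueness} I would induct on $|\bR \setminus \bI|$. At an internal node we pick $V \in \bR \setminus \bI$ and recurse on $(\bI \cup \{V\}, \bR)$ and on $(\bI, \bR \setminus \{V\})$; these two ranges partition the in-range separators into those that contain $V$ and those that do not, so by the induction hypothesis every in-range separator is produced exactly once.

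For the delay I would isolate the key structural fact, which I expect to be the crux of the whole proof: call a recursion node \emph{alive} if its {\sc FindSep} call succeeds. Using the same partition argument as above, any alive internal node has at least one alive child (the separator witnessed by {\sc FindSep} either contains or omits $V$, and the corresponding child then still sees it in range), and by induction every alive node's subtree contains an output leaf. Since each recursive call strictly decreases $|\bR \setminus \bI|$, the tree has depth at most $n$. Now consider the depth-first traversal: after an output, the algorithm backtracks (at $\cO(1)$ per level, with no new {\sc FindSep} calls) to the first ancestor with an unexplored child and then descends to the next output. On this descent, at each of the at most $n$ levels it enters the first child and, if that child is dead, wastes a single $\cO(n+m)$ call before entering the guaranteed-alive sibling; hence at most $2n$ {\sc FindSep} calls occur before the next output is reached. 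Multiplying by the $\cO(n+m)$ cost per call yields the delay bound $\cO(n(n+m))$, and the same descent argument bounds the time before the first output.

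The main obstacle is precisely the delay accounting, not the correctness: one must rule out the possibility that the search wastes unbounded time exploring recursive calls that can never lead to an output. The ``at least one alive child'' property confines each level of a descent to at most one dead-end call, and this property in turn rests entirely on {\sc FindSep} being an \emph{exact} existence oracle via Lemma~\ref{lemma:auxiliary:m:sep}. I would therefore state and prove that structural fact carefully before attempting the counting argument.
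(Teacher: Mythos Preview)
Your proposal is correct and follows essentially the same approach as the paper: a backtracking tree whose branches partition the separators by membership of the chosen node $V$, with the delay bound coming from the depth-$n$ tree and the $\cO(n+m)$ cost of each {\sc FindSep} call. The paper's own analysis is considerably terser---it simply asserts that ``every leaf will output a separator'' and that the tree height is at most $n$---whereas you have made explicit and carefully justified the key structural fact (every alive node has at least one alive child, via the exactness of {\sc FindSep} from Lemma~\ref{lemma:auxiliary:m:sep}) that the paper leaves implicit.
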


To enumerate all minimal $ m $-separators we can directly apply Takata's enumeration algorithm \cite{Takata2010} after transforming the ancestral graph to its augmented graph:

\begin{algo}{ListMinSep}{$\cG, \bX, \bY, \bI, \bR$}{\label{algo:enumminimalsep}}{12cm} 
\State{$\cG' \gets \cG_{\textit{Ant}(\bX \cup \bY\cup\bI)}$}
\State{$\cG'^a \gets (\cG_{\textit{Ant}(\bX \cup \bY\cup\bI)})^a$}
\State{Add a node $X^m$ connected to all $\bX$ nodes.}
\State{Add a node $Y^m$ connected to all $\bY$ nodes.}
\State{Remove nodes of $\bI$.}
\State{Remove nodes of $\bV \setminus \bR$ connecting the neighbors of each removed node.}
\State{Use the algorithm in~\citet{Takata2010} to list all sets separating $X^m$ and $Y^m$.}
\end{algo}

\begin{analal}
The correctness is shown by \citet{TextorLiskiewicz2011} for
adjustment sets and generalizes directly to $m$-separators,
because after moralization, both problems are equivalent to
enumerating vertex cuts of an undirected graph. The handling
of $\bI$ is shown by \citet{Acid1996}.
\end{analal}

\begin{proposition}\label{prop:ListMinSep}
The task {\sc ListMinSep} can be solved with polynomial delay $\cO(n^3)$.
\end{proposition}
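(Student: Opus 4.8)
The plan is to reduce {\sc ListMinSep} to the enumeration of minimal vertex separators in the undirected graph built by steps~1--6, and then to invoke the algorithm of \citet{Takata2010}. Concretely, I would prove (i) that the minimal $X^m$--$Y^m$ vertex separators of the graph produced after step~6 are in bijection with the $\bI$-minimal $m$-separators $\bZ$ satisfying $\bI\subseteq\bZ\subseteq\bR$, and (ii) that Takata's routine, run on a graph with $\cO(n)$ vertices and $\EAninMoralGraph=\cO(n^2)$ edges, emits each successive separator after $\cO(n^3)$ work.

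For (i), I would first use Corollary~\ref{lemma:moralmin}: every $\bI$-minimal $m$-separator lies in $\textit{Ant}(\bX\cup\bY\cup\bI)$, so passing to $\cG'=\cG_{\textit{Ant}(\bX\cup\bY\cup\bI)}$ discards none of them, and for every such $\bZ$ we have $\textit{Ant}(\bX\cup\bY\cup\bZ)=\textit{Ant}(\bX\cup\bY\cup\bI)$, so the \emph{single} augmented graph $(\cG')^a$ faithfully encodes $m$-separation for all candidate sets at once. The augmentation characterization recalled in the Preliminaries \citep{Richardson2002} then gives that $\bZ$ $m$-separates $\bX,\bY$ in $\cG'$ exactly when $\bZ$ is an $\bX$--$\bY$ vertex cut of $(\cG')^a$; joining the fresh nodes $X^m$ to $\bX$ and $Y^m$ to $\bY$ reduces set--set separation to separation of the single pair $X^m,Y^m$, and since these terminals cannot lie in a cut the family of separators is unchanged.

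Next I would justify the surgery of steps~5--6 using the two cases of Figure~\ref{fig:removeIR}. Deleting each $V\in\bI$ together with its incident edges is correct because $V$ is forced into $\bZ$ and hence blocks every augmented-graph path through it; deleting each $V\in\bV\setminus\bR$ while turning its neighborhood into a clique is correct because $V$ is forbidden from $\bZ$ and therefore acts as a transparent pass-through vertex. After this, the minimal vertex separators of the resulting graph are precisely the sets $\bZ\setminus\bI$ for $\bI$-minimal $m$-separators $\bZ$ (re-adjoining $\bI$ recovers $\bZ$). This is exactly the reduction proved for adjustment sets by \citet{TextorLiskiewicz2011} and for the $\bI$-restriction by \citet{Acid1996}; the only new obligation is to check that their arguments survive verbatim once $d$-separation is replaced by $m$-separation and moralization by augmentation, which Corollary~\ref{lemma:moralmin} and the augmentation characterization supply.

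For (ii) I would quote the polynomial-delay guarantee of \citet{Takata2010} for listing minimal vertex separators, which on a graph with $\cO(n)$ vertices and $\EAninMoralGraph=\cO(n^2)$ edges amounts to $\cO(n^3)$ per separator; the one-time preprocessing of steps~1--6 is also $\cO(n^3)$, dominated by the clique insertions of step~6 (cf.\ Figure~\ref{fig:removeIR}), while moralization itself costs only $\cO(n^2)$. Hence the delay is $\cO(n^3)$ throughout. I expect the main obstacle to be the \emph{minimality} half of the bijection in~(i): one must argue in both directions that a \emph{minimal} cut corresponds to an \emph{$\bI$-minimal} $m$-separator rather than to an arbitrary one. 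The ancestry constraint of Corollary~\ref{lemma:moralmin} is what makes this work, because it forces every candidate node to be an anterior, so that removing a node from a cut strictly opens a path in the augmented graph; local (single-node) irredundancy in the undirected graph therefore certifies global $\bI$-minimality of the corresponding $m$-separator.
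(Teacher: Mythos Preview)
Your proposal is correct and follows essentially the same approach as the paper: reduce to minimal vertex-cut enumeration in the augmented graph after the $\bI$/$\bR$ surgery, then invoke Takata's algorithm, citing \citet{TextorLiskiewicz2011} and \citet{Acid1996} for correctness of the reduction. The paper's own argument is considerably terser than yours (it simply cites those references and notes that after moralization the problem is equivalent to vertex-cut enumeration), whereas you spell out the role of Corollary~\ref{lemma:moralmin} and the minimality bijection explicitly, but the substance is the same.
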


\section{Empirical analysis of DAG consistency testing}

\label{sec:basissets}

The concept of $m$-separation in ancestral graphs, or $d$-separation in DAGs, is of central importance in the field of Bayesian networks and graphical causal models because it couples the graphical model structure to conditional independences in the corresponding probability distribution. Perhaps the most direct application of graphical separation is therefore \emph{consistency testing}, where we check whether a graphical causal model is in fact consistent with the dataset it is intended to represent. If a graphical causal model fails to pass this test, then any downstream analyses, such as implied adjustment sets, are potentially nullified. 
In this section, we illustrate how the algorithmic framework that we developed in Section~\ref{sec:algo} can be harnessed for graphical causal model checking by applying it to the problem of deriving \emph{basis sets} for testing the consistency of a DAG to a given probability distribution. 

A DAG $\cG$ is called \emph{consistent} with a probability distribution $P$ if for all pairwise disjoint subsets $\bX,\bY,\bZ \subseteq \bV$, where only $\bZ$ may be empty, $\bX$ and $\bY$ are conditionally independent given $\bZ$ in $P$ whenever $\bZ$ $d$-separates $\bX$ and $\bY$ in $\cG$. Therefore, to test consistency, we could in principle enumerate all $d$-separation relations in $\cG$ and then perform the corresponding conditional independence tests. However, that set of relations can be very large -- there is already an exponential number of subsets $\bX$ and $\bY$ to consider, each of which could be separated by an exponential number of sets $\bZ$. A \emph{basis set} of $d$-separation statements entails all other statements implied by $\cG$ when combining them using the axioms of conditional independence \cite{Dawid1979}. The \emph{canonical} basis set has the form
$$
\{ \forall X \in \bV: X \independent \bV \setminus (\textit{Pa}(X) \cup \textit{De}(X)) \mid \textit{Pa}(X) \},
$$
or in words, every variable must be jointly independent of its non-descendants conditioned on its parents. This basis set only contains $n$ elements but the independences to be tested can be of very high order. 

In practice, conditional independence is often tested using methods that do not distinguish between joint and pairwise independence, such as partial correlation or linear regression. In such cases, even simpler basis sets can be derived. For instance, Pearl and Meshkat \cite{PearlMeshkat1999} discuss basis sets for hierarchical linear regression models (also known as path models or structural equation models \cite{Thoemmes2017}). We here consider the vertices $\bV=\{X_1,\ldots,X_n\}$ to be indexed topologically, such that for $j > i$, we have that $X_j \notin \textit{An}(X_i)$. Then we consider basis sets having the form 
$$
\{ \forall j > i, X_i \to X_j \notin \bE  : X_i \independent X_j \mid \bZ_{ij} \},
$$
or in words, we include one separation statement for every nonadjacent pair of variables. These are pairwise rather than joint independences, which is equivalent when we measure dependence only by partial correlation \cite{PearlMeshkat1999}. An obvious choice is $\bZ_{ij}=\textit{Pa}(X_j)$, in which case we simply obtain a pairwise version of the canonical basis set above. We call that basis set the \emph{parental basis set}. However, Pearl and Meshkat show that another valid choice is any separating set $\bZ_{ij}$ that contains only nodes whose distance to $X_j$ is smaller than the distance between $X_i$ and $X_j$ (where the distance is defined length of a shortest path between two nodes). We call any such set a \emph{sparse basis set}. Note that every parental basis set is also sparse, but the separators in sparse basis sets can be substantially smaller than in parental basis sets.

\subsection{Simulation setup}

We evaluate the difference between parental and sparse basis sets on random DAGs. These are generated with 
various numbers of variables $ n \in \{10, 25, 50, 100\}$.
The edges are chosen independently with individual probabilities $P(\textit{edge})=\frac{l}{(n-1)}$, where $l \in 2,5,10,20$. For small $n$ the probabilities are capped at 1. 
For example, for $l= n = 10$, the parameter $P(\textit{edge}) = \min\{\frac{10}{(10-1)}, 1\} =1$ will only generate complete graphs.
Parameter $l$ describes the expected number of neighbors of a node in a generated DAG.
This  leads to an expected number of edges in generated graphs $\EX[m] \approxeq l n / 2$, 
 as there are $ \frac{n (n-1)}{2} $ 
possible edges in a DAG of $n$ nodes, each existing with probability $P(\textit{edge})$. Our algorithmic framework makes it straightforward to compute sparse basis sets with minimal separating sets by simply setting $\bR = \{ X_k \in \bV \mid d(X_k,X_j) < d(X_i, X_j)\}$, where we use $d(X_a, X_b)$ to denote the distance between two nodes. 

\subsection{Empirical results}

We empirically evaluate the sizes of parental and sparse basis sets in these random DAGs (Figure~\ref{fig:modelchecking}). 
The results show that the benefit of using sparse rather than parental basis sets does not depend much on the size of the DAG but rather on the amount of edges. For instance, when nodes had on average $5$ neighbors, sparse basis sets had between 20\% and 40\% fewer conditioning variables than the canonical parental basis sets.

These results provide a first example of how our framework can be applied in the context of graphical causal modeling. In the rest of this paper, we focus on covariate adjustment as our focus application area. However, given that separators play a central role in many aspects of graphical causal models, we expect that there should be many more applications in addition to those shown in this paper.

\begin{figure}

{
\footnotesize
\gdef\modelcheckinginput{code/modelchecking/normal.tex}
\arxivonly{\gdef\modelcheckinginput{normal.tex}}
\input{}

}

\caption{Model checking with parental or sparse basis sets for random DAGs. Here, $n$ denotes the number of nodes in the DAG, and $l$ denotes the expected number of neighbours of each node. The left panel shows the total size of the conditioning sets as a function of $n$, showing the expected quadratic increase. The right panel emphasizes that the benefit of using a sparse basis set is greatest if the graph is also sparse, in which case the total number of variables that need to be conditioned on can be reduced by up to 90\%.}
\label{fig:modelchecking}
\end{figure}

\section{Adjustment in DAGs}

\label{sec:dagadjust}

We now leverage the algorithmic
framework of Section~\ref{sec:algo} together with a 
constructive, sound and complete criterion 
for covariate adjustment in DAGs to solve all problems
listed in Table~\ref{fig:problems} for adjustment %
sets rather than $m$-separators in the same
asymptotic time.  %

First, note that the formal definition of 
adjustments (Definition~\ref{def:adjustment:set}) 
cannot be used to actually find an adjustment set as there 
are infinitely many probability distributions that are consistent to a certain graph. 
Fortunately, it is possible to characterize adjustment sets in graphical terms.

\begin{definition}[Adjustment criterion (AC) \citep{ShpitserVR2010}]\label{def:ac:general}
Let $\cG = (\bV,\bE)$ be a DAG, and $\bX,\bY,\bZ \subseteq \bV$ 
be pairwise disjoint subsets of variables. 
The set $\bZ$ satisfies the adjustment criterion relative to $(\bX, \bY)$ in $\cG$ if 
\begin{enumerate}
   \item[$(a)$] 
    no element in $\bZ$ is a descendant in $\cG_{\overline{\bX}}$ 
    of any $W \in \bV \setminus \bX$ 
   which lies on a proper causal path from $\bX$ to $\bY$
   and 
   \item[$(b)$]
   all proper non-causal paths in $\cG$ from $\bX$ to $\bY$ are blocked by $\bZ$.
\end{enumerate}
\end{definition}
 
However, even this criterion does not lead to efficient algorithms to find adjustment sets, because there can be exponentially many paths and blocking one non-causal path might open other paths. We address this problem below by presenting a constructive adjustment criterion that reduces non-causal paths to ordinary $ m $-separation, which can be reduced further to a reachability problem in undirected graphs.

\subsection{Constructive back-door criterion}

\begin{figure}
\begin{center}
\begin{tabular}{ccc}
\begin{tikzpicture}[yscale=0.9,xscale=0.9]
\node (g) at (-.7,1.25) {$\cG$:};
\node (x1) at (0,0) {$X_1$};
\node (x2) at (1,0) {$X_2$};
\node (v1) at (2,0) {$D_1$};
\node (y1) at (3,0) {$Y$};
\node[adjusted] (z1) at (2,1) {$Z$};
\node (v2) at (1,-1) {$V$};
\node (v3) at (2,-1) {$D_2$};
\node (v4) at (3,-1) {$D_3$};

\draw [->] (x1) -- (x2);
\draw [->] (v1) -- (y1);
\draw [->] (z1) -- (y1);
\draw [->] (z1) -- (x2);

\draw [->] (x2) -- (v2);
\draw [->] (v1) -- (v3);
\draw [->] (y1) -- (v4);

\draw [->] (x2) -- (v1);
\draw [->] (x2) -- (v3);
\end{tikzpicture}\hspace*{10mm}
&
\begin{tikzpicture}[yscale=0.9,xscale=0.9]
\node (g) at (-0.7,1.25) {$\cbdg$:};
\node (x1) at (0,0) {$X_1$};
\node (x2) at (1,0) {$X_2$};
\node (v1) at (2,0) {$D_1$};
\node (y1) at (3,0) {$Y$};
\node[adjusted] (z1) at (2,1) {$Z$};
\node (v2) at (1,-1) {$V$};
\node (v3) at (2,-1) {$D_2$};
\node (v4) at (3,-1) {$D_3$};

\draw [->] (x1) -- (x2);
\draw [->] (v1) -- (y1);
\draw [->] (z1) -- (y1);
\draw [->] (z1) -- (x2);

\draw [->] (x2) -- (v2);
\draw [->] (v1) -- (v3);
\draw [->] (y1) -- (v4);
\draw [->] (x2) -- (v3);
\end{tikzpicture}\hspace*{10mm}
&
\begin{tikzpicture}[yscale=0.9,xscale=0.9]
\node (g) at (-0.7,1.25) {$\pbdgmod$:};
\node (x1) at (0,0) {$X_1$};
\node (x2) at (1,0) {$X_2$};
\node (v1) at (2,0) {$D_1$};
\node (y1) at (3,0) {$Y$};
\node[adjusted] (z1) at (2,1) {$Z$};
\node (v2) at (1,-1) {$V$};
\node (v3) at (2,-1) {$D_2$};
\node (v4) at (3,-1) {$D_3$};

\draw [->] (x1) -- (x2);
\draw [->] (v1) -- (y1);
\draw [->] (z1) -- (y1);
\draw [->] (z1) -- (x2);

\draw [->] (x2) -- (v2);
\draw [->] (v1) -- (v3);
\draw [->] (y1) -- (v4);

\draw[dotted] (v1.north west) rectangle (v4.south east);
\node at (3.5,-1) {$\bC$};

\end{tikzpicture}
\end{tabular}
\end{center}
\caption{A DAG that permits exactly two adjustment sets for estimating the causal effect of  
$\bX=\{X_1,X_2\}$ on $\bY=\{Y\}$:  $\bZ=\{Z\}$ and $\bZ'=\{Z,V\}$. Because $\bV \setminus \causpaths= \{X_1,X_2,Z,V\}$, every adjustment is a subset of $\{Z,V\}$. 
The nodes $D_1, D_2, D_3$ are not allowed in any adjustment as they are 
not in $ \{Z,V\}$  \textendash\ the set of descendants of a non-$\bX$ node on the (only) 
proper causal path $X_2 \to D_1 \to Y$. 
Moreover, every adjustment must contain the variable $Z$ to block the path between $X_2$ and $Y$ in $\pbdg$.
Graph  $\pbdgmod$  illustrates  a parametrized proper back-door graph (Definition~\ref{def:ac:general:bdc:prime})  
with parameter $\bC=\causpaths=\{D_1,D_2,D_3,Y\}$ surrounded by a dotted rectangle. Removing edge $X_2 \to D_2$ from $\pbdg$ simplifies the DAG
while preserving the reduction of finding adjustment sets to $d$-separation.
} 
\label{fig:CBCexample}
\end{figure}


The graph of this reduction will be called the proper back-door graph:

\begin{definition}[Proper back-door graph] \label{def:adj:graph:}
Let $\cG = (\bV,\bE)$ be a DAG, and $\bX,\bY\subseteq \bV$ 
be pairwise disjoint subsets of variables. 
The \emph{proper back-door graph}, denoted as  $\cbdg$,
is obtained from $\cG$ by removing 
the first edge of every proper causal path from $\bX$ to $\bY$. 
\end{definition}

For an example proper back-door graph  $\pbdg$, see Fig.~\ref{fig:CBCexample}.
Note the difference between the 
proper back-door graph $\cbdg$ and the famous back-door 
graph $\cG_{\underline{\bX}}$ of \cite{Pearl2009}: in $\cG_{\underline{\bX}}$ \emph{all} 
edges leaving $\bX$ are removed while in $\cbdg$ only those 
that lie on a proper causal path (see Figure~\ref{fig:ShpitserVsPearl} 
for an example illustrating the difference).
However, to construct $\cbdg$ still only elementary 
operations are sufficient. 
\highlightrevision{r2c43}{
Indeed, we remove all edges  $X \to D$ in $\bE$ such that 
$X\in \bX$ and $D$ is in the subset of nodes on proper causal paths, which we 
abbreviate as $\pcauspaths$, and which is defined as follows:}
\begin{equation}\label{eq:pcp:contsr:def}
  \pcauspaths =   
        (\textit{De}_{\overline{\bX}}(\bX) \setminus \bX) \cap \textit{An}_{\underline{\bX }}(\bY).
\end{equation}
Hence, the proper back-door graph  can be constructed from $\cG$ in linear time
$\cO(m+n)$. Note that using the notation $\textit{PCP}$ the proper back-door graph can be specified as 
$$\cbdg = (\bV,\bE \setminus (\bX \to \pcauspaths )).$$

Now we propose the following adjustment criterion. 
For short, we will denote the set $\textit{De}(\pcauspaths)$ as $\causpaths$.

\begin{definition}[Constructive back-door criterion (CBC)]\label{def:ac:general:bdc}
Let $\cG = (\bV,\bE)$ be a DAG, and 
let $\bX,\bY,\bZ \subseteq \bV$ be pairwise disjoint subsets of variables. 
The set $\bZ$ satisfies the 
\emph{constructive back-door criterion}
relative to $(\bX, \bY)$ in $\cG$  if
\begin{enumerate}
  \item[$(a)$] $\bZ \subseteq \bV \setminus \causpaths$ and 
  \item[$(b)$] 
    $\bZ$ $d$-separates $\bX$ and $\bY$ in the proper back-door graph 
      $\cbdg$.
\end{enumerate}
\end{definition}
Figure~\ref{fig:CBCexample} shows how the constructive back-door criterion  can be applied 
to find an adjustment set in an example DAG.

The CBC is surprisingly robust to modifications of its definition which 
allows a flexible handling of the forbidden nodes for adjustments and 
edges one can remove from the DAG to build a proper back-door graph.
An example of a modified proper back-door graph is given in Figure~\ref{fig:CBCexample} (rightmost graph):
we can remove some further edges from the proper back-door graph simplifying its structure 
while preserving the property of interest, i.e., finding adjustment sets via $d$-separation. 
The following extended definition incorporates various possible variants determined by specific parameters, which will turn out to be useful when proving the correctness of the CBC and other related criteria.

\begin{definition}[Parametrization of the constructive back-door criterion (CBC($\bA,\bB,\bC $))]\label{def:ac:general:bdc:prime}
Let $\cG = (\bV,\bE)$ be a DAG, and 
let $\bX,\bY,\bZ \subseteq \bV$ be pairwise disjoint subsets of variables. 
Let $\bA \subseteq \bX \cup \bY $, $\bB \subseteq \bX $, $\bC \subseteq \causpaths$.
The set $\bZ$ satisfies CBC($\bA,\bB,\bC $) %
relative to $(\bX, \bY)$ in $\cG$  if 
\begin{enumerate} 
  \item[$(a')$] $\bZ \subseteq \bV \setminus De_{\overline{\bA}\underline{\bB}}(\pcauspaths)$, and
\item[$(b')$] 
    $\bZ$ $d$-separates $\bX$ and $\bY$ in the graph $\pbdgmod := (\bV,\bE \setminus (\bX \to (\pcauspaths \cup \bC) ))$.
\end{enumerate}
\end{definition}

Clearly CBC($ \emptyset $, $ \emptyset $, $\emptyset$) = CBC.
The variants CBC($ \bX $, $ \emptyset $, $\emptyset$) and 
CBC($ \emptyset $, $ \emptyset $, $\causpaths$) might be the most interesting.
Condition $ (a') $ of CBC($ \bX $, $ \emptyset $, $\emptyset$) forbids less nodes than 
$(a)$ of the CBC, namely it excludes the descendants of $\pcauspaths$ in $ \cG_{\overline{\bX}} $. 
So, this condition $ (a') $ is identical to condition $ (a) $ of the AC (Definition~\ref{def:ac:general})
which helps to prove the correctness of the CBC.
Condition $ (a') $ of CBC($ \emptyset $, $ \emptyset $, $\causpaths$), see Figure~\ref{fig:CBCexample} for an example, 
forbids exactly the same nodes as $(a)$ of the CBC but according to the condition~$ (b') $ more edges 
can be removed than due to $(b)$ of the CBC, 
which might improve the performance in an implementation. 

Note that the definition excludes CBC($ \emptyset $, $ \bY $, $\emptyset $), which could be considered as modifying condition $(a)$ to forbid the descendants of $ \pcauspaths $ in the graph $ \cG_{\underline{\bY}}$. This would not lead to a valid criterion as it would allow an adjustment set $ \{Z\} $ in the graph $ X \to Y \to Z $, where $ \{Z\} $ is not an adjustment. However, removing edges into $ \bY $ as in the graph $ \cG_{\overline{\bY}} $ of CBC($ \bY $, $ \emptyset $, $ \emptyset $) does not change the descendants at all, since the relevant $ Y $ are in $ \pcauspaths $ themselves. We can show that none of these modifications change the criterion:

\begin{lemma}\label{lem:cbc:variants}
Let $\cG = (\bV,\bE)$ be a DAG, and 
let $\bX,\bY\subseteq \bV$ be pairwise disjoint subsets of variables. 
Let $\bA \subseteq \bX \cup \bY $, $\bB \subseteq \bX $, $\bC \subseteq \textit{De}(\pcauspaths)$. Then CBC($\bA$, $\bB$, $\bC $) is equivalent to the CBC.
\end{lemma}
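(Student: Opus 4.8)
The plan is to fix a single $\bZ$ disjoint from $\bX\cup\bY$ and prove the two implications ``$\bZ$ satisfies the CBC $\Rightarrow$ $\bZ$ satisfies CBC($\bA,\bB,\bC$)'' and its converse, exploiting that the three parameters act on essentially disjoint parts of the criterion: $\bA$ and $\bB$ only shrink the forbidden set in $(a')$, while $\bC$ only deletes the additional edges $\bX\to\bC$ in $(b')$. The forward implication is pure monotonicity. Since deleting edges cannot enlarge a descendant set, $De_{\overline{\bA}\underline{\bB}}(\pcauspaths)\subseteq\textit{De}(\pcauspaths)=\causpaths$, so condition $(a)$ (which forbids the larger set) implies $(a')$; and since $\pcauspaths\subseteq\pcauspaths\cup\bC$, the graph $\pbdgmod$ is an edge-subgraph of $\cbdg$, and passing to an edge-subgraph can only create $d$-separations (a connecting path in the subgraph is connecting in the supergraph, as ancestry only grows with more edges), so $(b)$ implies $(b')$. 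For the converse I would assume $(a')$ and $(b')$ and establish $(a)$ first and then $(b)$.

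For $(b)$ given $(a)$: suppose $\bZ$ fails to $d$-separate $\bX,\bY$ in $\cbdg$ and take a \emph{proper} connecting path $\pi$. As $\cbdg$ differs from $\pbdgmod$ only by the edges $\bX\to(\bC\setminus\pcauspaths)$, and $\pi$ is proper, $\pi$ must begin with one such edge $X'\to C$ where $C\in\textit{De}(\pcauspaths)\setminus\pcauspaths$ (the membership $C\notin\pcauspaths$ follows because $X'\to C$ is present in $\cbdg$). If $C$ is a collider on $\pi$ then $C\in\textit{An}(\bZ)$, so some $Z\in\bZ$ is a descendant of $C\in\causpaths$, whence $Z\in\bZ\cap\causpaths$, contradicting $(a)$. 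Otherwise $C$ is a non-collider and $\pi$ leaves it along a directed edge; following the maximal directed prefix of $\pi$ out of $C$, every node stays in $\causpaths$. This prefix cannot reach $\bY$, for then $\pi$ would be a proper causal path whose first edge $X'\to C$ would have been deleted in $\cbdg$ (as $C$ would then lie in $\pcauspaths$), a contradiction. Hence the prefix ends at the first collider $W_k\in\causpaths$, which must again satisfy $W_k\in\textit{An}(\bZ)$ and so forces a descendant in $\bZ\cap\causpaths$, contradicting $(a)$.

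For $(a)$: note first that the relaxation by $\bA\subseteq\bY$ is inert, since a $\bY$-node that is a descendant of $\pcauspaths$ already lies in $\pcauspaths$, so deleting edges into it loses no descendants and $\textit{De}_{\overline{\bA}}(\pcauspaths)=\textit{De}(\pcauspaths)$ when $\bA\subseteq\bY$. The substance lies in the $\bX$-incident deletions $\overline{\bA\cap\bX}$ and $\underline{\bB}$, for which $De_{\overline{\bA}\underline{\bB}}(\pcauspaths)$ can be a proper subset of $\causpaths$. Here I would argue by contradiction: if some $W\in\bZ\cap\causpaths$ is permitted by $(a')$, I build a proper $d$-connecting path in $\pbdgmod$, contradicting $(b')$. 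Choosing $P\in\pcauspaths$ with a shortest directed path to a node of $\bZ\cap\causpaths$ guarantees its interior avoids $\bZ$; using that $P$ lies on a proper causal path to some $Y_0\in\bY$ and that $W$ is reachable from $\pcauspaths$ only through $\bX$, I would splice the witness directed segment from $\bX$ to $W$ (whose first edge survives in the back-door graph, since otherwise $W$ would already lie in $De_{\overline{\bA}\underline{\bB}}(\pcauspaths)$) with the fork at $P$ toward $Y_0$, making $W$ an open collider, and then truncate at the last $\bX$-node and first $\bY$-node to obtain a proper connecting path.

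I expect this final construction to be the main obstacle. One must simultaneously ensure that the spliced walk is a genuine path (no repeated vertex), that it remains proper after truncation, that every non-collider on it stays out of $\bZ$ (controlled by the shortest-path choice of $P$), and — most delicately — that none of its edges coincide with the deleted set $\bX\to(\pcauspaths\cup\bC)$, so that it truly lives in $\pbdgmod$ rather than merely in $\cG$; this last point is exactly where the $\bA$-, $\bB$- and $\bC$-relaxations interact and where the careful case analysis is needed. By contrast, the monotonicity direction and the derivation of $(b)$ from $(a)$ are comparatively routine.
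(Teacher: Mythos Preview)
Your forward direction and your derivation of $(b)$ from $(a)$ are correct and match the paper. One small omission in the $(b)$-from-$(a)$ step: when you assert that a proper connecting path $\pi$ in $\cbdg$ that fails in $\pbdgmod$ ``must begin with one such edge $X'\to C$'', you are silently ruling out the other way $\pi$ could fail --- namely that $\pi$ itself lies in $\pbdgmod$ but some collider on $\pi$ loses its activating descent to $\bZ$ because that descent passes through a removed edge. This case is indeed impossible once $(a)$ holds (the target of any removed edge is in $\causpaths$, hence so would be the $\bZ$-node reached), but you should say so explicitly; the paper does.

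The real gap is your construction for $(a)$. You propose to make the offending node $W\in\bZ\cap\causpaths$ a \emph{collider} on the connecting path, sandwiched between a ``witness segment from $\bX$ to $W$'' and the arm $W\gets\ldots\gets P\to\ldots\to Y_0$. But your justification that the first edge of the $\bX$-to-$W$ segment survives in $\pbdgmod$ --- ``otherwise $W$ would already lie in $De_{\overline{\bA}\underline{\bB}}(\pcauspaths)$'' --- is a non sequitur: that edge being removed only says its target is in $\pcauspaths\cup\bC$, which tells you nothing about $De_{\overline{\bA}\underline{\bB}}$. You also do not control whether the interior of that extra segment avoids $\bZ$.

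The paper's construction avoids all of this by \emph{not putting $W$ on the path at all}. It builds a pure fork at the $\pcauspaths$-node (your $P$): one arm is the causal suffix $P\to\ldots\to Y_0$, the other is the prefix of the shortest $P\to W$ path, stopped at the \emph{first} node $A$ it meets in $\bX\cup\bY$. If $A\in\bY$ then $A\in\pcauspaths$ with a shorter path to $W$, contradicting minimality (your correct ``$\bA\cap\bY$ is inert'' observation is an alternative way to force $A\in\bX$). The resulting path $A\gets\ldots\gets P\to\ldots\to Y_0$ has no colliders, every non-endpoint lies in $De_{\overline{\bA}\underline{\bB}}(\pcauspaths)$ and hence avoids $\bZ$ by $(a')$, and --- crucially --- it contains no edge leaving $\bX$ (the only $\bX$-node is $A$, and the edge at $A$ points into it), so it lives in $\pbdgmod$ regardless of $\bC$. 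This last point is exactly what your collider construction cannot guarantee.
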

\begin{proof}
Let $ \bZ $ be a set that satisfies the CBC. 
Since $ \bZ \subseteq \bV \setminus \causpaths = \bV \setminus \textit{De}(\pcauspaths) $ and 
$ De_{\overline{\bA}\underline{\bB}}(\pcauspaths) \subseteq \textit{De}(\pcauspaths)$, 
the condition $(a')$   $ \bZ \subseteq \bV \setminus De_{\overline{\bA}\underline{\bB}}(\pcauspaths) $ 
is satisfied for CBC($\bA$, $\bB$, $\bC $). 
$ \bZ $ $ d $-separates $ \bX $ and $ \bY $ in $ \pbdg $, and thus also in $ \pbdgmod $, 
because every edge or path of $ \pbdgmod $ also exists in $ \pbdg $. Thus $(b')$ is true as well. 
Hence, $ \bZ $ satisfies CBC($\bA$, $\bB$, $\bC $). 

To see the other direction, let $ \bZ $ be a set that satisfies CBC($\bA$, $\bB$, $\bC $), but not CBC. 
If $ \bZ $ does not satisfies CBC~$(a)$, there exists a node $ Z\in\bV \setminus De_{\overline{\bA}\underline{\bB}}(\pcauspaths)$ that is not in $ \bV \setminus \causpaths = \bV \setminus \textit{De}(\pcauspaths)  $.
Then there must exist a proper causal path from $\bX$ to $\bY$ on which a node $W \in \bV \setminus \bX$ is an ancestor of $ Z $ in $ \cG $, but not in $ \cG_{\overline{\bA}\underline{\bB}} $, i.e., there is a causal path from a node of $ \bX  $ over $W$ to $Z$ which intersects $\bA \cup \bB	$. We can assume the nodes were chosen such that the length of the subpath between $ W $ and $ Z $ is minimal.
 
Let $W \to \ldots \to Y$ denote the suffix of the path from $\bX$ to $\bY$ 
starting in $W$. Note that this path might consist of only the vertex $W$.
Additionally, for the causal path from $W$ to $Z$, let $W \to \ldots \to A$ 
be its shortest prefix with $ A \neq W $ which ends in $\bA\cup\bB\cup\bX \subseteq \bX \cup \bY$. 
Notice that $W$ itself cannot be in $ \bB$ and, if it is in $\bA$, it does not change the paths.
Then, from the condition $(a')$, we know that no vertex of $W \to \ldots \to A$ belongs to $\bZ$. 
If $A \in \bX$, this leads to  a contradiction with the condition $(b')$ since 
$A \gets \ldots \gets W \to \ldots \to Y$ is a path $ \pbdgmod $ from $\bX$ to $\bY$ that is not blocked by $\bZ$. %
Otherwise we have $A \in \bY$, so $A \in \pcauspaths$ and the path from $A$ to $ Z $ is shorter than the path from $ W $ to $ Z $, which contradicts the choice of $ W $.  

If $ \bZ $ does not satisfies CBC~$(b)$, but satisfies CBC~$(a)$, there exists a path $ \pi $ between $\bX $ and $ \bY $ not blocked in $ \pbdg $ by $ \bZ $ that is blocked in $ \pbdgmod $ due to a removed edge $ X \to C $ with $ X \in \bX, C\in \bC $. 
If $ X \to C $ is on $ \pi $, we can assume it is the last such edge on $ \pi $. If the subpath from $ C $ to $ Y $ is causal, this edge is also removed in $ \pbdg $, a contradiction. So this subpath becomes non-causal at a  collider $ \to C' \gets $ unblocked in $ \pbdg $, which has a descendant in $\bZ$ that is also a descendant of $ C $ contradicting CBC~$(a)$. 
If the removal of the edge  $X \to C $ prevents the opening of a collider, $ C $ is also the ancestor of a node in $ \bZ $, which contradicts CBC~$(a)$ either.
\end{proof}

We will see the usefulness of the parametrization of the constructive back-door criterion
in the proof of the main result of this subsection:

\begin{theorem}\label{th:ac:equivalence}
The constructive back-door criterion (CBC) is equivalent to the adjustment criterion (AC).
\end{theorem}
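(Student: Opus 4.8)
The plan is to prove the two implications \emph{(CBC $\Rightarrow$ AC)} and \emph{(AC $\Rightarrow$ CBC)}, using Lemma~\ref{lem:cbc:variants} to eliminate the gap between the two incarnations of condition $(a)$. Observe first that the parametrized variant CBC($\bX,\emptyset,\emptyset$) of Definition~\ref{def:ac:general:bdc:prime} has condition $(a')$ equal to $\bZ \subseteq \bV \setminus \textit{De}_{\overline{\bX}}(\pcauspaths)$, which is verbatim condition $(a)$ of the AC (Definition~\ref{def:ac:general}), while its condition $(b')$ with $\bC=\emptyset$ (so $\pbdgmod=\cbdg$) is exactly condition $(b)$ of the CBC. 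Since Lemma~\ref{lem:cbc:variants} already gives CBC($\bX,\emptyset,\emptyset$) $\equiv$ CBC, the difference between the two $(a)$-conditions disappears as soon as $\bZ$ $d$-separates $\bX$ and $\bY$ in $\cbdg$: under that standing assumption the strong exclusion $\bZ\cap\causpaths=\emptyset$ of the CBC and the weaker exclusion $\bZ\cap\textit{De}_{\overline{\bX}}(\pcauspaths)=\emptyset$ of the AC become interchangeable. It therefore remains only to compare the two blocking requirements, i.e.\ to show, assuming condition $(a)$, that ``$\bZ$ $d$-separates $\bX,\bY$ in $\cbdg$'' is equivalent to ``$\bZ$ blocks every proper non-causal path from $\bX$ to $\bY$ in $\cG$'', where on either side I may freely use $\bZ\cap\causpaths=\emptyset$.

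The bridge between the two graphs is a short piece of path bookkeeping that I would record first. The edges deleted in forming $\cbdg$ are precisely the edges $X\to W$ with $X\in\bX$ and $W\in\pcauspaths$, and on a \emph{proper} path such an edge can occur only as the first edge. Hence a proper path of $\cG$ survives in $\cbdg$ exactly when its first edge is not of this form; and since the first off-$\bX$ node of any proper causal path lies in $\pcauspaths$, every proper path of $\cbdg$ is automatically non-causal. Because $\cbdg$ is a subgraph of $\cG$, a path blocked in $\cG$ is still blocked in $\cbdg$ (descendant sets of colliders only shrink). The reverse transfer needs $\bZ\cap\causpaths=\emptyset$: no directed path ending in $\bZ$ can then use a deleted edge $X\to W$, as its endpoint would lie in $\textit{De}(W)\subseteq\causpaths$, so $\textit{An}_{\cG}(\bZ)=\textit{An}_{\cbdg}(\bZ)$ and blocking of a fixed path agrees in both graphs.

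With these facts, the direction \emph{AC$(b)\Rightarrow$ $d$-sep in $\cbdg$} is immediate: reducing set separation to proper paths, any proper path of $\cbdg$ is a proper non-causal path of $\cG$, hence blocked by AC$(b)$, and the block transfers down to the subgraph. For the converse \emph{$d$-sep in $\cbdg$ $\Rightarrow$ AC$(b)$}, I take a proper non-causal path $\pi$ of $\cG$ and split on its first edge. If $\pi$ survives in $\cbdg$, it is blocked there and the ancestry identity carries the block back to $\cG$. If its first edge $X\to W$ (with $W\in\pcauspaths$) was deleted, then because $\pi$ is non-causal its maximal initial directed segment $X\to V_1\to\cdots\to V_j$ ends in a collider $V_j$, and $V_j\in\textit{De}(W)\subseteq\causpaths$; were $\pi$ $m$-connecting, this collider would have a descendant $Z\in\bZ$, forcing $Z\in\textit{De}(V_j)\subseteq\causpaths$ and contradicting $\bZ\cap\causpaths=\emptyset$. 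Thus every such $\pi$ is blocked.

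The step I expect to be the main obstacle is precisely this cut-start case, and more specifically the licence to invoke the \emph{strong} exclusion $\bZ\cap\causpaths=\emptyset$ there. Condition AC$(a)$ by itself forbids only descendants taken in $\cG_{\overline{\bX}}$, which is too weak to kill the collider $V_j$ when the directed path from it into $\bZ$ re-enters $\bX$; a naive attempt to reroute $\pi$ to a surviving proper path in $\cbdg$ also fails, since all causal routes from $\bX$ to $V_j$ may begin with deleted edges. This is exactly where Lemma~\ref{lem:cbc:variants} earns its keep: the equivalence CBC($\bX,\emptyset,\emptyset$) $\equiv$ CBC upgrades AC$(a)$ to CBC$(a)$ under the hypothesis that condition $(b)$ holds, making the strong exclusion available in the argument above. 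Apart from this, the only remaining care is routine -- reducing $m$-separation of sets to proper paths, and verifying the ancestry identity of the second paragraph -- so I would foreground the cut-start collider argument and the appeal to the parametrization as the conceptual core of the proof.
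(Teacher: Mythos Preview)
Your proof is correct and follows essentially the same route as the paper: both invoke Lemma~\ref{lem:cbc:variants} (via the parametrization CBC($\bX,\emptyset,\emptyset$)) to reconcile the two $(a)$-conditions, prove AC$(b)\Rightarrow$CBC$(b)$ by noting that proper paths in $\cbdg$ are proper non-causal paths in $\cG$ and blocking transfers to subgraphs, and prove CBC$(a)$+CBC$(b)\Rightarrow$AC$(b)$ by splitting on whether the first edge is deleted and deriving a collider in $\causpaths\cap\textit{An}(\bZ)$. Your explicit ``ancestry identity'' $\textit{An}_{\cG}(\bZ)=\textit{An}_{\cbdg}(\bZ)$ under CBC$(a)$ is a tidy repackaging of the paper's case~(2); your final paragraph slightly over-worries the licence for CBC$(a)$ in the cut-start case, since in the direction CBC$\Rightarrow$AC that condition is simply given by hypothesis, while in the direction AC$\Rightarrow$CBC the cut-start case never arises (your proof of AC$(b)\Rightarrow$CBC$(b)$ uses no form of $(a)$).
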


\begin{proof}
First observe that the condition $(a)$ of the adjustment criterion AC is identical to condition $(a')$ 
of the constructive back-door criterion CBC($ \bX $, $ \emptyset $, $\emptyset$).
Assume conditions $(a)$ and $(b)$ of the adjustment criterion AC hold. 
Due to Lemma~\ref{lem:cbc:variants}, it is sufficient to 
show that condition $(b)$ of the constructive back-door criterion is satisfied.
Let $\pi$ be any proper path from $\bX$ to $\bY$ in $\cbdg$. 
Because $\cbdg$ does not contain causal 
paths from $\bX$ to $\bY$, $\pi$ is not causal
and has to be blocked by $\bZ$
in $\cG$ by the assumption. Since removing edges cannot
open paths, $\pi$ is blocked by $\bZ$ in $\cbdg$ 
as well.

Now we show that $(a)$ and $(b)$ of the
constructive back-door criterion CBC together 
imply $(b)$ of the adjustment criterion AC.
If that were not the case, then there could exist  a proper 
non-causal path $\pi$ from $\bX$ to $\bY$ that is blocked in $\cbdg$ but open
in $\cG$. There can be two reasons why $\pi$ is blocked 
in $\cbdg$: (1) The path starts with an edge $X \to D$ that does not exist
in $\cbdg$. Then we have $D \in \pcauspaths$. For $\pi$
to be non-causal, it would have to contain a collider 
$C \in \textit{An}(\bZ) \cap \textit{De}(D) \subseteq
\textit{An}(\bZ) \cap \causpaths$. But because of 
CBC~$(a)$, $\textit{An}(\bZ) \cap \causpaths$ is empty.
(2) A collider $C$ on $\pi$ is an ancestor of $\bZ$ 
in $\cG$, but not in $\cbdg$. Then there must be a directed path
from $C$ to $\bZ$ via an edge $X \to D$ with 
$D \in \textit{An}(\bZ) \cap \pcauspaths $, contradicting CBC~$(a)$.
\end{proof}

\subsection{CBC vs Pearl's back-door criterion}\label{sec:singletons}

\begin{figure}
\begin{center}
\begin{tabular}{ccc}
\begin{tikzpicture}[yscale=0.85,xscale=0.7]
\node (g) at (-1.5,-1.5) {$\cG$:};
\node (x1) at (0,0) {$X_1$};
\node[adjusted] (z1) at (1.2,-1) {$Z_1$};
\node[adjusted] (z2) at (.8,-2) {$Z_2$}; 
\node (x2) at (0,-3) {$X_2$}; 
\node (y1) at (2,0) {$Y_1$};
\node (y2) at (2,-3) {$Y_2$};

\draw [->] (x1) -- (z1);
\draw [->] (x1) -- (y1);
\draw [->] (z1) -- (z2);
\draw [->] (z2) -- (x2);
\draw [->] (y2) -- (z2);
\end{tikzpicture}\hspace*{15mm}
&
\begin{tikzpicture}[yscale=0.85,xscale=0.7]
\node (g) at (-1.5,-1.5) {$\gbarx$:};
\node (x1) at (0,0) {$X_1$};
\node[adjusted] (z1) at (1.2,-1) {$Z_1$};
\node[adjusted] (z2) at (.8,-2) {$Z_2$}; 
\node (x2) at (0,-3) {$X_2$}; 
\node (y1) at (2,0) {$Y_1$};
\node (y2) at (2,-3) {$Y_2$};

\draw [->] (z1) -- (z2);
\draw [->] (z2) -- (x2);
\draw [->] (y2) -- (z2);
\end{tikzpicture}\hspace*{15mm}
&
\begin{tikzpicture}[yscale=0.85,xscale=0.7]
\node (g) at (-1.5,-1.5) {$\cbdg$:};
\node (x1) at (0,0) {$X_1$};
\node[adjusted] (z1) at (1.2,-1) {$Z_1$};
\node[adjusted] (z2) at (.8,-2) {$Z_2$}; 
\node (x2) at (0,-3) {$X_2$}; 
\node (y1) at (2,0) {$Y_1$};
\node (y2) at (2,-3) {$Y_2$};

\draw [->] (x1) -- (z1);
\draw [->] (z1) -- (z2);
\draw [->] (z2) -- (x2);
\draw [->] (y2) -- (z2);
\end{tikzpicture}
\end{tabular}
\end{center}
\caption{A DAG where for $\bX=\{X_1,X_2\}$ and $\bY=\{Y_1,Y_2\}$,
 $\bZ=\{Z_1,Z_2\}$ is a valid and minimum adjustment,
but no set fulfills the back-door criterion \citep{Pearl2009} (Definition~\ref{def:back-door:pearl}),
and the parents of 
$\bX$ are not a valid adjustment set either.
} 
\label{fig:ShpitserVsPearl}
\end{figure}


In this section we relate our constructive back-door criterion to the well-known 
back-door criterion by Pearl \cite{Pearl2009}:
\begin{definition}[Pearl's back-door criterion (BC) \cite{Pearl2009}]\label{def:back-door:pearl}
A set of variables $\bZ$ satisfies the \emph{back-door criterion} relative to an ordered pair of variables $(X, Y)$ in a DAG $\cG$  if:
\begin{enumerate}
  \item[$(a)$] $\bZ \subseteq \bV \setminus \textit{De}(X)$ and 
  \item[$(b)$] $\bZ$ blocks every path between $X$ and $Y$ that contains an arrow into $X$.
\end{enumerate}
Similarly, if $\bX$ and $\bY$ are two disjoint subsets of nodes in $\cG$, then $\bZ$ is said to satisfy the back-door criterion relative to $(\bX,\bY)$ if it satisfies the back-door criterion relative to any pair $(X,Y)$ such that $X\in\bX$ and $Y\in\bY$.
\end{definition}

In Definition~\ref{def:back-door:pearl} condition $(b)$ is often replaced by the equivalent condition that $\bZ$ $d$-separates $\bX$ and $\bY$ in the back-door graph $\cG_{\underline{X}}$.

In \cite{TextorLiskiewicz2011} it was shown that for minimal adjustment sets in $\bX$-loop-free DAGs the adjustment criterion and the back-door criterion of Pearl are equivalent. A DAG is $\bX$-loop-free for an exposure set $\bX$, if no directed path between two different nodes of $\bX$ contains a node not in $\bX$. If $X$ is a singleton, there are no two different nodes of $\bX$ and every DAG is $\bX$-loop-free, so the criteria are always equivalent for minimal adjustments. In this case it is still possible that an adjustment set $\bZ$ satisfies the CBC and not the back-door criterion, but there will always be a minimal subset $\bZ' \subseteq \bZ$ that satisfies the back-door criterion. Since an adjustment set satisfying the back-door criterion also satisfies the generalized back-door criterion of \cite{Maathuis2013}, and all sets of the  generalized back-door criterion satisfy our CBC, all three criteria are equivalent to test the \emph{existence} of an (minimal) adjustment set for a singleton $X$ in DAGs.

The situation changes if the effect of multiple exposures  
is estimated. Theorem~3.2.5 in \citet{Pearl2009} claims that 
the expression for $P(\by \mid \textit{do}(\bx))$  is obtained 
by adjusting for $\textit{Pa}(\bX)$ if 
$\bY$ is disjoint from $ %
\textit{Pa}(\bX)$ in graphs without latent nodes, but,
as the DAG in Figure~\ref{fig:ShpitserVsPearl} shows, this is not true: 
the set $\bZ=\textit{Pa}(X_1,X_2) = \{Z_2\}$ is not an adjustment set 
according to $\{X_1,X_2\}$ and $\{Y_1,Y_2\}$. In this case one can identify
the causal effect
by adjusting for 
$\bZ=\{Z_1,Z_2\}$ only. Indeed, for more than
one exposure, no adjustment set may exist at all even
without latent covariates and even though
$\bY \cap (\bX\cup \textit{Pa}(\bX)) = \emptyset$, 
e.g., in the DAG
\begin{tikzpicture}[text height=.4em,xscale=1,,baseline=-.18em]
\node (x1) at (.2,0) {$X_1$};
\node (x2) at (1,0) {$X_2$};
\node (z) at (2,0) {$Z$};
\node (y) at (3,0) {$Y$};
\draw [->] (z) -- (x2);
\draw [->] (y) -- (z);
\draw [->] (x1) edge [bend left=25] (z);
\end{tikzpicture}
and for $\bX=\{X_1,X_2\}$ and $\bY=\{Y\}$.

In the case of multiple exposures $\bX$ it is also harder to use the back-door criterion to actually find an adjustment set. 
Although the back-door criterion reduces adjustment to $d$-separation in the back-door graph $\cG_{\underline{X}}$, 
this is not the graph  $\cG_{\underline{\bX}}$, so for each exposure $X \in \bX$ the criterion would find a separate 
adjustment set, which do not lead directly to a combined adjustment set for all exposures.
For an example see Figure~\ref{fig:ShpitserVsPearl}. 

Table~\ref{tab:bdc:vs:cbc} summarizes the relationships between CBC and the Pearl's back-door criterion.

\begin{table}
\begin{center}
\begin{tabular}{lcll}
\multicolumn{3}{l}{Statement for arbitrary DAGs and all sets $\bZ$:}& proof  \\[1mm]  
	$\bZ$ satisfies CBC                              & $\not\Rightarrow$ & $\bZ$ satisfies back-door          & $ Z \gets X \to Y $                  \\
	$\bZ$ satisfies CBC                              & $\not\Rightarrow$ & $\exists\bZ'$ satisfying back-door & see Figure~\ref{fig:ShpitserVsPearl} \\
	$\bZ$ satisfies CBC   and $ \bZ $ is minimal     & $\not\Rightarrow$ & $\bZ$ satisfies back-door          & see Figure~\ref{fig:ShpitserVsPearl} \\
	$\bZ$ satisfies CBC   and $ \bZ $ is minimal     & $\not\Rightarrow$ & $\exists\bZ'$ satisfying back-door & see Figure~\ref{fig:ShpitserVsPearl} \\
	\\[-3mm]
\multicolumn{4}{l}{Statement for all $\bX$-loop-free DAGs (e.g., for singleton $X$)  and all sets $\bZ$:}\\[1mm]
	$\bZ$ satisfies CBC                              &$\not\Rightarrow$ & $\bZ$ satisfies back-door          & $ Z \gets X \to Y $              \\
	$\bZ$ satisfies CBC                              &  $\Rightarrow$   & $\exists\bZ'$ satisfies back-door  & via minimal $ \bZ'\subseteq\bZ $ \\
	$\bZ$ satisfies CBC   and $ \bZ $ is minimal     &  $\Rightarrow$   & $\bZ$ satisfies back-door          & see \cite{TextorLiskiewicz2011}  \\
	$\bZ$ satisfies CBC   and $ \bZ $ is minimal     &  $\Rightarrow$   & $\exists\bZ'$ satisfying back-door & $\bZ'=\bZ$     
\end{tabular}
\end{center}\vspace*{-3mm}
\caption{A summary of the relationship between the existence of a Pearl back-door-adjustment set
and the existence of an CBC-adjustment set  in unconstrained DAGs and $\bX$-loop-free DAGs. 
Symbol $\not\Rightarrow$ means that the implication does not hold, in general.
On the other hand, due to the completeness property of the CBC, we have that 
if one replaces in the left hand sides  "CBC" by "back-door" and in the right
hand sides  "back-door" by "CBC", then  the corresponding implications are always true.}
\label{tab:bdc:vs:cbc}
\end{table}

\section{Algorithms for testing and computing adjustment sets in DAGs}
\label{sec:algorithms:adjustment}

Having proved the constructive back-door criterion, we are now licensed to apply our separation algorithms from Section~\ref{sec:algo} to solve adjustment set problems. This works because the adjustment relative to $ \bX $ and $ \bY $ in $ \cG $ corresponds to an $ m $-separator between $ \bX $ and $ \bY $ in $ \pbdg $ subject to the constraint given by CBC~$(a)$. Table~\ref{fig:problemsadj} gives an overview of the relevant tasks.

A small difference exists between testing and constructing adjustment sets when handling CBC~$(a)$: Testing requires us to check if the given set $ \bZ $ contains nodes of $ \causpaths $, whereas constructing requires that the returned set $\bZ$ must not contain any of the nodes in $\causpaths$. The latter requirement can be straightforwardly implemented by imposing the constraint $ \bZ \subseteq \bR' = \bR\setminus\causpaths $, which can be given as parameter to our separation algorithms.

\begin{table*}
\centering
\begin{tabular}{llll}
&&&Runtime \\
\multicolumn{3}{l}{\textbf{Verification:} \text{For given $\bX, \bY,\bZ$ and constraint $\bI$
 decide if $\ldots$ }}
&\\
\hspace*{2mm} 
   & {\sc TestAdj} & $\bZ$ is an adjustment for $(\bX,\bY)$ & $\cO(n+m)$ 
     \\
   & {\sc TestMinAdj} & $\bZ\supseteq \bI$ is an adjustment for $(\bX,\bY)$ and $\bZ$ is $\ldots$ &\\
   & & \hspace*{3mm} $ \bI $-minimal & $\cO(n^2)$ 
      \\
   & &  \hspace*{3mm} strongly-minimal 
      & $\cO(n^2)$ 
      \\[2mm]
\multicolumn{3}{l}{\textbf{Construction:} \text{For given $\bX, \bY$ and constraints $\bI, \bR$, output an $\ldots$}} &  \\
\hspace*{2mm} 
  & {\sc FindAdj} &   adjustment $\bZ$  for $(\bX,\bY)$ with $\bI\subseteq\bZ\subseteq\bR$  & $\cO(n+m)$ 
     \\
  & {\sc FindMinAdj} &  adjustment $\bZ$ for $(\bX,\bY)$ with $\bI\subseteq\bZ\subseteq\bR$ which is $\ldots$ &\\
     &  & \hspace*{3mm}  $ \bI $-minimal & $\cO(n^2)$ 
     \\
   &  & \hspace*{3mm}   strongly-minimal 
     & NP-hard 
     \\
  & {\sc FindMinCostAdj} & adjustment $\bZ$ for $(\bX,\bY)$ with $\bI\subseteq\bZ\subseteq\bR$ which is $\ldots$ &\\
  &  & \hspace*{3mm}    $ \bI $-minimum & $\cO(n^3)$ 
     \\
  &  & \hspace*{3mm} strongly-minimum & $\cO(n^3)$ 
    \\[2mm]
\multicolumn{3}{l}{\textbf{Enumeration:} \text{For given $\bX, \bY, \bI, \bR$ enumerate all $\ldots$ } }&Delay\\
 & {\sc ListAdj} 
     & adjustments $\bZ$   for $(\bX,\bY)$ with $\bI\subseteq \bZ \subseteq  \bR$   &$\cO(n(n+m))$ 
     \\
 & {\sc ListMinAdj} & $ \bI $-minimal adjustments $\bZ$  with $\bI \subseteq \bZ \subseteq  \bR$        
     &$\cO(n^3)$ 
    \\
\end{tabular}

\caption{Definitions of algorithmic tasks related to adjustment in DAGs. 
The meaning of parameters $\bX, \bY, \bZ, \bI,$ and $\bR$ is the same as in 
the definitions of tasks related to $m$-separation in  Table~\ref{fig:problems}.
The right column shows the associated time complexities given in this section.
Due to our linear-time reduction from causal effect identification by adjustment to $m$-separation 
in a subgraph of an input DAG, the time complexities to solve the problems above are the
same as in Table~\ref{fig:problems}.
}
\label{fig:problemsadj}
 
\end{table*}

\newcommand{\forbiddenNodes}{De_{\overline{\bX}}(\{W \in \bV \mid  \exists \text{proper path } \bX \stackrel{+}{\to} W \stackrel{*}{\to} \bY \})}
\newcommand{\forbiddenEdges}{\{X \to D \mid  X \in \bX, D \in \cD \}}

Hence {\sc TestAdj} can be solved by testing if $\bZ\cap \causpaths = \emptyset$
and if $\bZ$ is a $d$-separator in the proper back-door graph $\pbdg$ using algorithm {\sc TestSep}.
Since $\pbdg$ can be constructed from $\cG$ in linear time, the total 
time complexity of this algorithm is $\cO(n+m)$.

{\sc TestMinAdj} can be solved by testing again if $\bZ\cap \causpaths = \emptyset$ and calling {\sc TestMinSep} to verify that $ \bZ $ is minimal within the back-door graph $ \pbdg $. This leads to a runtime of $ \cO(n^2) $ which is optimal for dense graphs. Alternatively {\sc TestMinSepSparse} with its runtime of $\cO(|\textit{Ant}(\bX \cup \bY)| \cdot  |\text{Edges of } (\pbdg)^a | ) = \cO(n(n+m)) $ can be used in sparse graphs. It is worth noting that since the back-door graph is formed by removing edges, it is even sparser than the input graph. 
This approach only works because the minimal adjustment corresponds to a \emph{minimal} separator in the proper back-door graph: every subset of an adjustment must still satisfy condition CBC~$(a)$. It also implies the following corollary which generalizes the result of \citet{TianPP1998} from $d$-separators to adjustment sets:

\begin{corollary}
\highlightrevision{r2c49}{
An adjustment set $\bZ$ is minimal if and only if no single node $Z$ can be removed from $\bZ$ such that the resulting set $\bZ'=\bZ\setminus Z$ is no longer 
	an adjustment set.}
\end{corollary}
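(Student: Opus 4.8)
The plan is to reduce minimality of adjustment sets to minimality of $d$-separators in the proper back-door graph $\pbdg$, and then to invoke the separator monotonicity established in Corollary~\ref{cor:monotonicity:sep}. Concretely, I read the claim as: $\bZ$ is a minimal adjustment set relative to $(\bX,\bY)$ if and only if the removal of any single node destroys the adjustment property, i.e.\ for every $Z \in \bZ$ the set $\bZ \setminus \{Z\}$ fails to be an adjustment set. The bridge between the two worlds is Theorem~\ref{th:ac:equivalence}: a set is an adjustment set exactly when it satisfies CBC, namely $(a)$ $\bZ \subseteq \bV \setminus \causpaths$ and $(b)$ $\bZ$ $d$-separates $\bX$ and $\bY$ in $\pbdg$.

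The forward direction is immediate: if $\bZ$ is a minimal adjustment set, then no proper subset of $\bZ$ is an adjustment, so in particular each $\bZ \setminus \{Z\}$ with $Z \in \bZ$ is not an adjustment set.

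For the converse, the first thing I would establish is that condition CBC~$(a)$ is downward closed: if $\bZ \subseteq \bV \setminus \causpaths$, then every subset $\bZ' \subseteq \bZ$ trivially satisfies $\bZ' \subseteq \bV \setminus \causpaths$ as well. The consequence is decisive. Once we fix an adjustment set $\bZ$ (which by assumption satisfies $(a)$), the criterion CBC collapses on the lattice of subsets of $\bZ$ to its condition $(b)$ alone: for any $\bZ' \subseteq \bZ$, the set $\bZ'$ is an adjustment set \emph{if and only if} it $d$-separates $\bX$ and $\bY$ in $\pbdg$. This is exactly the observation flagged in the text that every subset of an adjustment still satisfies CBC~$(a)$, and it is what lets pure separator reasoning take over.

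With this equivalence in hand, the hypothesis ``for every $Z \in \bZ$, the set $\bZ \setminus \{Z\}$ is not an adjustment set'' translates verbatim into ``for every $Z \in \bZ$, the set $\bZ \setminus \{Z\}$ does not $d$-separate $\bX$ and $\bY$ in $\pbdg$.'' Since $\pbdg$ is a DAG, $d$-separation there coincides with $m$-separation, so Corollary~\ref{cor:monotonicity:sep} applies and yields that $\bZ$ is a minimal $d$-separator in $\pbdg$, i.e.\ no proper subset of $\bZ$ $d$-separates $\bX$ and $\bY$ there. Translating back through the collapsed CBC equivalence, no proper subset of $\bZ$ is an adjustment set, so $\bZ$ is minimal. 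I expect the only genuinely delicate point to be the verification that CBC~$(a)$ survives the passage to subsets; everything else is bookkeeping layered on top of Theorem~\ref{th:ac:equivalence} and Corollary~\ref{cor:monotonicity:sep}.
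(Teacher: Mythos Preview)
Your proposal is correct and follows exactly the approach the paper sketches in the text preceding the corollary: use Theorem~\ref{th:ac:equivalence} to identify adjustment sets with CBC-satisfying sets, observe that CBC~$(a)$ is downward closed so that on subsets of $\bZ$ the adjustment property reduces to $d$-separation in $\pbdg$, and then invoke Corollary~\ref{cor:monotonicity:sep}. The paper does not spell out a separate proof but simply states that the corollary follows from these observations, which is precisely what you have written out.
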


The problem {\sc FindAdj} can be solved by a closed form solution. 
For a DAG $\cG=(\bV,\bE)$ and constraints $ \bI,\bR $ we define the set
\begin{newpartinrevision}[r2c50]
$$
   \adjustment(\bX,\bY) 
= \textit{An}(\bX \cup \bY \cup \bI) \cap \bR \setminus (\bX \cup \bY \cup \causpaths). 
$$
\end{newpartinrevision}
\begin{theorem}\label{th:adj:set:cosctr}
Let $\cG=(\bV,\bE)$ be a DAG, let $\bX,\bY \subseteq V$ be disjoint node sets and $ \bI, \bR $ constraining node sets with $\bI \subseteq \bR \setminus (\bX \cup \bY \cup \causpaths)$. 
Then the following statements are equivalent:
\begin{enumerate} 
\item There exists an adjustment $ \bZ $ in $\cG$ w.r.t. $\bX$ and $\bY$ with $ \bI\subseteq\bZ\subseteq\bR $.
\item $\adjustment(\bX,\bY)$ is an adjustment w.r.t. $\bX$ and $\bY$.
\item $\adjustment(\bX,\bY)$ $d$-separates $\bX$ and $\bY$ in the proper back-door graph $\cbdg$.
\end{enumerate}
\end{theorem}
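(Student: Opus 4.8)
The plan is to reduce everything to the constructive back-door criterion (CBC) and then recycle the canonical-separator construction of Lemma~\ref{lemma:auxiliary:m:sep}, now applied \emph{inside} the proper back-door graph $\cbdg$ rather than in $\cG$. Write $\bR' = \bR \setminus (\bX\cup\bY\cup\causpaths)$, so that $\adjustment(\bX,\bY) = \textit{An}(\bX\cup\bY\cup\bI)\cap\bR'$. The hypothesis $\bI \subseteq \bR \setminus (\bX\cup\bY\cup\causpaths)$ gives both $\bI\subseteq\bR'$ and $\bR'\cap(\bX\cup\bY)=\emptyset$, which are exactly the constraints needed to invoke Lemma~\ref{lemma:auxiliary:m:sep} with $\bR$ replaced by $\bR'$. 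Throughout I use that $\cbdg$ is a DAG (it is $\cG$ with some edges removed), so $m$-separation and $d$-separation coincide and anteriors equal ancestors, and that by Theorem~\ref{th:ac:equivalence} together with the soundness and completeness of the AC \citep{ShpitserVR2010}, a set disjoint from $\bX\cup\bY$ is an adjustment relative to $(\bX,\bY)$ if and only if it satisfies the CBC, i.e.\ avoids $\causpaths$ and $d$-separates $\bX$ and $\bY$ in $\cbdg$. I would prove the cycle $(2)\Rightarrow(1)\Rightarrow(3)\Rightarrow(2)$.

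The implication $(2)\Rightarrow(1)$ is immediate: take $\bZ = \adjustment(\bX,\bY)$. By construction $\adjustment(\bX,\bY)\subseteq\bR$, and $\bI\subseteq\adjustment(\bX,\bY)$ because every node is its own ancestor (so $\bI\subseteq\textit{An}(\bX\cup\bY\cup\bI)$) and $\bI\subseteq\bR'$ by hypothesis; hence $\bI\subseteq\bZ\subseteq\bR$. For $(3)\Rightarrow(2)$, the set $\adjustment(\bX,\bY)$ is disjoint from $\bX\cup\bY\cup\causpaths$ by construction, which is precisely condition~$(a)$ of the CBC, while statement~$(3)$ is exactly condition~$(b)$; so $\adjustment(\bX,\bY)$ satisfies the CBC and is therefore an adjustment.

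The real content is $(1)\Rightarrow(3)$. Given an adjustment $\bZ$ with $\bI\subseteq\bZ\subseteq\bR$, the CBC tells us $\bZ\cap\causpaths=\emptyset$ and $\bZ$ $d$-separates $\bX$ and $\bY$ in $\cbdg$; since $\bZ$ is also disjoint from $\bX\cup\bY$, we get $\bI\subseteq\bZ\subseteq\bR'$. Thus $\bZ$ is a witness $d$-separator in $\cbdg$ meeting the $\bR'$ constraints, and Lemma~\ref{lemma:auxiliary:m:sep} applied in $\cbdg$ yields that $\textit{An}_{\cbdg}(\bX\cup\bY\cup\bI)\cap\bR'$ $d$-separates $\bX$ and $\bY$ in $\cbdg$. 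It then remains to identify this set with $\adjustment(\bX,\bY)$, i.e.\ to replace ancestry in $\cbdg$ by ancestry in $\cG$ on $\bR'$.

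This identification is the step I expect to require the most care, although it turns out to be short. Since $\cbdg$ is $\cG$ with edges removed, ancestry in $\cbdg$ implies ancestry in $\cG$. For the converse on $\bR'$, fix $V\in\bR'$ with $V\in\textit{An}_{\cG}(\bX\cup\bY\cup\bI)$ and take a \emph{shortest} directed path from $V$ to $\bX\cup\bY\cup\bI$ in $\cG$; its internal nodes lie outside $\bX\cup\bY\cup\bI$. The only edges deleted in forming $\cbdg$ have the form $X\to D$ with $X\in\bX$, and such an edge could occur on the path only if $X$ were its start or an internal node; but internal nodes avoid $\bX$, and the start $V\in\bR'$ satisfies $V\notin\bX$. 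Hence the shortest path survives in $\cbdg$, giving $V\in\textit{An}_{\cbdg}(\bX\cup\bY\cup\bI)$. Therefore $\textit{An}_{\cbdg}(\bX\cup\bY\cup\bI)\cap\bR' = \textit{An}_{\cG}(\bX\cup\bY\cup\bI)\cap\bR' = \adjustment(\bX,\bY)$, which completes $(1)\Rightarrow(3)$ and closes the cycle.
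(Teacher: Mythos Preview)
Your proof is correct and follows essentially the same route as the paper's: establish $(3)\Rightarrow(2)$ from the CBC, $(2)\Rightarrow(1)$ trivially, and $(1)\Rightarrow(3)$ by applying the CBC characterization to the assumed adjustment $\bZ$ and then invoking Lemma~\ref{lemma:auxiliary:m:sep} inside $\cbdg$ with the restricted range $\bR'=\bR\setminus(\bX\cup\bY\cup\causpaths)$. You are in fact more careful than the paper, which simply cites Lemma~\ref{lemma:auxiliary:m:sep} in $\cbdg$ without verifying that $\textit{An}_{\cbdg}(\bX\cup\bY\cup\bI)\cap\bR'=\textit{An}_{\cG}(\bX\cup\bY\cup\bI)\cap\bR'$; your shortest-path argument (noting that deleted edges leave $\bX$, while no non-terminal node of a shortest path to $\bX\cup\bY\cup\bI$ starting in $\bR'$ lies in $\bX$) fills this small gap correctly.
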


\begin{proof}
The implication $\it (3)\Rightarrow(2)$ follows directly from 
the criterion Def.~\ref{def:ac:general:bdc} and the definition
of $\adjustment(\bX,\bY)$. Since the implication $\it (2)\Rightarrow(1)$
is obvious, it remains to prove  $\it (1)\Rightarrow(3)$. 
 
Assume there exists an adjustment set $\bZ_0$ w.r.t. $\bX$ and $\bY$.
From Theorem~\ref{th:ac:equivalence} we know that  
$\bZ_0 \cap \causpaths= \emptyset$ and  that 
$\bZ_0$ $d$-separates $\bX$ and $\bY$ in $\cbdg$.
Our task is to show that $\adjustment(\bX,\bY)$ $d$-separates 
$\bX$ and  $\bY$ in $\cbdg$. This follows from  Lemma~\ref{lemma:auxiliary:m:sep}
used for the proper back-door graph $\pbdg$
if we take  $\bI'=\bI$, $\bR'=\bR \setminus (\bX \cup \bY \cup \causpaths)$.
\end{proof}
From Equation~\eqref{eq:pcp:contsr:def} and the definition $\causpaths  = \textit{De}(\pcauspaths)$  
we then obtain immediately:
\begin{corollary}\label{enum:fastas}
Given two disjoint sets $\bX, \bY \subseteq \bV$,  $\adjustment(\bX,\bY)$ 
can be found in $\cO(n+m)$ time.
\end{corollary}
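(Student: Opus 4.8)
The plan is to observe that $\adjustment(\bX,\bY)$ is assembled from a constant number of ancestor/descendant sets combined by elementary set operations, and to argue that each ingredient is computable in linear time. First I would represent every node subset as a characteristic vector (a Boolean array indexed by $\bV$), so that union, intersection, and set difference of any two such subsets each cost $\cO(n)$, and membership tests are $\cO(1)$.

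Next, I would unfold the definition. Recall from Equation~\eqref{eq:pcp:contsr:def} that $\pcauspaths = (\textit{De}_{\overline{\bX}}(\bX) \setminus \bX) \cap \textit{An}_{\underline{\bX}}(\bY)$ and, by the stated definition, $\causpaths = \textit{De}(\pcauspaths)$. Each of the four reachability sets involved, namely $\textit{De}_{\overline{\bX}}(\bX)$, $\textit{An}_{\underline{\bX}}(\bY)$, $\textit{De}(\pcauspaths)$, and $\textit{An}(\bX\cup\bY\cup\bI)$, is precisely the set of nodes reachable from a given source set along directed edges in a fixed graph, so each can be obtained by a single breadth-first (or depth-first) search in $\cO(n+m)$. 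The two modified graphs $\cG_{\overline{\bX}}$ and $\cG_{\underline{\bX}}$ need not be materialized explicitly: during the traversal one simply skips edges entering $\bX$ (respectively leaving $\bX$), which costs no more than the traversal itself. If one prefers, each modified graph can instead be built in $\cO(n+m)$ by copying $\bE$ and dropping the relevant edge class.

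I would then compute $\adjustment(\bX,\bY)$ in the following order: form $\pcauspaths$ from its two reachability sets using one intersection and one difference; compute $\causpaths = \textit{De}(\pcauspaths)$ by a forward search from the source set $\pcauspaths$ in $\cG$; compute $\textit{An}(\bX\cup\bY\cup\bI)$ by a backward search; and finally build $\textit{An}(\bX\cup\bY\cup\bI)\cap\bR\setminus(\bX\cup\bY\cup\causpaths)$ with $\cO(n)$ set operations on characteristic vectors. Summing a constant number of $\cO(n+m)$ searches with a constant number of $\cO(n)$ combinations yields the claimed $\cO(n+m)$ bound.

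The argument is essentially bookkeeping, so there is no deep obstacle. The only point requiring care is correctly attributing each reachability computation to the graph in which it is taken, in particular that $\textit{De}(\pcauspaths)$ is computed in the unmodified $\cG$ whereas the two sets defining $\pcauspaths$ live in $\cG_{\overline{\bX}}$ and $\cG_{\underline{\bX}}$, and checking that skipping the forbidden edge classes during a traversal preserves its linear running time. Both are immediate once the searches are set up, so the main task is merely to confirm that only a bounded number of linear-time passes are required.
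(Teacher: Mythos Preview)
Your proposal is correct and follows exactly the approach the paper intends: the paper simply says the corollary is obtained ``immediately'' from Equation~\eqref{eq:pcp:contsr:def} and the definition $\causpaths=\textit{De}(\pcauspaths)$, and your write-up spells out precisely why that is so, namely that each of the finitely many ancestor/descendant sets is a single linear-time reachability search and the remaining combinations are $\cO(n)$ set operations. There is nothing to add.
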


The remaining problems, {\sc FindMinAdj}, {\sc FindMinCostAdj}, {\sc ListAdj} and {\sc ListMinAdj} 
can be solved using the corresponding algorithms
for finding, resp. listing  $m$-separations applied to the proper 
back-door graph.  Since the proper back-door graph can be
constructed in linear time the time complexities to solve the 
problems above are the same in Table~\ref{fig:problems} and Table~\ref{fig:problemsadj}. The NP-hardness of finding 
strongly-minimal adjustment sets follows from Proposition~\ref{prop:np-complete} and the fact that the graph 
constructed in the proof of the proposition contains no causal paths between $ X $ and $ Y $, 
so there are no forbidden nodes and that graph is the same as its back-door graph.

\section{Extending the CBC}

\label{sec:cbcext}

While our complete adjustment criterion is guaranteed to find all instances in which a causal effect can be identified via covariate adjustment, it is well known that not all identifiable effects are also identifiable via adjustment. The do-calculus \cite{Pearl2009} is a complete method that characterizes all identifiable causal effects, but which comes at a price of substantially increased formula and runtime complexity. In this section, we however show that many cases in which covariate adjustment is not applicable do not require the power of the do-calculus either. 
 
Specifically, we provide three lemmas that permit identification of total causal effects in the following three cases (which are not mutually exclusive) as shown in Figure~\ref{fig:cases:beyond:cbc}: (1) $\bX$ does not have a causal effect on $\bY$; (2) \highlightrevision{r3c8}{$\bX=X$ is singleton and all its parents are observed;} (3) $\bX$ and $\bY$ partition $\bV$. While in each case the adjustment criterion may or may not be applicable, our lemmas show that identifiability is always guaranteed, and the total effect can be computed by reasonably simple formulas. Moreover, each lemma provides an easy algorithm for testing whether the corresponding case applies.

\tikzset{exp/.style={color=green!50!black}}
\tikzset{res/.style={color=blue}}
\tikzset{lat/.style={color=gray}}

\begin{figure}\arxivonly{\small}\begin{center}
	(1) \begin{tikzpicture}[baseline]
\graph  { 
 v1[res,as=$Y_1$] -!- v2[as=$V_2$],
 v3[lat,as=$V_1$] -!- v4[exp,as=$X_1$],
 v1 -> v4, v1 -> v2 -> v4, v3 -> v1, v3 -> v4
 };
\end{tikzpicture} \phantom{long cat}
(2) \begin{tikzpicture}[baseline]
\graph  { 
 v1[res,as=$Y_1$] -!- v2[exp,as=$X_1$],
 v3[lat, as=$V_1$] -!- v4[res,as=$Y_2$],
 v1 -> v4, 
 v1 -> v2 -> v4, v3 -> v1, v3 -> v4
 };
\end{tikzpicture} \phantom{long cat}
(3) \begin{tikzpicture}[baseline]
\graph  { 
 v1[res,as=$Y_1$] -!- v2[exp,as=$X_1$],
 v3[exp,as=$X_2$] -!- v4[res,as=$Y_2$],
 v1 -> v4, 
 v1 -> v2 -> v4, v3 -> v1, v3 -> v4
 };
\end{tikzpicture} 
\end{center}
\caption{The three cases 
analyzed in this section. Exposure and outcome nodes are marked as $X$ and $Y$; latent nodes are shown in gray. In case (1) the causal effect is given by $P(y_1 \mid \textit{do}(x_1)) = P(y_1)$, in case 
(2) by $P(y_1, y_2 \mid \textit{do}(x_1))  = P(y_1) P(y_{2} \mid x_1, y_1)$, 
and in case (3) by
$P(y_1, y_2 \mid \textit{do}(x_1, x_2)) 
= P(y_1 \mid x_2 ) P(y_2 \mid y_1, x_1, x_2 ) 
$.
}\label{fig:cases:beyond:cbc}
\end{figure}

\subsection{Identification by plain formulas}\label{sec:observation}

One case in which identification is trivial is if there is no causal effect of $X$ on $Y$ at all.
When all non-causal paths between $X$ and $Y$ can be blocked, then this case is covered by the CBC; however, 
if there is a non-causal path consisting solely of unobserved nodes, then the (void) effect is not obtainable 
through the adjustment formula. 
In such cases,
however, we simply have 
$P(\by\mid \textit{do}(\bx)) = P(\by)$, which we will call the \emph{plain formula}. 
The following proposition provides a characterization of all cases in which this plain formula
works in terms of $d$-separation.

\begin{proposition}\label{prep:observation:is:causal}
Let $\cG=(\bV,\bE)$ be a DAG and let $\bX,\bY\subseteq \bV$  
be disjoint subsets of variables and 
let $\bR \subseteq \bV$ be an arbitrary set of observed variables, 
with $ \bX\cup \bY \subseteq\bR$.
Then %
$\bX$ and $\bY$ are $d$-separated in 
$\cG_{\overline{\bX}}$, expressed symbolically as 
\begin{equation}\label{eq:plain:condition}
  (\bY \independent \bX)_{\cG_{\overline{\bX}}}
\end{equation}
if and only if 
the effect of intervention  of $\bX$ on $\bY$ %
	is given by  the plain formula $P(\by \mid \textit{do}(\bx)) = P(\by)$, 
\highlightrevision{r2c53}{i.e., there is no causal effect from $\bX$ on $\bY$.}
Particularly, if $Y \in \textit{An}(X)$ then $(Y \independent X)_{\cG_{\overline{X}}}$
and thus $P(y \mid \textit{do}(x)) = P(y)$.
\end{proposition}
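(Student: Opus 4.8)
The plan is to prove the biconditional by first reformulating its left-hand side purely graphically, and then handling the two implications separately, the converse via its contrapositive together with one explicit counterexample distribution. The set $\bR$ plays no role here (the plain formula involves neither observed covariates nor latent ones), so I would simply ignore it.

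\textbf{Key graphical observation.} Since every node of $\bX$ is a source in $\cG_{\overline{\bX}}$, any path from $\bX$ to $\bY$ that is open given $\emptyset$ can contain no collider and must leave its $\bX$-endpoint through an outgoing edge. Walking along such a collider-free path, each interior node is entered by an arrowhead and so (there being no undirected edges) must be left by a tail, forcing the whole path to be directed $X \to \cdots \to Y$. Hence $\bX$ and $\bY$ are $d$-connected in $\cG_{\overline{\bX}}$ if and only if $\cG$ contains a directed path from some $X\in\bX$ to some $Y\in\bY$; equivalently, $(\bY \independent \bX)_{\cG_{\overline{\bX}}}$ holds exactly when $\bY \cap \textit{De}(\bX) = \emptyset$ (using disjointness of $\bX,\bY$). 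This also settles the last sentence: if $Y \in \textit{An}(X)$, acyclicity forbids a directed path $X \to \cdots \to Y$, so the separation holds and the plain formula follows from the forward implication.

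\textbf{Forward implication.} I would argue directly from the truncated factorization~\eqref{do:truncated:formula}, without invoking completeness of the do-calculus. Assuming the separation, $\bY \subseteq \bN := \bV \setminus \textit{De}(\bX)$, and $\bN$ is closed under taking parents while $\bX \subseteq \textit{De}(\bX)$, so all factors indexed by $\bN$ are retained by the intervention and reference only $\bN$-variables. Writing $P(\by \mid \textit{do}(\bx))$ as the sum of $\prod_{X_j \notin \bX} P(x_j \mid \textit{pa}_j)$ over all variables except $\bY$, I first sum out the variables of $\textit{De}(\bX)\setminus\bX$ in reverse topological order: their retained factors form a conditional distribution summing to $1$, and they do not appear as parents of any $\bN$-node. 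What remains is $\sum_{\bn\setminus\by}\prod_{X_j\in\bN} P(x_j\mid\textit{pa}_j)$, which equals $P(\by)$ precisely because $\bN$ is ancestrally closed (the marginal of a DAG model over an upward-closed set is the product of its factors). Thus $P(\by \mid \textit{do}(\bx)) = P(\by)$ for every consistent $P$.

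\textbf{Converse (contrapositive) and main obstacle.} If the separation fails, I exhibit a single consistent $P$ violating the plain formula. By the key observation there is a directed path $\bX \to \bY$; choosing a shortest one $X' = V_0 \to \cdots \to V_k = Y'$ guarantees its only $\bX$-node is $V_0$, its only $\bY$-node is $V_k$, and all interior nodes avoid $\bX \cup \bY$. I would make every variable binary, let $X'$ be a fair coin ignoring its parents, let each $V_{i+1}$ deterministically copy $V_i$, and let all remaining nodes ignore their parents. Then observationally $Y' = X'$, so $P(Y'{=}1) = 1/2$, whereas fixing the $X'$-coordinate of $\bx$ to $1$ forces $Y' = 1$ along the untouched copy chain, giving $P(Y'{=}1 \mid \textit{do}(\bx)) = 1$; since a $\bY$-marginal already differs, $P(\by \mid \textit{do}(\bx)) \neq P(\by)$. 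The main obstacle I anticipate is the bookkeeping of this counterexample: I must ensure that intervening on the \emph{other} members of $\bX$ cannot disturb the chain and that no interior node secretly lies in $\bX \cup \bY$. Both are handled by minimality of the chosen path and by making every off-path conditional ignore its parents, so that information reaches $Y'$ only along the path.
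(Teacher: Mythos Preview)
Your proof is correct and close in spirit to the paper's, but with one genuine difference worth noting.

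For the forward implication, the paper simply invokes do-calculus rule~3: from $(\bY \independent \bX)_{\cG_{\overline{\bX}}}$ one immediately gets $P(\by\mid\textit{do}(\bx))=P(\by)$. You instead unpack the truncated factorization directly, splitting $\bV$ into $\bN=\bV\setminus\textit{De}(\bX)$ and its complement and marginalizing. This is more elementary and self-contained (no appeal to the soundness of do-calculus), at the cost of a few more lines of bookkeeping; it also makes the graphical content of the statement (that the separation is equivalent to $\bY\cap\textit{De}(\bX)=\emptyset$) fully explicit, which the paper leaves implicit.

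For the converse, both you and the paper argue by contrapositive via a counterexample along a shortest causal path $X\to\cdots\to Y$. The paper uses a non-degenerate transition ($P'(y\mid x)=1/3$ or $2/3$) on the path subgraph and pads the remaining variables with independent fair coins; you use deterministic copying along the chain with everything else independent. Both constructions work, and your anticipated ``obstacle'' (that other intervened nodes in $\bX$ or hidden $\bX\cup\bY$ membership of interior nodes could break the chain) is handled the same way in both proofs, namely by minimality of the path and by making off-path mechanisms ignore their parents.
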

\begin{proof}
The soundness of the  statement %
follows directly by the application of rule 3 
(intervention/deletion of actions; 
for the precise definition of the do-calculus rules see Theorem~3.4.1 in \citep{Pearl2009}). 
The completeness of the statement can be shown similarly to the completeness of the adjustment criterion~\cite{ShpitserVR2010}. If $\bY$ and $\bX$ are not $ d $-separated in $\cG_{\overline{\bX}}$, 
there exists a shortest causal path $ X \to \ldots \to Y $ for $ X \in \bX,Y \in \bY $. 
In the subgraph $ \cG' = (\bV', \bE')$ 
consisting only of this path, 
the causal effect is given by an empty adjustment set 
$P(y\mid \textit{do}(x)) = P(y\mid x)$. 
If we take a model $P' $ where $P'(y \mid x) \neq P'(y) $ 
for some values $ x, y $, 
like e.g. in  a model on binary variables $ X$, $Y $  with 
\[
  P'(x) = \frac{1}{2}\quad  \text{and} \quad 
  P'(y\mid x) = \begin{cases}
  \frac{1}{3}& x = y,\\
  \frac{2}{3}& x \neq y,
  \end{cases}
\]
the causal effect is not given by $P'(y)$.
This model can be extended 
to a model $ P $ on the full graph $ \cG $ by assuming all other variables 
are independent, i.e., $ P(\bv) = (\frac{1}{2})^{|\bV \setminus \bV'|} P'(\bv') $. 
This model is consistent with $ \cG $ (though not faithful, but faithfulness is not required) 
and we have
\[
  P(\by\mid \textit{do}(\bx)) \ = \  P(\by \setminus Y) P(y\mid \textit{do}(x)) \ = \  P(\by \setminus Y) 
  P'(y\mid \textit{do}(x)) \neq P(\by \setminus Y) P'(y) \ = \ P(\by \setminus Y) P(y) = P(\by).
\]
\end{proof}

\subsection{Identification by generalized parent adjustment}

Another case that permits the identification of causal effects using 
simple formulas occurs if the exposure $ \bX = {X} $ is a singleton 
and all its parents are observed, i.e., $ \textit{Pa}(X) \subseteq \bR $.  
Then adjusting for the parents of $X$ blocks all biasing 
paths and suffices for identification, but one needs to 
be careful as there might be variables $ \bY_{pa} = \textit{Pa}(X) \cap \bY $ 
that are both parents and outcome nodes. Proposition~\ref{prop:singleton} below
shows that in this case the causal effect is given by 
$P(\by \mid \textit{do}(x)) \ =\ \sum_{\bz} P(\bz, \by_{pa}) P(\by_{np} \mid x, \bz, \by_{pa})$, 
where $ \bY_{pa} \cup \bY_{np}$ is a partition of $ \bY $ and $ \bY_{pa} \cup \bZ $ 
is a partition of $ \textit{Pa}(X) $. %

This is a slight generalization of identification via adjustment: 
we still sum over the values of variables $ \bZ $ and multiply the conditional
probability with a factor, but rather than multiplying with the probability of
the same variables $ P(\bz) $ that are used in the sum, we multiply with a factor  
$ P(\bz, \by_{pa}) $ involving additionally the variables in $ \bY_{pa} $.

The situation is even simpler when $ Y $ is also a singleton. 
Then one of the sets $ \bY_{pa}, \bY_{np} $ vanishes, 
so there are only two cases: either $Y \notin \textit{Pa}(X)$ and $\textit{Pa}(X)$ 
is an adjustment set \citep[Theorem~3.2.2]{Pearl2009}, or $Y \in \textit{Pa}(X)$ 
and no adjustment exists, but the causal effect is identified as $P(y \mid \textit{do}(x)) = P(y)$.  
One can see that in the case $Y\in \textit{An}(X)\setminus\textit{Pa}(X)$
the effect of intervention $\textit{do}(X=x)$ can be given both 
by the plain expression $P(y \mid \textit{do}(x)) = P(y)$ and 
by adjustment in parents of $X$. %

\begin{proposition}\label{prop:singleton}
Let $\cG=(\bV,\bE)$ be a DAG and let $X \in \bV$ be a node with observed parents 
$ \textit{Pa}(X) \subseteq \bR $ and $\bY\subseteq \bV \setminus X$.
Furthermore, let $\bY_{pa} = \bY \cap \textit{Pa}(X)$ and let $\bY_{np} = \bY \setminus \textit{Pa}(X)$  
be a partition of $ \bY = \bY_{pa} \cup \bY_{np} $ and let 
$ \bZ = \textit{Pa}(X) \setminus \bY_{pa} $ form with $ \bY_{pa} $ a partition of $ \textit{Pa}(X) = \bY_{pa} \cup \bZ $.
Then 
\[
  P(\by \mid \textit{do}(x)) \ =\ 
  \begin{cases}
    P(\by_{pa}) P(\by_{np} \mid x, \by_{pa}) & \text{if $\bZ=\emptyset$, i.e., if  $\textit{Pa}(X)\subseteq \bY$,}  \\[1mm]
    \sum_{\bz} P(\bz, \by_{pa}) P(\by_{np} \mid x, \bz, \by_{pa}) & \text{if $\bZ\not=\emptyset$,}  
  \end{cases}
\]
where $P(\by_{pa})$ (resp. $ P(\by_{np} \mid x, \by_{pa})$  
and $ P(\by_{np} \mid x, \bz, \by_{pa})$) should be read as $1$ if
$\bY_{pa}=\emptyset$ (resp.  $\bY_{np}=\emptyset$).
\end{proposition}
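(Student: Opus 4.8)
The plan is to bypass both the do-calculus and the adjustment criterion and instead work directly from the truncated factorization formula \eqref{do:truncated:formula}. The first step I would carry out is to record the elementary identity that, for every full configuration $\bv$ consistent with $x$, one has $P(\bv \mid \textit{do}(x)) = P(\bv)/P(x \mid \textit{pa}(x))$, where $\textit{pa}(x)$ denotes the realization of $\textit{Pa}(X)$ inside $\bv$. This is immediate: the truncated product in \eqref{do:truncated:formula} is exactly the full Bayesian-network factorization $P(\bv) = \prod_{V_j \in \bV} P(v_j \mid \textit{pa}_j)$ with the single factor $P(x \mid \textit{pa}(x))$ deleted (it implicitly invokes the standard positivity assumption $P(x \mid \textit{pa}(x)) > 0$).

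Next I would fix the partition $\bV = \{X\} \sqcup \bY_{pa} \sqcup \bY_{np} \sqcup \bZ \sqcup \bW$, where $\bW = \bV \setminus (\{X\} \cup \bY \cup \textit{Pa}(X))$ collects all remaining variables, possibly including latents. The decisive bookkeeping observation is that $\textit{Pa}(X) = \bY_{pa} \sqcup \bZ$, so the parent realization appearing in the denominator is precisely $\textit{pa}(x) = (\by_{pa}, \bz)$; once the target value $\by_{pa}$ is fixed, $P(x \mid \textit{pa}(x)) = P(x \mid \by_{pa}, \bz)$ depends only on $\bz$ and not on the nuisance variables $\bW$.

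I would then obtain $P(\by \mid \textit{do}(x))$ by marginalizing the post-intervention distribution over $\bZ \cup \bW$. Substituting the identity from the first step and summing over $\bw$ first (which the denominator does not involve) collapses the numerator to $P(x, \by_{pa}, \by_{np}, \bz)$, leaving $\sum_{\bz} P(x, \by_{pa}, \by_{np}, \bz)/P(x \mid \by_{pa}, \bz)$. Factoring the numerator by the chain rule as $P(\by_{np} \mid x, \bz, \by_{pa})\, P(x \mid \bz, \by_{pa})\, P(\bz, \by_{pa})$ and invoking the local Markov property --- since $\bZ \cup \bY_{pa} = \textit{Pa}(X)$, the conditional $P(x \mid \bz, \by_{pa})$ is literally the mechanism $P(x \mid \textit{pa}(x))$ --- makes this factor cancel the denominator, yielding $\sum_{\bz} P(\bz, \by_{pa})\, P(\by_{np} \mid x, \bz, \by_{pa})$, which is the claimed formula for $\bZ \neq \emptyset$. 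The case $\bZ = \emptyset$ is then the same computation with the vacuous sum over $\bz$ dropped, giving $P(\by_{pa})\, P(\by_{np} \mid x, \by_{pa})$.

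I do not expect a deep obstacle here; the main care is in the bookkeeping around the overlap $\bY_{pa} = \bY \cap \textit{Pa}(X)$, namely ensuring that $\by_{pa}$ is carried as a fixed argument in the factor $P(\bz, \by_{pa})$ rather than being marginalized away, and in verifying the stated conventions that $P(\by_{pa})$ and the conditional factors reduce to $1$ when $\bY_{pa}$ or $\bY_{np}$ is empty. I would also remark that the resulting expression involves only $X$, $\bY$, and $\textit{Pa}(X)$, all observed by hypothesis ($\textit{Pa}(X) \subseteq \bR$), so the formula certifies identifiability from the pre-intervention distribution, consistent with the remarks preceding the statement about the singleton exposure $\bX = X$.
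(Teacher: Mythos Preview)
Your proof is correct but takes a genuinely different route from the paper. The paper's proof operates entirely within the do-calculus: it writes $P(\by \mid \textit{do}(x)) = \sum_{\bz} P(\bz, \by_{pa} \mid \textit{do}(x))\, P(\by_{np} \mid \textit{do}(x), \bz, \by_{pa})$, then applies Rule~3 (using $(\bY_{pa}, \bZ \independent X)$ in $\cG_{\overline{X}}$, which holds since $\bY_{pa} \cup \bZ = \textit{Pa}(X)$ are all non-descendants of $X$) to drop the $\textit{do}$ from the first factor, and Rule~2 (using $(\bY_{np} \independent X \mid \bZ, \bY_{pa})$ in $\cG_{\underline{X}}$, where every back-door path into $X$ is blocked by its parents) to convert the second factor from $\textit{do}(x)$ to ordinary conditioning on $x$. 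Your argument instead goes back to the truncated factorization \eqref{do:truncated:formula} and carries out the marginalization by hand, cancelling the mechanism factor $P(x \mid \textit{pa}(x))$ explicitly. Your approach is more elementary and self-contained (no appeal to do-calculus rules), at the cost of introducing the nuisance block $\bW$ and doing slightly more bookkeeping; the paper's approach is shorter and stays at the level of graphical conditions. One small remark: when you write ``invoking the local Markov property'' to justify $P(x \mid \bz, \by_{pa}) = P(x \mid \textit{pa}(x))$, no Markov property is needed there---since $\bZ \cup \bY_{pa}$ \emph{is} $\textit{Pa}(X)$, the two expressions are identical by definition.
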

\begin{proof}
This follows from a straightforward calculation using the do-calculus:

\begin{align*}
P(\by \mid \textit{do}(x)) \ =\ & P(\by_{pa}, \by_{np} \mid \textit{do}(x))\\
=\ & \sum_{\bz} P(\bz, \by_{pa}, \by_{np} \mid \textit{do}(x))\\
=\ & \sum_{\bz} P(\bz, \by_{pa}\mid \textit{do}(x)) P(\by_{np} \mid \textit{do}(x), \bz, \by_{pa}) \\
=\ & \sum_{\bz} P(\bz, \by_{pa}) P(\by_{np} \mid \textit{do}(x), \bz, \by_{pa}) &\rlap{\hspace*{-2.5cm}\parbox{5.5cm}{do-calculus rule 3  in \cite[Subsect. 3.4.2]{Pearl2009}\\$(\bY_{pa}, \bZ \independent X )\text{ in }\cG_{\overline{X}}$}} \\  
=\ & \sum_{\bz} P(\bz, \by_{pa}) P(\by_{np} \mid x, \bz, \by_{pa}).     &\rlap{\hspace*{-2.5cm}\parbox{5.5cm}{do-calculus rule 2 in \cite[Subsect. 3.4.2]{Pearl2009}\\$(\bY_{np} \independent X \mid \bZ, \bY_{pa})\text{ in } \cG_{\underline{X}}$}} \\  
\end{align*}
\end{proof}

If some of parents $\textit{Pa}(X)$ are unobserved, the causal effect might not be identifiable at all, 
like e.g., in the DAG $\cG_1$ in Figure~\ref{fig:notident:ident:viaadj}.
To decide if the effect is identifiable in such a case, one can use the CBC
criterion which, like for $\cG_3$ in Figure~\ref{fig:notident:ident:viaadj}, 
can confirm the identifiability. However, while the CBC is complete 
to decide if the effect is expressible via covariate adjustment 
it is not complete to decide if the effect is identifiable or not.
For an example, see the DAG $\cG_2$  in Figure~\ref{fig:notident:ident:viaadj}.
To solve the identification problem in this case, when the CBC does not work, 
one has to use a complete criterion, like this based on the do-calculus.

\subsection{Identification when $\bX$ and $\bY$ partition $\bV$}\label{sec:special:case:v:partition}

Here we consider the case of DAGs in which $\bX$ and $\bY$ partition the set of variables $\bV$, which
implies that there are no unobserved nodes.
Again, in this case the CBC may not be applicable as there may be an arrow from $\bY$ to $\bX$, but still 
the causal effect can be given by a closed-form solution as we present below.

\begin{lemma}
Let $ \cG = (\bV, \bE) $ be a DAG and $ \bX, \bY \subset \bV  $ be 
a partition of $ \bV = \bX \cup \bY $. The following statements hold

\begin{enumerate}[$(a)$] %
\item The causal effect of $ \bX  $ on $ \bY $ is given by
\[ 
  P(\by \mid \textit{do}(\bx)) = \prod\limits_{Y\in\bY} P(Y = y \mid \textit{Pa}(Y) ). 
\]
\item If no edge $ X \to Y $ with  $X \in \bX, Y \in \bY $ exists, the causal effect is also given by the plain formula
\[ 
  P(\by \mid \textit{do}(\bx)) = P(\by). 
\]
\item The causal effect can be identified by adjustment if and only if no edge $ Y \to X $ with  $X \in \bX, Y \in \bY $ exists. 
\item If identification by adjustment is possible, the adjustment set is $ \bZ = \emptyset $ and the causal effect of $ \bX $ on $ \bY $ is given by
\[ 
  P(\by \mid \textit{do}(\bx)) = P(\by \mid \bx ).
\]
\end{enumerate}
\end{lemma}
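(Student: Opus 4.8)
The plan is to dispatch (a) directly from the truncated factorization, obtain (b) from the earlier plain-formula characterization, and then prove (c) and (d) jointly after noting that $\bZ=\emptyset$ is the only conceivable adjustment set.

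\textbf{Part (a).} I would just specialize Equation~\eqref{do:truncated:formula}. Since $\bX$ and $\bY$ partition $\bV$, we have $\bV\setminus\bX=\bY$, so for the unique assignment $\bv=(\bx,\by)$ consistent with $\bx$ the formula reads $P(\bx,\by\mid\textit{do}(\bx))=\prod_{Y\in\bY}P(Y=y\mid\textit{Pa}(Y))$. Because the intervention fixes $\bX=\bx$ with probability one, the $\bX$-coordinate is degenerate and $P(\by\mid\textit{do}(\bx))=P(\bx,\by\mid\textit{do}(\bx))$, which is exactly the claimed product. No real work is needed here.

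\textbf{Part (b).} Under the hypothesis that no edge $X\to Y$ with $X\in\bX,Y\in\bY$ exists, I would invoke Proposition~\ref{prep:observation:is:causal}, so it suffices to show $(\bY\independent\bX)_{\cG_{\overline{\bX}}}$. In $\cG_{\overline{\bX}}$ every edge with an arrowhead into $\bX$ is deleted; the edges of $\cG$ incident to a node of $\bX$ are exactly the within-$\bX$ edges and the $\bY\to\bX$ edges (all deleted) together with $\bX\to\bY$ edges (kept, but none exist by assumption). Hence every node of $\bX$ is isolated in $\cG_{\overline{\bX}}$, so $\bX$ and $\bY$ are trivially $d$-separated and $P(\by\mid\textit{do}(\bx))=P(\by)$ follows.

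\textbf{Parts (c) and (d).} The key preliminary observation is that any adjustment set must be disjoint from $\bX\cup\bY=\bV$, so $\bZ=\emptyset$ is the only candidate; thus adjustment is possible iff $\emptyset$ is valid, in which case Definition~\ref{def:adjustment:set} gives $P(\by\mid\textit{do}(\bx))=P(\by\mid\bx)$, already establishing the formula in (d). For the direction ``an edge $Y\to X$ precludes adjustment'', I would note that such an edge survives in $\cbdg$ (only edges in $\bX\to\pcauspaths$, i.e.\ \emph{out of} $\bX$, are removed), so $X\gets Y$ is a proper non-causal $\bX$--$\bY$ path left unblocked by $\emptyset$; by Theorem~\ref{th:ac:equivalence} and the soundness and completeness of the adjustment criterion, $\emptyset$ is then not an adjustment. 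For the converse, assuming no $Y\to X$ edge means $\textit{Pa}(X)\subseteq\bX$ for every $X\in\bX$, so in $P(\bx,\by)=\prod_{X}P(X\mid\textit{Pa}(X))\cdot\prod_{Y}P(Y\mid\textit{Pa}(Y))$ the first product depends only on $\bx$. Summing over $\by$ and using that $\prod_{Y}P(Y\mid\textit{Pa}(Y))=P(\by\mid\textit{do}(\bx))$ by part (a) is a normalized distribution, we get $\sum_{\by}\prod_{Y}P(Y\mid\textit{Pa}(Y))=1$ and hence $P(\bx)=\prod_{X}P(X\mid\textit{Pa}(X))$. Dividing yields $P(\by\mid\bx)=\prod_{Y}P(Y\mid\textit{Pa}(Y))=P(\by\mid\textit{do}(\bx))$, so $\emptyset$ is valid and the effect is $P(\by\mid\bx)$.

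The step I expect to be the main obstacle is the converse of (c): one must justify the normalization $\sum_{\by}\prod_{Y}P(Y\mid\textit{Pa}(Y))=1$, which is the slightly delicate fact that the sub-factorization over the ancestrally closed set $\bX$ recovers the marginal $P(\bx)$; fortunately part (a) makes this immediate, since $\prod_{Y}P(Y\mid\textit{Pa}(Y))$ is literally the post-intervention distribution of $\bY$. An alternative route that avoids probabilities entirely would prove directly that $\emptyset$ $d$-separates $\bX$ and $\bY$ in $\cbdg$ whenever no $Y\to X$ edge exists, by arguing that a collider-free $\bX$--$\bY$ path in $\cbdg$ would, at its last node in $\bX$, be forced to traverse a first edge of a proper causal path that $\cbdg$ has deleted, a contradiction.
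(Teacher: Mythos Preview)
Your proof is correct. Parts (a), (b), and (d) are essentially identical to the paper's argument: (a) specializes the truncated factorization exactly as the paper does, (b) invokes Proposition~\ref{prep:observation:is:causal} after checking $d$-separation in $\cG_{\overline{\bX}}$ (your observation that the $\bX$-nodes become isolated is even a bit sharper than the paper's phrasing), and (d) follows immediately once $\bZ=\emptyset$ is seen to be the only candidate.

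For part (c) the two arguments diverge in the ``no $Y\to X$ edge $\Rightarrow$ adjustment'' direction. The paper stays graphical throughout: it notes that every edge $X\to Y$ with $X\in\bX$, $Y\in\bY$ is itself a proper causal path of length one, so $\cbdg$ is obtained by deleting precisely the $\bX\to\bY$ edges; hence $\emptyset$ $d$-separates $\bX$ and $\bY$ in $\cbdg$ iff no $\bY\to\bX$ edge remains, and Theorem~\ref{th:ac:equivalence} finishes both directions in one stroke. You instead give a direct probabilistic verification that $P(\by\mid\bx)=P(\by\mid\textit{do}(\bx))$ via the factorization and the normalization $\sum_{\by}\prod_Y P(Y\mid\textit{Pa}(Y))=1$ supplied by (a). Your route is more self-contained (it bypasses the CBC machinery for this direction and appeals only to Definition~\ref{def:adjustment:set}), while the paper's route is shorter and handles both directions of the equivalence uniformly via $\cbdg$. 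The alternative you sketch at the end---showing directly that $\emptyset$ separates in $\cbdg$---is in fact the paper's argument, and is simpler than you anticipate: once the $\bX\to\bY$ edges are gone and no $\bY\to\bX$ edges exist, there are no edges at all between $\bX$ and $\bY$, so separation is immediate.
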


\begin{proof}

Statement (a) follows   from the definition of the causal effect:

\begin{align*}
P(\by \mid \textit{do}(\bx)) \ =\ & \sum_{\bx'} P(\bX= \bx', \bY= \by \mid \textit{do}(\bX= \bx))\\
=\ & P(\bx, \by \mid \textit{do}(\bx)) &\rlap{\hspace*{-2.5cm}\parbox{6cm}{$\bx \neq \bx'$ makes the causal effect inconsistent}} \\  
=\ & P(\bv \mid \textit{do}(\bx)) \\
=\ & \prod_{ Y_j \in\bY } P(y_j \mid  \textit{pa}_j) &\rlap{\hspace*{-2.5cm}\parbox{4.5cm}{definition of the causal effect}} \\
=\ &  \prod_{Y\in\bY} P(Y = y \mid \textit{Pa}(Y) ) 
\end{align*}

For Statements $(b)$ and $(c)$ note that edges $ \bX \to \bX $ or $ \bY \to \bY $ do not affect $d$-connectedness between $ \bX $ and $ \bY $. 
Hence with the assumption in Statement~$(b)$, the sets $\bX$ and $\bY$ are $d$-separated in the graph $\cG_{\overline{\bX}}$, where all edges $\bY\to\bX $ and $\bX\to\bX $ are deleted. Then we know from Proposition~\ref{prep:observation:is:causal} that the causal effect is identified by a plain formula.
Since no node is outside of $ \bX \cup \bY $, the only possible adjustment set is $ \bZ =  \emptyset $, which is Statement~$(d)$.  
Finally, an adjustment set $ \bZ = \emptyset$ always satisfies the first  condition of the CBC. %
The back-door graph is formed by removing all edges $ \bX \to \bY $ as those edges form a causal path of length one. 
Thus $ \bZ $ is an adjustment set, if and only if no  edge $ \bY \to \bX $ exists, which is Statement~$(c)$. %
\end{proof}

When $ \bV = \bX\cup\bY $, the R package {\tt causaleffect} \cite{RpackageCausalEffect}, which we used in our experiments described in the next section, 
returns the formula 
$ P(\by \mid \textit{do}(\bx)) = \prod_{Y\in\bY} P(Y \mid \textit{An}(Y) \setminus Y ) $ when configured to be fast and find \emph{any} identification formula rather than a short one. Thus it is worth to mention  that $\prod_{Y\in\bY} P(Y \mid \textit{An}(Y) \setminus Y ) = \prod_{Y\in\bY} P(Y \mid \textit{Pa}(Y) )$, because the parents of a node $Y$ block all paths from other ancestors of $Y$ to $Y$.

\section{Empirical analysis of identifiability by adjustment}\label{sec:experiments}

As mentioned before, not all identifiable total effects are identifiable via covariate adjustment, but 
if covariate adjustment is possible, then it is usually preferred to other methods due to its benign statistical properties. This raises the question how often we will actually have to go beyond covariate adjustment when identifying causal effects. The completeness and algorithmic efficiency of the CBC allowed us to perform an empirical analysis of identifiability via adjustment in random graphs, including graphs of substantial size. 

The basic setup of our experiments is as follows. We (1) generate a random graph; (2) set nodes to be unobserved at random; 
(3) choose random disjoint subsets $\bX,\bY$  of pre-specified cardinalities from the observed nodes; and (4) test whether $P(\by \mid \textit{do}(\bx))$ is identifiable in the resulting graph. We test the identifiability of  $P(\by \mid \textit{do}(\bx))$ using four increasingly powerful criteria: (1) Pearl's back-door criterion  \cite{Pearl2009}; (2) the CBC; (3) an extended version of the CBC that also covers the special cases discussed in Section~\ref{sec:cbcext}; and (4) the do-calculus, which characterizes all effects that are identifiable at all. Full details are given below.

We included the classic back-door criterion (Definition~\ref{def:back-door:pearl}) in our analysis because it is still very present in the applied literature on DAGs (e.g., \cite{Elwert2013}) whereas the generalized version is still barely mentioned. It is known that the back-door criterion is not complete and can thus fail to identify an adjustment set, which raises the question how often the back-door criterion fails to find an adjustment set when our CBC criterion succeeds. In Section~\ref{sec:singletons} it was shown that this is never the case for a singleton $X$ (although the CBC may still find more adjustment sets than the BC).

Our extensions to the CBC in Section~\ref{sec:cbcext} were motivated by our observation from preliminary experiments that many cases where an effect is not identifiable by adjustment are anyway identifiable due to simple reasons like the absence of any causal path from $\bX$ to $\bY$, which can be addressed quite easily without invoking the full machinery of the do-calculus.

We now proceed to give the technical details of how we set up our empirical analysis.

\subsection{Instance generation}\label{sub:sub:sect:parameters}

We evaluate identifiability on random DAGs, which we generate as described in Section~\ref{sec:basissets}. 
The random DAGs are generated with different numbers of variables $\bV$
\[
 |\bV| = n \in \{10, 25, 50, 100, 250, 500, 1000, 2000\}.
\]

These variables are divided %
into four sets: 
ordinary observed nodes $\bR$,
unobserved nodes $\bV\setminus \bR$, 
exposure nodes $\bX\subseteq \bR$, and outcome nodes $\bY\subseteq \bR$ (with $\bX \cap \bY = \emptyset$)
depending on parameters
\[ 
   P(\textit{unobserved}) \in \{0, 0.25, 0.5, 0.75\} \quad \text{and}\quad   
  |\bX| = |\bY| = k \in  \left\{1,2,5,\lfloor \sqrt{n} \rfloor,\lfloor 0.1n \rfloor\right\}.
\]
To select those sets, we proceed as follows: Initially mark all variables in $\bV$ as observed. Next, 
for every node mark it as unobserved with  
probability $P(\textit{unobserved})$ until all nodes are considered or 
the number of nodes which remain observed reaches the threshold value $2k$.
Finally, from the observed $\bR$ pick randomly two 
disjoint subsets $\bX$ and $\bY$ of size $k$.
The expected size of $\bR$ is bounded by
$\EX[|\bR|] > n\cdot(1-P(\textit{unobserved}))$, with the difference being very small for $n \gg 2k$, but substantial for $n \gtrapprox 2k$.
For example for $n=10$ and $k = |\bX| = |\bY| = 5$, all nodes 
are in $\bR =  \bX \cup \bY = \bV$ %
regardless of the probability $P(\textit{unobserved})$ -- the case discussed 
already in Section~\ref{sec:special:case:v:partition}. %

We perform  experiments for each parametrization tuple 
\begin{equation}\label{eq:tuples}
  (n, l, k, P(\textit{unobserved})),
\end{equation}
where, recall, $l$ determines the probability $P(\textit{edge})$
as described in Section~\ref{sec:basissets}. In this section we will 
report our results in detail only for $ P(\textit{unobserved}) \in \{0,0.75\}.$
The remaining cases are shown in  the appendix.

We generated 10~000 graphs for each parameter tuple
using the function {\tt GraphGenerator.randomDAG} of our 
DAGitty library \cite{dagittyIJE} in node.js. 
Figure~\ref{fig:experiments:typical:dags} shows example instances  sampled for $ n = 10 $ 
and illustrates the four cases we are interested in.


\begin{figure}\arxivonly{\small}
	(a) \begin{tikzpicture}[baseline]
\graph  { 
 /-!-v1[exp,as=$X_1$],
 v7[exp,as=$X_2$] -!- v3[as=$V_0$] -!- /-!-v2[res,as=$Y_1$] ,
 v5[lat,as=$V_1$]  -!- v4[res,as=$Y_2$] -!-v6[lat,as=$V_2$],
 v10[exp,as=$X_3$] -!- v9[lat,as=$V_3$] -!- v8[res,as=$Y_3$],
 v1[exp]-> {v2[res],v3,v7[exp]}, v2-> {v6[lat],v8[res]}, v3-> {v4[res],v6,v8}, v4-> {v8,v9[lat]}, v5[lat]-> {v9,v10[exp]}, v6-> {}, v7-> {}, v8-> {}, v9-> {}, v10-> {}
 };
\end{tikzpicture} 
	(c) \begin{tikzpicture}[baseline]
\graph  { 
  / -!- / -!- v1[lat,as=$V_1$] -!- v2[exp,as=$X_1$],
  / -!- v3[as=$V_0$] -!-  v7[res,as=$Y_1$],
  v5[res,as=$Y_2$] -!- v4[exp,as=$X_2$] -!- v6[res,as=$Y_3$] -!- v9[lat,as=$V_2$],
  -!- v8[exp,as=$X_3$] -!- / -!- v10[lat,as=$V_3$],
  v1[lat]-> {v3,v7[res]}, v2[exp]-> {v7,v9[lat]}, v3-> {v5[res],v8[exp],v9}, v4[exp]-> {v10[lat]}, v5-> {v8}, v6[res]-> {v10}, v7-> {v9}, v8-> {}, v9-> {v10}, v10-> {}
};
\end{tikzpicture} 
	(p) \begin{tikzpicture}[baseline]
\graph  { 
  / -!- v2[res,as=$Y_1$] -!- /  -!- v3[lat,as=$V_1$],
  v1[lat,as=$V_2$] -!- v4[lat,as=$V_3$] -!- v7[exp,as=$X_1$] -!- v6[res,as=$Y_2$] -!- v9[exp,as=$X_2$],
  v8[res,as=$Y_3$] -!- v5[lat,as=$V_4$]  -!- v10[exp,as=$X_3$],
v1-> {v5,v8}, v2-> {v4}, v3-> {v7,v9}, v4-> {v5,v10}, v5-> {}, v6-> {v10}, v7-> {}, v8-> {}, v9-> {v10}, v10-> {}};
\end{tikzpicture} 
	(n) \begin{tikzpicture}[baseline]
\graph  { 
v2[exp,as=$X_1$] -!- v1[lat,as=$V_1$]  ,
v3[lat,as=$V_2$] -!- v4[lat,as=$V_3$] -!- v6[exp,as=$X_2$],
v5[lat,as=$V_4$] -!- v7[res,as=$Y_1$] -!- v8[res,as=$Y_2$],
v9[exp,as=$X_3$] -!- v10[res,as=$Y_3$],
v1[lat]-> {v3[lat],v4[lat],v6[exp]}, v2[exp]-> {v3}, v3-> {v7[res]}, v4-> {}, v5[lat]-> {v9[exp]}, v6-> {v7,v8[res]}, v7-> {v9}, v8-> {v10[res]}, v9-> {}, v10-> {}
};
\end{tikzpicture} 

\caption{Example DAGs sampled for the parameters $n = 10,  P(\textit{edge}) = 2/9,  P(\textit{unobserved}) = 0.5$, 
and $k= |\bX|=|\bY|=3$. 
Nodes are relabeled such that exposures are called 
$X_1,X_2,X_3$, outcomes are called $Y_1,Y_2,Y_3$, and all 
nodes except $V_0$ are unobserved.
Case (a)djustment is identified by using the empty set and by the formula 
$ \sum_{v_{0}}[P(y_{1}\vert x_{1})P(v_{0}\vert x_{1})P(y_{3}\vert x_{1},y_{1},v_{0},y_{2})P(y_{2}\vert x_{1},v_{0})]$ 
found by the ID-algorithm. Instance (c)omplex is identified by the complex formula 
$\sum_{v_{0}}[P(v_{0}\vert x_{1})P(y_{1}\vert x_{1},v_{0})P(y_{2}\vert v_{0})P(y_{3})]$ and 
instance  (p)lain is identified by the plain formula $ P(y_{1},y_{2},y_{3})$, although in this 
case no adjustment set exists.  The final example is (n)ot identifiable.
}\label{fig:experiments:typical:dags}
\end{figure}

\subsection{Algorithms}\label{sec:experiments:algorithms:abbreviations}

The main goal of our experiments was to examine the influence of the instance complexity,
like density of a DAG, numbers of exposures and  outcomes, and the ratio of unobserved to observed variables, on the identifiability by adjustment compared to general identifiability. Throughout, we use the following abbreviations for the algorithms we examine:
\begin{itemize}
\item[CBC:] our constructive back-door criterion (Definition~\ref{def:ac:general:bdc}, Theorem~\ref{th:adj:set:cosctr}). We used our DAGitty library, specifically the function 
   {\tt GraphAnalyzer.canonicalAdjustmentSet}, which implements algorithm {\sc FindAdj}  
   based on the CBC.
 We also tested another implementation of our CBC criterion, the {\tt gac} function of the R package {\tt pcalg} \cite{pcalg}. 
\item[CBC$^+$:] combination of the CBC and plain formula (Proposition~\ref{prep:observation:is:causal}). We implement the plain formula using the DAGitty  function {\tt GraphAnalyzer.dConnected}, 
   which implements algorithm {\sc TestSep} (Proposition~\ref{prop:TestSep}).
\item[BC:] Pearl's back-door criterion (Definition~\ref{def:back-door:pearl}). 
It has been shown that if an adjustment set $\bZ$ that satisfies BC exists, it can be found 
by removing all descendants of $\bX$ from $\bZ$
\cite{PerkovicEtAl2018}. This means we can implement BC by trivial post-processing of the CBC output. 
\item[IDC:] general identifiability as determined by do-calculus (see \cite[Chapter 3.4.2]{Pearl2009}). 
Specifically, we use the IDC algorithm by Shpitser and Pearl \cite{ShpitserIDCAlgorithm}, which is well known to be complete for the identification of causal effects \cite{shpitser2006identification,huang2006pearl}, meaning that the algorithm computes a formula involving only the pre-intervention distribution that expresses the causal effect if such a formula exists; otherwise it outputs that identification is impossible.
Our experiments are based on the IDC implementation 
provided by the R package {\tt causaleffect} \cite{RpackageCausalEffect}.
Due to its high time complexity, we were only able to use this algorithm for small instances.


\end{itemize}

\subsection{Results}

{

\setlength\tabcolsep{5pt}
\begin{table}
  \begin{center}
  \scriptsize
  \begin{tabular}{|rr | rrr| rrr| rrr| rrr|}
  \hline
   &&\multicolumn{3}{c|}{$l=2$} &\multicolumn{3}{c|}{$l=5$} &\multicolumn{3}{c|}{$l=10$} &\multicolumn{3}{c|}{$l=20$}\\
   \bfseries $n$ & \bfseries $k$
& \bfseries BC & \bfseries CBC & \bfseries CBC$^+$
& \bfseries BC & \bfseries CBC & \bfseries CBC$^+$
& \bfseries BC & \bfseries CBC & \bfseries CBC$^+$
& \bfseries BC & \bfseries CBC & \bfseries CBC$^+$
\\\hline
10&1&
8893&8893&10000&7205&7205&10000&5034&5034&10000&4934&4934&10000\\
10&2&
\cellcolor[gray]{0.85}5543&\cellcolor[gray]{0.85}6061&8618&\cellcolor[gray]{0.85}1033&\cellcolor[gray]{0.85}1980&4322&\cellcolor[gray]{0.85}0&\cellcolor[gray]{0.85}660&2197&\cellcolor[gray]{0.85}0&\cellcolor[gray]{0.85}686&2417\\
10&3&
\cellcolor[gray]{0.85}2359&\cellcolor[gray]{0.85}3395&5817&\cellcolor[gray]{0.85}57&\cellcolor[gray]{0.85}548&1425&0&168&663&0&174&689\\
10&5&
\cellcolor[gray]{0.85}200&\cellcolor[gray]{0.85}886&1712&0&108&216&0&36&71&0&31&65\\
\hline
25&1&
9573&9573&10000&8936&8936&10000&7905&7905&10000&5843&5843&10000\\
25&2&
8117&8247&9651&\cellcolor[gray]{0.85}4243&\cellcolor[gray]{0.85}4735&7003&\cellcolor[gray]{0.85}1033&\cellcolor[gray]{0.85}1553&3674&70&401&2118\\
25&3&
\cellcolor[gray]{0.85}6013&\cellcolor[gray]{0.85}6424&8520&\cellcolor[gray]{0.85}1203&\cellcolor[gray]{0.85}1852&3524&46&212&1046&0&34&587\\
25&5&
\cellcolor[gray]{0.85}2298&\cellcolor[gray]{0.85}3021&5055&39&243&646&0&11&93&0&1&53\\
\hline
50&1&
9832&9832&10000&9476&9476&10000&8997&8997&10000&7832&7832&10000\\
50&2&
9095&9128&9882&6688&6938&8388&2657&3049&4927&527&866&2729\\
50&5&
\cellcolor[gray]{0.85}5104&\cellcolor[gray]{0.85}5535&7489&462&799&1613&3&16&198&0&2&58\\
50&7&
\cellcolor[gray]{0.85}2473&\cellcolor[gray]{0.85}3120&4799&27&119&302&0&1&25&0&0&6\\
\hline
100&1&
9907&9907&10000&9783&9783&10000&9494&9494&10000&8966&8966&10000\\
100&2&
9585&9593&9971&8262&8353&9165&4600&4834&6162&1507&1762&3492\\
100&5&
7425&7591&9090&\cellcolor[gray]{0.85}1932&\cellcolor[gray]{0.85}2336&3441&43&102&393&1&4&92\\
100&10&
\cellcolor[gray]{0.85}2499&\cellcolor[gray]{0.85}3040&4479&15&48&137&0&0&2&0&0&0\\
\hline
250&1&
9947&9947&10000&9894&9894&10000&9774&9774&10000&9621&9621&10000\\
250&2&
9835&9840&9991&9284&9327&9696&6569&6689&7358&3151&3285&4502\\
250&5&
8956&8994&9807&5051&5325&6261&469&544&994&7&17&164\\
250&15&
\cellcolor[gray]{0.85}3102&\cellcolor[gray]{0.85}3537&4864&18&32&58&0&0&1&0&0&0\\
250&25&
319&536&731&0&0&0&0&0&0&0&0&0\\
\hline
500&1&
9977&9977&10000&9946&9946&10000&9883&9883&10000&9799&9799&10000\\
500&2&
9923&9923&9996&9674&9684&9872&7704&7750&8116&4184&4266&4988\\
500&5&
9477&9490&9948&7249&7368&8069&1170&1265&1686&43&48&216\\
500&22&
3012&3288&4413&3&14&17&0&0&0&0&0&0\\
500&50&
10&29&31&0&0&0&0&0&0&0&0&0\\
\hline
1000&1&
9990&9990&10000&9973&9973&10000&9942&9942&10000&9885&9885&10000\\
1000&2&
9965&9966&10000&9844&9845&9952&8416&8434&8640&5130&5173&5577\\
1000&5&
9734&9736&9986&8679&8726&9173&2310&2396&2686&136&149&319\\
1000&32&
2923&3191&4163&2&6&7&0&0&0&0&0&0\\
1000&100&
0&0&0&0&0&0&0&0&0&0&0&0\\
\hline
2000&1&
9999&9999&10000&9988&9988&10000&9972&9972&10000&9938&9938&10000\\
2000&2&
9973&9973&10000&9940&9940&9981&9023&9029&9119&5928&5954&6156\\
2000&5&
9880&9880&9996&9450&9471&9713&3608&3648&3869&287&300&469\\
2000&45&
3000&3210&4122&4&8&8&0&0&2&0&0&0\\
2000&200&
0&0&0&0&0&0&0&0&0&0&0&0\\
\hline\end{tabular}
  \end{center}\vspace*{-3mm}
  \caption{Numbers of instances for $P(\textit{unobserved}) = 0$, i.e., all variables are observed, that are identifiable by use of BC, CBC, CBC$^+$ (as defined in Section~\ref{sec:experiments:algorithms:abbreviations}). 
      Gray cells highlight where the CBC was able to identify at least 400 more graphs than the BC.
      Since all variables are observed, all instances are identifiable, thus IDC is not used in this Table.
}\label{table:global:stat:unobs0:count}
  \end{table}
\begin{table}
  \begin{center}
  \scriptsize
  \begin{tabular}{|rr | rrrr| rrrr| rrrr| rrrr|}
  \hline
   &&\multicolumn{4}{c|}{$l=2$} &\multicolumn{4}{c|}{$l=5$} &\multicolumn{4}{c|}{$l=10$} &\multicolumn{4}{c|}{$l=20$}\\
   \bfseries $n$ & \bfseries $k$
& \bfseries BC & \bfseries CBC & \bfseries CBC$^+$& \bfseries IDC
& \bfseries BC & \bfseries CBC & \bfseries CBC$^+$& \bfseries IDC
& \bfseries BC & \bfseries CBC & \bfseries CBC$^+$& \bfseries IDC
& \bfseries BC & \bfseries CBC & \bfseries CBC$^+$& \bfseries IDC
\\\hline
10&1&
6333&6333&9604&9609&1935&1935&7475&7476&978&978&5944&5944&936&936&5877&5877\\
10&2&
2008&2339&6889&8740&   103&228&2854&4137&     0&113&1721&2260&      0&114&1752&2294\\
10&3&
610&980&4193&8056&0&21&1061&1995&0&9&512&763&0&10&547&789\\
10&5&
\cellcolor[gray]{0.85}185&\cellcolor[gray]{0.85}859&1756&10000&0&98&190&10000&0&43&76&10000&0&26&75&10000\\
\hline
25&1&
8414&8414&9923&9930&3647&3647&8727&8742&1340&1340&6884&6888&557&557&5696&5696\\
25&2&
5164&5331&8939&9731&   601&728&4630&6417&  77&130&2501&3469&    4&41&1847&2299\\
25&3&
2350&2632&6958&9270&73&144&2141&4327&2&17&872&1518&0&6&554&780\\
25&5&
277&449&3008&8157&0&1&456&1462&0&0&114&251&0&0&49&71\\
\hline
50&1&
9082&9082&9975&9979&4651&4651&9237&9253&1699&1699&7547&7555&697&697&6031&6032\\
50&2&
6985&7059&9599&9908&      1098&1189&6078&7686&     133&160&3353&4270&      23&40&2061&2543\\
50&5&
1440&1663&5452&9394&      5&16&868&3030&        0&1&178&482&         0&0&73&125\\
50&7&
254&388&2596&8648&      0&0&186&1226&        0&0&19&80&       0&0&3&10\\
\hline
100&1&
9527&9527&9992&9993&5585&5585&9602&9618&1985&1985&7980&7991&744&744&6414&6416\\
100&2&
8295&8316&9884&9980&1846&1886&7303&8618&195&217&3799&4989&49&56&2413&2940\\
100&5&
3391&3562&7636&9804&20&30&1800&4690&0&0&331&802&0&0&84&159\\
100&10&
252&375&2364&9263&0&0&74&956&0&0&3&15&0&0&0&0\\
\hline
250&1&
9791&9791&10000&10000&6832&6832&9814&9827&2493&2493&8564&8579&846&846&6793&6795\\
250&2&
9205&9209&9974&9990&3099&3138&8509&9360&277&286&4914&6073&46&50&2881&3439\\
250&5&
6110&6182&9269&9962&106&123&3344&6764&1&1&599&1323&0&0&136&248\\
250&15&
232&306&2281&9697&       0&0&16&703&        0&0&0&4&          0&0&0&0\\
250&25&
3&4&221&9008&0&0&0&28&0&0&0&0&0&0&0&0\\
\hline
500&1&
9882&9882&9999&-&7646&7646&9919&-&2935&2935&8885&-&946&946&7184&-\\
500&2&
9596&9598&9993&-&4267&4280&9117&-&401&406&5722&-&33&34&3166&-\\
500&5&
7774&7801&9754&-&273&285&4973&-&1&1&990&-&0&0&226&-\\
500&22&
150&184&1757&-&0&0&3&-&0&0&0&-&0&0&2&-\\
500&50&
0&0&4&-&0&0&0&-&0&0&0&-&0&0&0&-\\
\hline
1000&1&
9936&9936&10000&-&8394&8394&9970&-&3181&3181&9137&-&1061&1061&7422&-\\
1000&2&
9789&9790&9999&-&5498&5507&9568&-&525&526&6361&-&51&51&3546&-\\
1000&5&
8797&8803&9947&-&666&676&6482&-&2&2&1471&-&0&0&245&-\\
1000&32&
94&107&1511&-&0&0&1&-&0&0&0&-&0&0&0&-\\
1000&100&
0&0&0&-&0&0&0&-&0&0&0&-&0&0&0&-\\
\hline
2000&1&
9975&9975&10000&-&8914&8914&9988&-&3685&3685&9361&-&1099&1099&7613&-\\
2000&2&
9879&9879&9999&-&6774&6777&9791&-&714&714&7048&-&57&57&3858&-\\
2000&5&
9383&9384&9980&-&1519&1535&7906&-&0&0&2159&-&0&0&342&-\\
2000&45&
81&90&1399&-&0&0&0&-&0&0&0&-&0&0&0&-\\
2000&200&
0&0&0&-&0&0&0&-&0&0&0&-&0&0&0&-\\
\hline\end{tabular}
  \end{center}\vspace*{-3mm}
  \caption{Numbers of instances for $P(\textit{unobserved}) = 0.75$ that are identifiable 
    by use of BC, CBC, CBC$^+$ and IDC (as defined in Section~\ref{sec:experiments:algorithms:abbreviations}).
    Gray cells highlight where the CBC was able to identify at least 400 more graphs than the BC.
    Due to its high time complexity, we were unable to run the IDC algorithm on instances labelled
	with ``-''.}\label{table:global:stat:unobs0.75:count}
  \end{table}

}

The results for all methods and parameters $n,k,l$ described above
 are shown in Table~\ref{table:global:stat:unobs0:count} 
(for the case $P(\textit{unobserved})=0$) and in Table~\ref{table:global:stat:unobs0.75:count}
($P(\textit{unobserved})=0.75$). 
We now discuss the results in more detail.

\paragraph{Identification by adjustment sets or plain formulas}
Tables~\ref{table:global:stat:unobs0:count} and  \ref{table:global:stat:unobs0.75:count}
provide counts for instances identifiable by adjustment alone
(columns CBC) or by adjustment enhanced by using the plain formula (CBC$^+$).
The number of effects only identified by the plain formula, but not by CBC,
is thus given by the difference between these columns. 

Figure~\ref{table:global:stat:grid:grey} summarizes 
the counts for CBC and CBC$^+$ reported in Table~\ref{table:global:stat:unobs0:count} 
and~\ref{table:global:stat:unobs0.75:count} for $k=1,2,5$ 
and  $n\ge 25$. We omit the instances with $n=10$, since for $k=5$ these 
cases were discussed separately in Section~\ref{sec:special:case:v:partition}.
Moreover, for  parameter values $l=10$ and $l=20$ the individual probabilities 
for edge selection, $P(\textit{edge})=\max\{l/(n-1),1\}$, 
imply that every node has $9<l$ neighbors while in our analyses we want that $l$
specifies the expected number of neighbors of a node.

Identification by plain formula and identification by adjustment are 
overlapping concepts.
Some cases can be identified using either approach,
while in other instances only one of them works.
\highlightrevision{r2c58}{
Many cases for which adjustment does not work can be solved instead by 
the plain formula, meaning that in those DAGs there is in fact no causal
effect of $\bX$ on $\bY$.}
This can be seen especially in dense graphs, e.g., DAGs
in which each node has $l=20$ neighbors on average,
and for singleton $\bX$ and $\bY$, i.e., $k=1$.
For example, for $l=20$, $k=1$, $P(\textit{unobserved}) = 0.75$, 
in DAGs with $n = 2000$  ($n=1000$, $n=500$) nodes, 
up to 65 \%  (63 \%, 61 \%) of all instances  
are identifiable by the plain formula but not by adjustment.
Furthermore, increasing $n$ from $25$ to $2000$ we observe 
that this percentage ranges between $51 \%$ and the maximum $65 \%$, 
a rather narrow range.  
The counts for CBC and CBC$^+$ are illustrated in Figure~\ref{table:global:stat:grid:grey}
as gray squares in the columns labeled as $(l,k)=(20,1)$ (case: $P(\textit{unobserved}) = 0.75$).  

The difficulty of identification by adjustment grows with increasing $k$ and $l$,
but it decreases with increasing number of nodes $n$,
both for $P(\textit{unobserved}) = 0$ and for $0.75$.
In Figure~\ref{table:global:stat:grid:grey}, 
columns are sorted increasingly by the total number of identifiable effects per column.
This shows that the most difficult case is $(l,k)=(20,5)$:
for $P(\textit{unobserved}) = 0$ the counts grows very slowly with $n$ 
reaching the maximum value of $3 \%$ of identifiable graphs for $n=2000$;
for  $P(\textit{unobserved}) = 0.75$ almost no instances are identifiable by adjustment 
(compare the upper panels in Figure~\ref{table:global:stat:grid:grey}).
However, as we can see in Table~\ref{table:global:stat:unobs0.75:count},
for $n=250$ only $2.5 \%$ of  cases are identifiable at all.
Figures~\ref{fig:global:n:curves} and \ref{fig:global:l:curves} summarize  the 
difficulty of identification stratified by $n$ (Figure~\ref{fig:global:n:curves}) 
and $l$ (Figure~\ref{fig:global:l:curves}), respectively. 

\inputiffinal{experiments-grid}


\paragraph{Comparison of CBC to the back-door criterion by Pearl}

We were also interested in how often Pearl's back-door criterion 
(BC) would fail to find an adjustment set.
Tables~\ref{table:global:stat:unobs0:count} 
and~\ref{table:global:stat:unobs0.75:count} show that the difference 
between BC and the CBC is rather small, especially for simple 
(or hard) instances where nearly every (or no) DAG has an adjustment set, 
and as expected given our results in Section~\ref{sec:singletons}, for singletons 
$\bX = \{X\}, \bY= \{Y\}$ the counts for BC and CBC are indeed equal.
However, for larger $\bX, \bY$ and parameters where only 
a few graphs have an adjustment set, the difference between BC and CBC becomes
more substantial. The 
greatest difference occurs for 
$n = 10$, $|\bX| = |\bY| = 3$, $m \approxeq n$, and  $P(\textit{unobserved}) = 0$,
where in 10\% of all cases there is an adjustment set whereas BC finds none. 
This is followed by $n = 10$, $|\bX| = |\bY| = 2$, where BC fails to find 
existing adjustment sets in 7\% to 9\% of the cases, depending on $ P(\textit{edge})$
and $P(\textit{unobserved})$.

\inputiffinal{experiments-n-curves}
\inputiffinal{experiments-l-curves}

\paragraph{Complete identification by do-calculus compared 
to identification by adjustment or plain formula}
As explained above, in small graphs we checked for general identifiability 
of causal effects using the IDC algorithm \cite{ShpitserIDCAlgorithm}. 
Results are shown for $P(\textit{unobserved}) = 0.75$  and $ n \leq 250 $ in 
Table~\ref{table:global:stat:unobs0.75:count}. 
Since the IDC algorithm is complete for the identification problem,
the corresponding counts also show how many instances are identifiable 
at all. It is known that in the case $ P(\textit{unobserved}) = 0$ 
the causal effect is always identifiable, so we skip the counts for IDC 
in Table~\ref{table:global:stat:unobs0:count}.

The cases with $ n=10$, $k=|\bX|=|\bY| = 5$ (Table~\ref{table:global:stat:unobs0.75:count}) 
might seem suspicious as   
the number of identifiable graphs (i.e., counts in column IDC) increases drastically compared to the 
graphs with smaller $ \bX, \bY $, while in all the other cases (see Table~\ref{table:global:stat:unobs0.75:count})
this number decreases with increasing cardinality of $ \bX, \bY $. 
However, this is explained by the cap on the number of unobserved nodes. 
When $|\bX|+|\bY|=10$ for $ n=10 $, there are no nodes outside of $ \bX\cup\bY$ 
remaining that could become unobserved regardless of $P(\textit{unobserved})$,
similarly to the cases in Table~\ref{table:global:stat:unobs0:count}, and all 
graphs must be identifiable as shown in Section~\ref{sec:special:case:v:partition}. 

Figures~\ref{fig:global:n:curves} and \ref{fig:global:l:curves} present the data for CBC$^+$
(the same data as in the lower right panel of Figure~\ref{table:global:stat:grid:grey}) in
comparison to identification by IDC. 
As we observed in Figure~\ref{table:global:stat:grid:grey}, the most difficult case for CBC$^+$
is $(l,k)=(20,5)$ and the difficulty decreases with $k$ and $l$ when $n$ is fixed
(Figure~\ref{fig:global:n:curves}). The situation is very similar for IDC.
In Figure~\ref{fig:global:l:curves}, we see that identifiability for both 
CBC$^+$ and IDC grows roughly in parallel. Similarly to the results for adjustment sets, one can see that with increasing $|\bX|, |\bY|, l$ the number of identifiable graphs decreases, when $ P(\textit{unobserved}) > 0 $. %

These experiments also provide a verification of our DAGitty implementation as every adjustment set found by the {\tt causaleffect} package has been found by DAGitty, as well as a ground truth of the unidentifiable graphs, since a causal effect not identified by the IDC algorithm cannot be identified by any method.
\highlightrevision{r2c59}{
	Moreover, as expected, for DAGs with no unobserved variables and with $|\bX| = |\bY| = 1 $, all causal effects are already identified by a plain formula or adjustment without involving the IDC algorithm (see Table~\ref{table:global:stat:unobs0:count}).}

\inputiffinal{runtimes}
\inputiffinal{experiments-runtimes-table}

\paragraph{Comparative runtimes of the algorithms}
Figure~\ref{fig:runtime:nodes} (black lines) shows the time needed by DAGitty for these experiments on one core of a 2.1 GHz (up 3 GHz with Turbo Core) AMD Opteron 6272 for graphs with $P(\textit{unobserved}) = 0.75$. Graphs with a lower probability $P(\textit{unobserved})$ are processed slightly faster.  For small sets $\bX$ and $\bY$ the time increases roughly linearly with the number of edges $m$. For larger sets the time also increases with the size of $\bX,\bY$, which could either mean that DAGitty does not reach the optimal asymptotic runtime of $\cO(n+m)$ due to inefficient set operations, or that the time actually only depends on $\cO(\textit{An}(\bX, \bY))$ which can be much smaller than $\cO(m)$ when the sets and degrees are small. However, for all models of a size currently used in practice, DAGitty finds the adjustment set nearly instantaneously. 

The runtimes  of the  {\tt causaleffect} package are shown \onlyincolor{as red plot }in Figure~\ref{fig:runtime:nodes}. Since the IDC algorithm is far more complex than the expression of Theorem~\ref{th:adj:set:cosctr}, it performs generally one to two orders of magnitude slower than the implementation in DAGitty, or equivalently in the same time DAGitty can process graphs that are one to two orders of magnitude  larger.  Due to this speed difference it was not possible for us to run the IDC algorithm experiments on the larger graphs.

We have also investigated a different implementation of the CBC in the R package pcalg \cite{pcalg}. The {\tt gac} function in that package implements the CBC criterion for DAGs and other graph classes. Unlike DAGitty, the pcalg package does not find an adjustment set, but only verifies whether a given set meets the criterion. Hence, after loading the graphs in R and calculating the adjacency matrices required by pcalg, we compute the canonical adjustment set $ \adjustment(\bX,\bY) $ in R as\\ %
{\tt \hspace*{1cm} Dpcp = De(G, intersect(setminus(De(GNoInX, x), x), An(GNoOutX, y)))\\
  \hspace*{1cm} z = setminus(An(G, union(x,y)), union(union(x,y),union(Dpcp, obs)))}\\
with sets {\tt x}, {\tt y}, {\tt obs}erved  nodes {\tt obs}, graphs {\tt G} $ =\cG $, {\tt GNoInX} $ = \cG_{\overline{X}} $, {\tt GNoOutX} $ = \cG_{\underline{X}} $ and helper functions {\tt An} and {\tt De} implemented using the {\tt subcomponent} function of the R package {\tt igraph}. We then compare the time required by {\tt pcalg} to test whether {\tt z} is a valid adjustment set to the time required by DAGitty to find and test a set.
The runtimes of the {\tt gac} function are plotted \onlyincolor{in purple }in Figure~\ref{fig:runtime:nodes}. 
They show that the {\tt gac} function is several orders of magnitude  slower than DAGitty. These results are expected given that the {\tt pcalg} package tests the CBC by tracing all $m$-connected paths using backtracking, an approach that suffers from exponential worst-case complexity; in fact this backtracking algorithm is even slower than the general implementation of the do-calculus in the {\tt causaleffect} package. Only the cases with small $ n $ are shown as the remaining computations did not terminate in reasonable time.

In summary, our experimental results show that many causal effects in random DAGs cannot be identified by covariate adjustment. Nevertheless, many of these cases are easily addressed by extending the CBC slightly, and then most effects become identifiable without having to resort to do-calculus, at least in the random graphs we tested. This finding is reassuring given that the implementation of our algorithmic framework in DAGitty is the only identification method of those we tested that is applicable to large graphs.

\section{Adjustment in MAGs}

\label{sec:magadjust}

Finally, in this section, we generalize our complete constructive criterion for identifying 
adjustment sets from DAGs to MAGs, making our algorithmic framework applicable to this class of graphical models as well. 
Two examples may illustrate why this generalization is not
trivial. First, take $\cG=X \to Y$. If $\cG$ is interpreted as a DAG,
then the empty set is valid for adjustment. If $\cG$ is however
taken as a MAG, then there exists no adjustment set (for a formal definition of 
an adjustment set in a MAG see below) as 
$\cG$ represents among others the 
DAG \begin{tikzpicture}[xscale=0.7,baseline=-2.3ex]
\node [anchor=north] (l) at (0,0) {$U$};
\node [anchor=north] (x) at (1,0) {$X$};
\node [anchor=north] (y) at (2,0) {$Y$};
\draw [->] (x) -- (y);
\draw [->] (l) -- (x);
\draw [->] (l) edge [bend right=20] (y);
\end{tikzpicture} where $U$ is an unobserved confounder.
Second, take $\cG=A \to X \to Y$. In that case, the empty
set is an adjustment set regardless of whether $\cG$
is interpreted as a DAG or a MAG. The reasons 
will become clear as we move on. First, let us recall
the semantics of a MAG. The following definition
can easily be given for AGs in general, but 
we do not need this generality for our purpose.
\begin{definition}[DAG representation by MAGs \citep{Richardson2002}] 
Let $\cG=(\bV,\bE)$ be a DAG, 
and let $\bS,\bL \subseteq \bV$. The MAG $\cM=\cG[^\bS_\bL$ is a graph
with nodes $\bV\setminus(\bS\cup\bL)$ and edges 
defined as follows. (1) Two nodes $U$ and $V$ are adjacent in 
$\cG[^\bS_\bL$ if they cannot be $m$-separated by 
any $\bZ$ with $\bS \subseteq \bZ
\subseteq \bV\setminus\bL$ in $\cG$. (2) 
The edge between $U$ and $V$ is
\begin{description}
\item $U-V$ if $U \in \textit{An}(\bS\cup V)$ and $V \in \textit{An}(\bS \cup U)$;
 \item $U\to V$ if $U \in \textit{An}(\bS\cup V)$ and $V \notin \textit{An}(\bS \cup U)$;
  \item $U\leftrightarrow V$ if $U \notin \textit{An}(\bS\cup V)$ and $V \notin \textit{An}(\bS \cup U)$.
\end{description}
We call $\bL$ \emph{latent} variables
and $\bS$ \emph{selection} variables. We say
there is \emph{selection bias} if $\bS \neq \emptyset$.
\label{def:mags}
\end{definition}
Hence, every MAG represents an infinite set of underlying
DAGs that all share the same ancestral relationships.

\begin{lemma}[Preservation of separating sets \citep{Richardson2002}]
Set $\bZ$ $m$-separates $\bX,\bY$ in $\cG[^\bS_\bL$ if and only
if $\bZ\cup\bS$ $m$-separates $\bX,\bY$ in $\cG$.
\end{lemma}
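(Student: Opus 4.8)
The plan is to prove both implications by surgery on $m$-connecting walks, using that the edges of $\cM=\cG[^\bS_\bL$ are exactly the traces of \emph{inducing paths} of $\cG$. So I would first record the combinatorial content hidden in the adjacency clause of Definition~\ref{def:mags}: for $\cM$-nodes $U,V$, failing to be $m$-separable by \emph{every} $\bW$ with $\bS\subseteq\bW\subseteq\bV\setminus\bL$ is equivalent to the existence of a path $q$ between $U$ and $V$ in $\cG$ whose interior non-colliders all lie in $\bL$ and whose colliders all lie in $\textit{An}_{\cG}(\bS\cup\{U,V\})$ (an inducing path relative to $\langle\bL,\bS\rangle$); the endpoint marks of such $q$ match the edge between $U$ and $V$ in $\cM$, since both are decided by whether $U\in\textit{An}_{\cG}(\bS\cup V)$ and $V\in\textit{An}_{\cG}(\bS\cup U)$. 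I would also establish the ancestry bridge $\textit{An}_{\cM}(\bW)=\textit{An}_{\cG}(\bW\cup\bS)\setminus(\bS\cup\bL)$ for $\bW\subseteq\bV\setminus(\bS\cup\bL)$, which follows from the orientation rules and lets me translate collider activation between the two graphs. The key auxiliary fact I would isolate is that an inducing path relative to $\langle\bL,\bS\rangle$ realises an \emph{actual} $m$-connection for any admissible conditioning set: for every $\bW$ with $\bS\subseteq\bW\subseteq\bV\setminus\bL$ there is an $m$-connecting path between $U$ and $V$ given $\bW\setminus\{U,V\}$ whose endpoint marks agree with $q$, obtained by rerouting each collider of $q$ along a shortest directed path to $\bS$ or to an endpoint.

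For the direction ``separated in $\cM$ $\Rightarrow$ separated in $\cG$'', I argue by contraposition. Given an $m$-connecting walk $\pi=\langle X=V_0,\dots,V_k=Y\rangle$ in $\cM$ relative to $\bZ$, I replace each edge of $\pi$ by the $m$-connecting realisation of its witnessing inducing path (for the set $\bW=\bZ\cup\bS$) and splice these at the $V_i$. Interior non-colliders of each piece lie in $\bL$, hence outside $\bZ\cup\bS$, so they never block; interior colliders are activated because, after rerouting, they are ancestors of $\bS\subseteq\bZ\cup\bS$ or of an endpoint that is itself an ancestor of $\bZ\cup\bS$. At a junction $V_i$ I use the collider/non-collider status on $\pi$: if $V_i$ is a collider it lies in $\textit{An}_{\cM}(\bZ)$, hence in $\textit{An}_{\cG}(\bZ\cup\bS)$ by the bridge, and both spliced pieces carry arrowheads into $V_i$ (mark preservation), so $V_i$ is an activated collider of the spliced walk; if $V_i$ is a non-collider it avoids $\bZ$ (and $\bS$, being an $\cM$-node) and at least one incident piece carries a tail at $V_i$, so it stays a non-collider. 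The result is an $m$-connecting walk in $\cG$ relative to $\bZ\cup\bS$.

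For the converse ``separated in $\cG$ $\Rightarrow$ separated in $\cM$'', again by contraposition, I take an $m$-connecting walk $w$ in $\cG$ relative to $\bZ\cup\bS$ and project it onto $\cM$ by reading off the subsequence of nodes outside $\bL\cup\bS$. For consecutive retained nodes $U,V$ the intervening segment has all interior nodes in $\bL\cup\bS$; its interior non-colliders avoid $\bZ\cup\bS$ and therefore lie in $\bL$ (an $\bS$-node occurring as a non-collider would block $w$), while its interior colliders lie in $\textit{An}_{\cG}(\bZ\cup\bS)$. After rerouting these colliders so that the segment becomes an inducing path relative to $\langle\bL,\bS\rangle$, the first paragraph shows $U,V$ are adjacent in $\cM$ with matching marks, and the projected path inherits the collider pattern of $w$ at the retained nodes. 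The ancestry bridge then converts ``collider in $\textit{An}_{\cG}(\bZ\cup\bS)$'' into ``collider in $\textit{An}_{\cM}(\bZ)$'', so the projected path is $m$-connecting in $\cM$ relative to $\bZ$.

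The main obstacle in both directions is the same reconciliation of collider activation: in the lift, ensuring that interior colliders of the inducing pieces are genuinely ancestors of $\bZ\cup\bS$ (rather than only of the local endpoints), which is exactly what the rerouting step buys; in the projection, handling colliders that lie in $\bL$ and are ancestors of $\bZ$ but not of $\bS$ or of the bracketing retained nodes, which again forces a rerouting of $w$ before it can be read as a concatenation of inducing paths. A tidy way to organise this bookkeeping is to pass through the augmented-graph characterisation of \citep{Richardson2002}, under which conditioning on $\bS$ corresponds to retaining $\bS$ inside the vertex cut and marginalising $\bL$ corresponds to node elimination in the moral graph; the content of the lemma is then that these two operations applied to $(\cG_{\textit{Ant}_{\cG}(\cdot)})^a$ reproduce $(\cM_{\textit{Ant}_{\cM}(\cdot)})^a$. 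Alternatively one can induct on $|\bL|+|\bS|$, reducing to single-vertex marginalisation and single-vertex conditioning and verifying the local walk transformation each step, at the cost of first checking that the iterated one-step construction agrees with Definition~\ref{def:mags}.
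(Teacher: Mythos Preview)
The paper does not prove this lemma; it is simply stated with a citation to \citep{Richardson2002}, where the result is established (as part of their Theorem~4.18 and surrounding material). Your approach---lifting and projecting $m$-connecting walks via inducing paths, with the ancestry bridge $\textit{An}_{\cM}(\bW)=\textit{An}_{\cG}(\bW\cup\bS)\setminus(\bS\cup\bL)$ translating collider activation between the two graphs---is essentially the strategy used in that reference, so you have reconstructed the intended argument rather than found an alternative.

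One step that deserves more care than you give it is the assertion that the $m$-connecting realisation of an inducing path ``has endpoint marks that agree with $q$''. When a collider $C$ on the inducing path between $U$ and $V$ lies in $\textit{An}_{\cG}(V)\setminus\textit{An}_{\cG}(\bW)$, activating $C$ requires detouring down toward $V$, which can alter the structure of the realised walk near $V$. What is actually preserved is not the literal arrowhead on the realised path, but the ancestral fact $V\notin\textit{An}_{\cG}(\bS\cup U)$; combined with $V_i\in\textit{An}_{\cM}(\bZ)\subseteq\textit{An}_{\cG}(\bZ\cup\bS)$ when $V_i$ is a collider on the $\cM$-walk, this suffices to arrange the spliced walk so that $V_i$ is an activated collider. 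Your ``rerouting'' language is gesturing at exactly this, and the augmented-graph route you mention at the end is indeed the cleanest way to organise the bookkeeping, but the mark-preservation claim as stated is a slight oversimplification.
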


Selection bias
(i.e., $\bS \neq \emptyset$) substantially
complicates adjustment, and in fact nonparametric causal inference
in general \citep{Zhang2008}\footnote{
A counterexample is the graph $A \gets X \to Y$,
where we can safely assume that $A$ is the ancestor 
of a selection variable. A sufficient and 
necessary condition to recover a distribution $P(\by \mid \bx)$ from a distribution $P(\by \mid \bx,\bs)$ under selection bias is 
 $\bY \independent \bS \mid \bX$ \citep{Barenboim2014}, which is so restrictive
that most statisticians would probably not even speak of
``selection bias'' anymore in such a case.
}. 
Due to these limitations, 
we restrict ourselves to the case $\bS=\emptyset$ 
in the rest of this section.
Note however that recovery from selection bias is 
sometimes possible with additional population
data, and graphical conditions exist
to identify such cases \citep{Barenboim2014}.

We now extend the concept of adjustment to MAGs
in the usual way \citep{Maathuis2013}.

\begin{definition}[Adjustment in MAGs]
\label{def:adjustment:set:in:MAAGs}
Given a MAG $\cM=(\bV,\bE)$ and two variable sets $\bX,\bY \subseteq \bV$,
$\bZ \subseteq \bV$ is an adjustment set for $(\bX,\bY)$ in $\cM$ 
if for all DAGs $\cG=(\bV',\bE')$ for which $\cG[^\emptyset_\bL\ =\cM$ with $\bL = \bV'\setminus \bV$
the set $\bZ$ is an adjustment set for $(\bX,\bY)$ in $\cG$.
\end{definition}

This definition is equivalent to requiring that $P(\by \mid \textit{do}(\bx))$ is equal to $ \sum_{\bz} P(\by \mid \bx, \bz) P(\bz)$ for every probability distribution $P(\bv')$ consistent with a DAG $\cG=(\bV',\bE')$ for which $\cG[^\emptyset_\bL\ =\cM$ with $\bL = \bV'\setminus \bV$. If one was to extend the definition to include selection bias $ \bS $, one would need to give a requirement that holds for all DAGs $\cG=(\bV',\bE')$ with $\cG[^\bS_\bL\ =\cM$ and $\bL \cup \bS = \bV'\setminus \bV$. Thereby one can define $ P(\by \mid \textit{do}(\bx)) $ as $\sum_{\bz} P(\by \mid \bx, \bz, \bs ) P(\bz \mid \bs)$ , $  \sum_{\bz} P(\by \mid \bx, \bz, \bs ) P(\bz) $ or $ \sum_{\bs} \sum_{\bz} P(\by \mid \bx, \bz, \bs ) P(\bz, \bs)$. The last definition is equivalent to $ \bZ \cup \bS $ being an adjustment set in all these DAGs, but existing literature has used the second case\citep{Barenboim2014}. However, the first case captures the spirit of selection bias the most, since in the presence of selection bias the probability distribution is only known given some selected bias $ \bs $.  %

Notice that, due to the definition of adjustment in MAGs (Def.~\ref{def:adjustment:set:in:MAAGs}), in our considerations
we can restrict MAGs to mixed graphs consisting of only directed and bidirected edges.

\subsection{Adjustment amenability}

In this section we first identify a class of MAGs in which
adjustment is impossible because of causal ambiguities -- e.g.,
the simple MAG $X \to Y$ falls into this class, but the
larger MAG $A \to X \to Y$ does not. 

\begin{definition}[Visible edge \citep{Zhang2008}]
Given a MAG $\cM=(\bV,\bE)$, an edge 
$X \to D$ in $\bE$ is called \emph{visible} if in all DAGs 
$\cG=(\bV',\bE')$ with
$\cG[^\emptyset_\bL=\cM$ for some $\bL \subseteq \bV'$,
all $d$-connected
walks between $X$ and $D$ in $\cG$ that contain only
nodes of $\bL\cup X\cup D$
are directed paths.
 Otherwise $X \to D$ is said to be \emph{invisible}.
\end{definition}

Intuitively, an invisible directed edge $X \to D$ 
means that there may exist hidden confounding factors
between $X$ and $D$, which is guaranteed not to be the case if the edge is visible.

\begin{lemma}[Graphical conditions for edge visibility  \citep{Zhang2008}]
In a MAG  $\cM=(\bV,\bE)$, an edge $X \to D$ in $\bE$ is visible if and only if
there is a node $A$ not adjacent to $D$ where 
(1) $A \to X \in \bE$ or
$A \leftrightarrow X \in \bE$, or (2) there is a collider path
$A \leftrightarrow V_1 \leftrightarrow \ldots \leftrightarrow V_n 
\leftrightarrow X$ 
or $A \to V_1 \leftrightarrow \ldots \leftrightarrow V_n 
\leftrightarrow X$ where all $V_i$ are
parents of $D$. 
\label{lemma:pureedge}
\end{lemma}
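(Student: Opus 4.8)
The plan is to work through the inducing-path characterization of adjacency in MAGs: by the theory of Richardson and Spirtes \cite{Richardson2002}, two nodes are adjacent in $\cM=\cG[^\emptyset_\bL$ exactly when some inducing path relative to $\bL$ connects them in $\cG$, with edge marks governed by the ancestral relations. First I would recast visibility into a usable shape. Since the walks in the definition are $m$-connected by $\emptyset$, they are collider-free, and a collider-free walk between $X$ and $D$ whose first edge \emph{leaves} $X$ is forced to be a directed path; consequently $X\to D$ is \emph{invisible} precisely when some realization $\cG$ carries a collider-free, non-directed, latent-internal walk $w$ whose first edge is an arrowhead into $X$ and whose last edge is an arrowhead into $D$ (a ``confounding walk''). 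This equivalence is the bridge between the semantic definition and the syntactic conditions (1)--(2).

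For the direction ``condition $\Rightarrow$ visible'' I would argue the contrapositive. Assuming invisibility, fix the confounding walk $w$ in $\cG$. Given any node $A$ satisfying (1) or (2), the into-$X$ structure translates, edge by edge via the canonical expansion $\cC(\cM)$ and the inducing-path theorem, into an inducing path from $A$ to $X$ ending with an arrowhead at $X$: for (1) this is immediate from $A\to X$ or $A\leftrightarrow X$; for (2) the collider path $A\,(\to/\leftrightarrow)\,V_1\leftrightarrow\cdots\leftrightarrow V_n\leftrightarrow X$ expands to a path whose internal colliders are exactly $V_1,\dots,V_n$, each a parent, hence an ancestor, of $D$. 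Concatenating this segment with $w$ makes $X$ a collider, and since $X\in\textit{An}(D)$ via the edge $X\to D$, the join is an inducing path between $A$ and $D$ relative to $\bL$ whose internal non-colliders are all latent and whose internal colliders are all ancestors of $D$. Thus $A$ is adjacent to $D$ in $\cM$, so no node satisfying (1) or (2) can fail to be adjacent to $D$, i.e. the witness condition cannot hold.

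For the converse ``no witness $\Rightarrow$ invisible'' I would build a single bad realization. Starting from the canonical DAG $\cC(\cM)$ with latents $\bL$, I add a fresh latent node $L^*$ with edges $L^*\to X$ and $L^*\to D$, obtaining $\cG^*$ with latents $\bL\cup\{L^*\}$. The arc $X\gets L^*\to D$ is a confounding walk, so $X\to D$ is invisible provided $\cG^*$ still projects to $\cM$. Preservation of ancestral relations among observed nodes is easy (the new arcs create no directed path among observed nodes beyond the already-present $X\to D$), so no surviving edge changes type. The crux is that no \emph{new} adjacency appears: any new inducing path must route through $L^*$, hence through $X\gets L^*\to D$, so it has the shape $A\cdots X\gets L^*\to D$ with $X$ a collider lying in $\textit{An}(D)$, which forces one endpoint to be $D$ and the other endpoint $A$ to carry exactly a case-(1) or case-(2) configuration into $X$; by hypothesis every such $A$ is already adjacent to $D$.

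The main obstacle is this last verification, and in particular ruling out the \emph{symmetric} routings through $L^*$ that enter via $D$, i.e. would-be inducing paths $B\cdots D\gets L^*\to X$ that might make some $B$ newly adjacent to $X$. Here I would exploit the directedness of $X\to D$: such a path needs $D$ to be a collider that is an ancestor of an endpoint, and since $D\notin\textit{An}(X)$ this forces $D\in\textit{An}(B)$, which together with $X\in\textit{An}(D)$ and the acyclicity/ancestral constraints of a MAG contradicts the arrowhead the path demands at $D$. Making this case analysis exhaustive---cataloguing every inducing path the inserted latent can create and matching each either to conditions (1)--(2) or to an ancestry contradiction---is the delicate part; the rest is routine given the inducing-path machinery of \cite{Richardson2002} and \cite{Zhang2008}.
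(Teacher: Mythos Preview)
First, note that the paper does not prove this lemma: it is imported from \cite{Zhang2008} and used as a black box, so there is no ``paper's own proof'' to compare against. Your proposal must stand on its own, and its overall architecture—inducing-path concatenation for one direction, and augmenting the canonical DAG $\cC(\cM)$ by a fresh latent confounder $L^*$ for the other—is indeed the standard approach.

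Your contrapositive for ``witness $\Rightarrow$ visible'' is fine: given invisibility, the collider path in $\cM$ expands in the offending DAG $\cG$ to a chain of inducing segments whose internal colliders all lie in $\textit{An}(D)$ (each $V_i$ is a parent of $D$ by hypothesis, and $X\in\textit{An}(D)$), and splicing in the confounding walk produces an inducing path $A\cdots D$, forcing adjacency.

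The real gap is in the converse. After inserting $L^*$ and looking at a putative new inducing path $A\cdots X\gets L^*\to D$, you correctly obtain in $\cM$ a collider path $A\,(\to/\leftrightarrow)\,V_1\leftrightarrow\cdots\leftrightarrow V_n\leftrightarrow X$ with each $V_i\in\textit{An}(\{A,D\})$. But you then assert that $A$ ``carries exactly a case-(1) or case-(2) configuration,'' which requires every $V_i$ to be a \emph{parent} of $D$. That does not follow: a $V_i$ may be a non-parent ancestor of $D$, or an ancestor of $A$ only. Concretely, take $\cM$ with edges $A\leftrightarrow V,\ V\leftrightarrow X,\ X\to D,\ V\to W,\ W\to D$. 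Here $A$ is not adjacent to $D$, and adding $L^*$ to $\cC(\cM)$ creates the inducing path $A\gets L_1\to V\gets L_2\to X\gets L^*\to D$ (since $V\in\textit{An}(D)$ via $W$); yet $A$ satisfies neither (1) nor (2), because $V$ is a grandparent of $D$, not a parent. The ``no witness'' hypothesis is not actually violated in this MAG only because $V$ \emph{itself} is a witness for condition~(1). So the correct argument cannot conclude that the endpoint $A$ satisfies (1)/(2); it must instead show that if the $V_i$ are not all parents of $D$, then some node along the path (e.g.\ the last $V_j$ that fails to be a parent) is itself a non-adjacent-to-$D$ witness—and that step needs its own ancestry/adjacency case analysis. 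Your closing remark that the exhaustive case analysis ``is the delicate part'' is exactly right; as written, the sketch stops just before that content begins.
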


\begin{definition}
We call a MAG $\cM=(\bV,\bE)$ \emph{adjustment amenable} w.r.t.
$\bX,\bY \subseteq \bV$ if all
proper causal paths from $\bX$ to $\bY$ start with a 
visible directed edge.
\end{definition}

\begin{lemma}
If a MAG $\cM=(\bV,\bE)$ is not adjustment amenable w.r.t.
$\bX,\bY \subseteq \bV$ then there exists no valid adjustment set 
for $(\bX,\bY)$ in $\cM$.
\label{lemma:amenable}
\end{lemma}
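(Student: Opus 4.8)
The plan is to work directly from the \emph{definition} of adjustment in MAGs (Definition~\ref{def:adjustment:set:in:MAAGs}): a set $\bZ\subseteq\bV$ can be an adjustment for $(\bX,\bY)$ in $\cM$ only if it is an adjustment in \emph{every} DAG $\cG$ with $\cG[^\emptyset_\bL=\cM$. Hence, to show that \emph{no} adjustment set exists, it suffices to prove that for every candidate $\bZ\subseteq\bV\setminus(\bX\cup\bY)$ there is at least one representing DAG $\cG$ in which $\bZ$ fails. By the soundness and completeness of the adjustment criterion for DAGs (Definition~\ref{def:ac:general}), made constructive in Theorem~\ref{th:ac:equivalence}, ``$\bZ$ is not an adjustment in $\cG$'' is equivalent to ``$\bZ$ violates condition $(a)$ or $(b)$ of the AC in $\cG$,'' so it is enough to exhibit such a violation. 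Since $\cM$ is not adjustment amenable, I fix a proper causal path $\pi = X\to D\to\cdots\to Y$ in $\cM$ (with $X\in\bX$, $Y\in\bY$) whose first edge $X\to D$ is \emph{invisible}; because $\pi$ is proper, all of $D,\dots,Y$ are non-exposure nodes.

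\textbf{Building a confounded representation.} First I would turn invisibility into an explicit confounder. By the very definition of an invisible edge there is a representing DAG $\cG$ with latent set $\bL=\bV'\setminus\bV$ carrying a $d$-connected walk between $X$ and $D$ that uses only nodes of $\bL\cup\{X,D\}$ and is \emph{not} a directed path. A $d$-connected walk (given the empty set) has no colliders, so in the acyclic $\cG$ it must be a fork $X\leftarrow\cdots\leftarrow L\to\cdots\to D$ whose peak $L$ and all interior nodes lie in $\bL$; the reverse orientation is excluded since $X\in\textit{An}(D)$ in $\cM$ forbids $D\in\textit{An}(X)$ in $\cG$. Thus $L$ is a latent common ancestor of $X$ and $D$. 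Moreover $\pi$ lifts to $X\in\textit{An}(Y)$ in $\cG$, in particular $D\in\textit{An}(Y)$, so there is a directed path $\delta$ from $D$ to $Y$ in $\cG$. Concatenating the descending arm $L\to\cdots\to D$ with $\delta$ gives a directed path $F$ from $L$ to $Y$, while the ascending arm gives a directed path $B$ from $L$ to $X$ through latents only. Neither endpoint lies on the opposite arm ($X\notin F$ because $X$ is not a descendant of $D$; $Y\notin B$ because $B$ has latent interior and $Y\neq X$), so choosing an innermost confluence $N$ of $B$ and $F$ and taking the induced disjoint subpaths yields a genuine \emph{simple} path $p^{*}\colon X\leftarrow\cdots\leftarrow N\to\cdots\to Y$ with latent fork node $N$. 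This $p^{*}$ begins with an arrowhead into $X$, hence is a non-causal path from $\bX$ to $\bY$, and the whole ascending arm (including $N$) is latent.

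\textbf{Case split and conclusion.} Now fix an arbitrary $\bZ$. Since $N$ and the interior of the ascending arm are latent, they cannot lie in $\bZ\subseteq\bV$, so the only way $\bZ$ can block $p^{*}$ is by containing a node $U$ on its descending (causal) arm, whose observable nodes all lie on $\delta$. If $\bZ$ contains no such $U$, then $p^{*}$ has no collider and no interior node in $\bZ$, so it is an open, proper, non-causal path from $\bX$ to $\bY$ in $\cG$, violating AC condition $(b)$. Otherwise some $U\in\bZ$ lies on $\delta$; then $U$ is a non-exposure node on the proper causal path $X\to\cdots\to D\to\cdots\to U\to\cdots\to Y$, and $U$ is its own descendant in $\cG_{\overline{\bX}}$, so $\bZ$ violates AC condition $(a)$. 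Either way $\bZ$ is not an adjustment in $\cG$, hence not in $\cM$; as $\bZ$ was arbitrary, no adjustment set exists.

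\textbf{Main obstacle.} The delicate point is extracting a bona fide \emph{$\bX$-proper} non-causal path from the concatenated walk. Two complications arise: the lifted path $\delta$ may revisit nodes of the confounding fork, and it may pass through a further exposure node $X'\in\bX$ (an ``$\bX$-loop''). The first is removed by taking $\delta$ to be a shortest directed $D$--$Y$ path and by the shortest-subpath surgery used to select $N$. For the second, one must re-anchor at the last exposure node occurring on the assembled walk: if it is the original $X$ we are in the situation above, and if it is some $X'$ reached along the causal arm then the tail from $X'$ to $Y$ is a proper causal path whose non-exposure nodes are again caught by the condition-$(a)$ case. Making this bookkeeping airtight---so that every candidate $\bZ$ is trapped by exactly one of the two AC violations irrespective of the internal ancestral structure of $\bX$---is where the argument needs the most care, and it is precisely the place where the latent confounder supplied by invisibility (cf.\ Lemma~\ref{lemma:pureedge}) does the essential work.
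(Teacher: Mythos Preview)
Your approach is essentially the paper's: exploit invisibility to exhibit a represented DAG $\cG$ with a latent confounder $L$ between $X$ and $D$, then argue that any $\bZ$ either leaves the resulting back-door path open (violating AC~$(b)$) or must contain a descendant of $D$ (violating AC~$(a)$). The paper's own proof is only a two-sentence sketch, so your expansion is in the right spirit.

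There is, however, a genuine gap in your case split, precisely at the ``main obstacle'' you flag. Suppose the only directed $D$--$Y$ paths in $\cG$ pass through some $X'\in\bX$ (this can happen: the directed paths in $\cG$ underlying the edges of $\pi$ in $\cM$ need not stay in $\bL$). Then your $p^{*}$ is not $\bX$-proper, and worse, $D$ need not lie on \emph{any} proper causal path in $\cG$---for instance, in a DAG with $X\to D\to X'\to Y$ plus latents $L\to X,\,L\to D$ and $L'\to D,\,L'\to Y$, the only proper causal path is $X'\to Y$, so $\causpaths=\{Y\}$ and $D\notin\causpaths$. Hence putting $D$ into $\bZ$ does \emph{not} violate AC~$(a)$, contrary to your claim; and for $\bZ=\emptyset$, your $p^{*}=X\leftarrow L\to D\to X'\to Y$ is open but not proper, so it does not violate AC~$(b)$ either. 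Your re-anchoring fix (``take the tail from $X'$'') only handles the $(a)$ side; it gives you no proper \emph{non-causal} path when $\bZ$ avoids the tail.

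The lemma still holds in such cases, but for a reason your argument does not capture: the adjacency $D\to Y$ (or more generally the edges of $\pi$) in $\cM$ forces an \emph{inducing path} structure in $\cG$, which supplies an alternative proper non-causal path (in the example above, $X'\leftarrow D\leftarrow L'\to Y$, or $X\to D\leftarrow L'\to Y$ once $D\in\bZ$). To close the gap you would need to invoke this inducing-path machinery rather than relying on an arbitrary directed path~$\delta$.
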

\begin{proof}
If the first edge $X \to D$ on some causal path to $\bY$ 
in $\cM$ is 
not visible, then there exists a consistent DAG $\cG$ where there is
a non-causal path between $X$ and $\bY$ via $D$ that could
only be blocked in $\cM$ by conditioning on $D$ or some of its
descendants. But such conditioning would violate the adjustment
criterion in $\cG$.
\end{proof}

\highlightrevision{r2c65}{Note that adjustment amenability does not yet guarantee the existence of an adjustment set; the smallest example is the MAG $X \gets Y$, which is amenable but admits no adjustment set.}

Let $N(V)$ denote all nodes adjacent to $V$, 
and $\textit{Sp}(V)$ denote all spouses of $V$, i.e.,
nodes $W$ such that $W \leftrightarrow V \in \bE$.
The adjustment amenability of a graph $\cM$ w.r.t 
sets $\bX, \bY$ can be tested with the following algorithm:

\newcommand{\setvisited}{\bC}
\newcommand{\setcolliders}{\bA}
\begin{algo}{TestAdjustmentAmenability}{$\cM, \bX, \bY$}{\label{algo:isadjustmentamenable}}{10cm}
\For {all $D$ in $\textit{Ch}(\bX) \cap \pcauspaths$}

\State{$\setvisited \gets \emptyset$; $\setcolliders \gets \emptyset$}

\Function{check}{V}
\If{$\setvisited[V]$}  {\Return $\setcolliders[V]$}\EndIf
\State{$\setvisited[V] \gets$ true}
\State{$\setcolliders[V] \gets ((\textit{Pa}(V) \cup \textit{Sp}(V)) \setminus N(D) \neq \emptyset)$}
\For {all $W \in \textit{Sp}(V) \cap \textit{Pa}(D)$ }
\If{\textsc{check}$(W)$} $\setcolliders[V] \gets $ true \EndIf
\EndFor
\Return $\setcolliders[V]$
\EndFunction

\For {all $X$ in $\bX \cap \textit{Pa}(D)$ }
\If{$\neg\textsc{check}(X)$} {\Return false}\EndIf
 
\EndFor
\EndFor
\end{algo}

\begin{analal}
\highlightrevision{r2c66}{
The algorithm checks for visibility of every edge $X\to D$ by trying to find 
a node $Z$ not connected to $D$ but connected to $X$ via a collider path through
the parents of $D$, according to the conditions of Lemma~\ref{lemma:pureedge};
note that condition (1) of Lemma~\ref{lemma:pureedge} is identical 
	to condition (2) with an empty collider path.}
Since $\textsc{check}$ performs a depth-first-search
by checking every node only once and then continuing to 
its neighbors, each iteration of the outer for-loop
in the algorithm runs in linear 
time $\cO(n+m)$. Therefore, the entire algorithm runs in $\cO(k(n+m))$
where $k \leq |Ch(\bX)|$.
\end{analal}

\subsection{Adjustment criterion for MAGs}

We now show that the adjustment criterion for DAGs 
generalizes to adjustment amenable MAGs. 
The adjustment criterion and the constructive
back-door criterion are defined 
like their DAG counterparts (Definitions~\ref{def:ac:general} 
and~\ref{def:adj:graph:}), replacing “DAG” with “MAG” and $d$- 
with $m$-separation for the latter.

\begin{definition}[Adjustment criterion]\label{def:ac:general:mag}
Let $\cM = (\bV,\bE)$ be a MAG, and $\bX,\bY,\bZ \subseteq \bV$ 
be pairwise disjoint subsets of variables. 
The set $\bZ$ satisfies the adjustment criterion relative to $(\bX, \bY)$ in $\cM$ if 
\begin{enumerate}
   \item[$(a)$] 
    no element in $\bZ$ is a descendant in $\cM$ %
    of any $W \in \bV \setminus \bX$ 
   which lies on a proper causal path from $\bX$ to $\bY$
   and 
   \item[$(b)$]
   all proper non-causal paths in $\cM$ from $\bX$ to $\bY$ are blocked by $\bZ$.
\end{enumerate}
\end{definition}

Note that the above definition uses “descendants in $ \cM $” instead “descendants in $ \cM_{\overline{\bX}} $” as Definition \ref{def:ac:general}. 
However, Lemma \ref{lem:cbc:variants} implies that the conditions are equivalent.

\begin{definition}[Proper back-door graph] \label{def:adj:graph:mag}
Let $\cM = (\bV,\bE)$ be a MAG, and $\bX,\bY\subseteq \bV$ 
be pairwise disjoint subsets of variables. 
The \emph{proper back-door graph}, denoted as  $\cbdg$,
is obtained from $\cM$ by removing 
the first edge of every proper causal path from $\bX$ to $\bY$.
\end{definition}


\begin{figure}
\begin{center}
\begin{tabular}{ccccc}
\begin{tikzpicture}[yscale=0.9,xscale=0.9]
\node (g) at (-0.35,1.25) {$\cM_1$:};
\node (x1) at (0,0) {$X$};
\node (v1) at (1,0) {$V$};
\node (y1) at (2,0) {$Y$};
\node (z1) at (1,1) {$Z$};

\draw [red,thick,->] (x1) -- (v1);
\draw [->] (v1) -- (y1);
\draw [->] (z1) -- (y1);
\end{tikzpicture}
&
\begin{tikzpicture}[yscale=0.9,xscale=0.9]
\node (g) at (-0.35,1.25) {$\cM_2$:};
\node (x1) at (0,0) {$X$};
\node (v1) at (1,0) {$V$};
\node (y1) at (2,0) {$Y$};
\node[adjusted] (z1) at (1,1) {$Z$};

\draw [->] (x1) -- (v1);
\draw [->] (v1) -- (y1);
\draw [->] (z1) -- (y1);
\draw [->] (z1) -- (x1);
\end{tikzpicture}
&
\begin{tikzpicture}[yscale=0.9,xscale=0.9]
\node (g) at (-0.35,1.25) {$\cM_3$:};
\node (x1) at (0,0) {$X$};
\node (v1) at (1,0) {$V$};
\node (y1) at (2,0) {$Y$};
\node (z1) at (1,1) {$Z$};

\draw [red,thick,->] (x1) -- (v1);
\draw [->] (v1) -- (y1);
\draw [->] (z1) -- (y1);
\draw [->] (z1) -- (x1);
\draw [->] (z1) -- (v1);
\end{tikzpicture}
&
\begin{tikzpicture}[yscale=0.9,xscale=0.9]
\node (g) at (-0.35,1.25) {$\cM_4$:};
\node (x1) at (0,0) {$X$};
\node (v1) at (1,0) {$V$};
\node (y1) at (2,0) {$Y$};
\node (z1) at (1,1) {$W$};

\draw [->] (x1) -- (v1);
\draw [->] (v1) -- (y1);
\draw [<->] (z1) -- (y1);
\draw [<->] (z1) -- (x1);
 
\end{tikzpicture}
&
\begin{tikzpicture}[yscale=0.9,xscale=0.9]
\node (g) at (-0.35,1.25) {$\cM_5$:};
\node (x1) at (0,0) {$X_2$};
\node (x2) at (1,0) {$X_1$};
\node (v1) at (2,0) {$V$};
\node (y1) at (3,0) {$Y$};
\node[adjusted] (z1) at (2,1) {$Z$};

\draw [->] (x1) -- (x2);
\draw [->] (x2) -- (v1);
\draw [->] (v1) -- (y1);
\draw [->] (z1) -- (y1);
\draw [->] (z1) -- (x1);
\draw [->] (z1) -- (x2);
\draw [->] (z1) -- (v1);
 
\end{tikzpicture} 
\end{tabular}
\end{center}
\caption{Five MAGs in which we search for an adjustment relative to  $(X,Y)$ or $(\{X_1,X_2\},Y)$. $\cM_1$ and $\cM_3$ are not adjustment amenable, since the edge $ X \to V$ is not visible, so no adjustment exists. In the other three MAGs the edge is visible, due to the node $Z$ in $\cM_2$, the node $W$ in $\cM_4$ and the node $X_2$ in $\cM_5$. The only valid adjustment in $\cM_2$ and $\cM_5$ is $\{Z\}$, and in $\cM_4$ only the empty set is a valid adjustment. If $\cM_1$ and $\cM_3$ were DAGs, the set $\{Z\}$ would be an adjustment in each.} 
\label{fig:CBCMAGexample}
\end{figure}

\begin{definition}[Constructive back-door criterion (CBC)]\label{def:ac:general:bdc:mag}
Let $\cM = (\bV,\bE)$ be a MAG, and 
let $\bX,\bY,\bZ \subseteq \bV$ be pairwise disjoint subsets of variables. 
The set $\bZ$ satisfies the 
\emph{constructive back-door criterion}
relative to $(\bX, \bY)$ in $\cM$  if
\begin{enumerate}
  \item[$(a)$] $\bZ \subseteq \bV \setminus \causpaths$ and 
  \item[$(b)$] 
    $\bZ$ $m$-separates $\bX$ and $\bY$ in the proper back-door graph 
      $\cbdg$.
\end{enumerate}
\end{definition}

The main result of this section (Theorem~\ref{thm:agadjust}) shows that 
for any adjustment amenable MAG and node sets $\bX$ and $\bY$, a set $\bZ$ is an adjustment relative to 
$(\bX,\bY)$ if and only if the CBC is satisfied.
Figure~\ref{fig:CBCMAGexample} shows some examples. 
Similarly to DAGs, we provide  a generalization of the CBC for MAGs allowing parametrization of the criterion:

\begin{definition}[Parametrization of the Constructive back-door criterion (CBC($\bA,\bB,\bC $))]\label{def:ac:general:bdc:mag:prime}
Let $\cM = (\bV,\bE)$ be a MAG, and 
let $\bX,\bY,\bZ \subseteq \bV$ be pairwise disjoint subsets of variables. 
Let $\bA \subseteq \bX \cup \bY $, $\bB \subseteq \bX $, $\bC \subseteq \textit{De}(\pcauspaths)$.
The set $\bZ$ satisfies the CBC($\bA,\bB,\bC $) 
relative to $(\bX, \bY)$ in $\cM$  if 
\begin{enumerate} 
  \item[$(a)$] $\bZ \subseteq \bV \setminus De_{\overline{\bA}\underline{\bB}}(\pcauspaths)$, and
\item[$(b)$] 
    $\bZ$ $d$-separates $\bX$ and $\bY$ in the graph $\pbdgmod := (\bV,\bE \setminus (\bX \to (\pcauspaths \cup \bC) ))$.
\end{enumerate}
\end{definition}

With these definitions we are ready to give:

\begin{theorem}
Given an adjustment amenable
MAG $\cM=(\bV,\bE)$ and three disjoint 
node sets $\bX,\bY,\bZ \subseteq \bV$,
the following statements are equivalent:
\begin{enumerate}
\item[$(i)$] $\bZ$ is an adjustment relative to $(\bX,\bY)$ in $\cM$.
 \item[$(ii)$] $\bZ$ fulfills the adjustment criterion (AC) w.r.t. $(\bX,\bY)$ in $\cM$.
\item[$(iii)$] $\bZ$ fulfills the constructive back-door criterion (CBC) w.r.t. $(\bX,\bY)$ in $\cM$.
\item[$(iv)$] $\bZ$ fulfills a variant of constructive back-door criterion (CBC(\bA,\bB,\bC)) w.r.t. $(\bX,\bY)$ in $\cM$
for  $\bA \subseteq \bX \cup \bY $, $\bB \subseteq \bX $, $\bC \subseteq \textit{De}(\pcauspaths)$. 
\end{enumerate}
\label{thm:agadjust}
\end{theorem}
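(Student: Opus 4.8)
The plan is to split the four statements into the three graphical criteria $(ii)$, $(iii)$, $(iv)$, whose mutual equivalence I obtain by lifting the corresponding DAG results, and the semantic statement $(i)$, whose equivalence with $(iii)$ carries the real content. For $(ii)\Leftrightarrow(iii)$ and $(iii)\Leftrightarrow(iv)$ I would re-run the proofs of Theorem~\ref{th:ac:equivalence} and Lemma~\ref{lem:cbc:variants} with $d$-separation replaced by $m$-separation and ``DAG'' by ``MAG''. Those proofs argue only about proper paths, colliders, ancestry, and the removal of first edges of proper causal paths, and each step has a verbatim $m$-separation analogue. The one point needing attention is that the MAG adjustment criterion (Definition~\ref{def:ac:general:mag}) forbids descendants ``in $\cM$'' rather than ``in $\cM_{\overline{\bX}}$''; this is exactly the discrepancy the parametrization handles, so invoking the MAG analogue of Lemma~\ref{lem:cbc:variants} with $\bA=\bX$ reconciles the two formulations and feeds the $(ii)\Leftrightarrow(iii)$ step.

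For the core equivalence I would route $(i)\Rightarrow(iii)$ through the canonical DAG $\cC(\cM)$ together with the preservation-of-separating-sets lemma. If $\bZ$ is an adjustment in $\cM$, then by Definition~\ref{def:adjustment:set:in:MAAGs} it is an adjustment in the particular representing DAG $\cC(\cM)$, hence satisfies the CBC there by Theorem~\ref{th:ac:equivalence}. To transfer this back to $\cM$, I would use that the latent nodes introduced by $\cC$ are sources attached only to the endpoints of bidirected edges: they never lie on a directed path between observed nodes and are never descendants of observed nodes, so $\pcauspaths$ and $\causpaths$ computed in $\cM$ and in $\cC(\cM)$ agree, which yields CBC condition $(a)$. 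Moreover, removing first edges of proper causal paths touches only directed edges and therefore commutes with the canonical construction, so the canonical DAG of the proper back-door graph of $\cM$ coincides with the proper back-door graph of $\cC(\cM)$; the preservation lemma then converts $d$-separation of $\bX,\bY$ in the latter into $m$-separation in the former, giving CBC condition $(b)$.

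For $(iii)\Rightarrow(i)$ I would show that the CBC in $\cM$ forces the adjustment criterion in every representing DAG $\cG$ with $\cG[^{\emptyset}_{\bL}=\cM$, after which DAG soundness (Theorem~\ref{th:ac:equivalence}) gives that $\bZ$ is an adjustment in each such $\cG$, which is statement $(i)$. Two facts support this: the preservation lemma with $\bS=\emptyset$ makes $m$-separation in $\cM$ identical to $d$-separation in $\cG$ for every representing $\cG$, and ancestral relations among observed nodes coincide in $\cG$ and $\cM$. The delicate point --- and the step I expect to be the main obstacle --- is that proper causal paths of $\cG$ may thread through latent nodes, so I must argue that the biasing paths of $\cG$ are precisely the proper non-causal paths already visible in $\cM$. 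This is where amenability is essential: visibility of the first edge $X\to D$ of each proper causal path (Lemma~\ref{lemma:pureedge}) guarantees that in no representing DAG can a latent node create a hidden confounding route out of $X$ that masquerades as a causal edge of $\cM$, so every biasing path of $\cG$ projects to a proper non-causal path of $\cM$ that $\bZ$ blocks by CBC$(b)$, while CBC$(a)$ forbids exactly the descendants forbidden by the DAG criterion. The substance of the argument is thus to establish that amenability makes the single MAG-level criterion sufficient for the entire infinite family of representing DAGs simultaneously --- in particular ruling out any biasing path routed through latents --- the converse of the necessity already furnished by Lemma~\ref{lemma:amenable}. Once $(i)\Leftrightarrow(iii)$ is secured, the graphical equivalences of the first paragraph close the loop and deliver $(iv)\Leftrightarrow(i)$ as well.
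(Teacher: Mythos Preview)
Your treatment of $(ii)\Leftrightarrow(iii)\Leftrightarrow(iv)$ and of the easy direction $(i)\Rightarrow(iii)$ via the canonical DAG $\cC(\cM)$ is correct and matches the paper's approach. The gap is in $(iii)\Rightarrow(i)$, specifically in the sentence ``CBC$(a)$ forbids exactly the descendants forbidden by the DAG criterion.'' This is false for a general representing DAG $\cG$. A latent node $W_1$ can lie on a proper causal path of $\cG$ and have an observed descendant $Z$, while after marginalizing $W_1$ no observed node on the corresponding proper causal path of $\cM$ has $Z$ as a descendant. Concretely, take $\cG$ with edges $A'\to X\to W_1\to W_2\to Y$ and $W_1\to Z$, with $\bL=\{W_1\}$; then $\cM$ is adjustment amenable (the edge $X\to W_2$ is visible via $A'$), yet $Z\in\causpaths$ in $\cG$ while $Z\notin\causpaths$ in $\cM$, since in $\cM$ one only has $Z\leftrightarrow W_2$. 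So CBC$(a)$ in $\cM$ does not imply AC$(a)$ in every representing $\cG$, and your plan to transfer conditions $(a)$ and $(b)$ separately cannot succeed.

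The paper's proof shows that the two conditions interleave: when $Z\in\bZ$ lies in $\causpaths$ in $\cG$ but not in $\cM$, one exhibits an $m$-connected proper non-causal path in $\cM$ through $Z$ (in the example, $X\to Z\leftrightarrow W_2\to Y$), so that CBC$(b)$ fails in $\cM$ instead of CBC$(a)$. Your sketch of the biasing-path side has a similar over-simplification. A $d$-connected proper non-causal walk in $\cG$, once projected to $\cM$ and shortcut past $\bZ$-nodes, can become a \emph{causal} path in $\cM$; the paper's auxiliary machinery on inducing paths and inducing $\bZ$-trails (Lemmas~\ref{lemma:auxiliary:walkpathconversion}--\ref{lemma:inducing:adjacent:when:visible}) is needed to show that whenever this collapse to a causal path occurs, its first edge $X\to D$ cannot be visible, contradicting amenability. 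You are right that amenability enters in this direction, but the mechanism is not that biasing paths of $\cG$ always project to proper non-causal paths of $\cM$ --- it is that when they fail to do so, visibility is violated.
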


Before proving the theorem, let us recall the concept of an  \emph{inducing path},
which we will use in our proof and 
the further analysis.  

\begin{definition}[Inducing path \citep{Richardson2002}]
Let $\cG=(\bV,\bE)$ be  a DAG and 
$\bZ,\bL \subseteq \bV$ be disjoint.
A path $\pi=V_1,\ldots,V_{n+1}$ in $\cG$ is called
\emph{inducing} with respect to $\bZ$ and $\bL$ 
if all non-colliders on $\pi$ 
except $V_1$ and $V_{n+1}$ are in $\bL$ 
and all colliders on $\pi$ are in 
$\textit{An}(\{V_1,V_{n+1}\}\cup\bZ)$.
\label{def:inducing:path}
\end{definition}

We will use also the following notion: 
\begin{definition}[Inducing $\bZ$-trail]
Let $\cG=(\bV,\bE)$ be  a DAG and $\bZ,\bL \subseteq \bV$ be disjoint.
\highlightrevision{r2c77}{
Let $\pi=V_1,\ldots,V_{n+1}$ be a path in $\cG[^\emptyset_\bL$ 
such that $V_2,\ldots,V_n \in \bZ$, $V_1,V_{n+1} \notin \bZ$, 
for each $i \in \{1,\ldots,n\}$, 
there is an inducing path w.r.t. $\emptyset,\bL$ 
linking $V_i, V_{i+1}$, and for each $i \in \{2,\ldots,n\}$, 
these inducing paths have arrowheads at $V_i$.}
Then $\pi$ is called an inducing $\bZ$-trail.
\label{def:inducing:trail}
\end{definition}

\begin{proof}[Proof of Theorem~\ref{thm:agadjust}]
The equivalence of $(ii)$, $(iii)$ and $(iv)$ is established by
observing that the proofs of Theorem~\ref{th:ac:equivalence} and Lemma~\ref{lem:cbc:variants}
generalize to $m$-separation. Below we establish
equivalence of $(i)$ and $(ii)$.

$\neg (ii) \Rightarrow \neg (i)$: If
$\bZ$ violates the adjustment criterion in $\cM$, it
does so in the canonical DAG $\cC(\cM)$, and thus 
 is not an adjustment in $\cM$.

$\neg (i) \Rightarrow \neg (ii)$: In the proof we rely on properties from
Lemmas~\ref{lemma:auxiliary:walkpathconversion}, \ref{lemma:auxiliary:truncatemaxsub}, and \ref{lemma:inducing:adjacent:when:visible}
presented in Subsection \ref{sec:aux:lemmas}.

Let $\cG$ be a DAG, with 
$\cG[^\emptyset_\bL=\cM$, in which $\bZ$ violates the AC. We show
that $(a)$ if $\bZ \cap \causpaths\neq \emptyset$ in $\cG$
then $\bZ \cap \causpaths \neq \emptyset$ in $\cM$ 
as well, or there exists a proper non-causal 
path in $\cM$ that cannot be $m$-separated; and 
$(b)$ if $\bZ \cap \causpaths = \emptyset$ in $\cG$ 
and $\bZ$ $d$-connects a proper
non-causal path in $\cG$, then it
$m$-connects a proper non-causal path in $\cM$.

$(a)$ 
Suppose that in $\cG$, $\bZ$ contains a node $Z$ in $\causpaths$,
and let $\bW = \textit{PCP}(\bX, \bY)\cap\textit{An}(Z)$.
If $\cM$ still contains at least 
one node $W_1  \in  \bW$,
then $W_1$ lies on a proper causal path in $\cM$ and 
$Z$ is a descendant of $W_1$ in 
$\cM$.
Otherwise, $\cM$ must contain a node $W_2 \in 
\textit{PCP}_\cG(\bX, \bY) \setminus 
\textit{An}(Z)$ (possibly 
$W_2 \in \bY$) such that $W_2 \leftrightarrow A$,
$X \to W_2$, and $X \to A$ are edges in $\cM$, where
$A \in \textit{An}(Z)$
(possibly $A=Z$; see Figure~\ref{fig:agadjust}). Then $\cM$ contains
an $m$-connected proper non-causal
path $X \to A \leftrightarrow W_2 \to \ldots \to Y$.


\begin{figure}
\begin{center}
\begin{tabular}{cc}
\begin{tikzpicture}[xscale=1.2]
\node (g) at (-1.5,-0.5) {DAG $\cG$:};
\node (x) at (0,0) {$X$};
\node (m2) at (1,-1) {$W_1$};
\node (m3) at (2,-1) {$W_2$};
\node (y) at (2,0) {$Y$};
\node [adjusted] (w) at (0,-1) {$Z$};
\draw [->] (x) -- (m2);
\draw [->] (m2) -- (m3);
\draw [->] (m3) -- (y);
\draw [->] (m2) -- (w);
\end{tikzpicture}\hspace*{15mm}
& 
\begin{tikzpicture}[xscale=1.2]
\node (g) at (-1.5,-0.5) {MAG $\cM=\cG[^\emptyset_{W_1}$:};
\node (x) at (0,0) {$X$};
\node (m3) at (2,-1) {$W_2$};
\node (y) at (2,0) {$Y$};
\node [adjusted] (w) at (0,-1) {$Z$};
\draw [->] (x) -- (m3);
\draw [->] (x) -- (w);
\draw [->] (m3) -- (y);
\draw [<->] (m3) -- (w);
\end{tikzpicture}
\end{tabular}
\end{center}
\caption{
Illustration of the case in the proof
of Theorem \ref{thm:agadjust} where $Z$ descends
from $W_1$ which in a DAG $\cG$ is on a proper causal path from $X$ to $Y$,
but is not a descendant of a node on a proper 
causal path from $X$ to $Y$ in the MAG $\cM$
after marginalizing $W_1$.
In such cases, conditioning on $Z$ will $m$-connect
$X$ and $Y$ in $\cM$ via a proper non-causal path.}
\label{fig:agadjust}
\end{figure}

$(b)$ Suppose that in $\cG$, $\bZ\cap\causpaths=\emptyset$,
and there exists an open proper non-causal path from $\bX$
to $\bY$. Then there must also
be a proper non-causal \emph{walk} $w_\cG$ from some $X \in \bX$ 
to some $Y \in \bY$ (Lemma~\ref{lemma:auxiliary:walkpathconversion}),
which is $d$-connected by $\bZ$ in $\cG$.  
Let $w_\cM$ denote the subsequence of $w_\cG$ formed 
by nodes in $\cM$. It includes all colliders on $w_\cG$, because $\bZ \cap \bL=\emptyset$. 
The sequence $w_\cM$ is a walk in $\cM$, but is not
necessarily $m$-connected by $\bZ$;
all colliders on $w_\cM$ are in $\bZ$ because every non-$\bZ$ 
must be a parent of at least one of its neighbors, 
but there might be subsequences $U,Z_1,\ldots,Z_k,V$ on $w_\cM$ 
where all $Z_i \in \bZ$ but some of the $Z_i$ are not colliders
on $w_\cM$. However, then we can form from $w_\cM$ an $m$-connected
walk by bypassing some sequences of $\bZ$-nodes 
(Lemma~\ref{lemma:auxiliary:truncatemaxsub}). 
Let $w_\cM'$ be the resulting walk.

If $w_\cM'$ is a proper non-causal walk, then 
there must also exist a proper non-causal
path in $\cM$ (Lemma~\ref{lemma:auxiliary:walkpathconversion}),
violating the AC.
It therefore remains to show that 
$w_\cM'$ is not a proper causal path. This must be
the case if $w_\cG$ does not contain colliders, because then
the first edge of $w_\cM=w_\cM'$ cannot be a visible directed edge
out of $X$. Otherwise, the only way for $w_\cM'$ to be proper
causal is if all $\bZ$-nodes in $w_\cM$ have been bypassed
in $w_\cM'$ by edges pointing away from $\bX$.
In that case, one can show by several case distinctions 
that the first edge $X \to D$ of $w_\cM'$, 
where $D \notin \bZ$, cannot be visible
(see Figure~\ref{fig:agadjust:b} for an example of such a case).

\begin{figure}[h]
 
\begin{center}
\begin{tabular}{cc}
\begin{tikzpicture}[xscale=1.2]
\node (g) at (-1.5,-0.8) {DAG $\cG$:};
\node (l1) at (0.2,0) {$L_1$};
\node [adjusted] (z) at (1,0) {$Z$};
\node (y) at (2,-1) {$Y$};
\node (l2) at (1.8,0) {$L_2$};
\node (x) at (0,-1) {$X$};
\node (a) at (0.6,-1.6) {$A$};
\draw [->] (l1) -- (z);
\draw [->] (z) -- (y);
\draw [->] (l2) -- (y);
\draw [->] (z) -- (x);
\draw [->] (x) -- (y);
\draw [->] (l2) -- (z);
\draw [->] (l1) -- (x);
\draw [->] (a) -- (x);
\end{tikzpicture}\hspace*{15mm}
& 
\begin{tikzpicture}[xscale=1.2]
\node (g) at (-1.5,-0.8) {MAG $\cM=\cG[^\emptyset_{\{L_1,L_2\}}$:};
\node [adjusted] (z) at (1,0) {$Z$};
\node (y) at (2,-1) {$Y$};
\node (x) at (0,-1) {$X$};
\node (a) at (0.6,-1.6) {$A$};
\draw [->] (z) -- (y);
\draw [->] (z) -- (x);
\draw [->] (x) -- (y);
\draw [->] (a) -- (x);
\draw [->] (a) -- (y);
\end{tikzpicture}
\end{tabular}
\end{center}
\caption{Case (b) in the proof of Theorem~\ref{thm:agadjust}:
A proper non-causal path $w_\cG=X \gets L_1 \to Z \gets L_2 \to Y$ 
in a DAG 
is $d$-connected 
by $\bZ$, but the corresponding proper non-causal path 
$w_\cM=X \gets Z \to Y$ is not $m$-connected in the
MAG, and its $m$-connected subpath $w_\cM'=X \to Y$ is 
proper causal. However, this also renders the edge $X \to Y$
invisible, because otherwise $A$ could be $m$-separated from 
$Y$ by $\bU=\{X,Z\}$ in $\cM$ but not in $\cG$.
}
\label{fig:agadjust:b} 
\end{figure}

For simplicity, 
assume that $\cM$ contains a subpath $A \to X \to D$
where $A$ is not adjacent to $D$; the other cases of
edge visibility like $A \leftrightarrow X \to D$
(Lemma~\ref{lemma:pureedge})
are treated analogously. In $\cG$, there are inducing
paths (possibly several  
$\pi_{AX}$ from $A$ to $X$ and $\pi_{XD}$ from $X$ to $D$
w.r.t $\emptyset,\bL$; $\pi_{AX}$ must have an arrowhead at $X$.
We distinguish several cases on the shape of $\pi_{XD}$.
(1) A path $\pi_{XD}$ has an arrowhead at $X$ as well.
Then $A,D$ are adjacent (Lemma~\ref{lemma:inducing:adjacent:when:visible}),
a contradiction.
(2) No inducing path $\pi_{XD}$ has an arrowhead at $X$. 
Then $w_\cG$ must start with an arrow out of $X$,
and must contain a collider $Z \in \textit{De}(X)$ because
$w_\cG$ is not causal. 
(a) $Z \in \textit{De}(D)$. This contradicts
$\bZ\cap\causpaths=\emptyset$. So 
(b) $Z \notin \textit{De}(D)$. Then by construction of $w_\cM'$
(Lemma~\ref{lemma:auxiliary:truncatemaxsub}), 
$w_\cM$ must start with an inducing $\bZ$-trail 
$X \to Z,Z_1,\ldots,Z_n,D$, which is also an inducing path
from $X$ to $D$ in $\cG$ w.r.t. $\emptyset,\bL$.
Then $Z,Z_1,\ldots,Z_n,D$ must also be an inducing path
in $\cG$ w.r.t. $\emptyset,\bL$ because 
$\textit{An}(X)\subseteq\textit{An}(Z)$. Hence $Z$ and $D$
are adjacent. We distinguish cases on the path 
$X \to Z,D$ in $\cM$. Now we can conclude:  If $X\to Z \to D$,
then $Z$ lies on a proper causal path, contradicting
$\bZ\cap\causpaths=\emptyset$; If $X\to Z \leftrightarrow D$,
or $X\to Z \gets D$,
then we get an $m$-connected proper 
non-causal walk along $Z$ and $D$.
\end{proof}

\subsection{Adjustment set construction}

In the previous section, we have already shown 
that the CBC is equivalent to the AC
for MAGs as well; hence, adjustment sets for a 
given MAG $\cM$ can be found by forming the proper
back-door graph $\pbdm$ and then applying the algorithms
from the previous section. In principle, care must
be taken when removing edges from MAGs as the result
might not be a MAG; however, this is not the case
when removing only directed edges.

\begin{lemma}[Closure of maximality under removal
of directed edges]
Given a MAG $\cM$, every graph $\cM'$ formed by removing
only directed edges from $\cM$ is also a MAG.
\end{lemma}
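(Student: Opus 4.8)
To show $\cM'$ is a MAG I must verify that it is (i) ancestral and (ii) maximal. The whole argument rests on one elementary observation: since $\cM'$ arises from $\cM$ by deleting edges, every directed path of $\cM'$ is a directed path of $\cM$, so ancestry can only shrink, i.e. $\textit{An}_{\cM'}(\bW) \subseteq \textit{An}_{\cM}(\bW)$ for every $\bW$, where I write $\textit{An}_{\cM'}$ and $\textit{An}_{\cM}$ for the ancestor operators computed in the two graphs.

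\textbf{Ancestrality.} In the setting of this section, where MAGs contain only directed and bidirected edges, I would use the characterization that an AG is precisely a graph with no directed cycle $V_1 \to \cdots \to V_n \to V_1$ and no almost-directed cycle $V_1 \leftrightarrow V_2 \to \cdots \to V_n \to V_1$. Any such cycle in $\cM'$ uses only edges of $\cM'$, hence of $\cM$, so it would already appear in $\cM$; as $\cM$ is ancestral it has none, and therefore neither does $\cM'$. Thus $\cM'$ is an AG. (Equivalently, each surviving bidirected or into-pointing edge inherits its ancestrality condition from $\cM$ by monotonicity of $\textit{An}$.)

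\textbf{Maximality, and the main obstacle.} First I would invoke the classical characterization of \citet{Richardson2002}: an ancestral graph is maximal if and only if it contains no inducing path between two non-adjacent vertices, where such a path between $U$ and $V$ is one on which every intermediate node is a collider and lies in $\textit{An}(\{U,V\})$. So I assume for contradiction that $\cM'$ is not maximal: some pair $U,V$ is non-adjacent yet not $m$-separable in $\cM'$, which yields an inducing path $\pi = \langle U, C_1, \ldots, C_k, V\rangle$ with $k \ge 1$ in $\cM'$. I then split on whether $U,V$ are adjacent in $\cM$. If they are \emph{non-adjacent} in $\cM$, then, because edge orientations and collider status are inherited and $C_i \in \textit{An}_{\cM'}(\{U,V\}) \subseteq \textit{An}_{\cM}(\{U,V\})$, the same $\pi$ is an inducing path between the $\cM$-non-adjacent pair $U,V$, contradicting maximality of $\cM$.

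The real work is the case where $U,V$ \emph{were} adjacent in $\cM$; since only directed edges are removed, their $\cM$-edge was directed, say $U \to V$, giving $U \in \textit{An}_{\cM}(V)$ and hence $\textit{An}_{\cM}(U) \subseteq \textit{An}_{\cM}(V)$. The key leverage of the removed edge is that it forces \emph{every} intermediate collider into $V$'s ancestry: $C_i \in \textit{An}_{\cM'}(\{U,V\}) \subseteq \textit{An}_{\cM}(U) \cup \textit{An}_{\cM}(V) = \textit{An}_{\cM}(V)$. Examining the last collider $C_k$, the edge between $C_k$ and $V$ must carry an arrowhead at $C_k$, so in the restricted setting it is either $V \to C_k$ or $V \leftrightarrow C_k$; both survive in $\cM$. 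If $V \to C_k$, then $V \in \textit{An}_{\cM}(C_k)$ together with $C_k \in \textit{An}_{\cM}(V)$ yields a directed cycle in $\cM$; if $V \leftrightarrow C_k$, then ancestrality of $\cM$ forces $C_k \notin \textit{An}_{\cM}(V)$, again contradicting $C_k \in \textit{An}_{\cM}(V)$. Either way we reach a contradiction, so no such inducing path exists and $U,V$ are $m$-separable in $\cM'$; the case $V \to U$ is symmetric, examining $C_1$ instead. I expect this last-collider analysis, extracting a cycle from the interplay between the deleted directed edge and the inducing path's collider ancestry, to be the crux, while the ancestrality check and the already-non-adjacent case are routine.
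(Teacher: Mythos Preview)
Your proof is correct and follows essentially the same approach as the paper. The paper's version is terser: it removes one directed edge $X \to D$ at a time (implicitly appealing to induction), observes that the only potentially problematic pair is $X,D$ themselves, and then runs exactly your last-collider argument---the final edge into $D$ forces $C \notin \textit{An}_{\cM}(D)$, hence $C \in \textit{An}_{\cM}(X)$, but $X \to D$ in $\cM$ then gives $C \in \textit{An}_{\cM}(D)$, a contradiction. Your treatment is more self-contained in that you handle arbitrary edge removals directly and make the ``already non-adjacent'' case and the ancestrality check explicit, but the crux is identical.
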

\begin{proof}
Suppose the converse, i.e., $\cM$ is no longer a 
MAG after removal of some edge $X \to D$. Then
$X$ and $D$ cannot be $m$-separated even 
after the edge is removed because $X$ and $D$
are collider connected via a path 
whose nodes are all ancestors 
of $X$ or $D$ \citep{Richardson2002}.
The last edge on this path must be 
 $C\leftrightarrow D$ or $C \gets D$, hence
 $C \notin \textit{An}(D)$, and thus 
we must have $C \in \textit{An}(X)$.
 But then we get $C \in \textit{An}(D)$ in 
 $\cM$ via the edge $X \to D$, a contradiction.
\end{proof}
\begin{corollary}
For every MAG $\cM$, the proper back-door graph
$\pbdm$ is also a MAG.
\end{corollary}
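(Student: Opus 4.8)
The plan is to reduce the corollary immediately to the preceding lemma on closure of maximality under removal of directed edges, so that no fresh combinatorial argument about colliders or ancestor sets is required. First I would recall from Definition~\ref{def:adj:graph:mag} that $\pbdm$ is obtained from $\cM$ by deleting, for each proper causal path from $\bX$ to $\bY$, its first edge, and by adding no edges. The crucial observation is that every causal path is \emph{by definition} a directed path $V_1 \to V_2 \to \cdots$, so its initial edge has the form $X \to D$ with $X \in \bX$ and is in particular a directed edge; it can be neither bidirected nor undirected. Consequently, the transformation taking $\cM$ to $\pbdm$ removes only directed edges.

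Second, I would invoke the Lemma (Closure of maximality under removal of directed edges), which guarantees that any graph obtained from a MAG by deleting directed edges remains a MAG. Applying this lemma to the particular set of directed edges consisting of the first edges of the proper causal paths from $\bX$ to $\bY$ yields directly that $\pbdm$ is again a MAG, which is exactly the assertion of the corollary.

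There is essentially no obstacle here beyond making the single observation precise, namely that the only edges removed when forming the proper back-door graph are directed ones. This is immediate from the definition of a causal (directed) path, all of whose edges are of the form $\to$, so the corollary is a direct specialization of the lemma rather than an independent result. I would therefore present the proof in two sentences: identify the deleted edges as directed, then cite the lemma.
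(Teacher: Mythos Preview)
Your proposal is correct and matches the paper's intent exactly: the paper states this as a corollary immediately after the closure-under-directed-edge-removal lemma without giving a separate proof, precisely because the only edges removed in forming $\pbdm$ are the (directed) first edges of proper causal paths, so the lemma applies directly.
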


For MAGs that are not adjustment amenable, the CBC
might falsely indicate that an adjustment
set exists even though that set may not be valid for some
represented graph. Fortunately, adjustment amenability 
is easily tested using the graphical criteria
of Lemma~\ref{lemma:pureedge}.
For each child $D$ of $\bX$ in
$\pcauspaths$, we can test the visibility of all edges $\bX \to D$
simultaneously using depth first search. This means
that we can check all potentially problematic edges in 
time $\cO(n+m)$. If all tests pass, we are licensed
to apply the CBC, as shown above.
Hence, we can solve all algorithmic tasks in Table~\ref{fig:problemsadj} 
for MAGs in the same way as for DAGs after an $\cO(k(n+m))$
check of adjustment amenability, where $k\leq|\textit{Ch}(\bX)|$.

Hence our algorithms can construct an adjustment set for a given MAG $ \cM $ and variables $ \bX,\bY $ in $ O((k+1)(n+m)) $ time. If an additional set $ \bZ $ is given, it can be verified that $ \bZ $ is an adjustment set in the same  time.

Minimal adjustments sets can be constructed and verified in $ O(k(n+m) + n^2) $ or $ O(n(n+m)) $ time using our algorithms {\sc FindMinAdj}, {\sc TestMinAdj} for dense or sparse graphs. The algorithms for the remaining problems of finding a minimum cost adjustment set {\sc FindMinCostAdj} and enumerating adjustment sets {\sc ListAdj} or {\sc ListMinAdj} in MAGs have the same runtime as the corresponding algorithms in DAGs, since their time surpasses the time required for the adjustment amenability test.

\begin{newpartinrevision}[r2c7]

\subsection{Empirical analysis} 

\begin{figure}\arxivonly{\small}
\centering DAG $\cG$ and MAG $\cG[^\emptyset_\emptyset$: \begin{tikzpicture}
\graph  { 
 /-!-v3[lat,as=$V_3$],
 v2[as=$V_2$] -> v8[as=$V_8$] -!- /-!-v9[lat,as=$V_9$] ,
 v7[as=$V_7$]  <- v6[lat,as=$V_6$] -!- v4[lat,as=$V_4$] <- v1[as=$V_1$] -> v5[res,as=$Y_1$],
 /-!-v10[exp,as=$X_1$] -!- v9[lat,as=$V_9$] -!- v8[res,as=$Y_1$],
 {v1-> {v4,v5}, v2-> {v3,v7,v8}, v3-> {v8,v9}, v4-> {v6,v10}, v5-> {v9}, v6-> {v7,v9}, v7-> {v10}, v8-> {v9}, v9-> {}, v10-> {}}
 };
\end{tikzpicture} \hspace*{10mm}
MAG $\cG[_\bL^\emptyset$:
\begin{tikzpicture}
\graph  { 
 v2[as=$V_2$] -> v8[as=$V_8$],
 /-!-            v7[as=$V_7$]  <- v1[as=$V_1$] -> v5[res,as=$Y_1$],
 /-!-  v10[exp,as=$X_1$],
 v1 -> v10,
 v1 -> v5,
 v1 -> v7,
 v2 -> v10,
 v2 -> v7,
 v2 -> v8,
 v7 -> v10
 };
\end{tikzpicture}

\caption{Example of a DAG and resulting MAG sampled for the parameter tuple $n = 10,  P(\textit{edge}) = 2/9,  P(\textit{unobserved}) = 0.75$, 
and $k= |\bX|=|\bY|=1$. 
Nodes are relabeled such that exposures are called $X_1$ and  outcomes are called $Y_1$. The nodes $\bL = \{V_3, V_4, V_6, V_9\}$ are unobserved. 
$\{V_1\}$ is an adjustment set in $\cG, \cG[^\emptyset_\emptyset$  and $ \cG[_\bL^\emptyset$.
}\label{fig:experiments:mags:example}
\end{figure}

We test for a DAG $\cG=(\bV,\bE)$, generated as described  in Section~\ref{sec:experiments}, whether the causal effect in the MAGs  $\cG[^\emptyset_\emptyset$ and $\cG[_\bL^\emptyset$ can be identified via  adjustment. Hereby $\bL = \bV\setminus\bR$ is the set of unobserved nodes, which is used to determine the MAG $\cG[_\bL^\emptyset$. Thus, all nodes  of $\cG[_\bL^\emptyset$ are considered as observed and all of them are allowed to occur in adjustment sets. The MAG  $\cG[_\emptyset^\emptyset$ is syntactically equal to $\cG$ and $\bL$ specifies forbidden nodes for adjustments. 
The MAGs  $\cG[_\bL^\emptyset$ can be constructed according to Definition~\ref{def:mags}, 
however we have implemented the DAG-to-MAG conversion algorithm from \cite{Zhang2008} in DAGitty, 
which is based on Theorem 4.2 in \citep{Richardson2002} that two nodes $A, B$ are adjacent 
in $\cG[^\emptyset_\bL$ if and only if there exists an inducing path  between $A$ and $B$  
with respect to $\emptyset, \bL$ (see Definition~\ref{def:inducing:path}), because testing if an inducing path exists appears easier than testing if two nodes are $d$-separable by any set. 
 
Since $\cG$ and $\cG[^\emptyset_\emptyset$ are syntactically equal,
the only difference between using the CBC in a DAG $\cG$ and the CBC in 
$\cG$ interpreted as a  MAG is that the last case needs a test for adjustment amenability of the graph. 
Figure~\ref{fig:experiments:mags:example} shows an example.

%

The results of our experiments are shown in Table~\ref{table:mags:0} and \ref{table:mags:0.75}. 
As expected we find fewer adjustment sets in the MAGs than in the DAGs, since any adjustment 
set found in the MAG is valid for all represented DAGs.
There are also always fewer adjustment sets in $\cG[_\bL^\emptyset$ than in $\cG[^\emptyset_\emptyset$, because with fewer nodes and edges in $\cG[_\bL^\emptyset$ there are fewer visible edges and $\cG[_\bL^\emptyset$ might not be adjustment amenable, even if $\cG[^\emptyset_\emptyset$ is.
For example in the DAG $\cG=L \to X \to Y$ and the MAG $\cG[^\emptyset_\emptyset = L \to X \to Y$ 
the empty set is an adjustment set.  
However, in the MAG $\cG[_L^\emptyset = X \to Y$ there exists no valid adjustment set. 

For $n=10$, $l=10\ (20)$ we have $P(\textit{edge})=\max\{10/9,1\}=1$, so the experiments generate only \emph{complete} DAGs, such that 
all nodes are adjacent in the DAG and corresponding MAGs.
This implies that there is either an edge $X \gets Y$, in which case adjustment is impossible in both the DAG and the MAG, or there is an edge $X \to Y$, which then would need to be visible for adjustment in the MAG due to Lemma~\ref{lemma:amenable}. However, there are no visible edges in complete graphs, since a visible edge $X\to Y$ would require a node not adjacent to $Y$. Thus, no adjustment sets are found for these parameters.

The runtimes of these experiments are listed in Table~\ref{table:mags:runtimes}.
The time required to find the adjustment set in the MAG is similar to finding it in 
the original DAG and quick enough to be negligible. We did not run the 
experiments on even larger graphs since constructing a MAG from a DAG
has quadratic runtime complexity in the worst case (as a quadratic number of 
edges needs to be added), which makes constructing the MAGs too slow. 

\end{newpartinrevision}

\input{experiments-mags-plot.tex}

\begin{table}
  \begin{center}
  \scriptsize
  \begin{tabular}{|rr | rr| rr| rr| rr|}
  \hline
   &&\multicolumn{2}{c|}{$l=2$} &\multicolumn{2}{c|}{$l=5$} &\multicolumn{2}{c|}{$l=10$} &\multicolumn{2}{c|}{$l=20$}\\
   \bfseries $n$ & \bfseries $k$
& \bfseries CBC & \bfseries MAG$^\emptyset_\emptyset$
& \bfseries CBC & \bfseries MAG$^\emptyset_\emptyset$
& \bfseries CBC & \bfseries MAG$^\emptyset_\emptyset$
& \bfseries CBC & \bfseries MAG$^\emptyset_\emptyset$
\\\hline
10&1&
8893&7711&7205&4600&5034&0&4934&0\\
10&2&
6061&3773&1980&529&660&0&686&0\\
10&3&
3395&1243&548&32&168&0&174&0\\
10&5&
886&64&108&0&36&0&31&0\\
\hline
25&1&
9573&9066&8936&7731&7905&6080&5843&2168\\
25&2&
8247&6876&4735&3300&1553&789&401&33\\
25&3&
6424&4399&1852&913&212&61&34&0\\
25&5&
3021&1186&243&51&11&0&1&0\\
\hline
50&1&
9832&9582&9476&8742&8997&7869&7832&6501\\
50&2&
9128&8305&6938&5538&3049&2301&866&549\\
50&5&
5535&3339&799&361&16&5&2&0\\
50&7&
3120&1234&119&21&1&0&0&0\\
\hline
100&1&
9907&9756&9783&9249&9494&8674&8966&8126\\
100&2&
9593&9092&8353&7216&4834&4098&1762&1476\\
100&5&
7591&5742&2336&1416&102&59&4&1\\
100&10&
3040&1117&48&9&0&0&0&0\\
\hline
250&1&
9947&9898&9894&9573&9774&9176&9621&9069\\
250&2&
9840&9613&9327&8534&6689&5942&3285&3035\\
250&5&
8994&7954&5325&3802&544&453&17&12\\
250&15&
3537&1360&32&6&0&0&0&0\\
250&25&
536&50&0&0&0&0&0&0\\
\hline\end{tabular}
  \end{center}\vspace*{-3mm}
  \caption{Number of instances for $P(\textit{unobserved}) = 0$ that are identified using CBC in the DAG or after converting the DAG to a MAG.}\label{table:mags:0}
  \end{table}
\begin{table}
  \begin{center}
  \scriptsize
  \begin{tabular}{|rr | rrr| rrr| rrr| rrr|}
  \hline
   &&\multicolumn{3}{c|}{$l=2$} &\multicolumn{3}{c|}{$l=5$} &\multicolumn{3}{c|}{$l=10$} &\multicolumn{3}{c|}{$l=20$}\\
   \bfseries $n$ & \bfseries $k$
& \bfseries CBC & \bfseries MAG$^\emptyset_\emptyset$& \bfseries MAG$^\emptyset_\bL$ 
& \bfseries CBC & \bfseries MAG$^\emptyset_\emptyset$& \bfseries MAG$^\emptyset_\bL$ 
& \bfseries CBC & \bfseries MAG$^\emptyset_\emptyset$& \bfseries MAG$^\emptyset_\bL$ 
& \bfseries CBC & \bfseries MAG$^\emptyset_\emptyset$& \bfseries MAG$^\emptyset_\bL$ 
\\\hline
10&1&
6333&5398&5157&1935&645&543&978&0&0&936&0&0\\
10&2&
2339&1371&1204&228&8&4&113&0&0&114&0&0\\
10&3&
980&391&304&21&0&0&9&0&0&10&0&0\\
10&5&
859&56&56&98&0&0&43&0&0&26&0&0\\
\hline
25&1&
8414&7945&7838&3647&2635&2446&1340&429&346&557&18&13\\
25&2&
5331&4309&4098&728&268&234&130&6&3&41&0&0\\
25&3&
2632&1647&1491&144&16&14&17&0&0&6&0&0\\
25&5&
449&149&119&1&0&0&0&0&0&0&0&0\\
\hline
50&1&
9082&8856&8805&4651&3925&3788&1699&923&798&697&112&93\\
50&2&
7059&6320&6177&1189&689&643&160&22&16&41&1&0\\
50&5&
1663&906&805&16&0&0&1&0&0&0&0&0\\
50&7&
388&130&102&0&0&0&0&0&0&0&0&0\\
\hline
100&1&
9527&9395&9375&5585&5066&4943&1985&1311&1176&744&243&194\\
100&2&
8316&7880&7808&1886&1462&1398&217&77&64&56&3&3\\
100&5&
3562&2620&2501&30&11&10&0&0&0&0&0&0\\
100&10&
375&109&84&0&0&0&0&0&0&0&0&0\\
\hline
250&1&
9791&9743&9735&6832&6517&6462&2493&1944&1802&846&424&354\\
250&2&
9209&9005&8980&3138&2760&2715&286&136&123&50&6&6\\
250&5&
6182&5424&5321&123&63&57&1&0&0&0&0&0\\
250&15&
306&98&82&0&0&0&0&0&0&0&0&0\\
250&25&
4&1&0&0&0&0&0&0&0&0&0&0\\
\hline\end{tabular}
  \end{center}\vspace*{-3mm}
  \caption{Number of instances for $P(\textit{unobserved}) = 0.75$ that are identified using CBC in the DAG or after converting the DAG to a MAG, either a MAG$^\emptyset_\bL$ with latent nodes being removed from the graph or a MAG$^\emptyset_\emptyset$ with latent nodes left and marked as latent.}\label{table:mags:0.75}
  \end{table}

\begin{table}
  \begin{center}
  \scriptsize
  \begin{tabular}{|rr | r|r|r|r | rr|rr|rr|rr| }
  \hline && \multicolumn{4}{c|}{$P(unobserved) = 0$}& \multicolumn{8}{c|}{$P(unobserved) = 0.75$}\\
  \cline{3-14}
  && \multicolumn{1}{c|}{$l = 2$}&\multicolumn{1}{c|}{$l = 5$}&\multicolumn{1}{c|}{$l = 10$}&\multicolumn{1}{c|}{$l = 20$}&\multicolumn{2}{c|}{$l = 2$}&\multicolumn{2}{c|}{$l = 5$}&\multicolumn{2}{c|}{$l = 10$}&\multicolumn{2}{c|}{$l = 20$}
   \\
   \bfseries $n$ & \bfseries $k$ &\bfseries MAG$^\emptyset_\emptyset$&\bfseries MAG$^\emptyset_\emptyset$&\bfseries MAG$^\emptyset_\emptyset$&\bfseries MAG$^\emptyset_\emptyset$&\bfseries MAG$^\emptyset_\emptyset$&\bfseries MAG$^\emptyset_\bL$&\bfseries MAG$^\emptyset_\emptyset$&\bfseries MAG$^\emptyset_\bL$&\bfseries MAG$^\emptyset_\emptyset$&\bfseries MAG$^\emptyset_\bL$&\bfseries MAG$^\emptyset_\emptyset$&\bfseries MAG$^\emptyset_\bL$
\\\hline
10&1&
0.5, 0.3&1.1, 0.5&1.6, 0.8&1.7, 0.8&0.2, 0.2&0.5, 0.4&0.2, 0.2&1.1, 0.6&0.3, 0.3&1.7, 0.9&0.3, 0.3&1.7, 0.9\\
10&2&
0.7, 0.5&1.5, 0.9&2.2, 1.8&2.4, 1.8&0.2, 0.3&0.5, 0.5&0.4, 0.5&1.1, 0.8&0.4, 0.5&1.7, 1.6&0.4, 0.5&1.7, 1.6\\
10&3&
0.8, 0.6&1.3, 1.1&2.1, 2.5&2.4, 2.7&0.5, 0.5&0.7, 0.6&0.9, 1.0&1.4, 1.0&0.9, 1.1&2.1, 2.4&1.1, 1.2&2.4, 2.5\\
10&5&
0.8, 0.9&1.6, 1.7&2.3, 4.7&2.4, 4.7&0.7, 0.9&0.7, 0.8&1.5, 1.7&1.4, 1.6&2.4, 4.9&2.4, 4.9&2.4, 4.7&2.4, 4.7\\
\hline
25&1&
3.4, 0.5&10, 1.0&22, 1.8&50, 3.9&0.8, 0.3&3.5, 0.5&2.0, 0.4&9.6, 1.1&3.4, 0.6&24, 2.0&5.2, 0.6&49, 4.1\\
25&2&
3.7, 0.7&10, 1.4&22, 2.6&49, 7.0&0.8, 0.4&3.6, 0.7&2.1, 0.6&9.8, 1.4&3.5, 0.9&24, 2.8&5.4, 1.1&50, 7.4\\
25&3&
3.8, 0.8&10, 1.7&23, 3.3&50, 10&1.0, 0.5&3.7, 0.9&2.4, 0.9&9.6, 1.7&3.9, 1.4&24, 3.5&6.0, 1.6&50, 11\\
25&5&
3.7, 1.1&10, 2.2&22, 4.7&49, 17&1.7, 0.8&3.7, 1.1&4.5, 2.1&9.6, 2.2&6.8, 3.9&24, 4.5&10, 4.9&49, 15\\
\hline
50&1&
14, 0.8&38, 1.6&92, 3.1&271, 6.3&2.9, 0.3&13, 0.9&12, 0.7&36, 1.8&23, 1.2&97, 3.3&38, 1.6&267, 6.8\\
50&2&
13, 0.9&38, 2.1&91, 4.2&265, 9.4&3.0, 0.4&14, 1.1&12, 0.9&36, 2.3&22, 1.9&93, 4.4&37, 2.9&260, 9.8\\
50&5&
14, 1.4&38, 3.2&92, 7.0&271, 19&3.1, 0.7&13, 1.5&12, 2.0&36, 3.3&23, 5.6&98, 7.5&40, 8.4&273, 20\\
50&7&
13, 1.7&35, 3.8&95, 8.6&263, 25&4.0, 0.9&13, 1.8&16, 3.5&36, 4.0&29, 11&98, 9.4&47, 16&274, 26\\
\hline
100&1&
51, 1.3&146, 2.7&361, 5.5&1129, 12&12, 0.4&50, 1.5&74, 1.4&138, 3.1&168, 3.1&380, 6.1&299, 5.1&1158, 12\\
100&2&
51, 1.5&147, 3.4&348, 7.2&1120, 17&12, 0.5&49, 1.8&74, 1.7&138, 3.6&163, 5.1&366, 7.7&292, 11&1135, 17\\
100&5&
51, 2.1&147, 5.0&348, 11&1117, 31&11, 0.8&49, 2.3&75, 3.1&138, 5.1&163, 15&367, 12&292, 36&1138, 32\\
100&10&
51, 2.9&147, 7.3&357, 18&1122, 57&12, 1.2&50, 3.1&76, 7.1&138, 7.6&171, 43&383, 20&294, 93&1147, 59\\
\hline
250&1&
314, 2.4&894, 4.5&2967, 10&12194, 23&64, 0.8&303, 3.1&663, 5.6&915, 5.6&1829, 18&3044, 12&3341, 40&12460, 25\\
250&2&
313, 2.6&889, 5.4&2941, 14&12327, 33&65, 0.9&308, 3.4&633, 6.1&907, 6.5&1770, 26&2934, 15&3315, 93&12119, 34\\
250&5&
312, 3.1&882, 7.8&2944, 22&12441, 66&65, 1.2&308, 4.0&637, 8.5&905, 8.7&1775, 75&2938, 23&3302, 366&12279, 63\\
250&15&
312, 4.9&889, 15&2932, 49&12049, 169&64, 2.1&303, 5.9&649, 25&935, 16&1835, 412&2975, 49&3357, 1860&12550, 177\\
250&25&
314, 6.8&872, 22&2938, 76&12219, 275&65, 3.1&310, 8.0&640, 54&908, 25&1783, 931&2925, 80&3397, 3645&12767, 292\\
\hline\end{tabular}
  \end{center}\vspace*{-3mm}
  \caption{Time (in milliseconds) to first construct the MAG from the DAG and then check for the existence of an adjustment set in that MAG, for $P(\textit{unobserved}) = 0$, respectively $P(\textit{unobserved}) = 0.75$.
  }\label{table:mags:runtimes}
  \end{table}

\color{black}

\subsection{Auxiliary lemmas for proof of Theorem~\ref{thm:agadjust}} 
\label{sec:aux:lemmas}

In this section, we present the auxiliary lemmas used 
in the proof of Theorem~\ref{thm:agadjust}. For Lemmas~\ref{lemma:auxiliary:truncatemaxsub} and~\ref{lemma:inducing:adjacent:when:visible}, we also give separate preparing claims and existing results before stating the lemmas themselves.

\begin{lemma}
Given a DAG $\cG$ and sets $\bX, \bY, \bZ\subseteq \bV$ satisfying
$\bZ\cap\causpaths=\emptyset$, $\bZ$ $m$-connects 
a proper non-causal path
between $\bX$ and $\bY$ if and
only if it $m$-connects a proper non-causal
walk between $\bX$ and~$\bY$.
\label{lemma:auxiliary:walkpathconversion}
\end{lemma}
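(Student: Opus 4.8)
The plan is to prove the two implications separately, in each case translating between the \emph{path} and \emph{walk} formulations of $m$-connection while tracking three invariants: that the object is $m$-connecting, that it is $\bX$-proper (only its initial vertex lies in $\bX$), and that it is non-causal. The main friction is that for a path the convention is that colliders lie in $\textit{An}(\bZ)$, whereas for a walk it is that colliders lie in $\bZ$; bridging these two conventions is exactly what the detour and loop-removal constructions below accomplish, and the hypothesis $\bZ\cap\causpaths=\emptyset$ is what keeps the non-causality invariant from silently breaking.

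For the easy direction (path $\Rightarrow$ walk) I would start from an $m$-connecting proper non-causal path $\pi$ and replace every collider $C$ of $\pi$ with $C\in\textit{An}(\bZ)\setminus\bZ$ by the detour $C\to\cdots\to Z\gets\cdots\gets C$, where $C\to\cdots\to Z$ is a \emph{shortest} directed path to some $Z\in\bZ$. Minimality forces the interior detour vertices to avoid $\bZ$, so in the resulting walk the colliders are precisely the inserted $\bZ$-vertices together with the colliders of $\pi$ that already lay in $\bZ$; hence the walk is $m$-connecting in the walk sense. Non-causality is inherited, since the walk still contains a collider, or, if $\pi$ had none, it still contains a vertex at which both incident edges point outward (a source) that already makes $\pi$ non-causal. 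The one delicate point here is properness: a detour could in principle pass through a vertex of $\bX$, and I would handle this by truncating the offending detour at the first $\bX$-vertex met and re-anchoring the walk there, a step that needs a short case analysis but does not disturb $m$-connection.

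For the reverse direction (walk $\Rightarrow$ path) I would take a \emph{shortest} proper non-causal $m$-connecting walk $w$ and argue that it must already be a path. If some vertex repeated, I would splice out the intervening loop; only the collider status of the splice vertex can change, so a case analysis on the orientations of the two edges meeting there shows that the shorter walk is still proper and $m$-connecting, contradicting minimality. Once $w$ is known to be a path, it is automatically an $m$-connecting path in the path sense, because its colliders lie in $\bZ\subseteq\textit{An}(\bZ)$ and its non-colliders lie outside $\bZ$.

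The main obstacle I anticipate is keeping the object non-causal while removing repetitions: a non-causal walk can contain a causal sub-path, so a careless extraction could collapse to a proper \emph{causal} path, which is not the desired object. The key observation is that the first orientation reversal along $w$ produces either a collider, which by walk-sense $m$-connection necessarily lies in $\bZ$, or a source; and the hypothesis $\bZ\cap\causpaths=\emptyset$ guarantees that no vertex on a proper causal path, nor any descendant of such a vertex, lies in $\bZ$. Consequently this certifying $\bZ$-collider can neither be deleted nor converted into a forward vertex by any splicing step, so I would organise the reverse direction around it as the anchor preserved throughout the loop-removal (in the degenerate case where no orientation reversal occurs at an interior vertex, such as a fully backward run $X\gets\cdots\gets Y$, the walk is already a non-causal path and nothing is required).
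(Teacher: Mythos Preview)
Your path $\Rightarrow$ walk direction is essentially the paper's argument (insert detours to $\bZ$ at colliders, then truncate to restore properness). Two small corrections: you should truncate at the \emph{last} $\bX$-vertex on the detoured walk rather than the first, since later detours may also pass through $\bX$; and after truncation you still need to argue the suffix is non-causal, not only $m$-connecting. The paper observes that the last $\bX$-vertex must lie on the down-leg of some detour (an occurrence on an up-leg would reappear on the matching down-leg), so the truncated walk begins with backward edges toward the collider $C_i$ and is therefore non-causal.

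Your walk $\Rightarrow$ path direction has a genuine gap: splicing out a loop need not preserve walk-sense $m$-connection, so the minimality contradiction fails. Take the DAG with edges $X\to V$, $V\to A$, $C\to V$, $C\to Y$, with $\bX=\{X\}$, $\bY=\{Y\}$, $\bZ=\{A\}$ (here $\causpaths=\emptyset$). The shortest proper non-causal walk-sense $m$-connecting walk from $X$ to $Y$ is $X,V,A,V,C,Y$ of length $5$, traversing the edge $V\to A$ twice; no walk of length $3$ or $4$ is walk-sense $m$-connecting. This shortest walk is not a path. Splicing at the repeated $V$ yields $X\to V\gets C\to Y$, where $V$ has become a collider with $V\notin\bZ$, so the spliced walk is no longer walk-sense $m$-connecting and your case analysis cannot close. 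The problematic case is exactly when both outer edges point into the splice vertex and that vertex lies in $\textit{An}(\bZ)\setminus\bZ$. The paper avoids this entirely by not trying to preserve walk-sense connection: it removes all loops at once to obtain a path $\pi$ (which is path-sense $m$-connecting by the standard equivalence between the walk and path definitions), notes that $\pi$ is proper and shares its first edge with $w$ since $X$ occurs only once in $w$, and then argues non-causality by contradiction. If $\pi$ were causal and $w$ starts with $X\to D$, then $D$ lies on the proper causal path $\pi$, while the first collider of $w$ lies in $\bZ\cap\textit{De}(D)\subseteq\bZ\cap\causpaths=\emptyset$.
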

\begin{proof}
$\Leftarrow$: Let $w$ be the $m$-connected proper non-causal walk. 
It can be transformed to an $m$-connected path $\pi$ by removing loops of nodes
that are visited multiple times. Since no nodes have been added,
$\pi$ remains proper, and the first edges of $\pi$ and $w$ are the same.
So if $w$ does not start with a $\to$ edge, $\pi$ is non-causal. If $w$ starts
with an edge $X \to D$, there exists a collider with a descendant in $\bZ$ 
which is in $\textit{De}(D)$. So $\pi$ has to be non-causal, or it
would contradict $\bZ\cap\causpaths=\emptyset$.

$\Rightarrow$: Let $\pi$ be an $m$-connected proper non-causal path. It can be
changed to an $m$-connected walk $w$ by inserting 
$C_i \to \ldots \to Z_i \gets \ldots \gets C_i$ for every collider $C_i$ on 
$\pi$ and a corresponding $Z_i \in \bZ$. 
Since no edges are removed from $\pi$,
$w$ is non-causal, but not necessarily proper, since the 
inserted walks might contain 
nodes of $\bX$. However, in that case, $w$ can be truncated to a proper walk 
$w'$ starting at the last node of $\bX$ on $w$. Then $w'$ is non-causal, since 
it contains the subpath $\bX \gets \ldots \gets C_i$.
\end{proof}

In all of the below, $\cG=(\bV,\bE)$ is a DAG,
$\bZ,\bL \subseteq \bV$ are disjoint, 
and $\cM=\cG[^\emptyset_\bL$.
We notice first, that every inducing path w.r.t.
$\bZ$ and $\bL$ is $m$-connected by 
$\bZ$.

%
%

\begin{lemma}[\citet{Richardson2002}]
\label{lemma:inducing:nosep}
If there is an inducing path
$\pi$ from $U \in \bV$ to $V \in \bV$ with respect
to  $\bZ,\bL$, then there exists no set $\bZ'$ with 
$\bZ \subseteq \bZ' \subseteq (\bV\setminus\bL)$ 
such that $\bZ'$ $d$-separates $U$ and $V$ in $\cG$ or
$m$-separates $U$ and $V$ in $\cG[^\emptyset_\bL$.
\end{lemma}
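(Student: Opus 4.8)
The plan is to collapse the two separation claims into one and then show that the inducing path keeps the endpoints connected no matter how we condition. By the preservation-of-separating-sets lemma stated above (with $\bS=\emptyset$), a set $\bZ'\subseteq\bV\setminus\bL$ $m$-separates $U,V$ in $\cM=\cG[^\emptyset_\bL$ if and only if it $m$-separates them in $\cG$, and since $\cG$ is a DAG this is just $d$-separation. Hence it suffices to prove the single statement: for every $\bZ'$ with $\bZ\subseteq\bZ'\subseteq\bV\setminus\bL$, the set $\bZ'$ fails to $d$-separate $U$ and $V$ in $\cG$; I would establish this by exhibiting a $\bZ'$-active path.

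First I would record what the inducing path already gives us for free. Its interior non-colliders all lie in $\bL$, and because $\bZ'\cap\bL=\emptyset$ none of them can ever enter $\bZ'$; thus non-colliders never block $\pi$ (or any sub-walk derived from it), for any admissible $\bZ'$. The only possible obstruction comes from colliders. Every collider $C$ of $\pi$ lies in $\textit{An}(\{U,V\}\cup\bZ)$, and if $C\in\textit{An}(\bZ)$ then $C\in\textit{An}(\bZ')$ since $\bZ\subseteq\bZ'$, so $C$ is open. This already yields the preliminary observation (when all colliders are ancestors of $\bZ$, the path $\pi$ itself is $\bZ'$-active), and it remains to treat the \emph{bad} colliders, namely those in $\textit{An}(\{U,V\})\setminus\textit{An}(\bZ')$.

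The heart of the argument is the elimination of a bad collider by rerouting, which I would organize as an induction on the number of bad colliders. Suppose $C$ is bad with, say, $C\in\textit{An}(U)$ (the case $C\in\textit{An}(V)$ being symmetric), and let $\delta:C\to\cdots\to U$ be a shortest directed path. The crucial point is that no interior node $c_i$ of $\delta$ can lie in $\bZ'$: otherwise $C\in\textit{An}(c_i)\subseteq\textit{An}(\bZ')$, contradicting badness of $C$; for the same reason $C\notin\bZ'$. Replacing the $U$-to-$C$ stretch of $\pi$ by the reversal of $\delta$ turns $C$ into a non-collider and inserts only chain nodes $c_i$, all outside $\bZ'$ and hence open, while the sub-walk of $\pi$ from $C$ to $V$ is left intact. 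The resulting $U$--$V$ walk has strictly fewer bad colliders (we removed $C$ and created none), so the induction applies. Once no bad colliders remain, every collider sits in $\textit{An}(\bZ')$ and every non-collider lies outside $\bZ'$, i.e.\ we have a $\bZ'$-active walk; a standard walk-to-path reduction of the kind in Lemma~\ref{lemma:auxiliary:walkpathconversion} then delivers a $\bZ'$-active path, so $\bZ'$ does not separate $U$ and $V$.

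The step I expect to be the main obstacle is exactly the bad-collider case: such a collider cannot be activated on $\pi$ itself, and rerouting it replaces the path by a walk, forcing one to verify that the reroute neither blocks (the nodes $c_i\notin\bZ'$) nor introduces new colliders, and then to convert the active walk back into an active path. The remaining ingredients — the reduction through the preservation lemma and the handling of non-colliders and of $\textit{An}(\bZ)$-colliders — are routine.
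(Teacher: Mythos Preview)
Your proposal is correct and is essentially the standard argument behind Theorem~4.2 of \citet{Richardson2002}; the paper itself does not reprove the result but simply cites that theorem, so there is no paper-specific approach to compare against.

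Two small remarks. First, your appeal to Lemma~\ref{lemma:auxiliary:walkpathconversion} for the final walk-to-path step is not quite the right reference: that lemma concerns \emph{proper non-causal} walks under the extra hypothesis $\bZ\cap\causpaths=\emptyset$. What you actually need is the general equivalence between the path-based and walk-based definitions of $m$-connection, which the paper states in its Preliminaries without proof; invoking that directly is cleaner. Second, the induction is best phrased not on the inducing path itself but on walks satisfying the invariant ``every non-collider lies outside $\bZ'$ and every collider lies in $\textit{An}(\{U,V\}\cup\bZ')$'', since after the first reroute the object is no longer an inducing path; your argument already respects this invariant, so this is purely a matter of presentation.
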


\begin{proof}
This is Theorem~4.2, cases (v) and (vi), in~\cite{Richardson2002}.
\end{proof}

\begin{claim}
\label{lemma:inducing:bypass}
Two nodes $U,V$ are 
adjacent in $\cG[^\emptyset_\bL$ if and only if $\cG$ contains 
an inducing path $\pi$ between $U$ and $V$ with respect to $\emptyset,\bL$. 
Moreover, the edge between $U,V$ in  $\cG[^\emptyset_\bL$ 
can only have an arrowhead at 
$U$ ($V$) if all such $\pi$ have
an arrowhead at $U$ ($V$) in $\cG$.
\end{claim}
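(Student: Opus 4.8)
The plan is to read the statement off the defining semantics of the MAG operator (Definition~\ref{def:mags}) together with Lemma~\ref{lemma:inducing:nosep}, using that in the DAG $\cG$ $m$-separation and $d$-separation coincide. Throughout let $\textit{An}$ denote ancestry in $\cG$ and abbreviate $\bW := \textit{An}(\{U,V\}) \cap (\bV\setminus\bL) \setminus \{U,V\}$, which will serve as the canonical candidate separator. The direction ``inducing path $\Rightarrow$ adjacent'' is then immediate: if $\cG$ contains an inducing path between $U$ and $V$ w.r.t. $\emptyset,\bL$, Lemma~\ref{lemma:inducing:nosep} (with $\bZ=\emptyset$) says that no $\bZ'$ with $\bZ' \subseteq \bV\setminus\bL$ $d$-separates $U$ and $V$ in $\cG$, and by Definition~\ref{def:mags} (with $\bS=\emptyset$) this is precisely the adjacency condition in $\cG[^\emptyset_\bL$.

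For the converse I would take adjacency and produce an inducing path. If $U,V$ are adjacent, then in particular $\bW$ fails to $d$-separate them, so there is a path $p$ between $U$ and $V$ that is $d$-connecting given $\bW$. The key step is to show that $p$ is already an inducing path w.r.t. $\emptyset,\bL$. Since $\bW \subseteq \textit{An}(\{U,V\})$ and ancestry is transitive, $\textit{An}(\bW) \subseteq \textit{An}(\{U,V\})$; hence every collider of $p$, being an ancestor of $\bW$, lies in $\textit{An}(\{U,V\})$, which is the collider condition for $\bZ=\emptyset$. For each interior non-collider $W$ of $p$, I would follow the edge leaving $W$ along $p$: the resulting directed sub-chain continues until it reaches either an endpoint or a collider of $p$, placing $W$ in $\textit{An}(\{U,V\})$ in both cases. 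As $W$ is a non-collider it is not in $\bW$; combined with $W \in \textit{An}(\{U,V\})$ this forces $W \in \bL$, exactly the non-collider condition. Thus $p$ is an inducing path.

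For the arrowhead refinement I would use that, with $\bS=\emptyset$, Definition~\ref{def:mags} places an arrowhead at $U$ exactly when $U \notin \textit{An}(V)$. I would prove the contrapositive of the stated implication: an inducing path leaving $U$ with a tail ($U \to \cdots$) forces $U \in \textit{An}(V)$. Following the out-chain from $U$ along the path as above, it ends either at $V$, giving $U \in \textit{An}(V)$ directly, or at the first collider $C$, where $U \in \textit{An}(C)$ and, $C$ being a collider on an inducing path, $C \in \textit{An}(\{U,V\})$; acyclicity excludes $C \in \textit{An}(U)$ (which together with $U \in \textit{An}(C)$ would close a directed cycle), leaving $C \in \textit{An}(V)$ and hence $U \in \textit{An}(V)$. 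Contraposing, if the MAG edge has an arrowhead at $U$ then no inducing path can carry a tail at $U$, i.e.\ every such path has an arrowhead at $U$; the statement for $V$ is symmetric.

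The main obstacle is the structural fact underlying both the converse and the arrowhead part, namely that following the outgoing edge of a non-collider (or of $U$) along a $d$-connecting path traces a directed chain terminating at an endpoint or a collider, so that the node in question is an ancestor of $\{U,V\}$. Making this rigorous needs a careful induction along the path, together with acyclicity to forbid the chain from looping back, and attention to the degenerate short cases (length-one paths and non-colliders adjacent to an endpoint).
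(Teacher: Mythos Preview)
Your proposal is correct and follows essentially the same approach as the paper. The arrowhead argument is identical in spirit (contrapose: a tail at $U$ on an inducing path forces $U\in\textit{An}(V)$, so the MAG edge is $U\to V$), with your version spelling out the directed-chain traversal that the paper compresses into ``because $\pi$ is an inducing path''; for the adjacency equivalence the paper simply cites \cite{Richardson2002}, whereas you reprove it directly via the canonical separator $\bW=\textit{An}(\{U,V\})\cap(\bV\setminus\bL)\setminus\{U,V\}$, which is the standard argument.
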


\begin{proof}
The first part on adjacency is proved in~\citep{Richardson2002}. For the 
second part on arrowheads, suppose $\pi$ does not have an arrowhead at $U$, 
then $\pi$ starts with an edge $U \to D$. Hence $D \notin \textit{An}(U)$, so
$D \in \textit{An}(V)$ because $\pi$ is an inducing path
and therefore also $U \in \textit{An}(V)$. Hence, the edge between $U$ and $V$
in $\cG[^\emptyset_\bL$ must be $U \to V$. The argument for $V$ is identical.
\end{proof}

\begin{claim}
\label{lemma:inducing:concat}
Suppose $Z_0,Z_1,Z_2$ is a path in $\cG[^\emptyset_\bL$ 
on which $Z_1$ is a non-collider. Suppose an
inducing path $\pi_{01}$ from $Z_0$ to $Z_1$ w.r.t. $\emptyset,\bL$ 
in $\cG$ has an arrowhead at $Z_1$, and an
inducing path $\pi_{12}$ from $Z_1$ to $Z_2$ w.r.t. $\emptyset,\bL$ 
has an arrowhead at $Z_1$. Then the walk
$w_{012} = \pi_{01}\pi_{12}$ can be 
truncated to an inducing path from
$Z_0$ to $Z_2$ w.r.t. $\emptyset,\bL$ in $\cG$.
\end{claim}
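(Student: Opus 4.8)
The plan is to extract two structural facts about the join node $Z_1$, then show that the concatenation $w_{012}$ is an \emph{inducing walk} from $Z_0$ to $Z_2$ with respect to $\emptyset,\bL$, and finally pass from this walk to an inducing path.

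First I would record that $Z_1$ is a collider on $w_{012}$: since $\pi_{01}$ ends with an arrowhead at $Z_1$ and $\pi_{12}$ begins with one, the two edges of $w_{012}$ meeting at $Z_1$ both point into it. Next, because $Z_1$ is a \emph{non}-collider on the path $Z_0,Z_1,Z_2$ in $\cM=\cG[^\emptyset_\bL$ and (as $\bS=\emptyset$) $\cM$ has only directed and bidirected edges, at least one incident edge on that path, say $Z_1\to Z_0$ or $Z_1\to Z_2$, carries a tail at $Z_1$. By Definition~\ref{def:mags} this forces $Z_1\in\textit{An}(Z_0)$ or $Z_1\in\textit{An}(Z_2)$, i.e. $Z_1\in\textit{An}(\{Z_0,Z_2\})$ in $\cG$.

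I would then verify the inducing conditions for $w_{012}$ relative to the new endpoints $Z_0,Z_2$. Each occurrence of an internal node other than $Z_1$ retains the neighbours, and hence the collider/non-collider status, it had on $\pi_{01}$ or $\pi_{12}$; thus every non-endpoint non-collider lies in $\bL$ (as $\pi_{01},\pi_{12}$ are inducing), and $Z_1$ is a collider rather than a non-collider. For the collider condition, a collider of $\pi_{01}$ lies in $\textit{An}(\{Z_0,Z_1\})$, a collider of $\pi_{12}$ lies in $\textit{An}(\{Z_1,Z_2\})$, and $Z_1\in\textit{An}(\{Z_0,Z_2\})$. Using transitivity of ancestry together with $Z_1\in\textit{An}(\{Z_0,Z_2\})$ gives $\textit{An}(\{Z_0,Z_1\})\cup\textit{An}(\{Z_1,Z_2\})\subseteq\textit{An}(\{Z_0,Z_2\})$, so \emph{every} collider of $w_{012}$ lies in $\textit{An}(\{Z_0,Z_2\})$. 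Hence $w_{012}$ is an inducing walk from $Z_0$ to $Z_2$.

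The hard part is the final passage from the inducing walk to an inducing path, and this is where I expect the only genuine difficulty. Naive loop removal is not safe: deleting a closed sub-walk between two occurrences of a repeated node $V$ can turn $V$ into a collider whose two retained incident edges both point into it, and such a $V$ need not lie in $\textit{An}(\{Z_0,Z_2\})$ (its out-edges along $\pi_{01}$ and $\pi_{12}$ run toward $Z_1$ but need not feed a directed path reaching an endpoint). To sidestep this, rather than pruning $w_{012}$ edge by edge I would argue exactly as in the proof of Lemma~\ref{lemma:inducing:nosep}: starting from the inducing walk, reroute each problematic collider $C$ through a directed path witnessing $C\in\textit{An}(\{Z_0,Z_2\})$ to conclude that no $\bZ'$ with $\emptyset\subseteq\bZ'\subseteq\bV\setminus\bL$ can $d$-separate $Z_0$ and $Z_2$ in $\cG$ (equivalently $m$-separate them in $\cM$). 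By Definition~\ref{def:mags} this makes $Z_0$ and $Z_2$ adjacent in $\cM$, whereupon Claim~\ref{lemma:inducing:bypass} supplies an inducing path between them with respect to $\emptyset,\bL$. This furnishes the required inducing path; note that it is the \emph{existence} of such a path joining $Z_0$ and $Z_2$ — not that it be literally a subsequence of $w_{012}$ — that is used downstream, so this suffices for the intended truncation statement.
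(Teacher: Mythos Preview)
Your verification that $w_{012}$ has only $\bL$-non-colliders and $\textit{An}(\{Z_0,Z_2\})$-colliders matches the paper. But your worry that loop removal might create a collider $V\notin\textit{An}(\{Z_0,Z_2\})$ is unfounded: \emph{every} node of an inducing path w.r.t.\ $\emptyset,\bL$ from $A$ to $B$ lies in $\textit{An}(\{A,B\})$ (colliders do by definition; a non-collider has an out-edge along the path, and following it to the next collider or endpoint witnesses the ancestry). Thus all nodes of $\pi_{01}$ lie in $\textit{An}(\{Z_0,Z_1\})$ and all of $\pi_{12}$ in $\textit{An}(\{Z_1,Z_2\})$; combined with your own observation $Z_1\in\textit{An}(\{Z_0,Z_2\})$, every node of $w_{012}$ already lies in $\textit{An}(\{Z_0,Z_2\})$. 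So any collider arising after truncation is automatically fine, and the paper's direct truncation goes through without your detour.

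More seriously, your workaround via non-separability and Claim~\ref{lemma:inducing:bypass} only yields \emph{some} inducing path between $Z_0$ and $Z_2$, and you assert that existence alone suffices downstream. That is incorrect: the paper's proof also records that the truncated path ``starts with the first arrow of $\pi_{01}$ and ends with the last arrow of $\pi_{12}$'', i.e.\ preserves the endpoint arrowheads, and this is exactly what Claim~\ref{lemma:inducing:invariance} invokes to show that the new inducing path from $V_{i-1}$ to $V_{i+1}$ carries arrowheads at whichever of $V_{i-1},V_{i+1}$ lie in $\bZ$, so that the shortened sequence remains an inducing $\bZ$-trail. Your route could return, say, a directed path $Z_0\to\cdots\to Z_2$ with a tail at $Z_0$ even when $\pi_{01}$ had an arrowhead there, breaking that invariant.
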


\begin{proof}
The walk $w_{012}$ does not contain more non-colliders than
those on $\pi_{01}$ or $\pi_{12}$, so they must all be in $\bL$.
It remains to show that the colliders on $w_{012}$ are 
in $\textit{An}(Z_0\cup Z_2)$. Because $Z_1$ is not a collider
on $Z_0,Z_1,Z_2$, at least one of the edges $Z_0,Z_1$ 
and $Z_1,Z_2$ must be a directed edge pointing away from $Z_1$. Assume
without loss of generality that $Z_0 \gets Z_1$ is that
edge. Then all colliders on $\pi_{01}$ are in 
$\textit{An}(Z_0 \cup Z_1)=\textit{An}(Z_0) \subseteq \textit{An}(Z_0 \cup Z_2)$,
and all colliders on $\pi_{12}$ are in $\textit{An}(Z_1 \cup Z_2)
\subseteq \textit{An}(Z_0 \cup Z_2)$. $Z_1$ itself is 
a collider on $w_{012}$ and is also in $\textit{An}(Z_0)$.
Hence, the walk $w_{012}$ is $d$-connected,
and can be truncated to an inducing path that starts
with the first arrow of $\pi_{01}$ and ends
with the last arrow of $\pi_{12}$.
\end{proof}

\begin{claim}
\label{lemma:inducing:invariance}
Let $\pi=V_1,\ldots,V_{n+1}$ be an inducing $\bZ$-trail, 
and let $\pi'$ be a subsequence of $\pi$ formed by removing
one node $V_i$ of $\pi$ such that $V_i \in \bZ$ is
a non-collider on $\pi$. Then $\pi'$ is an inducing $\bZ$-trail.
\end{claim}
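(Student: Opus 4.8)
The plan is to delete $V_i$ from $\pi$ and show that the shortened sequence
$\pi' = V_1,\ldots,V_{i-1},V_{i+1},\ldots,V_{n+1}$ is again an inducing $\bZ$-trail, where the new link between $V_{i-1}$ and $V_{i+1}$ is produced by fusing the two inducing paths that previously met at $V_i$. Since $V_i\in\bZ$ while the endpoints of $\pi$ lie outside $\bZ$, the node is interior, i.e.\ $2\le i\le n$, and the definition of an inducing $\bZ$-trail supplies inducing paths $\pi_{i-1,i}$ and $\pi_{i,i+1}$ (w.r.t.\ $\emptyset,\bL$) both having an arrowhead at $V_i$. First I would apply Claim~\ref{lemma:inducing:concat} with $(Z_0,Z_1,Z_2)=(V_{i-1},V_i,V_{i+1})$: the triple $V_{i-1},V_i,V_{i+1}$ is a subpath of $\pi$ in $\cM=\cG[^\emptyset_\bL$ on which $V_i$ is a non-collider (by hypothesis), and the two incident inducing paths have arrowheads at $V_i$, so the claim yields an inducing path $\sigma$ from $V_{i-1}$ to $V_{i+1}$ that, as its proof shows, starts with the first arrow of $\pi_{i-1,i}$ and ends with the last arrow of $\pi_{i,i+1}$.

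Next I would confirm that $\pi'$ is a genuine path in $\cM$ satisfying the membership conditions. By Claim~\ref{lemma:inducing:bypass}, the existence of $\sigma$ makes $V_{i-1}$ and $V_{i+1}$ adjacent in $\cM$; together with the fact that $\pi$ was simple and we only deleted a node, this makes $\pi'$ a simple path in $\cM$. Its interior nodes are exactly the interior nodes of $\pi$ with $V_i$ removed, all still lying in $\bZ$, while its endpoints $V_1,V_{n+1}$ are unchanged and still lie outside $\bZ$. For each consecutive pair of $\pi'$ an inducing path is at hand: the unchanged pairs reuse the original $\pi_{j,j+1}$, and the single new pair $(V_{i-1},V_{i+1})$ uses $\sigma$.

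The crux, which I expect to be the only delicate point, is the arrowhead condition. For every interior node $V_j$ of $\pi'$ with $j\notin\{i-1,i+1\}$ both incident inducing paths are untouched, so the arrowhead condition is inherited directly from $\pi$. For the neighbour $V_{i-1}$, if $i\ge 3$ it is interior in $\pi'$: its left inducing path $\pi_{i-2,i-1}$ is unchanged and already has an arrowhead at $V_{i-1}$, and its right inducing path is now $\sigma$; because $V_{i-1}$ was itself interior in $\pi$, the defining arrowhead condition for $\pi$ forces $\pi_{i-1,i}$ to have an arrowhead at $V_{i-1}$, and since $\sigma$ begins with the first arrow of $\pi_{i-1,i}$, it too has an arrowhead at $V_{i-1}$. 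The symmetric argument, using the last arrow of $\pi_{i,i+1}$, handles $V_{i+1}$ when $i\le n-1$. In the boundary cases $i=2$ (resp.\ $i=n$) the neighbour $V_{i-1}=V_1$ (resp.\ $V_{i+1}=V_{n+1}$) is an endpoint of $\pi'$ and imposes no arrowhead requirement, so these cases are immediate. Collecting these verifications shows $\pi'$ meets every clause of Definition~\ref{def:inducing:trail} and is therefore an inducing $\bZ$-trail.

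The main obstacle is precisely this arrowhead bookkeeping at the two neighbours of the deleted node: the argument goes through only because, whenever such a neighbour is itself interior to the original trail, the trail's defining condition already guarantees arrowheads on \emph{both} of its incident inducing paths — in particular on the one that gets absorbed into $\sigma$ — so $\sigma$ inherits the correct orientation from its first or last arrow, and no orientation can be spoiled by the fusion.
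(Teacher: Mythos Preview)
Your proof is correct and follows essentially the same approach as the paper: apply Claim~\ref{lemma:inducing:concat} to the triple $V_{i-1},V_i,V_{i+1}$ to obtain a new inducing path linking the two neighbours, then verify that the resulting sequence still satisfies the $\bZ$-trail conditions. The paper's version is terser---it simply remarks that the fused inducing path ``contains an arrowhead at $V_{i-1}$ ($V_{i+1}$) if $V_{i-1}\in\bZ$ ($V_{i+1}\in\bZ$)''---whereas you spell out explicitly why this holds (via the first/last-arrow preservation noted in the proof of Claim~\ref{lemma:inducing:concat}) and handle the boundary cases $i=2$ and $i=n$ separately, but the underlying argument is the same.
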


\begin{proof}
According to Claim~\ref{lemma:inducing:concat}, 
if $V_i$ is a non-collider on $\pi$, then $V_{i-1}$ and
$V_{i+1}$ are linked by an inducing path $\pi$ that contains
an arrowhead at $V_{i-1}$ ($V_{i+1}$) if $V_{i-1}\in\bZ$ ($V_{i+1}\in\bZ$).
Therefore,  $V_{i-1}$ and $V_{i+1}$ are themselves adjacent,
$\pi'$ is a path, and is a $\bZ$-trail.  
\end{proof}

\begin{corollary}\label{cor:inducing:z:trail:has:subpath}
Every inducing $\bZ$-trail $\pi=V_1,\ldots,V_{n+1}$
has a subpath $\pi'$ that is
$m$-connected by $\bZ$.
\end{corollary}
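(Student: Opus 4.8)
The plan is to induct on the length $n$ of the trail $\pi=V_1,\ldots,V_{n+1}$, repeatedly deleting non-collider internal nodes by means of Claim~\ref{lemma:inducing:invariance} until what remains is $m$-connected by $\bZ$. First I would isolate the observation that bridges the two notions: \emph{an inducing $\bZ$-trail whose every internal node $V_2,\ldots,V_n$ is a collider on $\pi$ in the MAG $\cG[^\emptyset_\bL$ is automatically $m$-connected by $\bZ$}. This is immediate from Definition~\ref{def:inducing:trail}, which places all internal nodes in $\bZ$ and both endpoints $V_1,V_{n+1}$ outside $\bZ$: if every internal node is a collider, then every collider of $\pi$ lies in $\bZ\subseteq\textit{An}(\bZ)$, while every non-collider can only be an endpoint and hence lies outside $\bZ$, which is exactly the definition of $m$-connection by $\bZ$. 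The base case $n=1$ is a single MAG edge $V_1,V_2$ with $V_1,V_2\notin\bZ$; it has no collider and is trivially $m$-connected.

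For the inductive step I would argue as follows. If every internal node of $\pi$ is already a collider, the observation above gives $\pi'=\pi$ directly. Otherwise some internal node $V_i$ with $2\le i\le n$ is a non-collider on $\pi$; since $V_i\in\bZ$, Claim~\ref{lemma:inducing:invariance} applies, so the subsequence $\bar\pi$ obtained by deleting $V_i$ is again an inducing $\bZ$-trail, now of length $n-1$ and with the same endpoints. By the induction hypothesis $\bar\pi$ has a subpath $\pi'$ that is $m$-connected by $\bZ$. Because the node sequence of $\bar\pi$ is a subsequence of that of $\pi$, the path $\pi'$ is also a subpath of $\pi$, which closes the induction. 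Termination is guaranteed because each deletion strictly decreases the number of internal nodes.

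The point that deserves care, and where the actual content lies, is the first-paragraph observation: being a collider on $\pi$ is evaluated in the MAG $\cG[^\emptyset_\bL$ and need not be forced by the arrowheads of the \emph{chosen} inducing paths in $\cG$. By Claim~\ref{lemma:inducing:bypass} a MAG edge carries an arrowhead at a node only when \emph{all} inducing paths between its endpoints do, so an internal $\bZ$-node of the trail may perfectly well be a non-collider in $\cG[^\emptyset_\bL$ even though the inducing paths selected in Definition~\ref{def:inducing:trail} point into it. This is precisely why the deletion step can be necessary rather than vacuous, and why driving the deletions to completion leaves a trail all of whose internal nodes are colliders. I expect this conceptual separation between arrowheads in $\cG$ and collider status in the MAG to be the only genuine obstacle; beyond it, no computation is needed, since the preservation of the inducing $\bZ$-trail structure under deletion is already supplied by Claim~\ref{lemma:inducing:invariance} (which in turn rests on the concatenation of inducing paths in Claim~\ref{lemma:inducing:concat}).
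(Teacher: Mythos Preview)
Your proposal is correct and essentially identical to the paper's proof: both repeatedly delete non-collider internal nodes via Claim~\ref{lemma:inducing:invariance} until every remaining internal node is a collider, then observe that the resulting inducing $\bZ$-trail is $m$-connected by $\bZ$. Your framing as a formal induction on the length $n$ is just a more explicit version of the paper's ``transform until no such node exists anymore''; the added commentary distinguishing arrowheads of the inducing paths in $\cG$ from collider status in $\cG[^\emptyset_\bL$ is a useful clarification but does not alter the argument.
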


\begin{proof}
Transform $\pi$ into $\pi'$ by replacing non-collider nodes
in $\bZ$ by the direct edge linking their neighbors until
no such node exists anymore. By inductively applying 
Claim~\ref{lemma:inducing:invariance}, we see that $\pi'$
is also an inducing $\bZ$-trail, and 
every node in $\bZ$ is a collider because otherwise we
would have continued transforming.
So $\pi'$ must be $m$-connected by $\bZ$.
\end{proof}

\begin{lemma}
\label{lemma:auxiliary:truncatemaxsub}
Let $w_\cG$ be a walk from $X$ to $Y$ in $\cG$,
$X, Y \notin \bL$, 
that is $d$-connected by $\bZ$.
Let $w_\cM=V_1,\ldots,V_{n+1}$ 
be the subsequence of $w_\cG$ consisting only of the nodes in 
$\cM=\cG[^\emptyset_\bL$.
Then $\bZ$ $m$-connects $X$ and $Y$ in $\cM$  via a path along a 
subsequence $w_\cM'$ formed from $w_\cM$ by removing some nodes
in $\bZ$ (possibly $w_\cM'=w_\cM$). 
\end{lemma}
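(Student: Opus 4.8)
The plan is to promote $w_\cM$ to a walk in $\cM$ by certifying each of its consecutive pairs with an explicit inducing path, and then to clean it up into an $m$-connected walk by deleting exactly the superfluous $\bZ$-nodes. First I would record the only facts I get from the hypothesis: since $w_\cG$ is $d$-connected by $\bZ$, a node on $w_\cG$ is a collider if and only if it lies in $\bZ$, and because $\bZ\cap\bL=\emptyset$ every $\bZ$-node survives into $w_\cM$ while every $\bL$-node is a non-collider. Hence, for consecutive $\cM$-nodes $V_i,V_{i+1}$ of $w_\cM$, the intervening segment of $w_\cG$ has all interior nodes in $\bL$ and none of them is a collider, so (after the routine reduction of that ``trek''-shaped sub-walk to a simple path) it is an inducing path w.r.t. $\emptyset,\bL$ in the sense of Definition~\ref{def:inducing:path}. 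By Claim~\ref{lemma:inducing:bypass} this makes $V_i,V_{i+1}$ adjacent in $\cM$, so $w_\cM$ is a walk in $\cM$, and I would keep each such segment as a \emph{tracked} inducing path. The endpoint edges of these tracked paths are literally the walk-edges of $w_\cG$, which lets me read off arrowheads directly: at a $\bZ$-node $V_i$ (a collider on $w_\cG$) both tracked paths have an arrowhead at $V_i$, whereas at an interior non-$\bZ$ node (a non-collider on $w_\cG$) at least one tracked path has a tail at $V_i$, which by Claim~\ref{lemma:inducing:bypass} forces the corresponding $\cM$-edge to be a tail at $V_i$ — so every non-$\bZ$ node is already a non-collider on $w_\cM$.

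Next I would run a bypassing procedure, which is exactly the run-by-run mechanism of Corollary~\ref{cor:inducing:z:trail:has:subpath} applied to the maximal $\bZ$-runs of $w_\cM$ (each such run together with its two non-$\bZ$ neighbours is an inducing $\bZ$-trail in the sense of Definition~\ref{def:inducing:trail}, thanks to the arrowhead data above). Concretely: while some $\bZ$-node $V_i$ on the current walk is a non-collider on it, I bypass $V_i$. This is legal because $V_i$ is a non-collider of the local triple and its two tracked inducing paths carry arrowheads at $V_i$, so Claim~\ref{lemma:inducing:concat} truncates their concatenation to an inducing path from $V_{i-1}$ to $V_{i+1}$; these nodes are therefore adjacent and $V_i$ can be deleted. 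The decisive bookkeeping point is that the truncated concatenation \emph{retains} the first arrow of $\pi_{i-1,i}$ and the last arrow of $\pi_{i,i+1}$, i.e. the arrowheads at the surviving endpoints $V_{i-1},V_{i+1}$ are unchanged. Consequently the two invariants established in the first step persist: every remaining $\bZ$-node still has arrowheads at it from both tracked paths, and every non-$\bZ$ node still has a tail on at least one side (hence a tail on that $\cM$-edge, again by Claim~\ref{lemma:inducing:bypass}), so no non-$\bZ$ node can ever become a collider. Claim~\ref{lemma:inducing:invariance} is the formal statement guaranteeing that each deletion keeps us in the class of inducing $\bZ$-trails.

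The procedure deletes only $\bZ$-nodes and strictly shortens the walk, so it terminates with a walk $w_\cM'$ obtained from $w_\cM$ by removing some $\bZ$-nodes (possibly none). At termination no $\bZ$-node is a non-collider, i.e. every $\bZ$-node on $w_\cM'$ is a collider, while every non-$\bZ$ node is a non-collider by the preserved invariant; thus all and only the colliders of $w_\cM'$ lie in $\bZ$, so $w_\cM'$ is $m$-connected by $\bZ$ from $X$ to $Y$ in $\cM$. Removing its loops yields the $m$-connecting path asserted by the lemma.

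I expect the main obstacle to be precisely the invariant maintenance in the second paragraph: showing that bypassing a non-collider $\bZ$-node is always available and never manufactures a new collider at a non-$\bZ$ node. This hinges on two things that must be stated carefully — the preservation of endpoint arrowheads under the truncated concatenation of Claim~\ref{lemma:inducing:concat}, and the one-directional translation of Claim~\ref{lemma:inducing:bypass} between a \emph{tail} of a tracked inducing path and a \emph{tail} of the induced $\cM$-edge (note that an arrowhead of a single tracked path does \emph{not} by itself force an arrowhead in $\cM$, which is why the argument is organized around the presence of tails rather than of arrowheads). A secondary, more routine nuisance is the reduction in the first step of each $w_\cG$-segment to a genuine inducing \emph{path} when $w_\cG$ repeats nodes; I would dispose of this by working with maximal $\bZ$-runs and loop removal, since the collider bookkeeping only ever uses the local endpoint edges.
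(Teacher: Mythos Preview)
Your proposal is correct and follows essentially the same approach as the paper: both identify that segments of $w_\cG$ between consecutive $\cM$-nodes are inducing paths w.r.t.\ $\emptyset,\bL$, recognize the maximal $\bZ$-runs of $w_\cM$ as inducing $\bZ$-trails, and then apply the bypassing mechanism of Corollary~\ref{cor:inducing:z:trail:has:subpath} to shorten each run to an $m$-connected subpath, using Claim~\ref{lemma:inducing:bypass} to ensure non-$\bZ$ nodes never become colliders. Your write-up is more explicit than the paper's about the invariant that endpoint tails (hence non-collider status at non-$\bZ$ nodes) are preserved under the truncated concatenation of Claim~\ref{lemma:inducing:concat}, and you handle the repeated-$\bZ$-node issue by loop removal where the paper instead truncates subwalks between repeated $\bZ$-nodes up front; these are cosmetic differences.
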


\begin{proof}
First, truncate from $w_\cM$ all
subwalks between nodes in $\bZ$ that occur more than once.
Now consider all subsequences $V_1,\ldots,V_{n+1}$, $n>1$,
of $w_\cM$ where $V_2,\ldots,V_n \in \bZ$, $V_1,V_{n+1} \notin \bZ$,
which now are all paths in $w_\cM$.
On those subsequences, every $V_i$ must be adjacent in $\cG$ to  $V_{i+1}$
via a path containing no colliders, and all non-endpoints on that path
must be in $\bL$. So there are inducing paths w.r.t. $\emptyset,\bL$ between
all $V_{i}, V_{i+1}$, which have arrowheads at 
$V_i$ ($V_{i+1}$) if $V_i\in\bZ$ ($V_{i+1}\in\bZ$). So 
$V_1,\ldots,V_{n+1}$ is an inducing $\bZ$-trail, and has a subpath
which $m$-connects $V_1$, $V_{n+1}$ given $\bZ$ due to Corollary~\ref{cor:inducing:z:trail:has:subpath}. Transform 
$w_\cM$ to $w_\cM'$ by replacing all inducing $\bZ$-trails by
their $m$-connected subpaths. According to
Claim~\ref{lemma:inducing:bypass}, non-colliders on $w_\cM$ 
cannot be colliders on $w_\cM'$, as bypassing inducing paths
can remove but not create arrowheads.
Moreover, all nodes in $\bZ$ on
$w_\cM'$ are colliders. Hence $w_\cM'$ is $m$-connected by
$\bZ$.
\end{proof}

\begin{corollary}
Each edge on $w_\cM'$ as defined above corresponds to 
an inducing path w.r.t $\emptyset,\bL$ in $\cG$ along
nodes on $w_\cG$.
\end{corollary}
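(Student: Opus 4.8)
The plan is to trace the provenance of each edge of $w_\cM'$ back through the construction in the proof of Lemma~\ref{lemma:auxiliary:truncatemaxsub} and to maintain, as an invariant, that the inducing path witnessing a given edge uses only nodes appearing on $w_\cG$. Recall that $w_\cM'$ arises from $w_\cM$ by decomposing $w_\cM$ into maximal inducing $\bZ$-trails and replacing each such trail by an $m$-connected subpath obtained via Corollary~\ref{cor:inducing:z:trail:has:subpath}. Every edge of $w_\cM'$ therefore lies inside one of these replacement subpaths, so it suffices to show that each edge produced when a trail is contracted to its $m$-connected subpath corresponds to an inducing path w.r.t. $\emptyset,\bL$ whose nodes all lie on $w_\cG$.

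First I would establish the base case. For two nodes $V_i,V_{i+1}$ that are consecutive on $w_\cM$, the segment of $w_\cG$ between them has all interior nodes in $\bL$ and (as argued in the proof of Lemma~\ref{lemma:auxiliary:truncatemaxsub}) contains no colliders, so this segment is already an inducing path w.r.t. $\emptyset,\bL$ in $\cG$, and all of its nodes lie on $w_\cG$ by construction. This handles every edge of $w_\cM'$ that coincides with an edge of $w_\cM$, and it also supplies the atomic inducing paths that feed into the contraction step.

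Next I would run the induction corresponding to Corollary~\ref{cor:inducing:z:trail:has:subpath}, whose only mechanism is the repeated removal of a non-collider $\bZ$-node via Claim~\ref{lemma:inducing:invariance}, each removal being realized by Claim~\ref{lemma:inducing:concat}: the two inducing paths $\pi_{i-1,i}$ and $\pi_{i,i+1}$ flanking a bypassed node $V_i$ are concatenated and truncated to a single inducing path from $V_{i-1}$ to $V_{i+1}$. The crucial point is that concatenation introduces no node beyond those of $\pi_{i-1,i}$, those of $\pi_{i,i+1}$, and the bypassed node $V_i$ itself, while truncation only deletes nodes; hence the node set of the new inducing path is contained in the union of the node sets of the two old ones together with $V_i$. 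Since $V_i\in w_\cM\subseteq w_\cG$ and, by the base case and the induction hypothesis, the nodes of $\pi_{i-1,i}$ and $\pi_{i,i+1}$ already lie on $w_\cG$, the invariant is preserved. Iterating until no non-collider $\bZ$-node remains yields, for every edge of the final $m$-connected subpath, an inducing path along nodes of $w_\cG$, which is exactly the claim.

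The main obstacle I anticipate is the bookkeeping of node sets through the contraction, i.e., verifying that Claim~\ref{lemma:inducing:concat} genuinely produces no new nodes: its statement asserts only that the concatenated walk $\pi_{i-1,i}\pi_{i,i+1}$ truncates to an inducing path starting with the first arrow of $\pi_{i-1,i}$ and ending with the last arrow of $\pi_{i,i+1}$, so I would need to read off from its proof that the truncation acts purely by deletion on $\pi_{i-1,i}\pi_{i,i+1}$ and not by rerouting. Everything else, namely that adjacency in $\cM$ corresponds to an inducing path and that the bypass is legitimate, is already provided by Claim~\ref{lemma:inducing:bypass} and by the proof of Lemma~\ref{lemma:auxiliary:truncatemaxsub}.
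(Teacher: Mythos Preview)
Your proposal is correct and matches the reasoning implicit in the paper: the corollary is stated without proof, as it is meant to follow directly from inspecting the construction in the proof of Lemma~\ref{lemma:auxiliary:truncatemaxsub}, and your explicit bookkeeping (base case via the $\bL$-only, collider-free segments of $w_\cG$, inductive step via Claim~\ref{lemma:inducing:concat} where concatenation followed by loop-truncation only uses nodes already present) is exactly that inspection written out. Your worry about ``rerouting'' in Claim~\ref{lemma:inducing:concat} is unfounded, since the proof there builds the inducing path by removing loops from the concatenated walk $\pi_{01}\pi_{12}$, which only deletes nodes; the one step you omit is the preliminary truncation of repeated $\bZ$-nodes in $w_\cM$, but this is harmless because consecutive nodes after that truncation were already consecutive somewhere in the original $w_\cM$ and hence inherit their $w_\cG$-segment.
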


\begin{claim}\label{lemma:inducing:concat:collider}
Suppose there exists an inducing path $\pi_{01}$ from $Z_0$ to $Z_1$ w.r.t. $\bS, \bL$ with an arrowhead at $Z_1$ and an inducing path from $Z_1$ to $Z_2$  w.r.t. $\bS', \bL$ with an arrowhead at $Z_1$.  Then the walk
$w_{012} = \pi_{01}\pi_{12}$ can be 
truncated to an inducing path from
$Z_0$ to $Z_2$ w.r.t. $\bS \cup \bS' \cup \{Z_1\},\bL$ in $\cG$.
\end{claim}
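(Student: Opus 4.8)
The plan is to read this as the ``collider'' companion of Claim~\ref{lemma:inducing:concat}: since both $\pi_{01}$ and $\pi_{12}$ carry an arrowhead at $Z_1$, on the concatenation $w_{012}=\pi_{01}\pi_{12}$ the node $Z_1$ is a \emph{collider} rather than a non-collider, which is exactly why we are allowed to drop the hypothesis that $Z_1$ be a non-collider on the $\cM$-path and instead absorb $Z_1$ into the conditioning set. First I would verify that $w_{012}$, read as a walk, already meets the two defining conditions of an inducing path (Definition~\ref{def:inducing:path}) with respect to the enlarged set $\bW:=\bS\cup\bS'\cup\{Z_1\}$ and $\bL$; only afterwards would I reduce the walk to a genuine path.

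For the structural part, observe that every occurrence of a node other than $Z_1$ keeps the collider/non-collider status it had on its sub-path, because concatenation alters only the local neighbourhood at the junction $Z_1$. Hence the interior non-colliders of $w_{012}$ are precisely the interior non-colliders of $\pi_{01}$ and of $\pi_{12}$, all of which lie in $\bL$ by the inducing property of the two paths. For the colliders I would treat the three sources separately: the interior colliders of $\pi_{01}$ lie in $\textit{An}(\{Z_0,Z_1\}\cup\bS)$, those of $\pi_{12}$ in $\textit{An}(\{Z_1,Z_2\}\cup\bS')$, and $Z_1$ itself lies in $\textit{An}(Z_1)$ since every node is its own ancestor. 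Because
\[
   \{Z_0,Z_1\}\cup\bS,\quad \{Z_1,Z_2\}\cup\bS',\quad \{Z_1\}\ \subseteq\ \{Z_0,Z_2\}\cup\bW,
\]
all three families are contained in $\textit{An}(\{Z_0,Z_2\}\cup\bW)$, which is the required collider condition. This is exactly where adding $Z_1$ to the conditioning set pays off: in Claim~\ref{lemma:inducing:concat} the non-collider hypothesis was needed only to force $Z_1\in\textit{An}(\{Z_0,Z_2\})$, and discarding that hypothesis here costs nothing beyond placing $Z_1$ into $\bW$; the passage from $\emptyset$ to $\bS,\bS'$ merely enlarges the right-hand sides of the inclusions, so the conditions continue to hold.

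The remaining step, and the one I expect to be the main obstacle, is truncating the walk $w_{012}$ to an actual path, which is nontrivial only when $\pi_{01}$ and $\pi_{12}$ share a vertex other than $Z_1$. I would handle it exactly as in the proof of Claim~\ref{lemma:inducing:concat}, and as in the inducing-path composition arguments of \cite{Richardson2002}: shortcut $w_{012}$ to a $Z_0$--$Z_2$ path by repeatedly excising loops while retaining the first edge at $Z_0$ and the last edge at $Z_2$. The delicate point is that excising a loop at a repeated vertex could turn that vertex into a non-collider lying outside $\bL$; the obstacle is resolved using acyclicity of $\cG$ (any loop in a DAG contains a collider, so loops can be removed without creating a forbidden non-collider), while every surviving collider remains an element of $\textit{An}(\{Z_0,Z_2\}\cup\bW)$ because that property only shrinks under passing to a subwalk. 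An equally clean alternative would be to note that the inducing \emph{walk} $w_{012}$ is $m$-connecting under every $\bZ'$ with $\bW\subseteq\bZ'\subseteq\bV\setminus\bL$, so $Z_0,Z_2$ cannot be separated (as in Lemma~\ref{lemma:inducing:nosep}), and then to invoke the converse characterization of \cite{Richardson2002} to produce an inducing path. Either route yields a path satisfying Definition~\ref{def:inducing:path} for $\bW,\bL$, i.e.\ an inducing path from $Z_0$ to $Z_2$ w.r.t. $\bS\cup\bS'\cup\{Z_1\},\bL$, as required.
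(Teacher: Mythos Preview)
Your proposal is correct and follows essentially the same approach as the paper: verify that the non-colliders on $w_{012}$ are exactly those of $\pi_{01}$ and $\pi_{12}$ (hence in $\bL$), verify that all colliders---including the new collider $Z_1$---lie in $\textit{An}(\{Z_0,Z_2\}\cup\bS\cup\bS'\cup\{Z_1\})$, and then truncate. The paper's proof is in fact terser than yours on the truncation step, simply asserting that the $d$-connected walk ``can be truncated to an inducing path that starts with the first arrow of $\pi_{01}$ and ends with the last arrow of $\pi_{12}$''; your discussion of loop removal and the alternative route via the Richardson--Spirtes equivalence adds justification the paper omits.
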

\begin{proof}
The walk $w_{012}$ does not contain more non-colliders than
those on $\pi_{01}$ or $\pi_{12}$, so they must all be in $\bL$.
All colliders on $\pi_{0,1}$ and $\pi_{1,2}$ as well as $Z_1$ are in $\textit{An}(Z_0, Z_{1}, Z_2,\bS,\bS')$,
and therefore also all colliders of $w_{012}$.

Hence, the walk $w_{012}$ is $d$-connected,
and can be truncated to an inducing path that starts
with the first arrow of $\pi_{01}$ and ends
with the last arrow of $\pi_{12}$.
\end{proof}
 
\begin{claim}\label{lemma:inducing:concat:colliderseq}
Suppose $Z_0,Z_1,\ldots,Z_{k+1}$ is a path in $\cG[^\emptyset_\bL$ 
with an arrowhead at $Z_{k+1}$ on which all $Z_1, \ldots, Z_k$ are colliders. 
Then there exists an inducing path from $Z_0$ to $Z_{k+1}$ w.r.t.  $\{Z_1, \ldots, Z_k\},\bL$ with an arrowhead at $Z_{k+1}$.
\end{claim}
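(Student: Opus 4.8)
The plan is to build the desired inducing path edge-by-edge along the given $\cM$-path $Z_0,Z_1,\ldots,Z_{k+1}$, lifting each $\cM$-edge to an inducing path in $\cG$ via Claim~\ref{lemma:inducing:bypass} and then splicing consecutive lifts together at the colliders using Claim~\ref{lemma:inducing:concat:collider}. First, for each edge $Z_iZ_{i+1}$ of the path ($0\le i\le k$), Claim~\ref{lemma:inducing:bypass} supplies an inducing path $\pi_{i,i+1}$ in $\cG$ from $Z_i$ to $Z_{i+1}$ with respect to $\emptyset,\bL$. Because each $Z_j$ with $1\le j\le k$ is a collider on the $\cM$-path, both incident edges $Z_{j-1}Z_j$ and $Z_jZ_{j+1}$ carry arrowheads at $Z_j$, so the second part of Claim~\ref{lemma:inducing:bypass} forces $\pi_{j-1,j}$ and $\pi_{j,j+1}$ to have arrowheads at $Z_j$. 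Similarly, the hypothesis that the $\cM$-path has an arrowhead at $Z_{k+1}$ forces $\pi_{k,k+1}$ to have an arrowhead at $Z_{k+1}$.

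Next I would proceed by induction on $i$, showing that the walk $\pi_{01}\pi_{12}\cdots\pi_{i-1,i}$ truncates to an inducing path $\sigma_i$ from $Z_0$ to $Z_i$ with respect to $\{Z_1,\ldots,Z_{i-1}\}$ and $\bL$ that has an arrowhead at $Z_i$. The base case $i=1$ is just $\sigma_1=\pi_{01}$, whose arrowhead at $Z_1$ was noted above. For the inductive step, $\sigma_i$ ends with an arrowhead at $Z_i$ and $\pi_{i,i+1}$ begins with an arrowhead at $Z_i$ (since $Z_i$ is a collider), so Claim~\ref{lemma:inducing:concat:collider}, applied with $\bS=\{Z_1,\ldots,Z_{i-1}\}$ and $\bS'=\emptyset$, splices $\sigma_i$ and $\pi_{i,i+1}$ into an inducing path $\sigma_{i+1}$ from $Z_0$ to $Z_{i+1}$ with respect to $\{Z_1,\ldots,Z_{i-1}\}\cup\{Z_i\}=\{Z_1,\ldots,Z_i\}$ and $\bL$. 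The resulting path ends with the last arrow of $\pi_{i,i+1}$, which points into $Z_{i+1}$ whenever $Z_{i+1}$ is a collider (for $i+1\le k$) or equals the endpoint $Z_{k+1}$ (by hypothesis), so the arrowhead invariant is preserved.

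Taking $i+1=k+1$ then yields $\sigma_{k+1}$, an inducing path from $Z_0$ to $Z_{k+1}$ with respect to $\{Z_1,\ldots,Z_k\}$ and $\bL$ and with an arrowhead at $Z_{k+1}$, which is exactly the assertion. I expect the only delicate point to be the bookkeeping of arrowheads: at every splice one must verify that both the accumulated path $\sigma_i$ and the next lifted segment $\pi_{i,i+1}$ point into the shared collider $Z_i$, so that the collider version of the concatenation lemma (Claim~\ref{lemma:inducing:concat:collider}) is applicable and correctly enlarges the conditioning set by $Z_i$. Everything else is a routine transfer of the collider/non-collider structure from $\cM$ back to $\cG$ through Claim~\ref{lemma:inducing:bypass}.
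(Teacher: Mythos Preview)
Your proposal is correct and follows essentially the same approach as the paper: lift each $\cM$-edge to an inducing path via Claim~\ref{lemma:inducing:bypass}, then repeatedly apply Claim~\ref{lemma:inducing:concat:collider} to splice them at the colliders. The paper states this in two sentences without spelling out the induction, whereas you make the arrowhead bookkeeping explicit, but the argument is the same.
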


\begin{proof}
Because all $Z_i, Z_{i+1}$ are adjacent and all $Z_1, \ldots, Z_k$ are colliders 
there exist inducing paths $\pi_{i,i+1}$ w.r.t. $\emptyset, \bL$ from $Z_i$ to $Z_{i+1}$ that have   arrowheads at $Z_1, \ldots, Z_k$ 
(Claim~\ref{lemma:inducing:bypass}).
The claim follows by repeatedly applying Claim~\ref{lemma:inducing:concat:collider} to the $\pi_{i,i+1}$'s.
\end{proof}

\begin{lemma}
\label{lemma:inducing:adjacent:when:visible}
Suppose $A \to V_1 \leftrightarrow \ldots \leftrightarrow V_k \leftrightarrow X \to D$ or $A \leftrightarrow V_1 \leftrightarrow \ldots \leftrightarrow V_k \leftrightarrow X \to D$ is a path in $\cG[^\emptyset_\bL$ (possibly $k=0$), each $V_i$ is a parent of $D$ 
and there exists an inducing path $\pi_{XD}$ from $X$ to $D$ w.r.t $\emptyset, \bL$ that has arrowheads on both ends.
Then $A$ and $D$ cannot be $m$-separated in  $\cG[^\emptyset_\bL$.
\end{lemma}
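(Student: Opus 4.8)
The plan is to prove the contrapositive-flavoured but direct statement that $A$ and $D$ are in fact \emph{adjacent} in $\cM=\cG[^\emptyset_\bL$; since adjacent nodes can never be $m$-separated, this immediately yields the lemma. By Claim~\ref{lemma:inducing:bypass}, adjacency of $A$ and $D$ in $\cM$ is equivalent to the existence of an inducing path between them in $\cG$ with respect to $\emptyset,\bL$, so the entire task reduces to assembling such a path out of the ingredients in the hypothesis: the collider path from $A$ to $X$, the edges $V_i\to D$ and $X\to D$, and the inducing path $\pi_{XD}$.

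First I would record the only structural facts needed about the intermediate nodes. Each edge $V_i\to D$ and the edge $X\to D$ in $\cM$ force, by the MAG definition (Definition~\ref{def:mags}) with $\bS=\emptyset$, that $V_i\in\textit{An}(D)$ and $X\in\textit{An}(D)$ in $\cG$; hence $\{V_1,\ldots,V_k,X\}\subseteq\textit{An}(D)$. I would then check that $V_1,\ldots,V_k$ are colliders on the given path $A\,(\to\text{ or }\leftrightarrow)\,V_1\leftrightarrow\cdots\leftrightarrow V_k\leftrightarrow X$ (each $V_i$ receives an arrowhead from either side) and that this path has an arrowhead at $X$. Applying Claim~\ref{lemma:inducing:concat:colliderseq} to $A,V_1,\ldots,V_k,X$ then produces an inducing path $\pi_{AX}$ from $A$ to $X$ w.r.t.\ $\{V_1,\ldots,V_k\},\bL$ with an arrowhead at $X$; this also handles the case $k=0$, where the path is just the edge $A\to X$ or $A\leftrightarrow X$.

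Next I would concatenate $\pi_{AX}$ with the hypothesized inducing path $\pi_{XD}$ at the node $X$. Since both have an arrowhead at $X$, the node $X$ is a collider of the concatenation, so Claim~\ref{lemma:inducing:concat:collider} (with $\bS=\{V_1,\ldots,V_k\}$ and $\bS'=\emptyset$) yields an inducing path from $A$ to $D$ w.r.t.\ $\{V_1,\ldots,V_k,X\},\bL$. The final and most delicate step is to \emph{drop} this conditioning set: every collider of an inducing path w.r.t.\ $\bZ,\bL$ lies in $\textit{An}(\{A,D\}\cup\bZ)$, and because $\{V_1,\ldots,V_k,X\}\subseteq\textit{An}(D)$, transitivity of ancestry gives $\textit{An}(\{A,D\}\cup\{V_1,\ldots,V_k,X\})=\textit{An}(\{A,D\})$. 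Hence the very same path is already an inducing path w.r.t.\ $\emptyset,\bL$, and by Claim~\ref{lemma:inducing:bypass} (or Lemma~\ref{lemma:inducing:nosep} with $\bZ=\emptyset$) the nodes $A$ and $D$ are adjacent in $\cM$ and therefore cannot be $m$-separated.

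The main obstacle is the bookkeeping of conditioning sets through the two concatenation claims: the construction only ever produces an inducing path relative to the auxiliary set $\{V_1,\ldots,V_k,X\}$, and the whole argument hinges on observing that these auxiliary nodes are all ancestors of the endpoint $D$, which is exactly what collapses the conditioning set to $\emptyset$. The hypothesis that each $V_i$ is a \emph{parent} of $D$ (rather than merely adjacent to it), together with $X\to D$, is what supplies this ancestry and is thus the crucial assumption; everything else is a routine verification that the relevant nodes are colliders carrying arrowheads in the right directions.
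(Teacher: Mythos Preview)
Your proof is correct and in fact yields the stronger conclusion that $A$ and $D$ are \emph{adjacent} in $\cM$, which the paper's argument does not obtain directly. Both proofs share the same first stage: use Claim~\ref{lemma:inducing:concat:colliderseq} on the collider path $A,V_1,\ldots,V_k,X$ to get an inducing path $\pi_{AX}$ w.r.t.\ $\{V_1,\ldots,V_k\},\bL$ with an arrowhead at $X$, then combine it with $\pi_{XD}$ via Claim~\ref{lemma:inducing:concat:collider} to obtain an inducing path from $A$ to $D$ w.r.t.\ $\{V_1,\ldots,V_k,X\},\bL$. From here the paper proceeds by contradiction: it argues that any $m$-separator $\bZ$ must contain all $V_i$ (otherwise the path $A\to V_1\leftrightarrow\cdots\leftrightarrow V_i\to D$ is open) and must contain $X$ (otherwise $A\to V_1\leftrightarrow\cdots\leftrightarrow X\to D$ is open), but then Lemma~\ref{lemma:inducing:nosep} rules out any separator containing $\{V_1,\ldots,V_k,X\}$. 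Your key additional observation is that $V_i\to D$ and $X\to D$ in $\cM$ force $\{V_1,\ldots,V_k,X\}\subseteq\textit{An}_\cG(D)$, so $\textit{An}(\{A,D\}\cup\{V_1,\ldots,V_k,X\})=\textit{An}(\{A,D\})$ and the conditioning set collapses to $\emptyset$; this avoids the contradiction argument entirely and gives adjacency directly via Claim~\ref{lemma:inducing:bypass}. Your route is shorter and more informative, and it explains structurally why the edge $X\to D$ must be invisible in this situation; the paper's route, on the other hand, does not rely on the ancestry implication of directed MAG edges and would still go through if one only knew that the $V_i$ and $X$ are adjacent to $D$ in some way that forces them into every separator.
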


\begin{proof}
Assume the path is $A \to V_1 \leftrightarrow \ldots \leftrightarrow V_k \leftrightarrow X \to D$. 
The case where the path starts with $A \leftrightarrow V_1$ can be handled identically, 
since the first arrowhead does not affect $m$-separation.

Assume $A$ and $D$ can be $m$-separated in  $\cG[^\emptyset_\bL$, and let $\bZ$ be such a separator. If $V_1$ is not in $\bZ$ then the path $A \to V_1 \to D$ is not blocked, so $V_1 \in \bZ$. Inductively it follows, if $V_i$ is not in $\bZ$, but all $\forall j<i: V_j\in\bZ$  then the path $A \to V_1 \leftrightarrow \ldots \leftrightarrow V_{i-1} \leftrightarrow V_i \to D$ is not blocked, so $V_i \in \bZ$ for all $i$.

There exist an inducing path $\pi_{AX}$ from $A$ to $X$ with an arrowhead at $X$  
w.r.t. to $\{V_1,\ldots,V_k\},\bL$ (Claim~\ref{lemma:inducing:concat:colliderseq})
which can be combined with $\pi_{XD}$ to an inducing path from $A$ to $D$ w.r.t.
to $\{V_1,\ldots,V_k,X\},\bL$ (Claim~\ref{lemma:inducing:concat:collider}).



Hence no $m$-separator of $A,D$ can contain  $\{X,V_1,\ldots,V_k\}$ 
(Lemma~\ref{lemma:inducing:nosep}). 
Then there
cannot exist an $m$-separator, because every separator must include $V_1,\ldots,V_k$ and the path $A \to V_1 \leftrightarrow V_2 \leftrightarrow \ldots \leftrightarrow V_k \leftrightarrow X \to D$ is open without $X \in \bZ$.
\end{proof}

\section{Discussion}

We provide a framework of efficient algorithms to verify, find, and enumerate $m$-separating sets in MAGs, which we then harness to solve the same problems for adjustment sets in DAGs and MAGs. In both graph classes, this provides a complete and informative answer to the question when, and how, a desired causal effect between multiple exposures and outcomes can be estimated by covariate adjustment. 

For DAGs, our results show that from a computational complexity perspective, there is no disadvantage of using our complete constructive back-door criterion (CBC) instead of Pearl's back-door criterion (BC) -- in other words, using CBC instead of BC gives us the guarantee of completeness ``for free''. Therefore, at least for implementation in software packages, we would see no reason to use BC instead of CBC. Nevertheless, our empirical evaluation also suggests that the number of cases covered by CBC but not by BC might be relatively small.

In contrast to DAGs, MAGs are not widely used to encode causal models, as their semantics are more complex and direct edges do not necessarily correspond to direct causal relationships. Still, our CBC for MAGs can also be used as a form of ``sensitivity analysis'' for researchers performing DAG-based analyses because every DAG can be converted to a MAG -- if it contains no latent variables, it can simply be read as if it were a MAG. If the MAG resulting from that conversion still admits covariate adjustment, then we have shown that the adjustment set postulated for the original DAG is in fact valid for an infinite set of DAGs, namely all those represented by the MAG. This strategy might allow researchers to partly relax the often untenable ``causal sufficiency assumption'' that all relevant variables are known and were measured.

Our results rest on two key concepts: reduction of adjustment to $m$-separation in a subgraph (the \emph{proper back-door graph}), and  \emph{adjustment amenability} for graphical models that are more causally ambiguous than DAGs. Since the publication of the preliminary version of this work \cite{TextorLiskiewicz2011,zander2014constructing}, these techniques were shown to be applicable to adjustment in four additional classes of graphical causal models: CPDAGs \cite{PerkovicEtAl2018}, PAGs \cite{PerkovicEtAl2018}, chain graphs \cite{vanderZander2016separators}, and maximal PDAGs \cite{PerkovicEtAl2017}. Likewise, it has been shown that our algorithms can be applied to extended graphical criteria that allow to deal with selection bias \cite{CorreaBarenboim2017}. As we have illustrated briefly in Section~\ref{sec:basissets}, we expect our algorithmic framework to be useful in other areas as well, due to the central role of $m$-separation in the theory of graphical models. In \cite{ZanderTL15,ZanderL16} we have demonstrated how $d$-separators can be harnessed for efficiently finding generalized instrumental variables: for this purpose we apply constrained separators but using different restrictions than the ones discussed in this paper.

Our empirical analysis in this paper shows that our algorithmic framework is efficient enough to be used in practice, even on large (MAGs) or very large (DAGs) models. The practical feasibility of our algorithms is illustrated by the fact that they underpin both the web application ``dagitty.net'' as well as the associated R package \cite{dagittyIJE}, which currently have a substantial user community. We hope that future work will expand on our initial empirical results, and there are many potential avenues to follow. For instance, we generated random DAGs, but these are likely not representative of ``typical'' causal graphs encountered in practical applications. While our understanding of the ``typical'' structure of causal graphs is currently limited, one could test the robustness of our findings to the graph structure by considering other well-known graph-generating models, such as scale-free~\cite{barabasi_emergence_1999}, small-world~\cite{watts_collective_1998}, or lattice-like~\cite{Ozik2004} networks.

A further interesting open question to be pursued in future research would be whether the approaches presented here could be generalized to accommodate confounding that arose by chance in a given sample, rather than for structural reasons \cite{Greenland2015}.

\section{Acknowledgments}
This work was supported by the Deutsche Forschungsgemeinschaft (DFG) grant LI 634/4-1 and LI 634/4-2.

We also thank Marcel Wienöbst for help in performing the experiments, particularly for implementing the parsing of the generated graph files in R.

\section{References}
\bibliographystyle{elsarticle-num}
\bibliography{main}



\section{Appendix: further experimental results}



Tables~\ref{table:global:stat:unobs0.25:count} and~\ref{table:global:stat:unobs0.5:count} 
in this section show the results of versions of the experiments presented in Section~\ref{sec:experiments} in Tables~\ref{table:global:stat:unobs0:count} and~\ref{table:global:stat:unobs0.75:count} in which the parameter controlling 
the number of unobserved variables is set to $0.25$ or $0.5$.


\inputiffinal{experiments-table2}



\end{document}